\documentclass{article}
\usepackage[title]{appendix} 
\usepackage[utf8]{inputenc}
\usepackage{comment}
\usepackage{caption} 
\usepackage[square,sort,comma,numbers]{natbib}
\usepackage{graphicx}
\usepackage{amsmath}
\usepackage{amssymb}
\usepackage{balance}
\usepackage{indentfirst}
\usepackage[T1]{fontenc}
\usepackage{geometry}
\usepackage[normalem]{ulem}
\usepackage{tikz}
\usepackage{ytableau}
\usepackage{amsfonts}
\usepackage{tikz}
\usepackage{scrextend}
\usepackage{color,soul}
\usepackage{xcolor}
\usepackage{float}
\usepackage{comment}
\usetikzlibrary{matrix,calc, fit,backgrounds,positioning}
\geometry{a4paper,top=3cm,left=2cm,right=2cm,bottom=3cm}

\usepackage{algorithm} 
\usepackage{algpseudocode} 

\usepackage{amsthm}
\newtheorem{theorem}{Theorem}
\newtheorem{corollary}[theorem]{Corollary}
\newtheorem{lemma}[theorem]{Lemma}
\newtheorem{definition}{Definition}
\newtheorem{proposition}[theorem]{Proposition}

\usepackage{bm}   

\definecolor{bblue}{HTML}{4F81BD}
\definecolor{rred}{HTML}{C0504D}
\definecolor{ggreen}{HTML}{9BBB59}
\definecolor{ppurple}{HTML}{9F4C7C}

\usepackage{authblk}

\usepackage{pgf} 

\usepackage{soul} 


\title{A Centralized Planning and Distributed Execution Method for Shape Filling with Homogeneous Mobile Robots
}

\author{Shuqing Liu, Rong Su, Karl H.Johansson
\thanks{Shuqing Liu (email: shuqing001@e.ntu.edu.sg) and Rong Su (email: rsu@ntu.edu.sg) are with School of Electrical and Electronic Engineering, 
Nanyang Technological University, Singapore. Karl H.Johansson (email: kallej@kth.se) is with Division of Decision and Control Systems, KTH Royal Institute of Technology, Stockholm, Sweden.
}
}

\date{} 
\begin{document} 
\maketitle
\begin{abstract}
    \noindent
    \textbf{Abstract:}
    The pattern formation task is commonly seen in a multi-robot system. In this paper, we study the problem of forming complex shapes with functionally limited mobile robots, which have to rely on other robots to precisely locate themselves. The goal is to decide whether a given shape can be filled by a given set of robots; in case the answer is yes, to complete a shape formation process as fast as possible with a minimum amount of communication. Traditional approaches either require global coordinates for each robot or are prone to failure when attempting to form complex shapes beyond the capability of given approaches - the latter calls for a decision procedure that can tell whether a target shape can be formed before the actual shape-forming process starts. In this paper, we develop a method that does not require global coordinate information during the execution process and can effectively decide whether it is feasible to form the desired shape. The latter is achieved via a planning procedure that is capable of handling a variety of complex shapes, in particular, those with holes, and assigning a simple piece of scheduling information to each robot, facilitating subsequent distributed execution, which does not rely on the coordinates of all robots but only those of neighboring ones. The effectiveness of our shape-forming approach is vividly illustrated in several simulation case studies. 
    
    \medskip
    \noindent
    \textbf{Keywords:} multi-robot system, self-reconfiguration, self-assembly, pattern formation, ribbon-based, hexagonal lattice, hole forming
\end{abstract}


\section{Introduction}
Multi-Robot Systems (MRS), as suggested by the name, involves the study of groups of heterogeneous or homogeneous autonomous robots tackling a wide range of tasks with varying characteristics in the same environment \cite{madridano_trajectory_2021}. Among the various tasks assigned to a group of autonomous robots, the deployment task has received significant attention since it is usually used to set up the initial configuration for other downstream tasks such as exploration, surveillance and cooperative manipulation. The pattern formation task, as a variant of the robot deployment tasks, aims to assemble a collection of robots into proper positions that resemble a specific pattern when viewed globally \cite{bayindir_review_2015}. Multiple research studies have been performed on the pattern formation using homogeneous autonomous robots.

In \cite{stoy_how_2006}, Stoy and Kasper proposed a shape formation algorithm utilizing a porous scaffold and established two types of gradients: the wander gradient, which is generated from wandering robots outside the desired shape, and the hole gradient, which is generated from the unfilled spaces inside the desired shape. A wandering robot descends the hole gradient and ascends the wander gradient, resulting in the robots inside the shape moving away from the boundary and propagating the unfilled spaces toward the boundary. In such a way, this algorithm is able to reconfigure a random shape into a desired non-hollow shape.

In \cite{rubenstein_programmable_2014}, the K-team at Harvard University proposed the additive self-assembly algorithm. The algorithm relies on the gradient value established by a set of stationary robots to navigate moving robots. The moving robots rotate clockwise around the perimeter of the group of robots and stop inside the desired shape in layers. In a follow-up study, Gauci et al. proposed a self-disassembly method that uses light sources to guide robots residing outside the desired shape to detach from the configuration \cite{gros_programmable_2018}. In a later study \cite{wang_self-organizing_2021}, Wang and Zhang utilize multiple groups for the formation task. By executing the additive self-assembly algorithm concurrently, they effectively increased the coverage speed of the formation algorithm. One of the critical issues of the additive self-assembly algorithm, as pointed out by the technical review \cite{niazi_technical_2014}, is that the algorithm can only form simply connected shapes without holes.

In \cite{yang_distributed_2019}, Yang et al. proposed a formation algorithm based on the square lattice and motion-chains. The robots on the outermost layer separate from the shape to form motion-chains and assemble inside the desired shape. Two motion-chains (moving in the clockwise and counter-clockwise directions, respectively) meet in the middle of the desired shape. This mechanism speeds up the assembly process by parallelism. In a later study \cite{yang_distributed_2020}, Yang's team proposed an extension of the algorithm to allow the formation of hollow shapes by introducing additional mechanisms on the motion-chains. The motion-chain approach enables the manipulability of muti-robot movements at the cost of scalability, that is, when robots move as a chain, localization error propagates as they move further from the starting position. This accumulated error can lead to misalignment when the motion chain attempts to form a hole.

In \cite{derakhshandeh_universal_2016}, Derakhshandeh et al. designed a shape formation algorithm based on the triangular expansion behavior of programmable matters. The algorithm can be adapted to any complex shape composed of equilateral triangles. However, similar to Yang's approach, without global information on coordinates, the robots may not be able to form the holes correctly.

Current methods either lack the ability to form complex shapes involving holes or require global coordinates for accurate hole forming. In this article, we propose a pattern formation method, the add-subtract algorithm, to handle formation shapes with holes. In our algorithm, information on global coordinates is not required by each robot, as they could locate themselves based on the position of neighboring robots. The algorithm is inspired by the ribbon idea introduced in the additive self-assembly algorithm \cite{rubenstein_programmable_2014}. In our algorithm, we assume a grid-based workspace and choose the hexagonal lattice due to its compactness and suitability for disk-shaped robots. Several studies have explored hexagonal lattice configurations. K. Tomita et al. developed a nucleation method using hexagon-shaped robots \cite{tomita_self-assembly_1999}; M. De Rosa et al. defined hole motion in a hexagonal lattice for modular robot shape sculpting \cite{de_rosa_scalable_2006}; and M. Rubenstein et al. utilized hexagonal arrangements to organize idle robots \cite{rubenstein_programmable_2014}. Building on this foundation, we propose a systematic method to partition the lattice points inside the desired shape into layers (referred to as ribbons) and plan the movement sequences for each individual robot. During the execution phase, robots first occupy all the lattice points, including the holes, layer by layer, and then the extra robots move out of the holes in reverse order (starting from the outermost layer). Compared with other existing methods, our method bears the following distinct features:
\begin{enumerate}
	\item the planning procedure can assist users to decide whether a complex shape, in particular, those with holes, can be formed by robots with limited localization capabilities; and
	\item in the case the given shape is feasible, a concrete moving sequence will be generated, and each robot will be assigned with some simple piece of information that does not rely on the coordinates of all robots but only those of neighboring ones, thus facilitating distributed execution; and
	\item the distributed execution phase can be further enhanced with fault tolerance capabilities via fast re-planning, thanks to our effective planning procedure, making the method robust to uncertainties in implementation.
\end{enumerate}

The remainder of the paper is organized as follows. First, the problem is introduced and formulated in section \ref{sect-problemDef}. The add-subtract algorithm is introduced and analyzed in section \ref{sect-AddSubAlgorithm}. To ensure a smooth presentation flow, while we present the outline of the proof of correctness in section \ref{sect-outlinedProof}, details of all proofs are presented in the Appendices. Simulation results are given in section \ref{sect-simulation}. Conclusions are drawn in section \ref{sect-conclusion} together with a discussion of future research directions. For the pseudocode of the algorithm, please refer to appendix \ref{AppendiceA}. For the proof of the properties of the ribbon structure and the proof of correctness of the algorithm can be found in appendix \ref{sect-Ribbonization} and appendix \ref{sect-methodAndProof}.

\section{Problem Formulation}
\label{sect-problemDef}

In this section, we give a formal definition of the problem (section \ref{probDescription}) and introduce the setup of the system (section \ref{sect-workspace}). Finally, we introduce the concept of epoch used in the algorithm (section \ref{subsect-modelingOfTime}).

\subsection{Problem Description}
\label{probDescription}
In this paper, we study the problem of filling up a given shape with a given collection of small mobile robots. The shape may contain holes, and each mobile robot has a limited localization capability, forcing them to rely on neighboring robots to precisely determine its current location. Due to this limitation, during the shape-filling process, robots must follow a certain releasing sequence to ensure that each newly released robot can rely on previously released and positioned robots to precisely locate itself, before moving into a designated position in the shape. We assume that each robot has sufficient memory to store several pieces of critical information, such as the releasing sequence, which encoded the boundaries of the shape and internal holes. In addition, we assume that each robot can only communicate with robots within a neighborhood whose maximum radius is known in advance. Next, we provide more details of the system setup.

\bigskip

\noindent Given a set of a sufficient number of functionally identical (or homogeneous) robots, the following setup is assumed:
\begin{itemize}
    \item [-] each robot has a tight circular convex hull with a physical radius $r$;
    \item [-] each robot has a communication range $l$ that allows it to communicate with at least three other robots compactly positioned on the hexagonal lattice. In this paper, we set $l > \frac{4r}{\sqrt{3}-1}$;
    \item [-] each robot can measure its distance to other robots within the communication range $l$;
    \item [-] each robot's motion is holonomic.

\end{itemize}

We consider a user-specified shape $S_0$ defined in the Euclidean coordinate system in $\mathbb{R}^2$ which is topologically homeomorphic to an open disk in $\mathbb{R}^2$. We consider a set $D_0$ of user-specified Holes defined in the Euclidean coordinate system in $\mathbb{R}^2$, each of which is topologically homeomorphic to a collection of closed disks in $\mathbb{R}^2$ and contained inside $S_0$. With a slight abuse of notation, we use $D_0$ to denote the union of all such holes. Thus, $D_0$ is a subset of $S_0$.

\begin{figure}[H]
    \centering
    \includegraphics[scale=0.4]{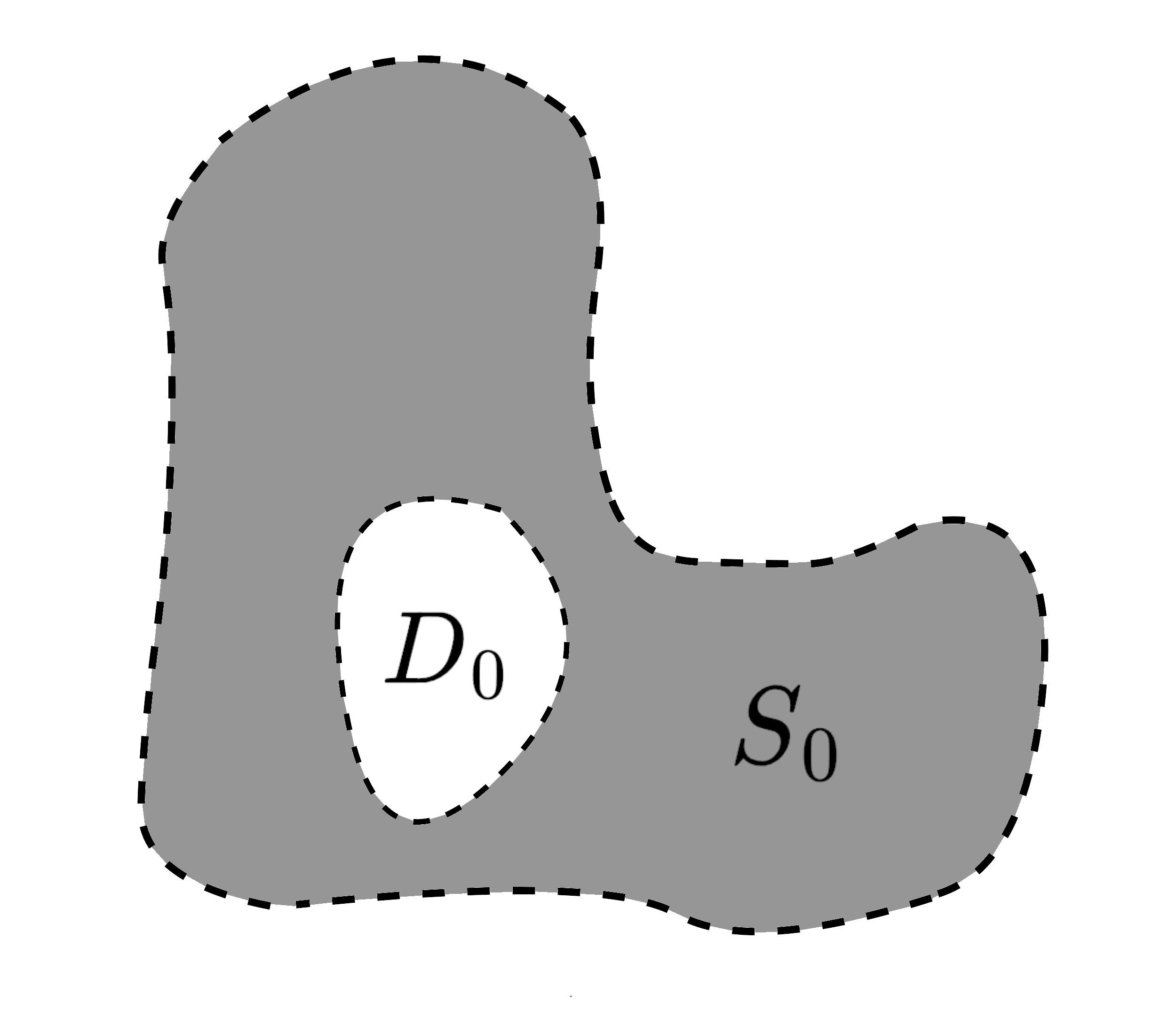}
    \includegraphics[scale=0.45]{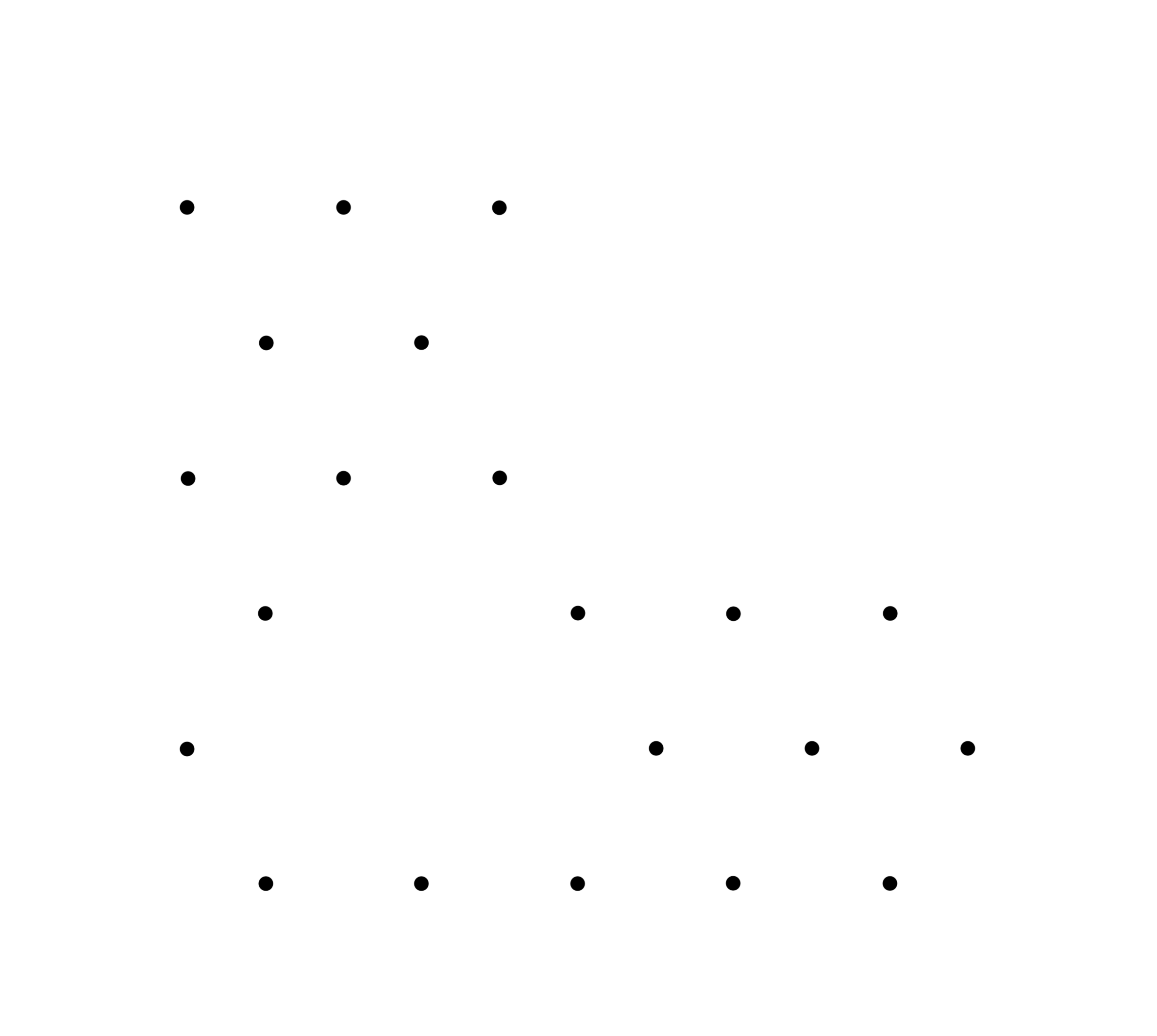}
    \caption{Left: example of a user-defined shape $S$ and the collection of holes $D$; Right: a proper lattice point representation of $S \setminus D$}
\end{figure}

We assume an underlying hexagonal lattice $L$. $L$ consists of all the points in $\mathbb{R}^2$ that are $d = \frac{2r}{\sqrt{3}-1}$ from each other\footnote{The $d$ is chosen such that the gap between each pair of robots located on hexagonal lattice point is large enough for another robot to pass through.}, and each point is surrounded by six equally spaced neighbors. After superimposing with the hexagonal lattice, we obtain a set $ S\setminus D$ as the intersection of $S_0 \setminus D_0$ and $L$. By rotating and translating $S_0 \setminus D_0$, we pick the $S\setminus D$ with the largest cardinality as the discrete representation of the shape $S_0 \setminus D_0$. The corresponding discrete representation $D := D_0 \cap L$ is the set of hexagonal lattice points in $D_0$ subjected to the same rotation and translation. Similarly, $S := S_0 \cap L$. We say a lattice point $x \in S$ is on the boundary of $S$ if $x$ is adjacent to a lattice point not in $S$. 

For a proper lattice representation $S \setminus D$, we require the following:
\begin{enumerate}
    \item all the lattice points in $S$ are connected (in the sense that there is a path between any pair of lattice points in $S$ through adjacent lattice points in $S$); and
    \item $S \setminus D$ correctly captures the holes $D$ such that every lattice point on the boundary of $S$ is not in $D$. 
    \item $S$ has a smooth boundary in the sense that
        \[ (\forall x,y \in S) (||x-y||=2d \Rightarrow ((\forall z \in L) ||x-z||=||y-z||=d \Rightarrow z \in S ))\] 
    and
        \[ (\forall x,y \in S) (||x-y||=\sqrt{3}d \Rightarrow \neg((\forall z \in L) ||x-z||=||y-z||=d \Rightarrow z \notin S ))\]
\end{enumerate}

\medskip


\noindent \underline{\textbf{Problem Statement:}} Given a sufficient number of robots(with an initial set of robots whose position coordinates are known in advance) and a user-defined shape $S \setminus D$ (with the attributes previously specified), suppose the collection of robots initially occupies the set of lattice points $I$, determine a movement sequence of tuples $(a_1,b_1,1), \ldots, (a_M, b_M, M)$ where $a_k \in L$ and $b_k \in L$ denotes the starting and stopping positions of the $k$th moving robot in the tuple $(a_k,b_k,k)$, such that:
\begin{enumerate}
    \item the final collection of robots forms the desired shape in the workspace (defined by a half-plane) in the sense that $\{\{b_1,\ldots, b_M\} \circleddash \{a_1,\ldots, a_M\}\} \sqcap H  = S \setminus D$ where $\{a_1,\ldots, a_M\}$ and $ \{b_1,\ldots, b_M\}$ are multisets, and $H$ is the set of all lattice points of some half-plane.
    \item For each prefix subsequence $(a_1, b_1, 1), \ldots, (a_{k}, b_{k},k)$, $1 \leq k \leq M$, let $A_{k} = \{a_1, \ldots, a_{k-1}\}$, $B_{k} = \{b_1, \ldots, b_{k-1}\}$, and $\overline{B_{k}} \subseteq B_{k}$ be the set of stopping positions that have been reached by the corresponding moving robots. Then the starting position of the $k$th robot $a_k \in I \oplus \overline{B_{k}} \circleddash A_{k}$ and the robot can only relay on robots centered at $I \oplus \overline{B_{k}} \circleddash A_{k} \circleddash \{a_k\}$ to locate its position before stopping at $b_k$. 
\end{enumerate}

\subsection{Setup of the System}
\label{sect-workspace}
    We assume the underlying hexagonal lattice of the workspace as $L$. $L$ consists of all the points in $\mathbb{R}^2$ that are $d = \frac{2r}{\sqrt{3}-1}$ from each other, and each point is surrounded by six equally spaced neighbors.
    The hexagonal lattice $L$ induces a graph $\Gamma =(L,E)$ such that for any $x,y \in L$, there is an edge $xy \in E$ iff $||x-y||=d$. Two lattice points $x,y$ are \textbf{adjacent} to each other iff $||x-y||=d$. The \textbf{neighborhood} of a lattice point $x \in L$ is the set of lattice points in $S$ that are adjacent to $x$, and each lattice point has a maximum of six neighbors in its neighborhood.

    \begin{figure}[H]
        \centering
        \includegraphics[scale=0.3]{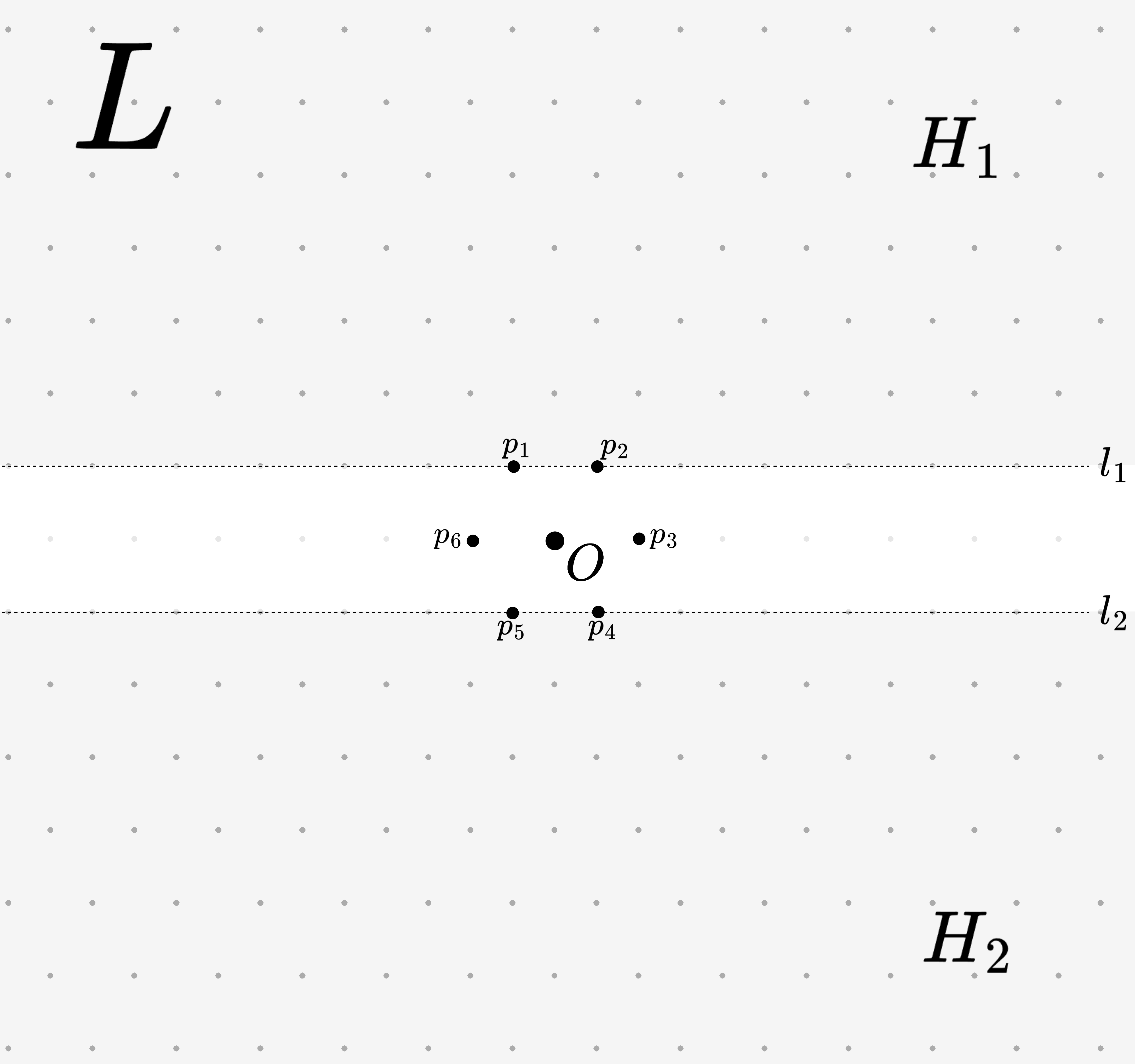}
        \caption{Illustration of the workplace with two half-plane and underlying hexagonal lattices}
    \end{figure}

    We further establish a coordinate system $\mathbb{O}$ by picking an arbitrary point $O \in L$ as the origin and denote the lattice points adjacent to the origin as $p_1, p_2,...,p_6$, labeled clockwise. We denote the line going through $p_1,p_2$ as $l_1$ and the line going through $p_4, p_5$ as $l_2$. Notice that $l_1$ divides $\mathbb{R}^2$ into two open half-planes\footnote{An open half-plane does not include the line that induces the half-plane.}. One of the open half-planes does not contain $l_2$, and we denote it as $H_1$. Similarly, the open half-plane induced by $l_2$ that does not contain $l_1$ is denoted as $H_2$. The idle robots are to be placed in $H_2$ and we call $H_2$ the \textbf{idle half-plane}. The robots will form the desired shape in $H_1$ and we call $H_1$ the \textbf{formation half-plane}. The buffering region located between $l_1$ and $l_2$ ensures no physical interaction between idle robots and the robots in $S$. 

    The finite collection of all robots is denoted by $U$. For each individual robot $u \in U$, its center at time $t$ in the coordinate system $\mathbb{O}$ is denoted as $p_t(u)$. Two robots $u,v$ are \textbf{neighboring} each other at time $t$ if their centers $p_t(u)$ and $p_t(v)$ are within distance $d$ from each other. The set of neighbors of $u$ at time $t$ is denoted as $N_t(u)$. At each sub-epoch $T = n^-$, the center of each inactive robot $u \in U$ is denoted as $p_{n^-}(u)$. The center of each robot $u$ at each sub-epoch $T = n^+$ is denoted as $p_{n^+}(u)$.
    
    A lattice point $x \in L$ is \textbf{occupied} by a robot $u$ at time $t$ if $p_t(u)=x$; otherwise, it is \textbf{unoccupied}. If a vertex of ribbon $R$ is occupied by a robot $u$, we say the robot $u$ is \textbf{located on} the ribbon $R$. Given a ribbon $R$, we say the ribbon is \textbf{filled} if all the vertices of $R$ are occupied; the ribbon is \textbf{empty} if no vertex of $R$ is occupied; a ribbon is \textbf{half-filled} if it is neither filled nor empty. We say a robot belongs to a ribbon if the robot occupies one of the vertices of the ribbon. The robot occupying the leader vertex of a ribbon is the \textbf{leader robot} of the ribbon.

\subsection{The Concept of Epoch}
\label{subsect-modelingOfTime}
    
    For each robot $u$ of the multi-robot system (MRS), we adopt the conventional epoch model and use $T$ to denote a particular epoch. During the shape-filling process, by assuming that at any time there is at most one moving robot, the time steps for the system can be partitioned into a finite number of periods (or epochs), where in each epoch there is only one robot moves, except for the initial epoch where no robot moves. We define the \textbf{epoch} $T$ as follows:
    \begin{itemize}
        \item[-] $T=0$ corresponds to the epoch up to the time instant before the first robot transiting into movement; and
        \item[-] $T=n$ corresponds to the epoch between the moment when the $n$th robot starts to move and the moment just before the $(n+1)$th robot starts to move. For now, we assume that at the end of the $n$th epoch, the $n$th robot has already moved into a position in the shape $S\setminus D$.   
    \end{itemize}
    We denote the $n$th moving robot as $u_n$, which is referred to as the \textbf{active robot} of the epoch $T=n$, and the rest of the robots as the \textbf{inactive robots} of the epoch $T=n$.

    \noindent Each epoch $T=n$ is further divided into two sub-epochs: 
    \begin{itemize}
        \item [-] $T=n^-$ denotes the sub-epoch form the moment when $u_n$ starts to move till the moment when $u_n$ stops; and
        \item [-] $T=n^+$ denotes the sub-epoch form the moment when $u_n$ stops till the moment just before $u_{n+1}$ starts to move.
    \end{itemize}
    The active robot $u_n$ is the only \textbf{moving} robot of the sub-epoch $T=n^-$, and the remaining robots are \textbf{non-moving} robots. There is no moving robot in the sub-epoch $T=n^+$, and all the robots are non-moving robots. For the case when $T=0$, there is only one sub-epoch $T=0^+$ where all the robots are in their initial configuration and no robot is moving. 

     \begin{figure}[H]
        \centering
        \includegraphics[scale=1]{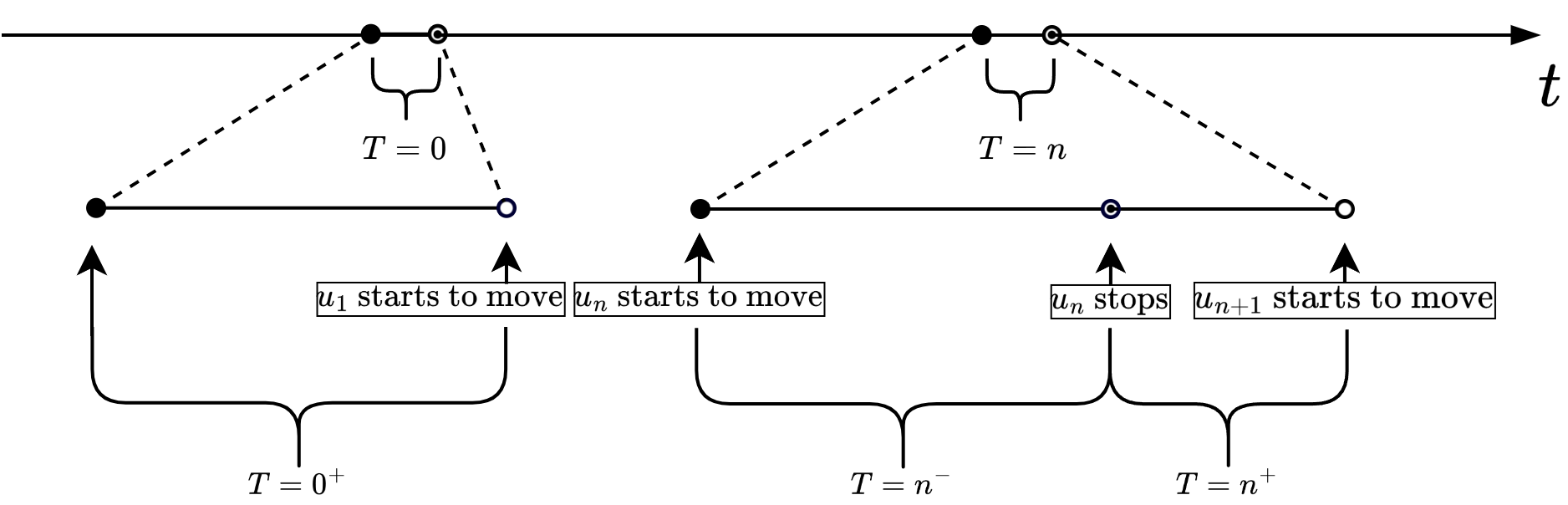}
        \caption{Illustration of epoch, sub-epoch}
    \end{figure}

\section{The Add-subtract Algorithm}
\label{sect-AddSubAlgorithm}

We are given sufficient idle robots in the idle half-plane $H_2$, and we place the idle robots in the following manner: 
\begin{enumerate}
    \item  There exists a collection of five robots $B = \{v_0, v_1, v_2, v_3, v_4\}$ as a subset of $U$, where $p_0(v_0) = O$ and $p_0(v_1) = p_1, p_0(v_2) = p_2, p_0(v_3)=p_4, p_0(v_4)=p_5$. The five robots form a collection of seed robots that remain stationary throughout the shape-filling (or assembly) process. Each seed robot knows its coordinates in $O$.
    
    \item The idle robots are placed on the hexagonal lattice $L$ within the idle half-plane $H_2$. We also require the idle robots to be placed adjacent to the seed robots layer by layer in the sense that 
        \begin{enumerate}
            \item [-] every idle ribbon is either empty or filled;
            \item [-] if an idle ribbon $IR$ is filled, then all idle ribbons closer to the root robot are filled.
        \end{enumerate} 
\end{enumerate}

The algorithm is designed to form the shape $S$ in the formation half-plane. We start by placing the $S$ in the formation half-plane near the origin.

\begin{figure}[H]
    \centering
    \includegraphics[scale=0.8]{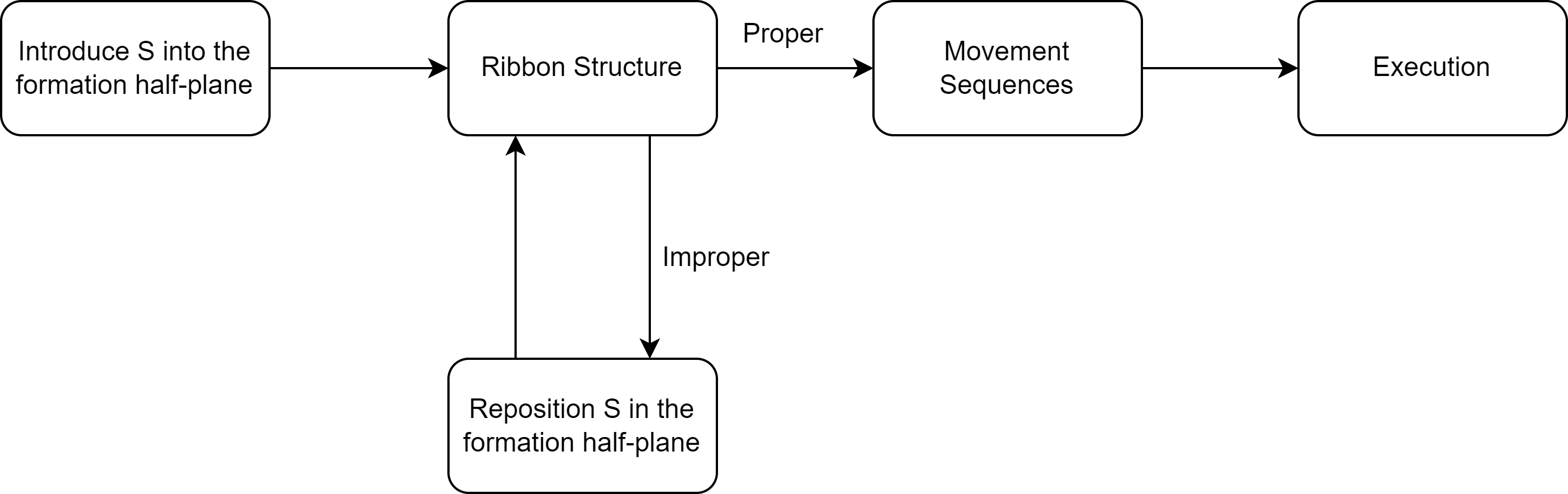}
    \label{fig:overall-flow}
    \caption{Flow diagram of the add-subtract algorithm}
\end{figure}

\subsection{Ribbon Structure}
\label{ribbon_structure}
The lattice points in $S$ can be partitioned according to their distance\footnote{Measured by the length of the shortest path from the lattice point to the origin, with the additional requirement that the vertices of the path must be in $S$.} from the origin of $\mathbb{O}$. Notice the following\footnote{Proof of the statements are given in section \ref{sect-Ribbonization}.}: 
\begin{enumerate}
    \item each partition induced a line graph (see figure \ref{fig:ribbons}), and we could impose the clockwise direction to obtain a directed line graph.
    Each directed line graph, or, equivalently, an ordered list of vertex points $(x_1, x_2,...,x_n)$ on the directed line graph is called a \textbf{ribbon}, denoted as $R_i$, where $x_1$ and $x_n$ are the start point\footnote{The vertex with out-degree $0$.} and end point\footnote{The vertex with in-degree $0$.} of the ribbon, respectively. For any two points $x_k$ and $x_j$ on the ribbon, we use $x_k \rightarrow x_j$ to denote that there exists a directed sub-line segment from $x_k$ to $x_j$ in the ribbon. The ribbon to which a lattice point $x$ belongs is denoted as $R(x)$. The number of vertices of $R_i$ in the hole(s) $D$ is denoted by $c(R_i)$. The vertex point with out-degree $0$ is called the leader vertex of the ribbon, denoted as $L(R_i)$;
    \item each vertex of a ribbon $R$ is adjacent to a vertex of another ribbon $R_p$, where $R_p$ is closer to the origin. Such a ribbon $R_p$ is unique and is referred to as the "parent ribbon" of the given ribbon $R$;
    \item by considering $(p_2,p_1)$ as a ribbon, all ribbons form a tree structure rooted at $(p_2,p_1)$. For any two ribbons $R_i$ $R_j$ of the tree, we use $R_i \twoheadrightarrow R_j$ to denote that there exists a chain of ribbons $(R_i,R_{i+1},\ldots,R_{j-1},R_j)$ where $R_i$ is a parent ribbon of $R_{i-1}$,...,$R_{j-1}$ is a parent ribbon of $R_j$.
\end{enumerate}

Then, we check whether the ribbon structure satisfies the following conditions:

\begin{enumerate}
    \item $S$ is placed adjacent to the set of seed robots but $S \cap \{p_0,p_1,p_2,p_3,p_4,p_5,p_6\} = \emptyset$;
    \item $(p_2,p_1)$ has only one child ribbon and the ribbon has a length of three;
    \item each ribbon has two vertices;
\end{enumerate}

If the above conditions are satisfied, the ribbon structure is called \textbf{proper}. If the conditions are not met, we adjust the position of $S$ in $H_1$ by rotation and translation until a proper ribbon structure is obtained.

In the same way, all the lattice points inside the idle half-plane $H_2$ can be partitioned into ribbons. To distinguish from the ribbons defined in $S$, we call such ribbons defined in $H_2$ as the "idle ribbons" and denoted by "$IR_i$". The idle ribbons also form a tree structure (with a single branch) rooted at $(p_5,p_4)$. Notice that the idle ribbon structure is proper.

\begin{figure}[H]
    \centering
    \includegraphics[scale=0.35]{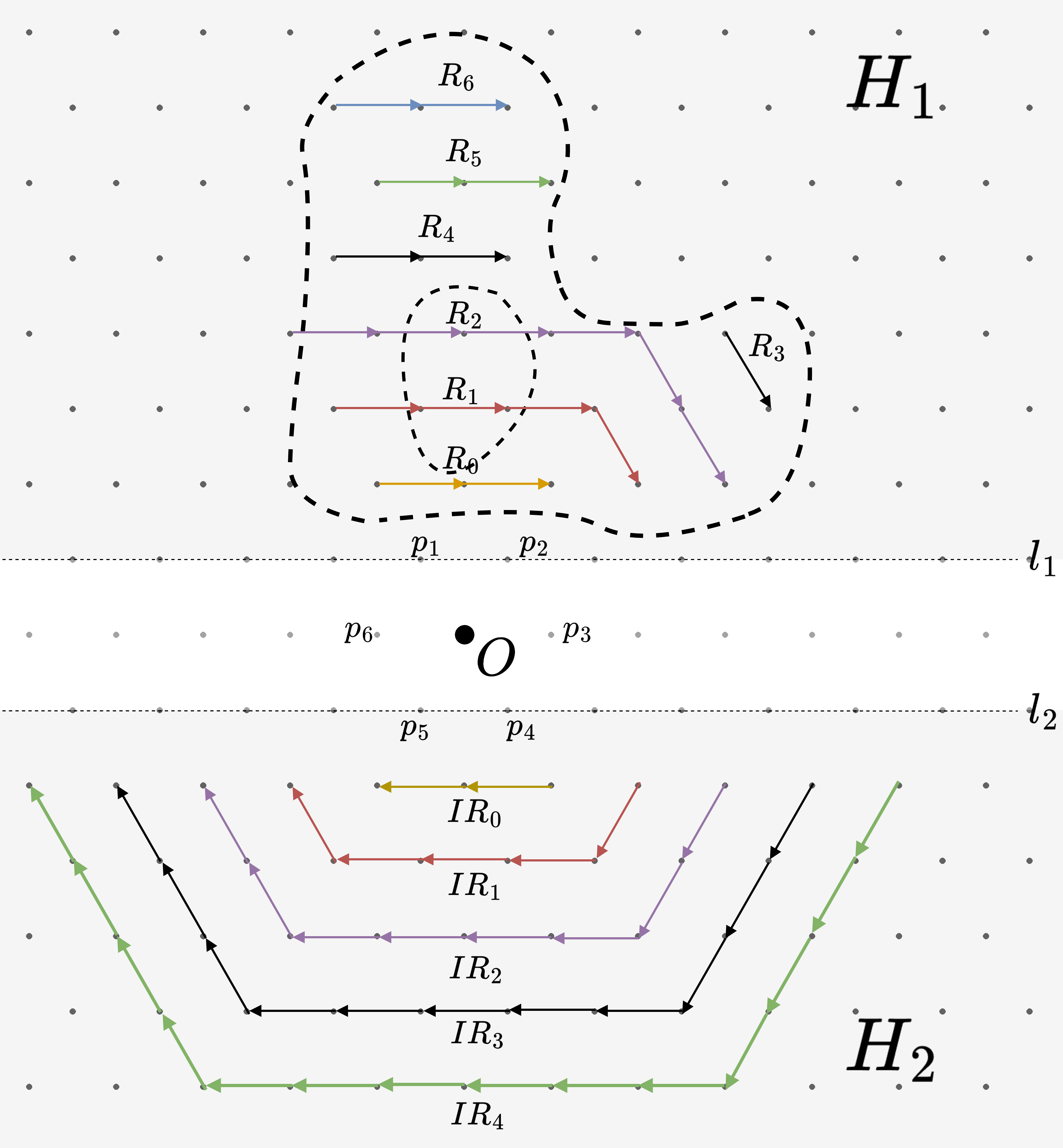}
    \caption{The ribbons defined in $S$ and the idle ribbons defined in $H_2$}
    \label{fig:ribbons}
\end{figure}

\subsection{Planning of Movement Sequences}

In view of the tree structure of directed ribbons, an order exists for all lattice points on the same ribbon and for all ribbons. By using ribbons and the tree structure of ribbons, we could define the following orders:

\begin{enumerate}
    \item The \textbf{ribbon order} is a total order of the lattice points of a ribbon $R$ such that for all different $x,y$ of ribbon $R$, $x < y$ if $y \rightarrow x$.
    \item The \textbf{tree order} is a partial order of the ribbons of $S$ such that for all different ribbons $R_i, R_j$, $R_i< R_j$ if $R_i \twoheadrightarrow R_j$. A \textbf{complete tree order} is a total order that extends from the tree order\footnote{A partial order extends to a total order by the Szpilrajn extension theorem.}. 
\end{enumerate}

\noindent Similarly, by using idle ribbons and the tree structure of idle ribbons, we could define the following orders:

\begin{enumerate}
    \item The \textbf{idle ribbon order} is a total order of the lattice points of an idle ribbon $IR$ such that for all different $x,y$ of idle ribbon $IR$, $x < y$ if $y \rightarrow x$.
    \item The \textbf{idle tree order} is a total\footnote{Notice that the tree of idle ribbons has a single branch.} order of the idle ribbons such that for all different idle ribbons $IR_i, IR_j$, $IR_i< IR_j$ if $IR_i \twoheadrightarrow IR_j$
\end{enumerate}

For each additive epoch $T$ (i.e., $1 \leq T \leq N$), we use two sequences of lattice points to indicate which robot becomes active and where it is expected to stop:

\begin{itemize}
    \item [-] the activation sequence $s[k]$: the $k$th entry denotes the lattice point at which robot $u_k$ starts to move;
    \item [-] the assembly sequence $t[k]$: the $k$th entry denotes the lattice point at which robot $u_k$ stops.
\end{itemize}

Now, we shall use the orders previously defined to formulate the sequences. The activation sequence $s$ is generated recursively such that each element $s[k]$ is the largest lattice point in $H_2$ (w.r.t the idle ribbon order) on the largest idle ribbon (w.r.t the idle tree order) that has not been included in the early subsequence $s_{[1:k-1]}$. The assembly sequence $t$ is generated recursively such that each element $t[k]$ is the smallest lattice point in $S$ (w.r.t the ribbon order) on the smallest ribbon (w.r.t the complete tree order) that has not been included in the early subsequence $t_{[1:k-1]}$. Loosely speaking, in each epoch, the idle robot at the tail of the outermost idle ribbon is activated first, and the lattice positions in $S$ are to be occupied ribbon by ribbon, from front to tail.

Similarly, for the subtractive epoch $T$ (i.e., $T \geq N+1$),  we use two sequences of lattice points to indicate which robot becomes active at each epoch and where it is expected to stop:

\begin{itemize}
    \item [-] the re-activation sequence $s_r[k]$: the $k$th entry denotes the lattice point at which robot $u_{N+k}$ starts to move;
    \item [-] the re-assembly sequence $t_r[k]$: the $k$th entry denotes the lattice point at which robot $u_{N+k}$ stops.
\end{itemize}

The re-activation sequence $s_r[k]$ for $ 1\leq k \leq N$ is obtained recursively such that each element $s_r[k]$ is the smallest lattice point in $S$ (w.r.t the ribbon order) on the largest ribbon (w.r.t the complete tree order) which has not been included in ${s_r}_{[1:k-1]}$. The re-assembly sequence $t_r[k]$ for $ 1\leq k \leq N$ is generated recursively such that each element $t_r[k]$ satisfies:
\begin{itemize}
    \item[-] if the lattice position $s_r[k]$ is among the first $c(R(s_r[k]))$\footnote{Recall that $c(R)$ is the number of vertices of $R$ in the hole(s) $D$.} lattice points of the ribbon $R(s_r[k])$, $t_r[k]$ is the smallest lattice point in $s_{[1:N]}$ (w.r.t the idle ribbon order) on the smallest idle ribbon (w.r.t the idle tree order) which has not been included in ${t_r}_{[1:k-1]}$;
    \item[-] if $s_r[k]$ is not among the first $c(R(s_r[k]))$ lattice points of the ribbon $R(s_r[k])$, $t_r[k]$ is the smallest lattice point in $S \setminus D$ (w.r.t the ribbon order) on the largest ribbon (w.r.t the complete tree order) which has not been included in ${t_r}_{[1:k-1]}$.
\end{itemize}

By following the re-activation sequence and the re-assembly sequence, extra robots of each ribbon move outside $S$ while others rearrange themselves along the ribbon to fill up the lattice points in $S \setminus D$. We refer to the two types of robots as \textbf{recycled robots} and \textbf{rearranged robots}, respectively. Loosely speaking, if the robot is a recycled robot, it moves along the ribbon until it exits the shape $S$, after which it orbits around the perimeter of the group of robots and assembles layer by layer in $H_1$. Conversely, a rearranged robot moves forward along the ribbon and stops at the next unoccupied vertex in $S \setminus D$. 

All robots obtain a copy of the activation sequence, re-activation sequence, and the corresponding assembly sequence and re-assembly sequence before the execution of the algorithm, which means a centralized information sharing mechanism will be adopted here. Such an assumption can be relaxed in real applications, which will be discussed at the end of this paper.

\subsection{Execute of the Algorithm}

The execution of this algorithm utilizes the following elementary behaviors:
\begin{enumerate}

    \item Localization: the moving robot establishes its coordinates based on the location of the non-moving robots in its communication range by triangulation. The following figure shows the localization of the green robot:
        \begin{figure}[H]
        \centering
        \includegraphics[scale=0.7]{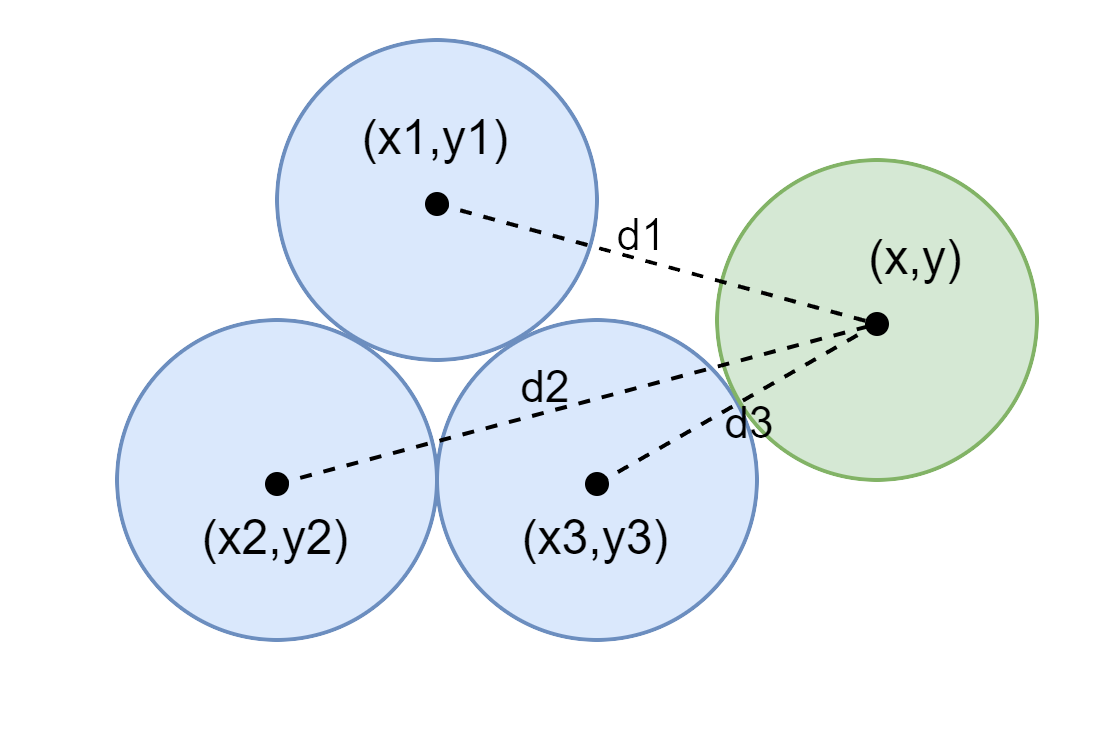}
        \label{fig:Localization}
        \end{figure}
    where
    \[
        (x,y) = argmin(\sum_{i=1}^{3} |d_i - \sqrt{(x-x_i)^2 + (y-y_i)^2}|)
    \]
    Note that the algorithm requires the coordinates of at least three non-collinear robots to deduce a unique solution.

    \item Gradient-Formation: The root robot maintains a constant gradient value of 0. All the other robots choose the smallest gradient value among their neighboring robots and set their own gradient value to be the minimum gradient value plus $1$.

    \item Edge-Following: the moving robot orbits around a collection of non-moving robots in the clockwise direction while maintaining distance $d$ to the closest non-moving robot. This behavior is achieved by turning right if the edge-following robot senses its distance to the nearest stationary robot is larger than $d$ and turning left if the distance is smaller than $d$.  The movement can be visualized as the robot travels along the perimeter of the MRS.

    \item Ribbon-Following: the moving robot orbits around a ribbon in the clockwise direction by maintaining distance $d$ to the closest robot located on the same ribbon. Please be reminded that $d = \frac{2r}{\sqrt{3}-1}$ is slightly larger than 2r. Thus, the collision can still be avoided while a robot performs the ribbon-following movement within the shape $S$.

    \item Epoch-Update: each robot assumes the maximum epoch among its neighbors.

\end{enumerate}

\medskip

Now, we explain the execution of the algorithm step-by-step. Each robot $v$ continuously broadcasts its current coordinates $p_t(v)$, its gradient value $g(v)$ and its epoch $T(v)$ to the robots in its neighborhood. Each robot $v$ will also receive the coordinates $p_t(u)$, the distance $d(u,v)$, the gradient value $g(u)$ and the local epoch $T(u)$ from all its neighboring robot $u \in N_t(v)$. Here, we provide a flow diagram for the execution of the algorithm and the corresponding pseudocodes.

\begin{figure}[H]
    \centering
    \includegraphics[scale=0.6]{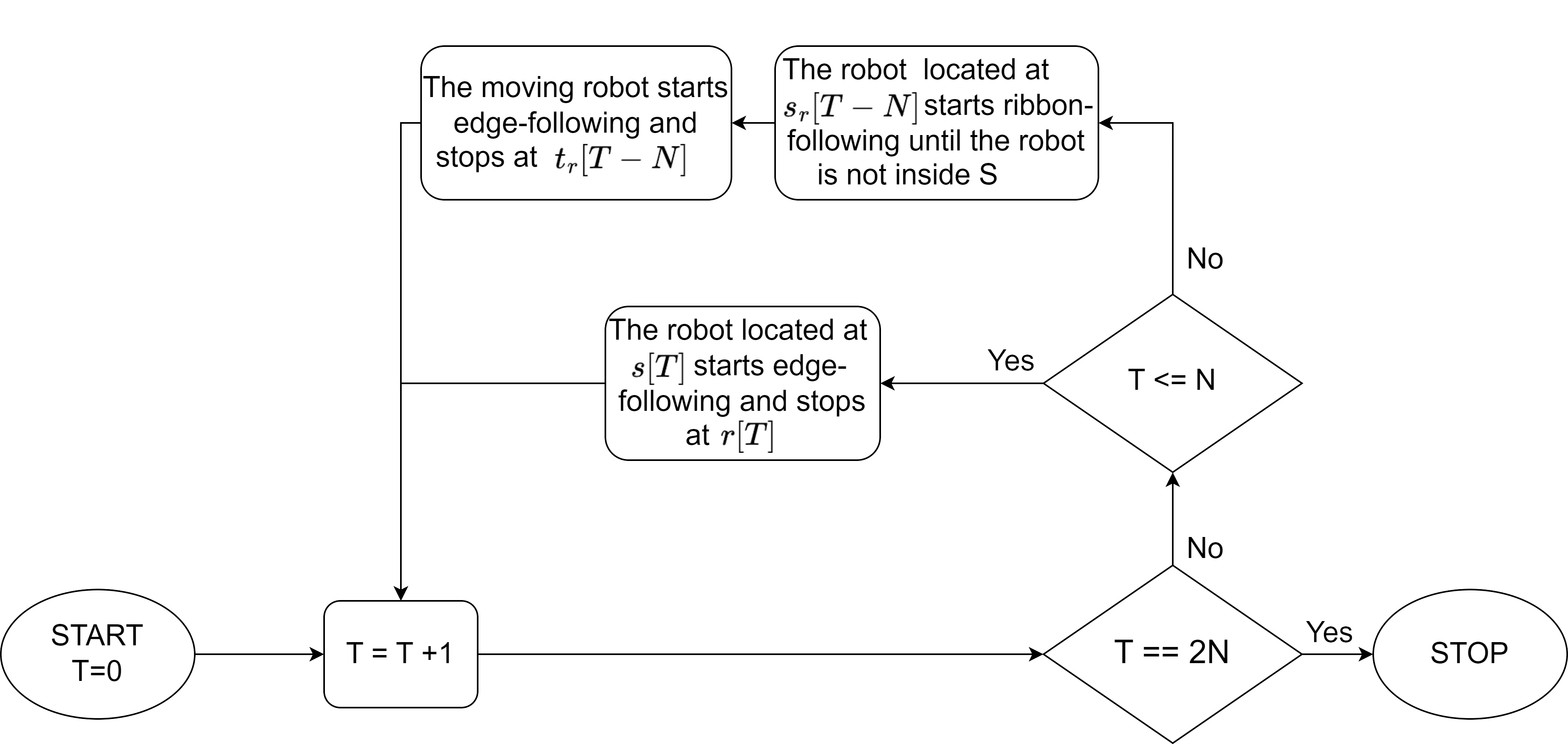}
    \caption{Flow diagram of the execution of the add-subtract algorithm}
    \label{fig:flowchart}
\end{figure}

Each robot of the MRS is given sufficient time to execute the localization algorithm and the gradient formation algorithm before the start of the iterations of the add-subtract algorithm. After each non-seed robot is stabilized, the execution of the iterations starts from epoch $T=1$. During each epoch $T$:

\begin{enumerate}
    \item if $1 \leq T \leq N$, the robot located at $s[T]$ starts edge-following and stops at $t[T]$. After each robot stops, the epoch count is increased by 1. In the additive stage, robots build up layer by layer and occupy all lattice points in $S$.
    \begin{figure}[H]
        \centering
        \includegraphics[scale=0.4]{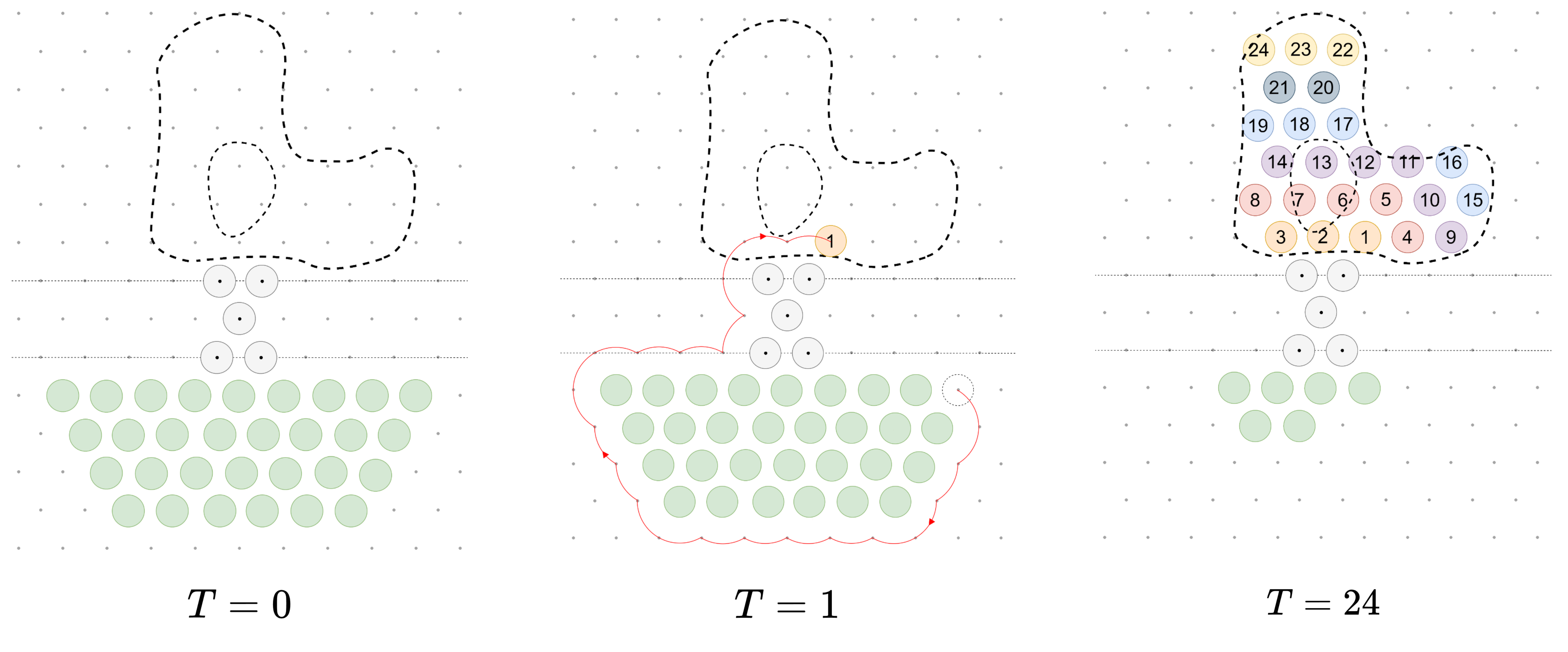}
        \caption{Illustration of the additive stage}
        \label{fig:Illustration of the additive stage}
    \end{figure}

    \item if $N+1 \leq T \leq 2N$, the robot located at $s_r[T-N]$ starts ribbon-following until it is outside the shape $S$. Then, the robot executes the edge-following algorithm until it stops at $t_r[T-N]$. After each robot stops, the epoch count is increased by 1 until $T=2N$, and the algorithm stops. In the subtractive stage, extra robots exit the hole $D$ and assemble in the idle half-plane. 
    \begin{figure}[H]
        \centering
        \includegraphics[scale=0.4]{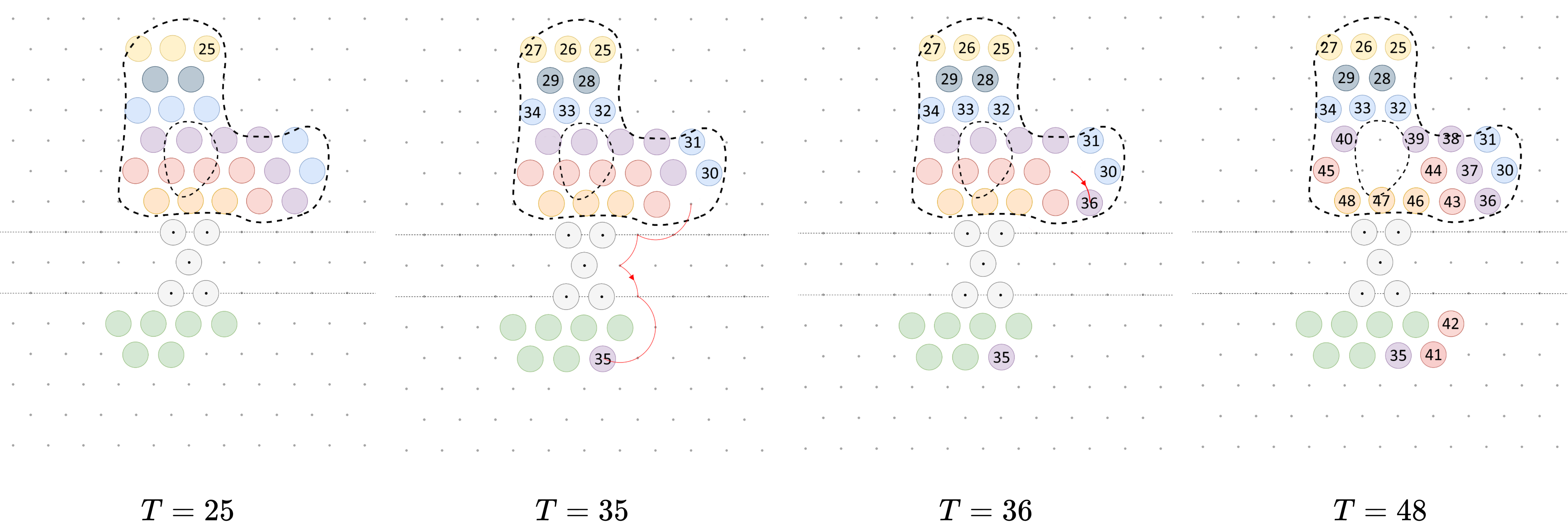}
        \captionsetup{justification=centering}
        \caption{Illustration of the subtractive stage. Notice that some robots appear stationary: \\ for instance, robot $u_{25}$, which has the same starting and stopping position since $s_r[25]=t_r[25]$}
        \label{fig:Illustration of the subtractive stage}
    \end{figure}

\end{enumerate}

\section{Proof of Correctness}
\label{sect-outlinedProof}

In this section, we introduce the important definitions and provide an outline of the proof of correctness. The proof is divided into two main parts: 
\begin{enumerate}
    \item in part I, we establish a formal way to discretize shape $S$ into layers of ribbons (definition \ref{def-directedRibbon-doc}). Then, we prove that the ribbons form a tree structure if $S$ is "properly" placed in the workspace.  The resulting set of ribbons is referred to as the \textbf{ribbonization} of $S$, and the ribbonization is \textbf{proper} if a series of requirements are satisfied (definition \ref{def-properRibbonization-doc}). Finally, the order among ribbons and the order among lattice points of the same ribbon (see definition \ref{def-ribTreeOrder}) are defined. The same procedure is applied to discretize the idle half-plane $H_2$ and respective orders are defined (see definition \ref{def-idleRibTreeOrder}). 
    \item in part II, we establish the movement sequences $s[k]$, $t[k]$, $s_r[k]$ and $t_r[k]$ according to the orders established before. After that, we show that:
    \begin{itemize}
        \item[-] for an arbitrary epoch $T$ of the additive stage, the path from $s[T]$ to $t[T]$ is unobstructed, and the moving robot can localize itself whenever necessary;
        \item[-] for an arbitrary epoch $T$ of the subtractive stage, the path from $s_r[T-N]$ to $t_r[T-N]$ is unobstructed, and the moving robot can localize itself whenever necessary.
    \end{itemize}
    Notice that the activation sequence $s$, the reactivation sequence $s_r$, the assembly sequence $t$ and the re-assembly sequence $t_r$ uniquely defined the sequence of tuples $(s_1,t_1,1), \ldots, (s_N,t_N,N), ({s_r}_1,{t_r}_1, N+1), \ldots, ({s_r}_N,{t_r}_N, 2N)$. We conclude that the sequence of tuples satisfies the problem statement. 
\end{enumerate}

Due to the page limit, only the outline for each part will be presented. For the complete proof, the reader could refer to appendix \ref{sect-Ribbonization} for the proofs of part I and appendix \ref{sect-methodAndProof} for the proofs of part II.

\subsection{Part I}

We start by placing the target shape $S$ in the workspace. We assume that the placement is \textbf{proper} such that $S$ is placed entirely in the assembly half-plane $H_1$ and adjacent to the seed robots.
\begin{definition}
    \label{def-properPlacement-doc}
    A placement of $S$ in the coordinate system $\mathbb{O}$ is \textbf{proper} if  $S \in H_1$ and points $O,p_1,p_2,p_3,p_4,p_5,p_6$ are not in $S$.
\end{definition}

\begin{definition}
    For arbitrary lattice point $x \in S$, the \textbf{hop count} for $x$ is the length of the shortest path from $x$ to the origin $O$ in $\Gamma$ that involves only lattice points in $S$ and lattice points in $\{p_1,p_2,\ldots,p_6\}$. If such a path does not exist, the hop count is defined as $-1$. The hop count of $x$ is denoted as $h(x)$. 
\end{definition}

After investigating the properties of hop count, we derive the following two patterns of hop counts:

\begin{proposition}
    Given a lattice point $x \in S$ with a positive hop count $k$, the neighborhood of $x$ can only be one of the two types:
    \begin{itemize}
        \item [-] Type $A$: $x$ is adjacent to one lattice point $y$ with hop count $k-1$. The lattice point(s) in $S$ adjacent to both $x$ and $y$ has the hop count equal to $k$. The remaining neighbors in $S$, if any, have the hop count equal to $k+1$.
        \item [-] Type $B$: $x$ is adjacent to two adjacent lattice point $y,z$ with hop count $k-1$. The lattice point(s) in $S$ that is adjacent to both $x$ and $y$ or adjacent to both $x$ and $z$ has the hop count equal to $k$. The remaining neighbors in $S$, if any, have the hop count equal to $k+1$.
    \end{itemize}
    \begin{figure}[H]
        \centering
        \includegraphics[scale=0.5]{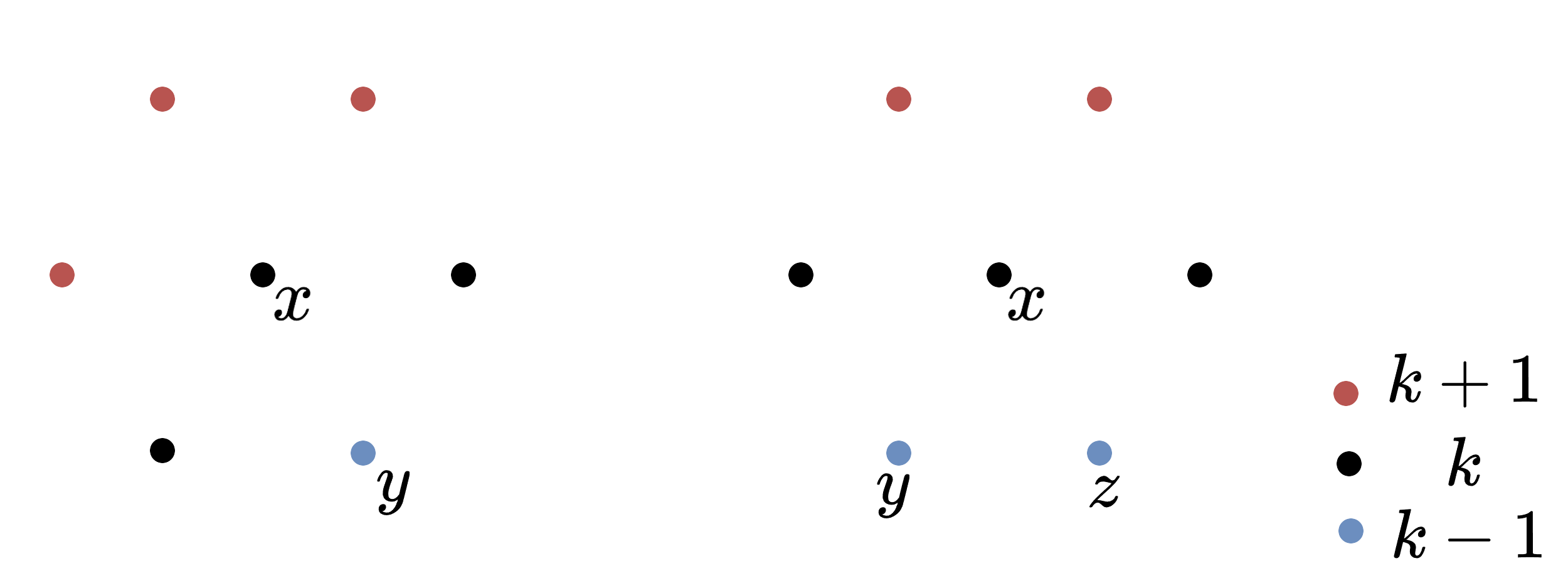}
        \caption{Left: type $A$ neighborhood of $x$; Right: type $B$ neighborhood of $x$}
    \end{figure}

\end{proposition}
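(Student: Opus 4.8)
The plan is to read the proposition as a statement about the distribution of the three possible values $\{k-1,k,k+1\}$ around the six-point ring of $x$, and to show this distribution must match one of the two aligned patterns. I would first record two elementary properties of $h$, both immediate from its definition as a shortest-path length in $\Gamma$ restricted to $S\cup\{p_1,\dots,p_6\}$: a \emph{bounded-difference} property, that adjacent reachable points have hop counts differing by at most $1$, so every neighbor of $x$ in $S$ has hop count in $\{k-1,k,k+1\}$; and an \emph{existence of a predecessor} property, that since $h(x)=k>0$ the vertex immediately preceding $x$ on a shortest path is a neighbor $y$ with $h(y)=k-1$, so at least one ring value equals $k-1$. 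As a guiding skeleton I would note that in the \emph{unconstrained} triangular lattice the distance-from-origin function visibly realizes exactly Types $A$ and $B$ (Type $A$ when the direction to $O$ is aligned with a lattice direction, Type $B$ when it bisects two of them), so the entire content of the proof is to show that the \emph{constrained} hop count $h$ behaves the same way despite paths being confined to $S\cup\{p_1,\dots,p_6\}$.

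The combinatorial core is to analyze $M$, the set of neighbors of $x$ with hop count $k-1$. Labelling the neighbors in $S$ as ring vertices, where consecutive ring vertices satisfy $\|\cdot\|=d$, vertices two apart satisfy $\|\cdot\|=\sqrt{3}d$, and opposite vertices satisfy $\|\cdot\|=2d$, I would prove that $M$ is nonempty, consists of consecutive ring vertices, and has size at most two; the cases $|M|=1$ and $|M|=2$ then give Type $A$ and Type $B$. This requires ruling out two obstructions. For (i), two opposite neighbors $y,z$ (with $\|y-z\|=2d$) both of hop count $k-1$: in the lattice one has the convexity inequality $h(y)+h(z)\ge 2h(x)$, contradicting $(k-1)+(k-1)<2k$, and the delicate point is to promote this inequality to the constrained distance, using the smoothness conditions to exclude the local concavities that could otherwise create a shortcut. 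For (ii), two neighbors at distance $\sqrt{3}d$ both of hop count $k-1$: such a pair shares exactly the two common neighbors $x$ and some $w$, and I would invoke the second smoothness condition, which forbids both common neighbors of a $\sqrt{3}d$ pair from lying outside $S$, to force the configuration into a unit triangle and hence reduce to an adjacent pair in $M$.

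With $M$ identified as either a single vertex or an adjacent pair, I would pin down the remaining values. Any neighbor adjacent to a vertex of $M$ has hop count at most $k$ by bounded difference, and by the size and consecutiveness just established it cannot itself lie in $M$, so its value is exactly $k$; these are precisely the shared neighbors named in the statement. For the remaining far neighbors I must exclude the value $k$ and conclude $k+1$: applying the opposite-pair inequality $h(n_i)+h(n_{i+3})\ge 2k$ shows the neighbor opposite a predecessor has hop count $k+1$, and I would then use the smoothness conditions once more to rule out the value $k$ for the intermediate far neighbors, which would otherwise require exactly the kind of local concavity the smoothness conditions prohibit.

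The main obstacle is steps (i) and (ii): upgrading the clean unconstrained convexity argument to the constrained hop count, since a priori a concavity or the proximity of a hole could distort the local pattern. The two smoothness conditions are exactly the hypotheses that forbid such distortions, and the bulk of the work is a careful case analysis verifying that they suffice to reroute any offending path and recover the aligned pattern, including the boundary cases where a relevant neighbor is one of the seed-adjacent points $p_1,\dots,p_6$ (which lie outside $S$ and so fall outside the inductive hypothesis). I would therefore organize the whole argument as an induction on $k$, so that when treating the neighborhood of $x$ I may assume the proposition already holds at a predecessor $y\in S$ with $h(y)=k-1$; the known Type $A$/$B$ structure around $y$ sharply constrains the admissible local geometry at $x$ and shortens the remaining case check.
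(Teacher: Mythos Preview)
Your decomposition—identify the set $M$ of $(k{-}1)$-neighbors, show it is a singleton or an adjacent pair, then pin down the remaining ring values—matches the paper's structure, but the mechanism you propose for the key step is different and contains a real gap. The paper does \emph{not} invoke the smoothness hypotheses anywhere in this proposition; the result holds for any connected $S$ that is properly placed. Its two workhorse lemmas are: (a) $x$ cannot have two \emph{non-adjacent} neighbors with hop count $k-1$; and (b) no triangle of three mutually adjacent lattice points can all share the same hop count. Both are proved by a single topological idea rather than a metric inequality: if two non-adjacent neighbors $y,z$ of $x$ each had $h=k-1$, the two shortest paths $x\to y\to O$ and $x\to z\to O$ together enclose a region of $S$, and a straight lattice ray from $x$ into that region must hit one of those paths at a point reachable from $x$ by a strictly shorter route, contradicting minimality. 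Lemma~(b) uses the same enclosure-and-ray trick. From (a) one gets at once that $M$ is a singleton or an adjacent pair; then (b) and a second application of (a), this time at a neighboring point, force the remaining neighbors to take the values $k$ and $k+1$ exactly as claimed.

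Your substitute, the convexity inequality $h(y)+h(z)\ge 2h(x)$ for opposite neighbors, is not established and is not a general fact about shortest-path distance in a restricted subgraph: nothing stops $y$ and $z$ from each having a short route to $O$ that bypasses $x$ entirely. You acknowledge this and plan to repair it with smoothness, but you never say concretely how smoothness would yield the inequality, and the paper shows it is simply unnecessary. Your handling of the $\sqrt{3}d$ case has the same problem: the second smoothness clause only says that not both common neighbors of a $\sqrt{3}d$ pair lie outside $S$, which is vacuous here since $x\in S$ already; it gives you no control over $h(w)$ for the other common neighbor $w$, so it does not ``reduce to an adjacent pair in $M$'' as you hope. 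The clean fix is to abandon the metric inequalities and smoothness altogether and use the enclosed-region shortcut argument instead; this also removes the need for your induction on $k$.
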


Then, we define undirected ribbons as the maximum continuous sequence of adjacent lattice points of the same hop count:

\begin{definition}
    Let $G$ be the subgraph of $\Gamma$ induced by the sets of lattice points with the same hop count. An \textbf{undirected ribbon} $R^*$ is defined as a component of $G$. The \textbf{gradient} of $R^*$ is the hop count of any lattice point on the ribbon $R^*$.
\end{definition}

Then the directed ribbons, or simply ribbons, can be defined:
\begin{definition}
    \label{def-directedRibbon-doc}
    For a given undirected ribbon $R^*$, its corresponding directed ribbon $R$ is a directed graph such that for each vertex $x \in R$, the arc entering $x$ and the arc leaving $x$ is defined such that:
    \begin{figure}[H]
        \centering
        \includegraphics[scale=0.5]{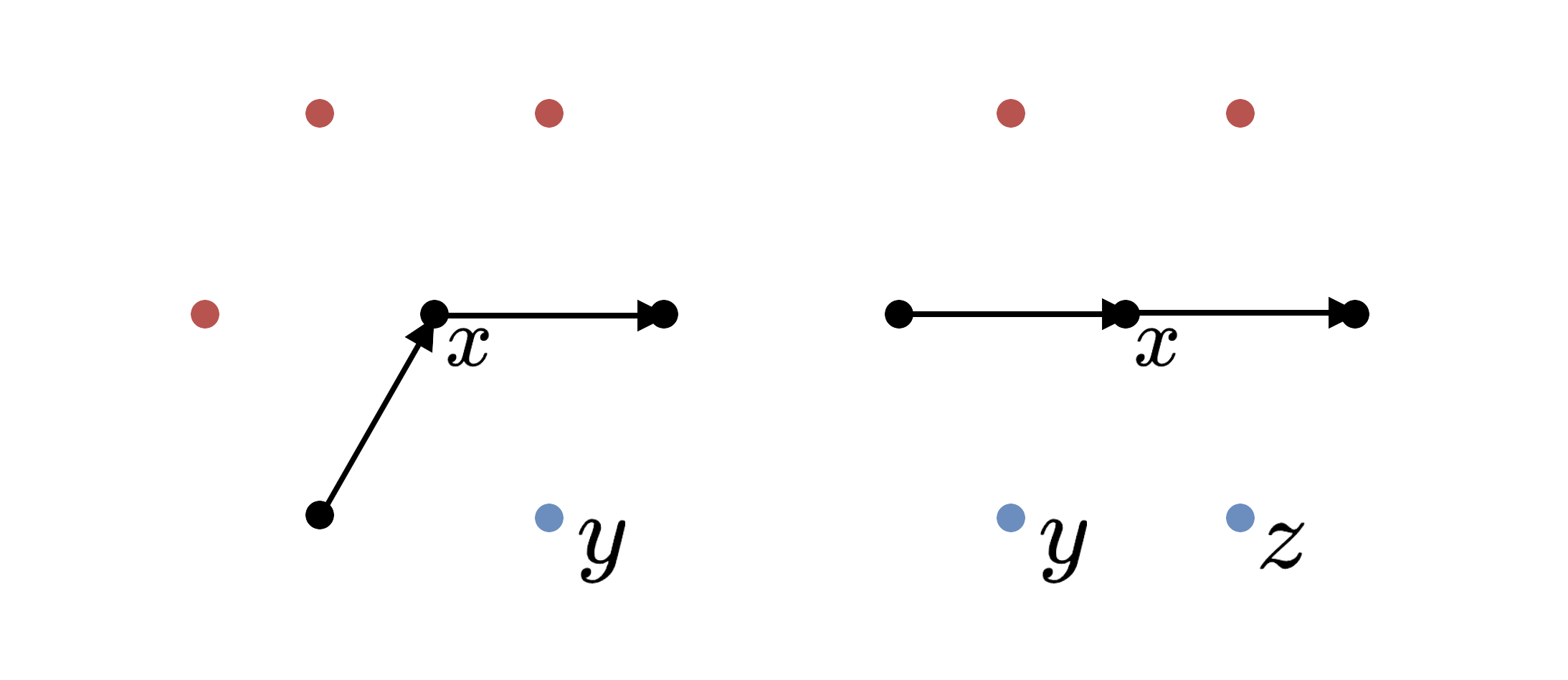}
    \end{figure}
    \begin{itemize}
        \item [-] if $x$ has the type $A$ neighborhood, the two arcs are directed clockwise around $y$;
        \item [-] if $x$ has the type $B$ neighborhood, the arc entering $x$ is directed clockwise around $y$, and the arc leaving $x$ is directed clockwise around $z$.
    \end{itemize}
    The \textbf{gradient} of ribbon $R$ is the hop count of any lattice point belonging to the ribbon $R$. Denote as $\bm{g(R)}$. The \textbf{length} of $R$ is the number of vertices in $R$. 
\end{definition} 

We define the process of discretizing $S$ into ribbons as \textbf{ribbonization}, denoted as $\hat{S}$. The requirements for a proper ribbonization are given as follows:
        
    
\begin{definition}
    \label{def-properRibbonization-doc}
    A \textbf{proper ribbonization} of a shape $S$ satisfies the following properties:
    \begin{enumerate}
        \item $S$ is properly placed in $\mathbb{O}$;

        \item every lattice point in $S$ has a positive hop count, that is,
            \[ (\forall x \in \{S\setminus D\} \cap L) h(x)>0  \]
        \item $(\forall R \in \hat{S}$) $len(R) \geq 2$;
        \item only one ribbon with the gradient value equal to two, and the ribbon has a length of three, that is,
            \[ (\exists! R \in \hat{S}) g(R)=2 \land len(R)=3\]

        \item every lattice point on the boundary of $S$ is not in $D$, that is
            \[ (\forall x \in S) (\exists y \in S^c \cap L \text{ and } ||x-y||=d) \Rightarrow x \notin D \]    
        \item $S$ has a smooth boundary in the sense that
            \[ (\forall x,y \in S) (||x-y||=2d \Rightarrow ((\forall z \in L) ||x-z||=||y-z||=d \Rightarrow z \in S ))\] 
            and
            \[ (\forall x,y \in S) (||x-y||=\sqrt{3} \Rightarrow \neg((\forall z \in L) ||x-z||=||y-z||=d \Rightarrow z \notin S ))\]        

    \end{enumerate}
    
\end{definition}

Notice that, by satisfying the proper lattice representation requirements in section \ref{probDescription} and the proper ribbon structure requirements in section \ref{ribbon_structure}, the ribbonization of $S$ is proper. We denote the ribbon with a gradient of $2$ as $R_0$. This ribbon serves as the root of the tree of ribbons.

\begin{definition}
    Given a proper ribbonization $\hat{S}$, its corresponding \textbf{ribbonization tree $Tr(\hat{S})$} is defined as a directed graph $(V(Tr(\hat{S})), A(Tr(\hat{S})))$ such that 
    \begin{enumerate}
        \item $V(Tr(\hat{S})) = \hat{S}$ and 
        \item $(\forall R_i, R_j \in V(Tr(\hat{S})))$ $(R_i,R_j) \in A(Tr(\hat{S}))$ if $R_i$ is the parent of $R_j$. 
    \end{enumerate}
\end{definition}

\begin{theorem}
    The ribbonization tree is a directed tree rooted at $R_0$.
\end{theorem}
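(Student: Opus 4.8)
The plan is to verify the three defining properties of an arborescence rooted at $R_0$: the root $R_0$ has in-degree $0$, every other ribbon has in-degree exactly $1$, and the digraph contains no directed cycle. Together these force a unique directed path from $R_0$ to every vertex, which is precisely the assertion that $Tr(\hat S)$ is a directed tree rooted at $R_0$.

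First I would pin down the gradients. By proper placement the origin $O$ and its six neighbours $p_1,\dots,p_6$ lie outside $S$, so no lattice point of $S$ has hop count $0$ or $1$; combined with the requirement that every point of $S$ has positive hop count, the minimal gradient occurring in $\hat S$ is $2$. Proper ribbonization guarantees a unique ribbon of gradient $2$, namely $R_0$. I would then invoke the parent-ribbon statement: each vertex of a ribbon $R$ of gradient $k\geq 3$ is adjacent to a vertex of a unique ribbon $R_p$ closer to the origin, and this $R_p$ is the same for every vertex of $R$, so $R$ has a well-defined unique parent. Since the adjacent lower-hop-count neighbour of a gradient-$k$ vertex has hop count $k-1$, the parent satisfies $g(R_p)=g(R)-1$. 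This yields the in-degree count: $R_0$ has no parent (its only lower-hop-count neighbours are the $p_i\notin S$, which form no ribbon of $\hat S$), so in-degree $0$, while every ribbon of gradient $\geq 3$ has exactly one parent, hence in-degree $1$.

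Next I would establish acyclicity from the gradient bookkeeping: every arc $(R_p,R)$ raises the gradient by exactly one, so the gradient is strictly monotone along any directed walk and a directed cycle is impossible. The digraph is therefore a DAG in which $R_0$ is the unique source. Finally, starting from an arbitrary ribbon and repeatedly passing to its unique parent produces a chain of strictly decreasing gradient; being finite, it must terminate, and it can only terminate at the unique in-degree-$0$ vertex $R_0$. Reading this chain forward gives a directed $R_0$-to-$R$ path, and uniqueness of the parent at each step makes the path unique. A consistency count closes the argument: the in-degrees sum to $n-1$ for $n=|\hat S|$ ribbons, so $Tr(\hat S)$ has exactly $n-1$ arcs and, being weakly connected by the reachability just shown, its underlying undirected graph is a tree.

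I expect the only genuinely delicate point to be the well-definedness of the parent, that all vertices of a single ribbon are adjacent to vertices of the same lower ribbon rather than to two distinct ones; but this is exactly the content of the earlier parent-ribbon claim, itself proved via the Type $A$/Type $B$ neighbourhood proposition, so here it is used as an established fact. The remaining effort is purely the assembly of the in-degree, acyclicity, and reachability facts into the standard characterization of a rooted directed tree.
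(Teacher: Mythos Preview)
Your proposal is correct and follows the same approach as the paper: both arguments hinge on Proposition~\ref{prop-uniqueParent} (every ribbon other than $R_0$ has a unique parent), from which the tree structure is deduced. The paper's own proof compresses everything you wrote into a single sentence---``by proposition \ref{prop-uniqueParent}, every ribbon has a unique parent ribbon except for $R_0$; as a result, by assigning $R_0$ as the root, we obtain a directed tree''---leaving the acyclicity (via strictly increasing gradient along arcs) and reachability (via finite parent chains) implicit, whereas you spell these out explicitly.
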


In the similar way, the half-plane $H_2$ can be ribbonized into a tree-structure of idle ribbons $\{IR_0, IR_1, IR_2,... \}$. 

\begin{figure}[H]
    \centering
    \includegraphics[scale=0.13]{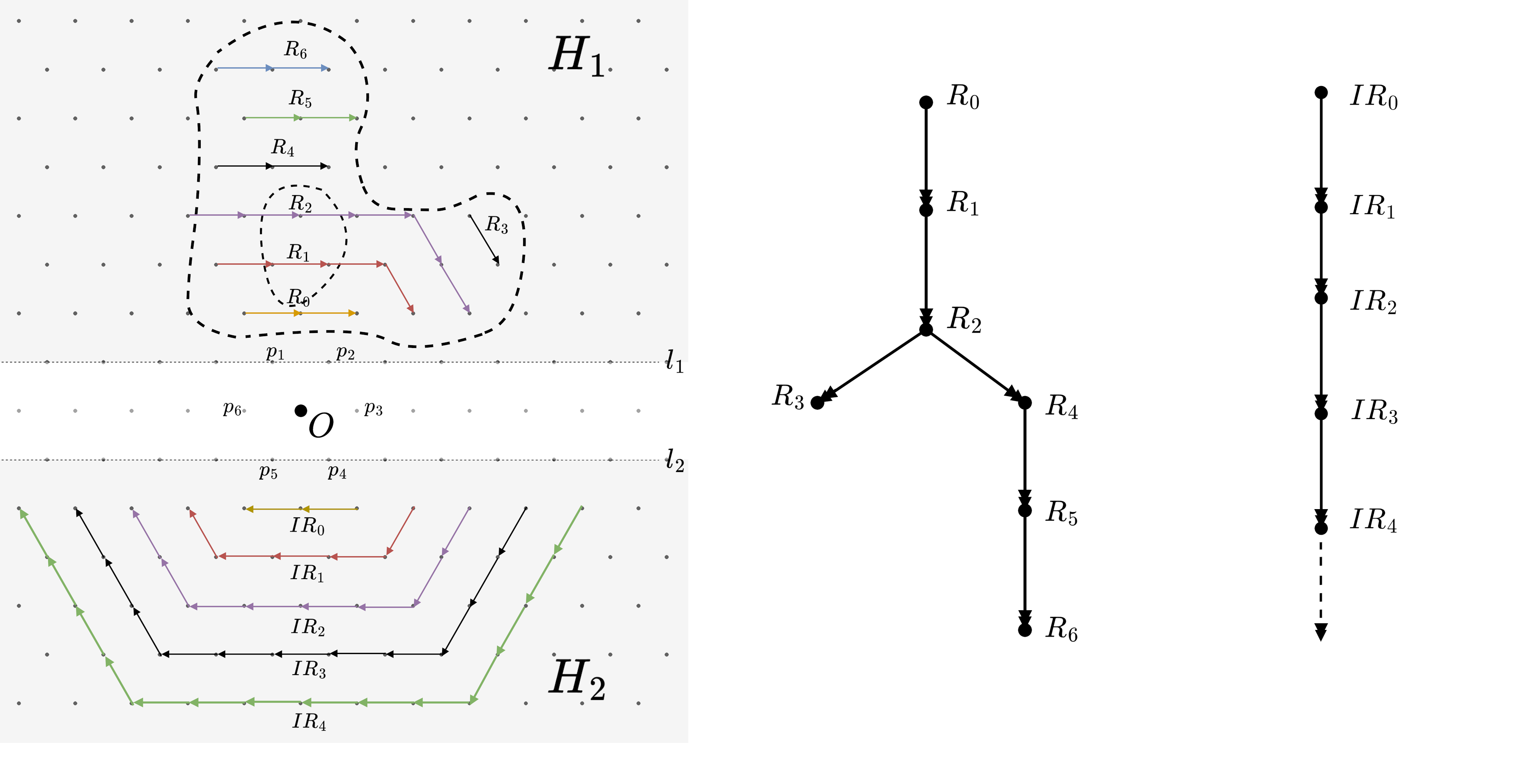}
    \captionsetup{justification=centering}
    \caption{Left figure: Ribbonization of $S$ and ribbonization of $H_2$;\\ middle figure: tree structure of ribbons; right figure: tree structure of idle ribbons}
    \label{fig:treeStructure}
\end{figure}

In view of the tree structure among ribbons, we could define the \textbf{ribbon order} and the \textbf{tree order} (definition \ref{def-ribTreeOrder}). Similarly, for the tree structure among idle ribbons, we could define the \textbf{idle ribbon order} and the \textbf{idle tree order} (definition \ref{def-idleRibTreeOrder}):

\begin{definition}
    \label{def-ribTreeOrder}
    \leavevmode
    \makeatletter
    \@nobreaktrue
    \makeatother
    \begin{enumerate}
        \item The \textbf{ribbon order}, denoted as $<$, is a total order of the lattice points of a ribbon $R$ such that
            \[(\forall x,y \in V(R)) x < y \iff y \rightarrow x \]
        \item The \textbf{tree order}, denoted as $<$, is a partial order of the ribbons in $S$ such that
            \[(\forall R_i,R_j \in \hat{S}) R_i < R_j \iff R_i \twoheadrightarrow R_j \]
        The \textbf{complete tree order} is a strict total order that extends the tree order\footnote{A partial order extends to a total order by the Szpilrajn extension theorem.}. 
    \end{enumerate}
\end{definition}

\begin{definition}
    \label{def-idleRibTreeOrder}
    \leavevmode
    \makeatletter
    \@nobreaktrue
    \makeatother
    \begin{enumerate}
        \item The \textbf{idle ribbon order}, denoted as $<$, is a total order of the lattice points of an idle ribbon $IR$ such that
            \[(\forall x,y \in V(IR)) x < y \iff y \rightarrow x \]
        \item The \textbf{idle tree order}, denoted as $<$, is a partial order of the idle ribbons in $S$ such that
            \[(\forall IR_i,IR_j \in \hat{H_2}) IR_i < IR_j \iff IR_i \twoheadrightarrow IR_j \]
    \end{enumerate}
\end{definition}


\subsection{Part II}
We prove the correctness of the algorithm in two steps: for the additive stage, we show that the path between $s[T]$ and $t[T]$ is unobstructed, and the active robot could localize itself using non-active robots at $s[T]$ and $t[T]$; for the subtractive stage, we show that the path between $s_r[T-N]$ and $t_r[T-N]$ is unobstructed, and the active robot could localize itself using non-active robots at $s_r[T-N]$, $t_r[T-N]$ and the merging point (defined shortly).

\subsubsection{Additive Stage}

By induction on the perimeter of the MRS in each additive epoch, together with the boundary smoothness condition (required by the proper ribbonization), we show that there is a unique and unobstructed path for robot $u_{T}$ when it executes the edge-following movement and $t[T]$ is on this unique path.

\begin{proposition}
    \label{thm-unobstructedPath-outline}
    In each epoch $T \leq N$,  for the robot $u_T$ that is executing the edge-following algorithm, its path from $s[T]$ to $t[T]$ is unobstructed.
\end{proposition}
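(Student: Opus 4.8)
The plan is to argue by induction on the epoch index $T$, tracking the occupied region of the MRS as it grows ribbon by ribbon. The inductive hypothesis I would maintain is a structural description of the configuration at the start of epoch $T$: by the definition of the assembly sequence, the occupied lattice points inside $S$ are exactly $\{t[1],\dots,t[T-1]\}$, which consists of every ribbon strictly below $R(t[T])$ in the complete tree order, fully filled, together with a ribbon-order prefix of the current ribbon $R(t[T])$. Because ribbons are the equal-hop-count layers and the hop count increases outward from the seeds, this occupied region is assembled from the innermost layer outward. My first step is to show that it is connected and that the part of its boundary facing the unfilled region is a single arc, so that the perimeter of the occupied region is a simple closed curve along which a robot maintaining distance $d$ to its nearest neighbour can orbit without interruption. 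This relies on the tree structure of the ribbonization (the theorem that the ribbonization tree is rooted at $R_0$): completing ribbons in tree order grows the frontier monotonically and never seals off an empty pocket on the occupied side, so the boundary stays a single curve and the nearest-neighbour distance is always attained.

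Second, I would convert the global boundary-smoothness conditions of a proper ribbonization (Definition \ref{def-properRibbonization-doc}) into local guarantees for edge-following. The two conditions forbid exactly the two local patterns that would break the orbit: a collinear one-cell notch (two occupied cells at distance $2d$ with an empty common neighbour, excluded by the first condition) and a diagonal pinch (two occupied cells at distance $\sqrt{3}\,d$ whose only two common neighbours are both empty, excluded by the second). Ruling these out, together with the spacing $d=\tfrac{2r}{\sqrt{3}-1}>2r$, is what ensures the orbiting robot never meets a gap narrower than its diameter and never reaches an ambiguous branch point, so that the clockwise orbit leaving $s[T]$ is both unique and unobstructed.

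Third, I would locate the stopping point. Since $t[T]$ is the smallest unoccupied point (in ribbon order) of the smallest unfilled ribbon (in complete tree order), I would show it is adjacent to an already-placed robot — either its ribbon-order predecessor or its neighbour on the parent ribbon — hence it lies on the orbit, and that it is the first empty adjacent slot met in clockwise order after the robot leaves the idle region at $s[T]$. This links the discrete orders defined earlier to the geometric orbit and shows that the robot halts exactly at $t[T]$, completing the inductive step (the base case $T=1$ being the known seed-plus-idle configuration).

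The hard part will be the second step: turning the combinatorial smoothness conditions into a clean local statement that the edge-following curve is everywhere single-valued with clearance at least one robot diameter. I expect the delicate case analysis to concentrate at the frontier of the partially filled current ribbon and at the junction where a ribbon meets its parent, where the boundary changes direction; there I must verify that the smoothness conditions precisely preclude the local patterns that would trap the robot or split its orbit. By contrast, the hand-off from one completed ribbon to the next should follow directly from the tree order, so the real work lies in the clearance-and-uniqueness argument rather than in the bookkeeping of the induction.
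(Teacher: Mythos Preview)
Your overall architecture matches the paper's: induction on epochs, an invariant that the perimeter is a single simple curve, and locating $t[T]$ on that curve. The gap is in your second step. The smoothness conditions in Definition~\ref{def-properRibbonization-doc} are statements about membership in $S$, not about occupancy: the clause ``$z\in S$'' does not say ``$z$ is occupied at epoch $T$''. So at an intermediate epoch the pattern you call a one-cell notch --- two occupied points at distance $2d$ with the cell between them empty --- is \emph{not} ruled out by smoothness of $S$; the middle cell is in $S$ but simply has not been filled yet. Likewise for the $\sqrt{3}d$ case. The smoothness of $S$ is an input to the argument, not the argument itself.

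What the paper does (and what you are missing) is lift the two static smoothness conditions to a \emph{dynamic} invariant on the currently occupied set: for every pair of boundary robots at distance $2d$ the cell between them is occupied, and for every pair at distance $\sqrt{3}d$ at most one common neighbour is unoccupied. This invariant is then shown to survive \emph{both} transitions of each epoch --- the removal at $s[T]$ on the idle side and the insertion at $t[T]$ on the formation side --- by a case analysis that combines the Type~A/B neighbourhood structure with the assembly order. Smoothness of $S$ enters only to force the hypothetical bad cell to lie in $S$, after which the contradiction comes from the ribbon/tree order, not from smoothness. Your proposal addresses only the insertion side and attributes the exclusion of bad patterns to the wrong hypothesis. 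Once you reformulate the invariant as a property of the current configuration and check both halves of each epoch, the rest of your outline (single orbit, $t[T]$ on the orbit) goes through as the paper's does.
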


In terms of localization, we first notice that all idle robots are able to localize themselves. Since every ribbon has a minimum length of two (as required by the minimum length condition of proper ribbonization), we deduce that the robot $u_T$ is localizable at $t[T]$.

\begin{figure}[H]
    \centering
    \includegraphics[scale=0.6]{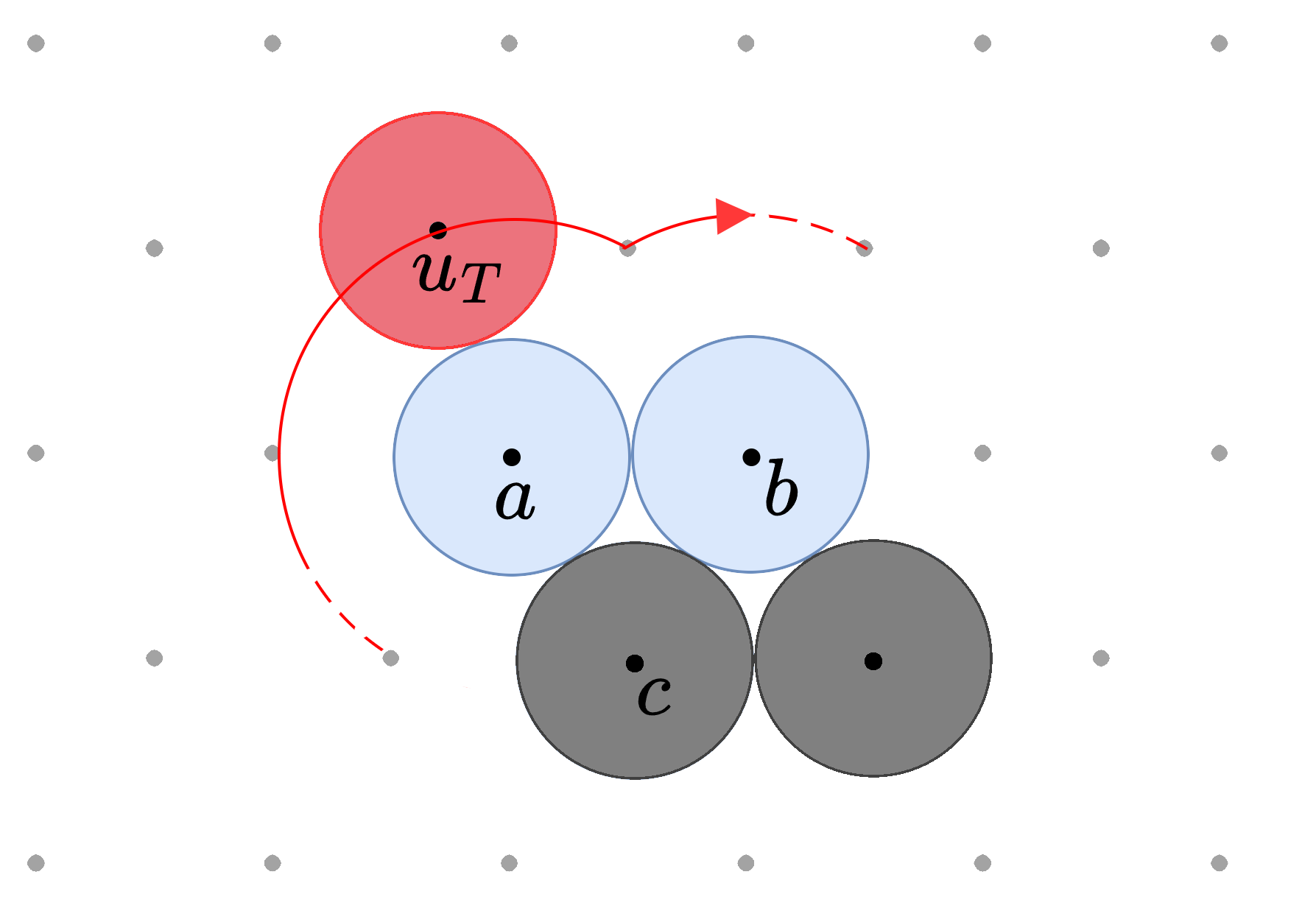}
    \captionsetup{justification=centering}
    \caption{Illustration of the localization: $u_T$ is localizable at the current position\\ since stopped robots $a,b,c$ are within $2d$ distance from $u_T$}
    
\end{figure}

\begin{proposition}
    \label{thm-localizable-outline1}
    In each epoch $1 \leq T \leq N$, robot $s_T$ is localizable at lattice point $s[T]$ and lattice point $t[T]$.
\end{proposition}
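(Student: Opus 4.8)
The plan is to unwind the definition of \emph{localizable}: a robot at a lattice point $p$ is localizable exactly when at least three pairwise non-collinear stationary robots lie within its communication range $l$. Since $l>\tfrac{4r}{\sqrt3-1}=2d$, every occupied lattice point at Euclidean distance $d$, $\sqrt3\,d$ or $2d$ from $p$ — i.e. every occupied point within graph-distance two of $p$ — is usable for triangulation. So for each of the two claims it suffices to exhibit three occupied, pairwise non-collinear lattice points within distance $2d$ of the query point, drawing only on the occupancy guaranteed by the orderings that define $s$ and $t$.

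For localizability at $s[T]$ (which lies in the idle half-plane $H_2$) I would first record the occupancy invariant forced by the construction. The idle ribbons are filled from the root $(p_5,p_4)$ outward, while the activation sequence consumes them outermost-ribbon-first and, within a ribbon, largest-point-first with respect to the orders of Definition \ref{def-idleRibTreeOrder}. Hence at the start of epoch $T$ every idle ribbon strictly closer to the root than $R(s[T])$ is still filled, every point of $R(s[T])$ smaller than $s[T]$ is still occupied, and the seed robots $v_3,v_4$ at $p_4,p_5$ are present. Because the idle tree is a single chain, the remaining idle region is a compact filled block whose surface point $s[T]$ has both a filled same-ribbon predecessor and a filled parent-ribbon neighbour; these, together with a further occupied point of the parent ribbon (of length $\ge 2$), furnish three non-collinear robots. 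This reduces the idle case to the same local geometric fact needed below.

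For localizability at $t[T]$ (which lies in $S$) the engine is the filling order: $t$ traverses $S$ in complete tree order (ancestors before descendants) and within each ribbon in ribbon order (leader first). Thus when the active robot $u_T$ arrives at $t[T]$, the parent ribbon $R_p$ of $R(t[T])$ is already fully occupied — either it was filled in an earlier epoch, or, for the root $R_0$, it is the seed pair $(p_2,p_1)$ — and every point of $R(t[T])$ preceding $t[T]$ is occupied. By the minimum-length clause of a proper ribbonization (Definition \ref{def-properRibbonization-doc}), $R_p$ carries at least two occupied points, and by the parent-adjacency property of ribbons $t[T]$ is adjacent to $R_p$; triangle inequality then places two occupied points of $R_p$ within $2d<l$ of $t[T]$. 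If $t[T]$ is not the leader of its ribbon, its occupied same-ribbon predecessor supplies a third point off the parent row, whose non-collinearity I would read off the Type A/Type B neighbourhood classification. If $t[T]$ is the leader, I would instead draw the third point from the grandparent ribbon (occupied, since it precedes $R_p$ in tree order) or, in the Type B case, from the second parent neighbour, again certifying non-collinearity from that classification and the boundary-smoothness clauses.

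I expect the non-collinearity argument — especially the leader subcase — to be the main obstacle. Producing two occupied robots near $t[T]$ is immediate from the ordering invariants, but securing a third that genuinely lies off the line through the first two requires ruling out the degenerate configuration in which all occupied points within $2d$ are collinear. Here I would lean on the hexagonal geometry encoded in the Type A/Type B proposition: the parent row (gradient $k-1$) and the grandparent row (gradient $k-2$) occupy distinct parallel lattice lines, so occupied points taken from each, together with the query point's neighbour, cannot all be collinear unless the local patch degenerates — a degeneracy excluded by $len(R_p)\ge 2$ and the smoothness conditions. Once this single geometric lemma is in hand it services both the $t[T]$ and the $s[T]$ cases, completing the proof.
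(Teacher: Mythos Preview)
Your proposal is correct in its ingredients and overall strategy, but it decomposes the $t[T]$ case differently from the paper, and the paper's decomposition is cleaner. You split on whether $t[T]$ is the leader of its ribbon: in the non-leader case you take two parent-ribbon points and the same-ribbon predecessor; in the leader case you fall back to a grandparent point. The paper avoids this split entirely by never looking at the same-ribbon predecessor. Instead it observes that $t[T]$ is adjacent to some robot $v$ on the (filled) parent ribbon $R^{[p]}$, and then applies the Type A/Type B classification to the neighbourhood of $v$, not of $t[T]$: since $R^{[p]}$ has length at least two, $v$ has a same-ribbon neighbour $w\in R^{[p]}$, and $v$ also has a neighbour $k$ with hop count $g(v)-1$ (on the grandparent ribbon, or a seed). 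The triple $(v,w,k)$ is non-collinear by the Type A/B picture at $v$, all three lie within $2d$ of $t[T]$, and all three are localizable by induction. This works uniformly whether or not $t[T]$ is a leader, so the case you flagged as ``the main obstacle'' simply evaporates.

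For the $s[T]$ case the paper proceeds by a separate induction (Proposition~\ref{prop-IdleLocalizable}) showing that every idle robot is localizable at $T=0$ and hence remains so until it moves; the local witness-finding is the same $(v,w,k)$ pattern applied to idle ribbons. Your treatment of the idle side is essentially equivalent. The net effect is that both arguments rest on the same geometric lemma --- three non-collinear occupied neighbours from parent/parent/grandparent --- but the paper extracts it once as a standalone proposition (``localizable if neighbouring a seed or a robot on a filled ribbon'') and invokes it twice, whereas you rediscover it inside each case with an unnecessary leader/non-leader bifurcation.
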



\subsubsection{Subtractive Stage}
For the subtractive stage, in each epoch $T$ with $N+1\leq T \leq 2N$, the active robot $u_T$ has three cases: 
\begin{itemize}
    \item [-]case 1: if $s_r[T-N] = t_r[T-N]$, the robot $u_{T}$ shall stay at its current position;
    \item [-]case 2: if $s_r[T-N] \neq t_r[T-N]$ and $t_r[T-N] \in H_2$, then the robot $u_{T}$ shall perform the ribbon-following movement until it exits the shape. After that, the robot performs the edge-following movement until it stops at $t[T]$;
    \item [-]case 3: if $s_r[T-N] \neq t_r[T-N]$ and $t_r[T-N] \in S \setminus D$, then the robot $u_{T}$ shall only perform the ribbon-following movement and stops inside $S \setminus D$.
\end{itemize}
The correctness of the first case is trivial, and we shall show the correctness of the second and the third case\footnote{The third case can be viewed as a special case of the second, where only ribbon-following is executed}. 

To facilitate the proof, we define the \textbf{merging point} $\mathbf{m}$ for each ribbon as the first lattice point a ribbon-following robot encounters after exiting the shape $S$. Next, we investigate the perimeter of the swarm after $u_N$ stops, and show that the merging point is in front of all the other points on the perimeter contributed by the same ribbon. This property ensures that the robot exiting $S$ follows the perimeter of the swarm by executing the edge-following algorithm. The path from $s_r[T-N]$ to $t_r[T-N]$ is divided by the merging point into the \textbf{interior path} and the \textbf{exterior path}:

\begin{figure}[H]
    \centering
    \includegraphics[scale=0.4]{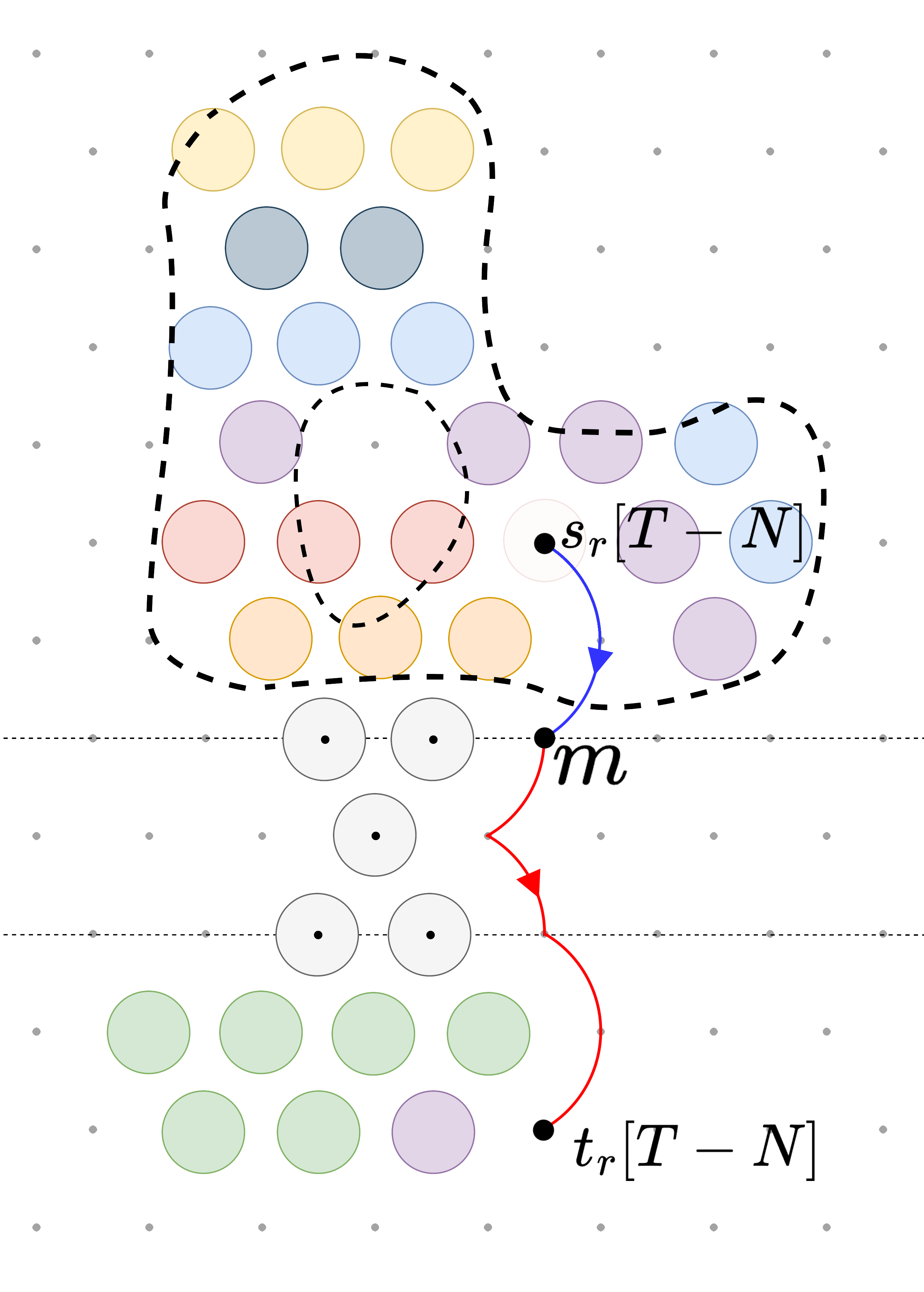}
    \caption{Illustration of the interior path (blue), the merging point $m$, and the exterior path (red)}
\end{figure}

Note that the interior path corresponds to the path executed with the ribbon-following algorithm, and the exterior path corresponds to the path executed with the edge-following algorithm. By investigating the geometry of the lattice, we show that the interior path from $s_r[T-N]$ to $m$ is always unobstructed. By similar arguments in the additive stage, we deduce that the exterior path from $m$ to $t_r[T-N]$ is unobstructed. As a result,

\begin{proposition}
    \label{thm-localizable-outline3}
    For any active robot in the subtractive stage, its path from $s_r[T-N]$ to $t_r[T-N]$ is unobstructed.
\end{proposition}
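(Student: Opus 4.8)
The plan is to first dispose of Case 1, where $s_r[T-N]=t_r[T-N]$ and the robot does not move, so no path needs to be cleared. For Cases 2 and 3 the active robot $u_T$ performs ribbon-following and then, in Case 2, edge-following, so I decompose the path at the merging point $m$ into an \emph{interior} portion from $s_r[T-N]$ to $m$ (traversed by ribbon-following) and an \emph{exterior} portion from $m$ to $t_r[T-N]$ (traversed by edge-following); Case 3 is then the special case in which the robot stops before reaching the boundary of $S$, so only the interior argument is required. The proof reduces to three ingredients: a configuration invariant describing which lattice points are occupied at the start of epoch $T$, unobstructedness of the interior path, and unobstructedness of the exterior path.

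I would first establish the configuration invariant by induction on the subtractive epoch $T$. Because the re-activation sequence $s_r$ consumes the largest ribbon (in the complete tree order) first, and within a ribbon the smallest lattice point (the leader vertex) first, the invariant to maintain is: at the start of the epoch with $s_r[T-N]=x_i\in R$, every ribbon strictly larger than $R$ has already reached its final state (hole vertices vacated, $S\setminus D$ vertices occupied), every ribbon strictly smaller than $R$ --- in particular every ancestor of $R$ --- is still in the fully filled additive-stage state, and on $R$ itself the points preceding $x_i$ in the ribbon order have been vacated or refilled exactly as prescribed by $t_r$. Checking this is bookkeeping against the definitions of $s_r$ and $t_r$, using that a rearranged robot always stops at the smallest currently unoccupied point of $S\setminus D$ on its own ribbon.

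Next, for the interior path I would use single-ribbon lattice geometry together with the parent ribbon. In both non-trivial cases the invariant guarantees that the vertices of $R$ lying between $x_i$ and the leader vertex are vacated, so the robot can orbit clockwise at distance $d$ along this cleared stretch without collision; in Case 3 it halts at the first free $S\setminus D$ slot (before any vertex already refilled by an earlier rearranged robot), while in Case 2 it continues past the leader vertex and crosses the boundary of $S$. Because the parent ribbon $R_p$ is still fully occupied it bounds the trajectory on the side facing the origin, while the invariant makes the lattice points on the far side empty, so the orbit meets only empty points until it reaches $m$; the well-definedness of $m$ and the absence of collision are exactly where the smooth-boundary and minimum-length ($len(R)\geq 2$) conditions of the proper ribbonization enter, mirroring their use in the additive stage. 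For the exterior path I would invoke the perimeter property that, after $u_N$ stops, $m$ lies in front of every other perimeter point contributed by $R$; once $u_T$ reaches $m$ it is on the outer perimeter of the swarm, and edge-following carries it to $t_r[T-N]$ without obstruction by the same induction-on-perimeter argument that proves Proposition \ref{thm-unobstructedPath-outline}.

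The step I expect to be the main obstacle is the interior path in the presence of already-processed descendant ribbons: a descendant of $R$ that has been rearranged into its final $S\setminus D$ configuration may leave occupied vertices along the outer side of $R$'s orbital contour, so I must verify that both the merging point and the ribbon-following trajectory steer clear of them. This again reduces to the smooth-boundary condition and the fact that every ribbon has length at least two, but pinning down the geometric case analysis --- and simultaneously checking that ribbon-following never drives $u_T$ onto a lattice point still held by a not-yet-activated robot of $R$ --- is the delicate part; the remaining steps follow directly from the invariant and the additive-stage machinery.
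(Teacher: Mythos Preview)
Your overall decomposition matches the paper's: the path is split at the merging point into an interior (ribbon-following) segment and an exterior (edge-following) segment, with rearranged robots handled as the interior-only special case. The configuration invariant you set up is essentially the content of the paper's Propositions on the re-activation/re-assembly sequences (in particular, that the parent ribbon $R^{[p]}$ is still filled and that the vertices of $R$ in front of $s_r[k]$ are vacated).

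Where you diverge is on the interior path. You identify the ``main obstacle'' as controlling occupied vertices on already-processed descendant ribbons along the outer side of $R$'s orbital contour, and anticipate a geometric case analysis using smooth-boundary and minimum-length conditions. The paper sidesteps this entirely: its interior-path argument is a one-line distance computation. Because $d=\tfrac{2r}{\sqrt{3}-1}$, the arc between two consecutive vertices of $R$ (at distance $d$ from a robot on $R^{[p]}$) is at distance exactly $2d\cdot\tfrac{\sqrt{3}}{2}-d=2r$ from \emph{any} robot sitting on an adjacent lattice point --- including robots on child ribbons. So occupancy of descendant ribbons is irrelevant to interior unobstructedness; the lattice spacing alone guarantees clearance. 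Your approach would work but is more laborious than necessary, and the smooth-boundary/minimum-length conditions are not actually what carries the argument here.

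For the exterior path the paper uses a device you do not mention: it introduces \emph{virtual robots} occupying every unoccupied lattice point in $S$, so that the extended boundary in $H_1$ coincides with the culminating (end-of-additive-stage) boundary. The additive-stage machinery then applies verbatim to the extended edge-following path, and one only needs to check that the actual exterior path is not contributed by any virtual robot. That last step uses two lemmas --- that all boundary points of $S$ remain occupied throughout the subtractive stage, and that the merging point $m(R)$ lies in front of every point on the culminating affiliated path contributed by $R$. Your invariant contains the first of these, and you invoke the second, so the ingredients are there; but the virtual-robot reduction is what lets the paper reuse the additive-stage circle-homeomorphism argument cleanly rather than rerunning a perimeter induction from scratch.
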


In terms of localization, we apply the similar arguments as in the additive stage and conclude that
\begin{proposition}
    \label{thm-localizable-outline2}
    In each epoch $T$ with $N+1 \leq T \leq 2N$, robot $s_T$ is localizable at lattice point $s_r[T-N]$, $t_r[T-N]$ and the merging point $m$.
\end{proposition}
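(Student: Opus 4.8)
The plan is to certify, at each of the three designated positions, the existence of at least three pairwise non-collinear inactive robots within the communication range; by the localization behaviour this is precisely the condition that lets the active robot recover its coordinates. Since $l > \tfrac{4r}{\sqrt{3}-1} = 2d$, it suffices to locate three such robots within lattice distance $2d$, and as the disc of radius $2d$ about any lattice point meets up to eighteen other lattice points, the whole matter reduces to showing that enough of these stay occupied and that some three of the occupied ones are non-collinear.

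First I would record a configuration invariant for the subtractive stage. Because $s_r$ visits ribbons in decreasing complete-tree order and, within a ribbon, in increasing ribbon order, at the start of epoch $T$ every ancestor of $R := R(s_r[T-N])$ is still completely filled (those ribbons are processed strictly later), every vertex of $R$ strictly larger than $s_r[T-N]$ is still occupied, and every already-processed ribbon now carries robots on exactly its $S\setminus D$ vertices. In particular the parent ribbon $R_p$ of $R$ is filled throughout the processing of $R$.

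With this invariant I would treat the three positions. At $s_r[T-N]$ the Type~A/Type~B characterisation of neighbourhoods supplies one or two occupied neighbours on $R_p$ at distance $d$; adjoining an occupied vertex of $R$ or of a neighbouring ribbon lying within $2d$ produces a non-collinear triple, the minimum-length condition $len(R)\geq 2$ and the smooth-boundary clause of Definition~\ref{def-properRibbonization-doc} being exactly what guarantee such a triple for every position of $s_r[T-N]$ along $R$. At the merging point $m$ I would use the perimeter property established en route to Proposition~\ref{thm-localizable-outline3} --- that $m$ lies ahead of every other perimeter vertex contributed by $R$ --- so that $m$ is adjacent to vertices of $R$ or of $R_p$ that the invariant keeps occupied, their non-collinearity again following from the smooth-boundary clause. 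For $t_r[T-N]$ I would split on the destination: when $t_r[T-N]\in H_2$ the argument is the verbatim additive-stage argument of Proposition~\ref{thm-localizable-outline1}, since $H_2$ is assembled idle-ribbon by idle-ribbon with idle ribbons of length at least two; when $t_r[T-N]\in S\setminus D$, the already-rearranged robots on $R$ together with the filled $R_p$ furnish the three non-collinear witnesses.

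The principal obstacle is the localizability at $s_r[T-N]$ and at $m$. In the additive stage the occupied region only grows, so localization follows by monotonicity; here the subtractive stage simultaneously deletes recycled robots and relocates rearranged ones, and the delicate point is to verify that, after the leading $c(R)$ vertices of $R$ have been vacated, three occupied and non-collinear lattice points still survive within distance $2d$. Proving the configuration invariant carefully and then excluding the degenerate collinear or under-populated neighbourhoods --- exactly those forbidden by the smooth-boundary and minimum-length conditions of Definition~\ref{def-properRibbonization-doc} --- is where the real effort goes.
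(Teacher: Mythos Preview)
Your plan is correct in outline and shares with the paper the decisive invariant: by the complete tree order, every ancestor of $R=R(s_r[T-N])$ --- in particular the parent $R_p$ --- remains completely filled throughout the processing of $R$. Where you diverge is in how the three non-collinear witnesses are exhibited.

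The paper does not look for a third witness on $R$ or on a ``neighbouring ribbon'' at all. Instead it reuses verbatim the additive-stage lemma (Proposition~\ref{prop-3robotsForLocalization}): if the active robot is adjacent to some robot $v$ belonging to a \emph{filled} ribbon, then $v$ together with a neighbour $w$ on the same ribbon (length $\geq 2$) and a neighbour $k$ on $v$'s parent ribbon are automatically non-collinear and all within $2d$. Applied here, the three witnesses live entirely on $R_p$ and on the grandparent $R_p^{[p]}$, both of which are filled by your own invariant. The same one-line reduction handles $s_r[T-N]$, the first vertex of $R$, and the merging point $m$ (which by definition is adjacent to a vertex of $R_p$), and the $t_r[T-N]$ case is exactly as you describe.

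Your route --- take one or two neighbours on $R_p$ and then scavenge a third from $R$ or an adjacent ribbon --- can be pushed through, but it forces you to track the partial occupancy of $R$ as recycled robots leave and rearranged robots resettle, and to handle separately the case where $s_r[T-N]$ is the last vertex of $R$ (no occupied $R$-vertex behind it) or where the vertices of $R$ ahead have been vacated. The paper's choice to draw all three witnesses from the filled ancestors $R_p$ and $R_p^{[p]}$ sidesteps this bookkeeping entirely and makes the ``principal obstacle'' you flag disappear: the subtractive deletions on $R$ become irrelevant to localizability.
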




Finally, by proposition \ref{thm-unobstructedPath-outline}, proposition \ref{thm-localizable-outline1}, proposition \ref{thm-localizable-outline3} and  proposition \ref{thm-localizable-outline2}, we conclude that:
\begin{theorem}
    Given sufficient robots (with an initial set of robots $B$ whose position coordinates are known in advance) and the user-defined shape $S \setminus D$. The add-subtract algorithm with the movement sequence of tuples $(s_1,t_1,1), \ldots, (s_N,t_N,N), ({s_r}_1,{t_r}_1, N+1), \ldots, ({s_r}_N,{t_r}_N, 2N)$ satisfies:
    \begin{enumerate}
        \item the final collection of robots forms the desired shape in the workspace $H_1$ in the sense that $\{\{b_1,\ldots, b_{2N}\} \circleddash \{a_1,\ldots, a_{2N}\}\} \sqcap H_1  = S \setminus D$.
        \item For each prefix subsequence $(a_1, b_1, 1), \ldots, (a_{k}, b_{k},k)$, $1 \leq k \leq 2N$, let $A_{k} = \{a_1, \ldots, a_{k-1}\}$, $B_{k} = \{b_1, \ldots, b_{k-1}\}$, and $\overline{B_{k}} = B_{k}$. Then the starting position of the $k$th robot $a_k \in I \oplus \overline{B_{k}} \circleddash A_{k}$ and the robot can only relay on robots centered at $I \oplus \overline{B_{k}} \circleddash A_{k} \circleddash \{a_k\}$ to locate its position before stopping at $b_k$. 
    \end{enumerate}
\end{theorem}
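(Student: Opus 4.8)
The plan is to derive both claims of the theorem from the four propositions already proved in Part~II, adding only (i) a pointwise multiset computation for the first claim and (ii) an inductive occupancy argument for the second. The propositions do all the geometric work (unobstructed paths and localizability), so the theorem's proof is essentially a matter of combining them correctly and checking conservation of robots region by region.

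First I would invoke Proposition~\ref{thm-unobstructedPath-outline} and Proposition~\ref{thm-localizable-outline3} to conclude that in every epoch the active robot's path is unobstructed, hence the active robot always reaches its designated stopping position. This is precisely the statement that $\overline{B_k} = B_k$ for every prefix, which is the condition under which the second claim is phrased; it also lets me treat the assembly and re-assembly sequences as genuine destination assignments rather than mere intentions. For the localization part of the second claim I would cite Proposition~\ref{thm-localizable-outline1} and Proposition~\ref{thm-localizable-outline2} directly: since those propositions only ever use stationary (non-active) neighbors, the supporting robots all lie in $I \oplus \overline{B_k} \circleddash A_k \circleddash \{a_k\}$, as required. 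The remaining obligation, $a_k \in I \oplus \overline{B_k} \circleddash A_k$, asserts that the point from which $u_k$ is activated is occupied at that epoch; I would prove this by induction on $k$, treating the two regimes separately. In the additive regime $a_k = s[k]$ is the tail of the outermost filled idle ribbon, occupied because $H_2$ is filled ribbon by ribbon and nothing below it in the idle order has moved; in the subtractive regime $a_k = s_r[k-N]$ lies in $S$, which is entirely filled at the end of the additive stage, and the re-activation order (largest ribbon first, smallest point first) is chosen so that a point is emptied only after it has been activated, so it is still occupied when its turn arrives.

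For the first claim I would compute $(\{b_1,\ldots,b_{2N}\} \circleddash \{a_1,\ldots,a_{2N}\}) \sqcap H_1$ directly. Because $H_1$ and $H_2$ are disjoint and multiset difference acts multiplicity-wise, the restriction to $H_1$ commutes with the difference, so it suffices to tabulate multiplicities point by point. Here I would read off the coverage properties of the four sequences from their recursive definitions: the assembly sequence $t$ is a bijection onto $S$, the re-activation sequence $s_r$ is a bijection onto $S$, and the re-assembly sequence $t_r$ sends exactly the $\sum_R c(R) = |D|$ recycled robots into $H_2$ and bijects the remaining robots onto $S \setminus D$. Restricting the stop-multiset to $H_1$ then gives multiplicity $2$ at each point of $S \setminus D$ (once from the additive stage, once as a rearranged robot) and multiplicity $1$ at each point of $D$, while the start-multiset restricted to $H_1$ is $S$ with multiplicity $1$ (the additive start positions lie in $H_2$ and drop out). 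The pointwise difference $\max(0,2-1)=1$ on $S \setminus D$ and $\max(0,1-1)=0$ on $D$ yields exactly the indicator of $S \setminus D$, which is the desired identity.

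The main obstacle I anticipate is not the propositions but the bookkeeping glue of the subtractive stage: verifying that the recycled-versus-rearranged split induced by the rule ``among the first $c(R)$ points of the ribbon'' truly vacates the hole vertices and fills $S \setminus D$ on each ribbon, and that the re-activation order never activates a point already emptied. Both reduce to a local-to-each-ribbon conservation statement, namely that the number of recycled robots on $R$ equals the hole count $c(R)$ and that the surviving robots shift forward onto the $len(R)-c(R)$ non-hole vertices of the \emph{same} ribbon, combined with a check that this local conservation is compatible with the global complete tree order used to interleave ribbons. Establishing this ribbon-wise conservation cleanly, so that the global multiset identity above is legitimate, is where I expect the real care to be required.
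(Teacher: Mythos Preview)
Your proposal is correct and follows essentially the same route as the paper. The paper's own proof (given in full only in the appendix as Theorem~\ref{thm-final}) likewise splits into (i) a direct multiset computation for the first claim using that $\{t_1,\ldots,t_N\}=S$, $\{{s_r}_1,\ldots,{s_r}_N\}=S$, and $\{{t_r}_1,\ldots,{t_r}_N\}\sqcap H_1=S\setminus D$, and (ii) an appeal to the four propositions for unobstructedness and localizability, together with a brief check that $a_k$ lies in the current occupied set in each regime. Your pointwise multiplicity tabulation is in fact more explicit than the paper's one-line algebraic manipulation, and the ``ribbon-wise conservation'' you flag as the main obstacle is exactly what the paper isolates as Proposition~\ref{prop-rearrangedOccupyAllPointsOfRibbon}; you have correctly anticipated both its content and its role.
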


\section{Simulation Case Studies}
\label{sect-simulation}

We use the Kilombo simulator \cite{jansson_kilombo_2016} with robot size $r=17 mm$\footnote{The same size as a kilobot.}, and lattice size $d = 1.1 * \frac{2r}{\sqrt{s}-1}\approx 51.08 mm$\footnote{Here we multiply $1.1$ to avoid any possible collisions that caused by truncation errors}. 

To demonstrate the scalability of the add-subtract algorithm, the example shape is scaled to different sizes. The simulation result with scaling parameter $0.5,1.0,1.5,2.0,3.0$ is shown as follows:
\begin{figure}[H]
    \centering
    \includegraphics[scale=0.4]{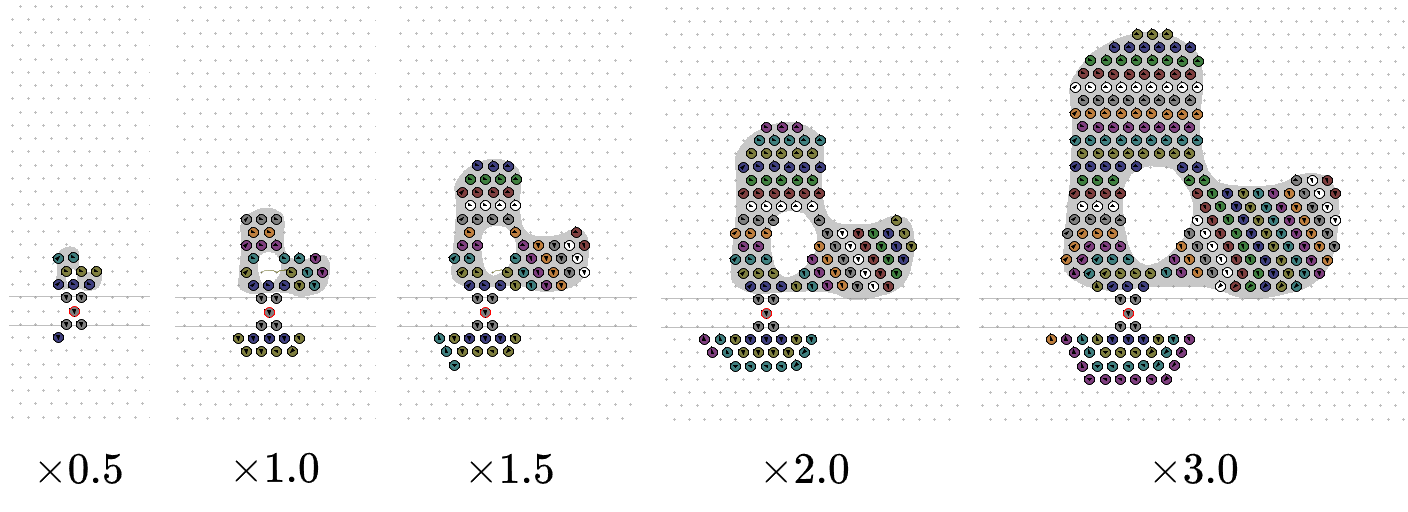}
    \captionsetup{justification=centering}
    \caption{Simulation result on the assembly of different-sized shapes}
    \label{fig:Simulation result on the assembly of different scaled shape}
    
\end{figure}

Nevertheless, more complicated shapes have been tested with success:
\begin{figure}[H]
    \centering
    \includegraphics[scale=0.5]{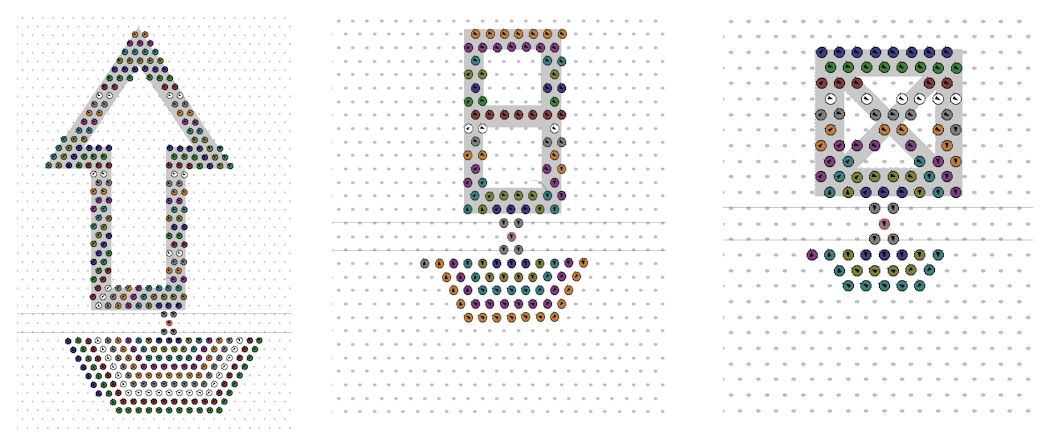}
    \captionsetup{justification=centering}
    \caption{Simulation result on the assembly of complicated shapes}
    \label{fig:Simulation on the assembly of complicated shapese}
    
\end{figure}


\subsection{Limitation of the Algorithm}

In fact, the add-subtract algorithm works for all connected shapes $S$ which can be discretized into a proper ribbon structure. Roughly speaking, a "proper" ribbon structure of $S$ must ensure that each ribbon has a minimum length of two, and the lattice points on the boundary of $S$ must exhibit certain smoothness. Fig. \ref{fig:Simulation on the assembly of improper shapese} shows the simulation of the failure of forming a hollow $C$-shape, which lacks a smooth boundary. Notice that the robot $u_{142}$ (indicated by the red arrow) gets stuck at the inner boundary.

\begin{figure}[H]
    \centering
    \includegraphics[scale=0.38]{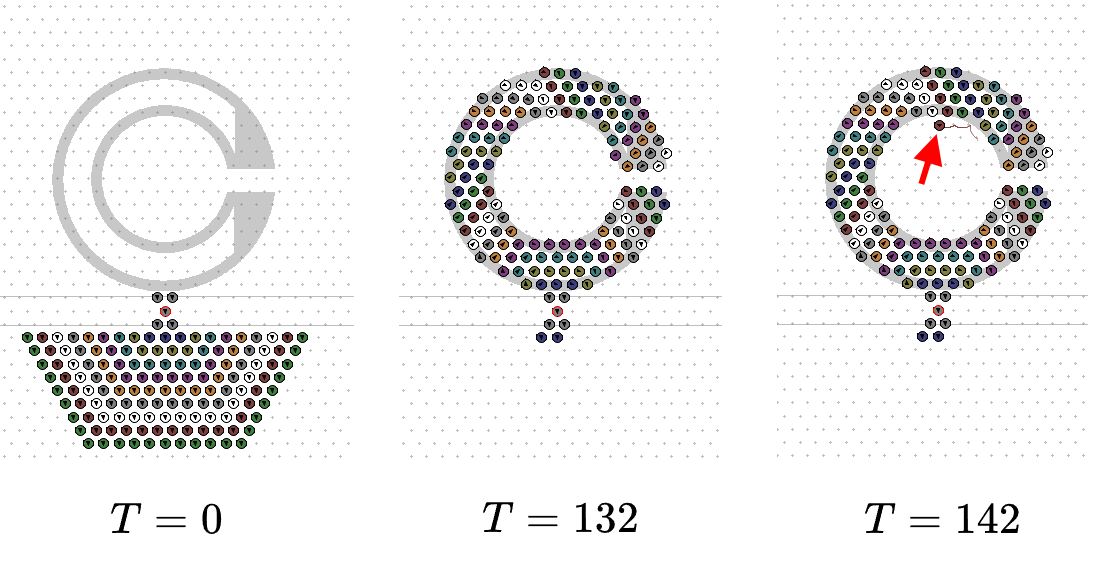}
    \captionsetup{justification=centering}
    \caption{Simulation result on the assembly of a shape without a smooth boundary}
    \label{fig:Simulation on the assembly of improper shapese}
    
\end{figure}


\subsection{Real World Application}
Suppose we aim to use UAVs to monitor the flow of people in Punggol Park, a community park located in the Hougang area of northeastern Singapore. Noticing that the park encompasses a lake, and no UAVs are required to be deployed over the lake. In this scenario, the add-subtract algorithm provides an efficient way to deploy the UAVs:

\begin{figure}[H]
    \centering
    \includegraphics[scale=0.5]{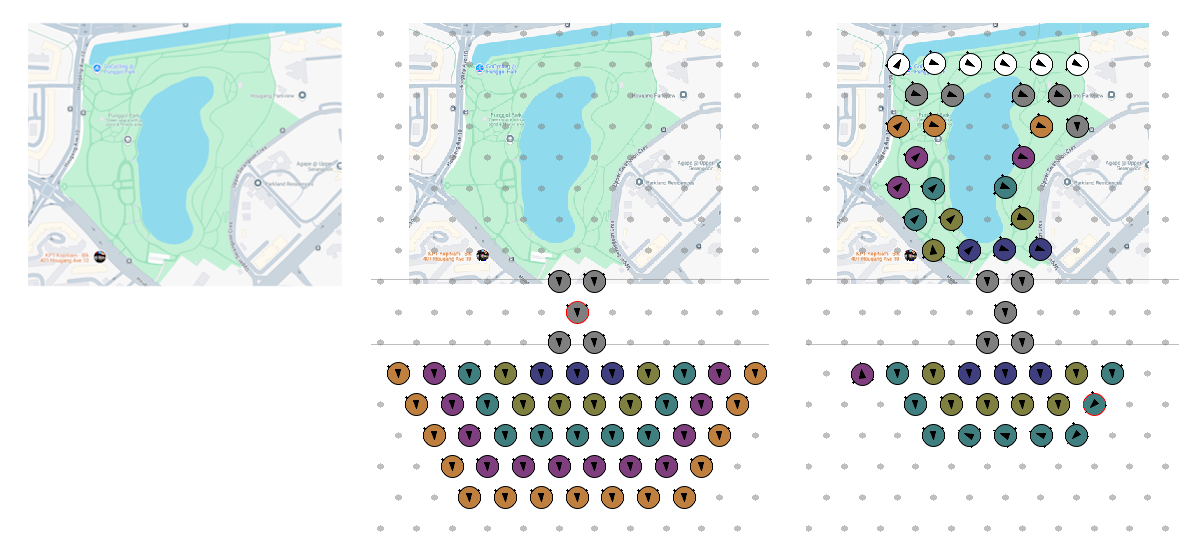}
    \captionsetup{justification=centering}
    \caption{Formation of drones over the Punggol park (highlighted in green \cite{googlemap})}
    \label{fig:punggol}
\end{figure}

\section{Conclusion and Future Works}
\label{sect-conclusion}
In this paper, we have formalized the idea of "ribbon" in the hexagonal lattice setting and investigated the structure induced by ribbons. We have shown that there exists one algorithm, i.e., the add-subtract algorithm, that can effectively solve the pattern formation problem in a more general setting, where a given shape may contain holes. The proposed algorithm works under the condition that there exists a centralized information sharing mechanism and a centralized epoch management mechanism to ensure that (1) each robot is fully aware of the shape and holes' information; (2) each robot knows the movement sequences; (3) there is only one robot moving in each epoch. Obviously, these assumptions are too strong to be scalable in real applications. Therefore, our immediate future work is to relax these assumptions and achieve the add-subtract algorithm in a fully distributed manner. There are still a few open questions worth studying: first, one could investigate ways to reduce the total agents required; second,  one could extend the algorithm to the formation of 3D shapes; and third, one could further analyze the ribbon structure to identify the optimal ordering that leads to a faster assembly.

\newpage
\begin{appendices}
\setcounter{theorem}{0}
\setcounter{definition}{0}
\section{Algorithms}
\label{AppendiceA}

\begin{algorithm}[H]
    \caption{Localiztion Algorithm for robot v}\label{alg:Localiztion}
    \begin{algorithmic}
        \If{$v$ is a seed robot}
            
            \State $p_t(v) \gets$ predefined seed robot coordinates
        \ElsIf{$v$ is not moving}
            \State $p_t(v) \gets p_{t-1}(v)$
        \Else
            \State $U \gets \{u \in N_t(v)| p_t(u) \neq N.A.\}$
            \If{$\exists u_1, u_2, u_3 \in U$, such that $p_t(u_1), p_t(u_2), p_t(u_3)$ are noncollinear}
                \State $p_t(v) \gets argmin_{p_t(v)} (\sum_{u_i \in U}{|d(u_i,v){\renewcommand{\thefootnote}{\fnsymbol{footnote}}\footnotemark[1]} - || p_t(v) - p_t(u_i) |||})$
            \Else
                \State $p_t(v) \gets N.A.$
            \EndIf
        \EndIf
    \end{algorithmic}
\end{algorithm}
{\renewcommand{\thefootnote}{\fnsymbol{footnote}}\footnotetext[1]{Here $d(u_i,v)$ denotes the distance from $v$ to $u_i$ obtained from the sensor of robot $v$.}}

\begin{algorithm}[H]
    \caption{Gradient Formation Algorithm for robot v}\label{alg:Gradient}
    \begin{algorithmic}
        \If{$v$ is the root robot}
            \State $g(v) \gets 0$
        \Else
            \State $g(v) \gets min \{g(w) |w \in N_t(v)\} + 1$
        \EndIf
    \end{algorithmic}
\end{algorithm}

\begin{algorithm}[H]
    \caption{Edge-Following algorithm for robot v}\label{alg:edge-following}
    \begin{algorithmic}
        \If{$c(v) == N.A.$}
            \State $c(v) \gets w$, where $w$ is a random robot in $N_t(v)$
        \Else
            \If{further clockwise rotation around $c(v)$ will result in $||p_t(v) - p_t(w)|| < d$ for some $w \in N_t(v)$ }
                \State $c(v) \gets w$
            \Else
                \State rotate clockwise around $c(v)$ with radius $d$ 
            \EndIf
        \EndIf
    \end{algorithmic}
\end{algorithm}

\begin{algorithm}[H]
    \caption{Ribbon-Following algorithm for robot v}\label{alg:ribbon-following}
    \begin{algorithmic}
        \If{$c(v) == N.A.$}
            \State $c(v) \gets w$, where $w$ is a random robot in $ \{u \in N_t(v)|g(u) = g(v) - 1\}$
        \Else
            \If{further clockwise rotation around $c(v)$ will result in $||p_t(v) - p_t(w)|| < d$ for some $w \in \{u \in N_t(v)|g(u) = g(v) - 1\}$ }
                \State $c(v) \gets w$
            \Else
                \State rotate clockwise around $c(v)$ with radius $d$ 
            \EndIf
        \EndIf
    \end{algorithmic}
\end{algorithm}

\begin{algorithm}[H]
    \caption{Epoch-Update algorithm for robot v}\label{alg:epoch}
    \begin{algorithmic}
        \State $ T(v) = max \{T(u) | u \in N_t(v)\}$
    \end{algorithmic}
\end{algorithm}

\newpage
\renewcommand{\algorithmicrequire}{\textbf{Input:}}
\renewcommand{\algorithmicensure}{\textbf{Output:}}
\algnewcommand{\algorithmicbreak}{\textbf{(Continued on next page...)}}
\newcommand{\algcontinue}{\textbf{(Continued from previous page)}}
\newcommand{\Break}{\State \textbf{break} }
\begin{algorithm}[H]
    \small
    \caption{Movement Sequence Generation}\label{alg:movement sequence 1}
    \begin{algorithmic}
        \Require the lattice points in $S$, the lattice points in $H_2$ which are occupied by idle robots
        \Ensure the movement sequences $s[k]$, $t[k]$, $s_r[k]$, $t_r[k]$
        \State $N \gets$ number of lattice points in $S$

        \State 
        \State $ribbonCount \gets 0$ \Comment{Generate the ribbon structure}
        \State $Tr \gets [[p_2, p_1]]$ 
        \State $A \gets$ the set of lattice points in $S$
        \While{$A \neq \emptyset$}

            \For{all $parentRib \in Tr$}
                \For{all $x \in A$} \Comment{Identify the head of a ribbon}
                    \If{$x$ is adjacent to some lattice point $p$ in $parrentRib$, and
                    \\ \hspace*{5em}$x$ is adjacent to a lattice point $y \notin S$, and
                    \\ \hspace*{5em}$y$ is on the clockwise side of $x$ with respected to $p$ } 
                        \State $ribbonCount \gets ribbonCount + 1$
                        \State $Tr[ribbonCount] \gets [x]$
                        \State $A \gets A \setminus \{x\}$     
                    \EndIf
                \EndFor

                \State  
                \While{$Tr[ribbonCount][last]$ is adjacent to \Comment{Identify subsequent points}
                \\ \hspace*{3.5em} some lattice point $x$ in $A$} 
                    \State $Tr[ribbonCount] \gets Tr[ribbonCount] + [x]$
                    \State $A \gets A \setminus \{x\}$                    
                \EndWhile                
            \EndFor
        \EndWhile

        \State 
        \For{$i = 1: ribbonCount$} \Comment{Count the number of lattice points in $D$}
            \State $C(Tr[i]) \gets 0$
            \For{$j = 1:len(Tr[ribbonCount])$}
                \If{$Tr[i][j] \in D$}
                    \State $C(Tr[i]) = C(Tr[i]) + 1$
                \EndIf
            \EndFor
            
        \EndFor

        \State
        \State $idleRibbonCount \gets 0$ \Comment{Generate the idle ribbon structure}
        \State $Ti \gets [[p_5, p_4]]$ 
        \State $B \gets$ the lattice points in $H_2$ which are occupied by idle robots
        \While{$B \neq \emptyset$}

            \For{all $parentRib \in Ti$}
                \For{all $x \in B$} \Comment{Identify the head of an idle ribbon}
                    \If{$x$ is adjacent to some lattice point $p$ in $parrentRib$, and
                    \\ \hspace*{5em}$x$ is adjacent to a lattice point $y \notin H_2$, and
                    \\ \hspace*{5em}$y$ is at the clockwise side of $x$ with respected to $p$ } 
                        \State $idleRibbonCount \gets idleRibbonCount + 1$
                        \State $Ti[idleRibbonCount] \gets [x]$
                        \State $B \gets B \setminus \{x\}$     
                    \EndIf
                \EndFor

                \State  
                \While{$Ti[idleRibbonCount][end]$ is adjacent to \Comment{Identify subsequent points}
                \\ \hspace*{3.5em} some lattice point $x$ in $B$} 
                    \State $Ti[idleRibbonCount] \gets Ti[idleRibbonCount] + [x]$
                    \State $B \gets B \setminus \{x\}$                    
                \EndWhile                
            \EndFor
        \EndWhile

        \State \algorithmicbreak
    \end{algorithmic}
\end{algorithm}

\addtocounter{algorithm}{-1}
\begin{algorithm}[H]
    \caption{Movement Sequence Generation}\label{alg:movement sequence 2}
    \begin{algorithmic}
        \State \algcontinue
        \State 
        \State sequence $s \gets [\text{ }]$ \Comment{Generate movement sequences $s$ and $r$}
        \State sequence $t \gets [\text{ }]$
        \For{$index \gets 1$ to $N$}

            \State 
            \State $Found \gets false$ \Comment{Obtain starting position}
            \For{$i \gets idleRibbonCount$ to $1$}
                \If{$Found$}
                    \Break
                \EndIf
                \For{$j \gets size(Ti[i])$ to $1$}
                    \If{$Ti[i][j] \notin s$}
                        \State $s[index] \gets Ti[i][j]$
                        \State $Found \gets true$
                        \Break
                    \EndIf
                \EndFor
            \EndFor

            \State
            \State $Found \gets false$ \Comment{Obtain stopping position}
            \For{$i \gets 1$ to $ribbonCount$}
                \If{$Found$}
                    \Break
                \EndIf
                \For{$j \gets 1$ to $size(Tr[i])$}
                    \If{$Tr[i][j] \notin t$}
                        \State $t[index] \gets Tr[i][j]$
                        \State $Found \gets true$
                        \Break
                    \EndIf
                \EndFor
            \EndFor

        \EndFor
        
        \State
        \State \algorithmicbreak
    \end{algorithmic}
\end{algorithm}

\addtocounter{algorithm}{-1}
\begin{algorithm}[H]
    \caption{Movement Sequence Generation}\label{alg:movement sequence 3}
    \begin{algorithmic}
        \State \algcontinue




        
        \State
        \State sequence $s_r \gets [\text{ }]$ \Comment{Generate movement sequences $s_r$ and $t_r$}
        \State sequence $t_r \gets [\text{ }]$
        \For{$index \gets 1$ to $N$}
            \State
            \State $Found \gets false$ \Comment{Obtain starting position}
            \For{$i \gets ribbonCount$ to $1$}
                \If{$Found$}
                    \Break
                \EndIf
                \For{$j \gets 1$ to $size(Tr[i])$}
                    \If{$Ti[i][j] \notin s_r$}
                        \State $s_r[index] \gets Tr[i][j]$
                        \State $Found \gets true$
                        \Break
                    \EndIf
                \EndFor
            \EndFor
                
            \State
            \If{$s_r[index] \in Tr[i][1:C]$}   \Comment{Obtain stopping position for recycled robots}
                
                \State $Found \gets false$
                \For{$i \gets 1$ to $idleRibbonCount$}
                    \If{$Found$}
                        \Break
                    \EndIf
                    \For{$j \gets 1$ to $size(Ti[i])$}
                        \If{$Ti[i][j] \in s$ and $Ti[i][j] \notin tr$}
                            \State $t_r[index] \gets Ti[i][j]$
                            \State $Found \gets true$
                            \Break
                        \EndIf
                    \EndFor
                \EndFor
                
            \Else   \Comment{Obtain stopping position for rearranged robots}

                \State $Found \gets false$
                \For{$i \gets ribbonCount$ to $1$}
                    \If{$Found$}
                        \Break
                    \EndIf
                    \For{$j \gets 1$ to $size(Tr[i])$}
                        \If{$Ti[i][j] \notin D$ and $Ti[i][j] \notin tr$}
                            \State $t_r[index] \gets Tr[i][j]$
                            \State $Found \gets true$
                            \Break
                        \EndIf
                    \EndFor
                \EndFor
            \EndIf

        \EndFor

    \end{algorithmic}
\end{algorithm}

\begin{algorithm}[H]
    \caption{Add-subtract algorithm for robot $v$}\label{alg: add-sub}
    \begin{algorithmic}
        \Require $s[k],t[k],s_r[k],t_r[k]$, rotation radius $d$, synchronized time $t${\renewcommand{\thefootnote}{\fnsymbol{footnote}}\footnotemark[1]} and warm-up time $t_{warmup}$
        \Ensure The robot stops at the desired position, either inside $S\setminus D$ or $H_2$

        \State

        \State $t_{start} \gets t$ \Comment{Start of Initialization}
        \State $state(v) \gets STOP$ 
        \While{$t < t_{start}+t_{warmup}$}
            \State Call \textsc{Localization}()
            \State Call \textsc{Gradient formation}()  
        \EndWhile
    
        \State
        \State $T(v) \gets 1$ \Comment{Start of Iteration}
        
        \While{$T(v) \leq 2N$ } 
            \If{$T(v) < N$}
                \If{$p_t(v) = s[T]$ and $state(v) = STOP$}
                    \State $state(v) \gets EDGE$   
                \EndIf
                
                \If{$p_t(v) = t[T]$}
                    \State $state(v) \gets STOP$
                    \State $T(v) \gets T(v) + 1$
                \EndIf
    
            \Else
                
                \If{$p_t(v) = s_r[T]$ and $state(v) = STOP$}
                
                    \State $state(v) \gets RIBBON$   
                    
                \EndIf

                \If{$p_t(v) \in L$ and $ p_t(v) \notin S$}
                    \State $state(v) \gets EDGE$
                \EndIf
                
                \If{$p_t(v) = t[T]$}

                    \State $state(v) \gets STOP$
                    \State $T(v) \gets T(v) + 1$
                
                \EndIf
            \EndIf

            \State
            \State Call \textsc{Epoch update}() 
            \State Call \textsc{Localization}()
            \State Call \textsc{Gradient formation}() 
            \State            
            \If{$state(v) = EDGE$}
                \State Call \textsc{Edge-Following}()
            \EndIf
            \State
            \If{$state(v) = RIBBON$}
                \State Call \textsc{Ribbon-Following}()
            \EndIf

        \EndWhile

    \end{algorithmic}
\end{algorithm}
{\renewcommand{\thefootnote}{\fnsymbol{footnote}}\footnotetext[1]{Time $t$ is synchronized between all robots of the swarm at the start of the algorithm and updated locally while executing the algorithm. Thus, for simplicity, we assume that all the robots of the swarms share a synchronized clock.}}


\newpage
\section{Ribbon and its Properties}
\label{sect-Ribbonization}

In appendix \ref{sect-Ribbonization}, we discuss the ribbonization procedure: ribbonization essentially partitions the shape $S$ (represented by a set of discrete lattice points) as layers stacking on one another. Each layer is captured by a set of lattice points in $S$, and each set of points forms a line graph if we impose an edge between adjacent lattice points. We call the path graphs induced by such sets (or equivalently, the order list of vertex points of the path graphs) "ribbons". There also exists a tree structure between ribbons, and this property is essential when we define the movement sequence in section \ref{Section-MovingSequence} and section \ref{sect-extendedSequence}.

For generalization purposes, we start with an arbitrary shape $P_0$ that is homeomorphic to an open disk. Suppose its corresponding discrete representation is $P$. To partition the lattice points in $P$, we first introduce the shape $P$ into the workplace with the underlying hexagonal lattice, and certain requirements are imposed to ensure a "proper" placement. Then, we introduce the hop count defined on every lattice point in $P$ that denotes the distance from the lattice point to the origin. This hop count value depends on the shape of $P$ in the sense that it represents the length of the shortest path within $P$. The properties of hop count are discussed in section \ref{sect-hopCount}. We introduce ribbons as adjacent lattice points in $P$ with the same hop count value. Ribbons are defined in section \ref{sect-defRibbon} and the ribbonization of shape $P$ is investigated in section \ref{sect-ribbonizationOfP}. Lastly, we consider the two special cases for $P$: in section \ref{sect-ribbonizationOfS}, we let $P$ be the user-defined shape $S$ and we obtain ribbons of $S$; in section \ref{sect-ribbonizationOfH2}, we let $P$ to be the set of lattice points in $H_2$ and we obtain the idle ribbons.

Please note that the ribbon idea is adapted from \cite{rubenstein_programmable_2014}, and proposition \ref{prop-10+1} to \ref{prop-clique} follows the proof given in the supplementary materials of \cite{rubenstein_programmable_2014}.
    
\subsection{Hop count}
\label{sect-hopCount}

    When we introduce the shape $P$ into the workspace, certain conditions need to be satisfied:
    \begin{definition}
        \label{def-properPlacement}
        The placement of $P$ in the coordinate system $\mathbb{O}$ is \textbf{proper} if  $P$ is placed next to the origin but points $O,p_1,p_2,p_3,p_4,p_5,p_6$ are not in $P$.
    \end{definition}

    In section \ref{sect-hopCount} and section \ref{sect-defRibbon}, we shall assume the placement of $P$ is proper without separate mention.

    \begin{definition}
        For arbitrary lattice point $x \in P$, the \textbf{hop count} for $x$ is the length of the shortest path from $x$ to the origin $O$ in $\Gamma$ that involves only lattice points in $P$ and lattice points in $\{p_1,p_2,\ldots,p_6\}$. If such a path does not exist, the hop count is defined as $-1$. The hop count of $x$ is denoted as $h(x)$. 
    \end{definition}
    \begin{figure}[H]
        \centering
        \includegraphics[scale=0.3]{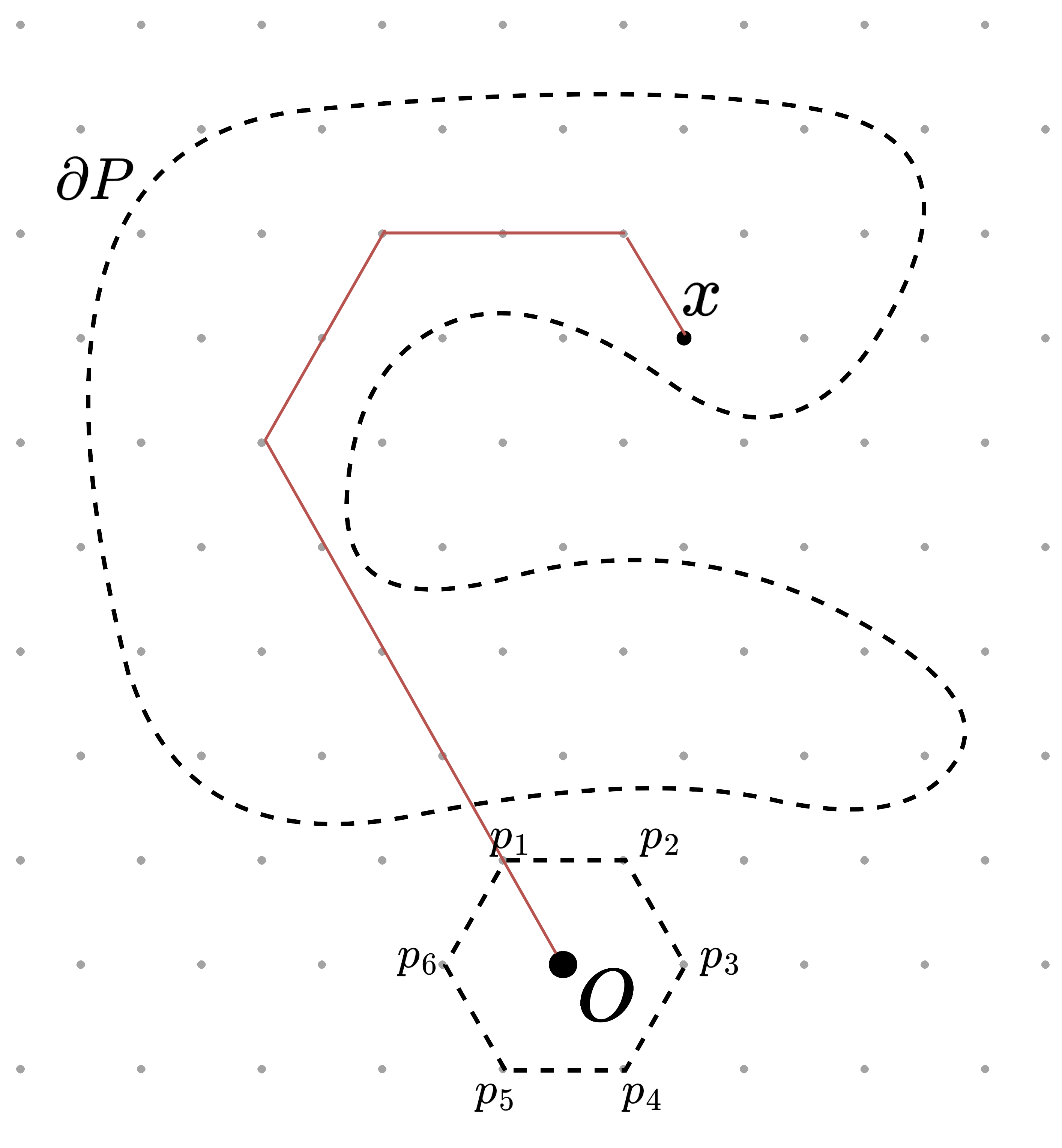}
        \caption{Hop count for $x$ is 10}
    \end{figure}

    \begin{definition}
        The \textbf{neighbors} of a lattice point $x$ is the six lattice points that are at a distance $d$ to $x$; the \textbf{neighborhood} of $x$ refers to the collection of its neighbors.
    \end{definition}

    Note that a lattice point has a maximum of six neighbors.

    \begin{definition}
        A \textbf{straight line path} is a path in $\Gamma$ that consists of lattice points in a straight line.
    \end{definition}    

    \begin{proposition}
        \label{prop-10+1}
        For any lattice point $x \in L$ with hop count $k$, its neighbors can only have hop count $k-1, k, k+1$.
    \end{proposition}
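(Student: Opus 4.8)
The plan is to recognize the hop count $h$ as a geodesic distance in a fixed subgraph of $\Gamma$ and then to read off the claim from the triangle inequality; no special feature of the hexagonal lattice should actually be needed. First I would fix the \emph{admissible} vertex set $V' = P \cup \{p_1,\ldots,p_6\} \cup \{O\}$ and let $\Gamma'$ be the subgraph of $\Gamma$ induced by $V'$, so that by definition $h(x)$ is exactly the length of a shortest $x$--$O$ path in $\Gamma'$, with the sentinel value $-1$ recording that $x$ lies in no connected component of $\Gamma'$ containing $O$. Throughout I take $k = h(x) \ge 0$, i.e.\ $x$ genuinely has a hop count.

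The core step is a two-sided bound obtained by concatenating a single edge onto a shortest path. Let $y$ be a neighbor of $x$ that is itself admissible, so that the edge $xy$ is present in $\Gamma'$. Prepending $y\,x$ to a shortest $x$--$O$ path yields an admissible $y$--$O$ walk of length $k+1$, hence $h(y) \le k+1$. Symmetrically, prepending $x\,y$ to a shortest $y$--$O$ path gives $h(x) \le h(y)+1$, i.e.\ $h(y) \ge k-1$. Together these force $h(y) \in \{k-1,\,k,\,k+1\}$, which is the assertion. Note that the first inequality already shows $h(y)$ is finite, so an admissible neighbor of a vertex with a genuine hop count never carries the value $-1$.

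The remaining work is purely definitional bookkeeping, and this is where the only real care is required. I would make precise that $O$ and the gateway vertices $p_1,\ldots,p_6$ belong to the admissible set used for the concatenation (otherwise the last edge $p_i\,O$ of a path could not be formed), and I would state explicitly that the proposition concerns neighbors for which a hop count is defined: a neighbor lying outside $P \cup \{p_1,\ldots,p_6\}$ has hop count $-1$ and is excluded, rather than being a counterexample. Once the identification of $h$ with the geodesic distance in $\Gamma'$ is pinned down, the inequality is routine; I expect this identification and the handling of the $-1$ sentinel, not any geometric argument, to be the main (and quite mild) obstacle.
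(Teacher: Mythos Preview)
Your argument is correct: recognizing $h$ as the geodesic distance in the induced subgraph $\Gamma'$ on $P\cup\{p_1,\ldots,p_6\}\cup\{O\}$ and applying the one-edge triangle inequality in both directions is exactly the right move, and your handling of the sentinel value $-1$ (admissible neighbors of a reachable vertex are themselves reachable; inadmissible neighbors are simply outside the scope of the statement) closes the only loose end.

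There is nothing to compare against here: the paper does not actually supply a proof of this proposition, but merely cites the supplementary material of \cite{rubenstein_programmable_2014} for Propositions~\ref{prop-10+1}--\ref{prop-clique}. Your write-up is the natural graph-metric argument one would expect in that reference, and in fact makes the role of the ambient set $V'$ and the $-1$ convention more explicit than the paper does.
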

    
    \begin{proposition}
        \label{prop1}
        Any lattice point $x$ with a positive hop count $k$ has a neighbor with hop count $k-1$.
    \end{proposition}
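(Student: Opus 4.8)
The plan is to exploit the optimal-substructure property of shortest paths, which is exactly what the hop count encodes. Since $x$ has positive hop count $k$, by definition there is a path $x = x_0, x_1, \ldots, x_k = O$ in $\Gamma$ of length $k$, all of whose vertices lie in $P \cup \{p_1, \ldots, p_6\}$ (with terminal vertex $O$), and no shorter admissible path from $x$ to $O$ exists. First I would single out the neighbor $y := x_1$ of $x$ appearing immediately after $x$ on this path; the claim is that $h(y) = k-1$, so that $y$ is the required neighbor.

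To see $h(y) \le k-1$, observe that the truncated path $x_1, x_2, \ldots, x_k = O$ is itself an admissible path from $y$ to $O$ (its vertices form a subset of the original path's vertices, hence still lie in $P \cup \{p_1, \ldots, p_6\}$), and it has length $k-1$; by minimality in the definition of hop count this forces $h(y) \le k-1$. For the reverse inequality I would argue by contradiction: if $h(y) \le k-2$, take a shortest admissible path from $y$ to $O$ of length $h(y)$ and prepend the edge $xy$ (legitimate since $x$ is adjacent to $y$ and $x$ is itself admissible, so every vertex of the extended walk stays in $P \cup \{p_1, \ldots, p_6\}$). This yields a walk, and hence a path of no greater length, from $x$ to $O$ of length at most $h(y)+1 \le k-1 < k$, contradicting $h(x) = k$. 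Therefore $h(y) = k-1$.

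The only case needing separate mention is $k = 1$, where $y = x_1 = O$ itself; here $h(O) = 0 = k-1$ via the trivial empty path, so the statement holds with $O$ as the witnessing neighbor. I do not expect a serious obstacle, as this is the standard shortest-path layering argument, but the one point demanding care is the bookkeeping of the admissibility constraint (every intermediate vertex must lie in $P \cup \{p_1, \ldots, p_6\}$) when truncating and when prepending, together with the harmless possibility that a shortest path from $y$ revisits $x$, which is absorbed by passing from a walk to a path. Alternatively, one could invoke proposition \ref{prop-10+1}, which already confines every neighbor's hop count to $\{k-1, k, k+1\}$, and then rule out the scenario in which all neighbors have hop count $\ge k$ by the same first-step-of-a-shortest-path observation.
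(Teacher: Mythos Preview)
Your proof is correct and follows the same approach as the paper: both take a shortest path from $x$ to $O$ and observe that its first step lands on a neighbor with hop count $k-1$. The paper's version is a one-liner, while you spell out both inequalities $h(y)\le k-1$ and $h(y)\ge k-1$ and address the walk-versus-path technicality; your separate treatment of $k=1$ is unnecessary (and slightly off, since $h(O)$ is not defined), but harmless because under a proper placement no $x\in P$ can have hop count $1$.
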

    \begin{proof}
        Since $x$ has a positive hop count, there exists a shortest path from $v$ to the origin $O$. This path must go through one of its neighbors with hop count $k-1$.
    \end{proof}
    
    \begin{proposition}
        \label{prop2}
        For any lattice point $x$ with a positive hop count $k$, $v$ cannot have two non-adjacent neighbors with hop count $k-1$.
    \end{proposition}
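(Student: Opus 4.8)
The plan is to argue by contradiction: suppose the point $x$ (written $v$ in the statement) has two non-adjacent neighbors $y,z$ with $h(y)=h(z)=k-1$, and from this manufacture a path from $x$ to $O$ of length at most $k-1$, contradicting $h(x)=k$. First I would record the only two ways two of the six neighbors of $x$ can fail to be adjacent on the hexagonal lattice: they subtend $120^\circ$ at $x$ (so $\lVert y-z\rVert=\sqrt3\,d$) or $180^\circ$ (so $\lVert y-z\rVert=2d$ and $z,x,y$ are collinear). These two cases will be handled separately.

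The main mechanism will be the elementary identities among the six unit step vectors $\vec e_0,\dots,\vec e_5$ of the lattice, with $\vec e_{i+3}=-\vec e_i$, namely $\vec e_i+\vec e_{i\pm2}=\vec e_{i\pm1}$ and $\vec e_i+\vec e_{i+3}=\mathbf 0$. Writing $\vec e_i=y-x$, I would take a shortest path from $y$ to $O$ (length $k-1$ by hypothesis) with first step $\vec v_1$, prepend the step $x\to y$, and collapse the first two steps via the appropriate identity. If $\vec v_1$ points at $180^\circ$ to $\vec e_i$ the two steps cancel, which would force $h(x)=k-2$; if $\vec v_1$ points at $120^\circ$ to $\vec e_i$ the first intermediate vertex is $x+\vec e_{i\pm1}$, a neighbor of $x$, and the collapse sends a hop-$(k-2)$ label onto that neighbor, contradicting Proposition~\ref{prop-10+1} (neighbors of $x$ have hop count in $\{k-1,k,k+1\}$). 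Thus the first step out of $y$, and symmetrically out of $z$, is forced to lie within $60^\circ$ of the respective directions $y-x$ and $z-x$. Crucially, the two offending directions just mentioned are killed by the local bound of Proposition~\ref{prop-10+1}, not by any claim about the global path, so no path-validity issue arises there.

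The configurations surviving this first-step analysis then require a more global argument. In the $120^\circ$ case I would study the common neighbor $w$ of $x,y,z$ lying in the middle direction $\vec e_{i+1}$: applying Proposition~\ref{prop-10+1} at $x$ gives $h(w)\in\{k-1,k,k+1\}$ and applying it at $y$ gives $h(w)\in\{k-2,k-1,k\}$, so $h(w)\in\{k-1,k\}$; in particular $w$ has finite hop count and hence $w\in P$, which is what would make a collapsed shortcut through $w$ legitimate. Ruling out $h(w)=k-1$ and $h(w)=k$ in turn—the latter by descending to $w$, which inherits the very same forbidden configuration with two non-adjacent lower neighbors—drives the contradiction, so this step is naturally organized as a descent on hop count (or an induction on $k$, with small $k$ checked by hand using Proposition~\ref{prop1}). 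The $180^\circ$ case has no such middle common neighbor, so I would instead run a straight-line argument along the line through $z,x,y$, tracking hop counts of the collinear lattice points with Proposition~\ref{prop1}.

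The step I expect to be the genuine obstacle is ensuring admissibility of the shortened paths: because $h$ is a geodesic distance restricted to $P\cup\{p_1,\dots,p_6\}$ rather than the free hexagonal distance, a geometrically shorter spliced path only yields a contradiction if every vertex it visits lies in the region. Guaranteeing this—most delicately in the $180^\circ$ case, where the natural shortcut is a cancellation and there is no helpful intermediate vertex whose hop count I can pin down as in the $120^\circ$ case—is where the real work lies. This is exactly why the argument must lean on the local hop-count bounds (Propositions~\ref{prop-10+1} and~\ref{prop1}) and on the straight-line-path structure rather than on a purely metric estimate of distances.
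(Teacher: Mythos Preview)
Your approach is genuinely different from the paper's, and it has a structural gap that I do not see how to close along the lines you sketch.

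The paper's argument is global and topological rather than local. It takes shortest paths $\mathrm{path}_y$ and $\mathrm{path}_z$ from $x$ through $y$ and through $z$ to $O$; because $y$ and $z$ are non-adjacent these two length-$k$ paths, together with their common endpoints $x$ and $O$, enclose a region of lattice points of $P$. A straight-line lattice ray from $x$ into that region must eventually cross one of the two paths at some vertex $p$, and replacing the initial segment $x\to\cdots\to p$ of that path by the straight segment from $x$ to $p$ strictly shortens it, contradicting minimality. The simple connectedness of $P$ (coming from $P_0$ being homeomorphic to a disk) is what guarantees the straight segment stays in $P$; that is the one global ingredient, and it dispatches both the $120^\circ$ and $180^\circ$ cases at once.

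Your local route runs into trouble precisely at the step you call a ``descent.'' In the $120^\circ$ case with $y=x+\vec e_i$, $z=x+\vec e_{i+2}$ and $w=x+\vec e_{i+1}$: first, you cannot conclude $w\in P$ from Proposition~\ref{prop-10+1}, since that proposition only constrains hop counts of neighbors that already lie in $P$; if $w\notin P$ it simply has no hop count and the proposition is silent. Second, and more seriously, in the subcase $h(w)=k$ the point $w$ inherits exactly the same forbidden configuration with the \emph{same} two neighbors $y,z$ (they are the two common neighbors of $x$ and $w$), so ``descending'' to $w$ does not decrease $k$ and does not change the witnesses; the next step would just return to $x$. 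There is no well-founded quantity going down, so the induction as proposed does not terminate. The subcase $h(w)=k-1$ is also not dispatched: you then have three collinear neighbors $y,w,z$ of $x$ all at hop $k-1$, which is not immediately contradictory and seems to require a further (again global) argument. The $180^\circ$ case you leave to a ``straight-line argument,'' but without a mechanism guaranteeing that the straight ray stays in $P$ this is exactly the admissibility issue you flag at the end---and the clean way to resolve it is the paper's enclosure argument, which makes the case split unnecessary.
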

    \begin{proof}
        Suppose $x$ has two neighbors $y,z$ with hop count $k-1$, and $y,z$ are not neighbors. Then there exist two distinct shortest paths from $x$ to $O$ such that $path_y$ goes through $x,y,O$ and $path_z$ goes through $x,z,O$. Since $path_y$ and $path_z$ surround some lattice points in $P$, there must exist a straight line path going through $v$ that intersects $path_y$ or $path_z$. This contradicts $path_y$ and $path_z$ both being the shortest path.

    \end{proof}
    
    \begin{proposition}
        \label{prop3}
        For any lattice point $x$ with a positive hop count $k$, $x$ either has one neighbor with hop count $k-1$ or has two adjacent neighbor robots with hop count $k-1$.
    \end{proposition}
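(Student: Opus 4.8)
The plan is to combine the two immediately preceding propositions with a single geometric fact about the hexagonal neighborhood. Let $K$ denote the set of neighbors of $x$ that carry hop count $k-1$. By Proposition \ref{prop1}, since $x$ has a positive hop count $k$, it has at least one neighbor with hop count $k-1$, so $K$ is nonempty. It therefore suffices to show that $K$ has at most two elements, and that when it has exactly two, those two are adjacent.

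First I would invoke Proposition \ref{prop2}, which states that $x$ cannot have two non-adjacent neighbors with hop count $k-1$. Rephrased, this says precisely that any two distinct members of $K$ are adjacent in $\Gamma$; that is, $K$ is a clique in the subgraph of $\Gamma$ induced by the neighborhood of $x$. Thus the only remaining task is to bound the size of such a clique.

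Next I would establish the geometric fact that no three of the six neighbors of $x$ can be pairwise adjacent. The six neighbors lie on a regular hexagon centered at $x$, spaced $60^\circ$ apart, each at distance $d$ from $x$. Two neighbors are adjacent (i.e.\ at distance $d$ from each other) exactly when they are consecutive on this hexagon; two neighbors separated by one intermediate position are $120^\circ$ apart, hence at distance $\sqrt{3}\,d$, and so are non-adjacent. Consequently, among any three of the neighbors at least one pair is non-consecutive and thus non-adjacent, so the largest clique among the neighbors has size two, realized by a pair of consecutive (adjacent) neighbors.

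Combining these observations, $K$ is a nonempty clique of size at most two in the neighborhood hexagon: it is either a single neighbor with hop count $k-1$, or a pair of adjacent neighbors both with hop count $k-1$, which is exactly the claimed dichotomy. I expect no genuine obstacle here, since the statement follows almost immediately once Propositions \ref{prop1} and \ref{prop2} are in hand; the only content requiring care is the distance computation ($\sqrt{3}\,d$ versus $d$) that rules out a clique of three mutually adjacent neighbors.
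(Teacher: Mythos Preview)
Your proof is correct and follows essentially the same approach as the paper: both reduce to Proposition~\ref{prop2} together with the hexagonal-geometry fact that among three or more neighbors of $x$ at least one pair is non-adjacent. The paper phrases this as a brief proof by contradiction and leaves the three-neighbor geometric fact implicit, whereas you argue directly via the clique formulation and spell out the $\sqrt{3}\,d$ distance computation; the substance is the same.
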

    
    \begin{proof}
        Suppose not, then either $x$ has more than two neighbors with hop count $k-1$, or $v$ has two non-adjacent neighbors with hop count $k-1$. In either case, there exist non-adjacent neighbors with hop count $k-1$, which contradicts proposition \ref{prop2}.
    \end{proof}
    
    \begin{proposition}
        \label{prop-clique}
        A clique of three lattice points can't have the same hop count if their hop counts are positive.
    \end{proposition}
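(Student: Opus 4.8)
The plan is to argue by contradiction: suppose three mutually adjacent lattice points $x,y,z$ (a triangle in $\Gamma$) all share the same positive hop count $k$, and derive a contradiction from the shortest-path structure established in Propositions \ref{prop1}--\ref{prop3}. First I would record the local picture around the triangle. Excluding the two other triangle vertices, the remaining neighbors of each vertex form an arc of four consecutive lattice points; the three arcs meet pairwise only at the three ``outer'' common neighbors $w_{xy}, w_{yz}, w_{xz}$ (the lattice point completing each edge of the triangle into the opposite small triangle), and together the nine points form a single ring around $xyz$. Since $y,z$ have hop count $k$ (not $k-1$), Proposition \ref{prop1} forces the hop-$(k-1)$ neighbor(s) of $x$ to lie inside $x$'s four-point arc, and by Proposition \ref{prop3} they form a single point or two adjacent points there; the same holds for $y$ and $z$. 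Geometrically these three arcs open in three directions spaced $120^\circ$ apart and pointing away from the triangle, so the descent toward lower hop count leaves each vertex into a distinct angular sector.

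Next I would extract, for each vertex, a shortest path to the origin $O$ obtained by repeatedly descending to a hop-$(k-1)$ neighbor (Proposition \ref{prop1} applied inductively); call these $P_x, P_y, P_z$. By the previous paragraph the three paths leave the triangle into the three $120^\circ$-separated sectors, yet all three terminate at the single point $O$. The core of the proof is then a planarity argument in the spirit of the proof of Proposition \ref{prop2}: because $P$ is homeomorphic to a disk (hence simply connected) and the three paths fan out in well-separated directions but must reconverge at $O$, two of them---say $P_x$ and $P_z$---together with the triangle edge $xz$ must bound a region containing at least one lattice point of $P$. A straight-line path from that interior point then crosses one of the two shortest paths, yielding a route to $O$ strictly shorter than $k$ through $x$ or $z$, contradicting the minimality defining the hop count. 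A convenient way to trigger this is to first observe, via Proposition \ref{prop2}, that at most one of the outer points $w_{xy}, w_{yz}, w_{xz}$ can have hop count $k-1$: any two of them are non-adjacent yet each pair shares a triangle vertex as a common neighbor, so two of them being hop-$(k-1)$ would give that vertex two non-adjacent hop-$(k-1)$ neighbors, which is forbidden.

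I expect the main obstacle to be making the ``fan-out forces a crossing'' step fully rigorous across all positions of $O$ relative to the triangle, since the direction to $O$ is intrinsic to the (possibly irregular) shape $P$ rather than dictated by Euclidean geometry. I would handle this exactly as Proposition \ref{prop2} does---reducing a crossing of two shortest paths to a straight-line shortcut through an enclosed lattice point---after a short case analysis on which of the three sectors faces away from $O$, noting that at least one sector always does because they are $120^\circ$ apart. The consecutiveness restriction of Proposition \ref{prop3}, together with the fact that the three arcs overlap only at the mutually non-adjacent points $w_{xy}, w_{yz}, w_{xz}$, is precisely what eliminates the degenerate configurations and keeps this case analysis finite.
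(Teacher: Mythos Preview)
Your approach is correct and ultimately rests on the same mechanism as the paper's proof---an enclosed-region argument followed by a straight-line shortcut in the style of Proposition~\ref{prop2}---but you take a longer route to get there. The paper's proof is considerably more direct: it simply picks shortest paths $path_u$ and $path_w$ for two of the three clique vertices, observes that neither can pass through the third vertex $v$ (since $h(v)=k$), and then does a two-case split on whether $v$ lies inside or outside the region bounded by $path_u \cup path_w$. If $v$ is inside, the straight line from $u$ through $v$ must exit through one of the two paths, yielding the shortcut; if $v$ is outside, one brings in $path_v$ and repeats the argument with a different pair. No local analysis of the $120^\circ$ sectors, no appeal to Proposition~\ref{prop3}, and no discussion of the outer points $w_{xy},w_{yz},w_{xz}$ is needed.

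Your extra scaffolding is not wrong---the observation that at most one of $w_{xy},w_{yz},w_{xz}$ can have hop count $k-1$ is a nice consequence of Proposition~\ref{prop2}---but it does not shorten the endgame, which you yourself identify as the hard part. The paper sidesteps your case analysis on ``which sector faces away from $O$'' by never introducing the sectors in the first place: the inside/outside dichotomy for $v$ already exhausts the possibilities. If you want to streamline your write-up, drop the neighborhood-arc analysis and go straight to two paths plus the third vertex.
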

    
    \begin{proof}
        Suppose not, then there is a clique of three lattice points $u,v,w$ that have the same hop count $k$. We denote the shortest path from $u$ to $O$ as $path_u$, the shortest path from $w$ to $O$ as $path_w$. Since $path_u$ and $path_w$ cannot go through $v$, $v$ is either surrounded by $path_u$ and $path_w$ or outside the region enclosed by $path_u$ and $path_w$.
    
        \begin{figure}[H]
            \centering
            \includegraphics[scale=0.5]{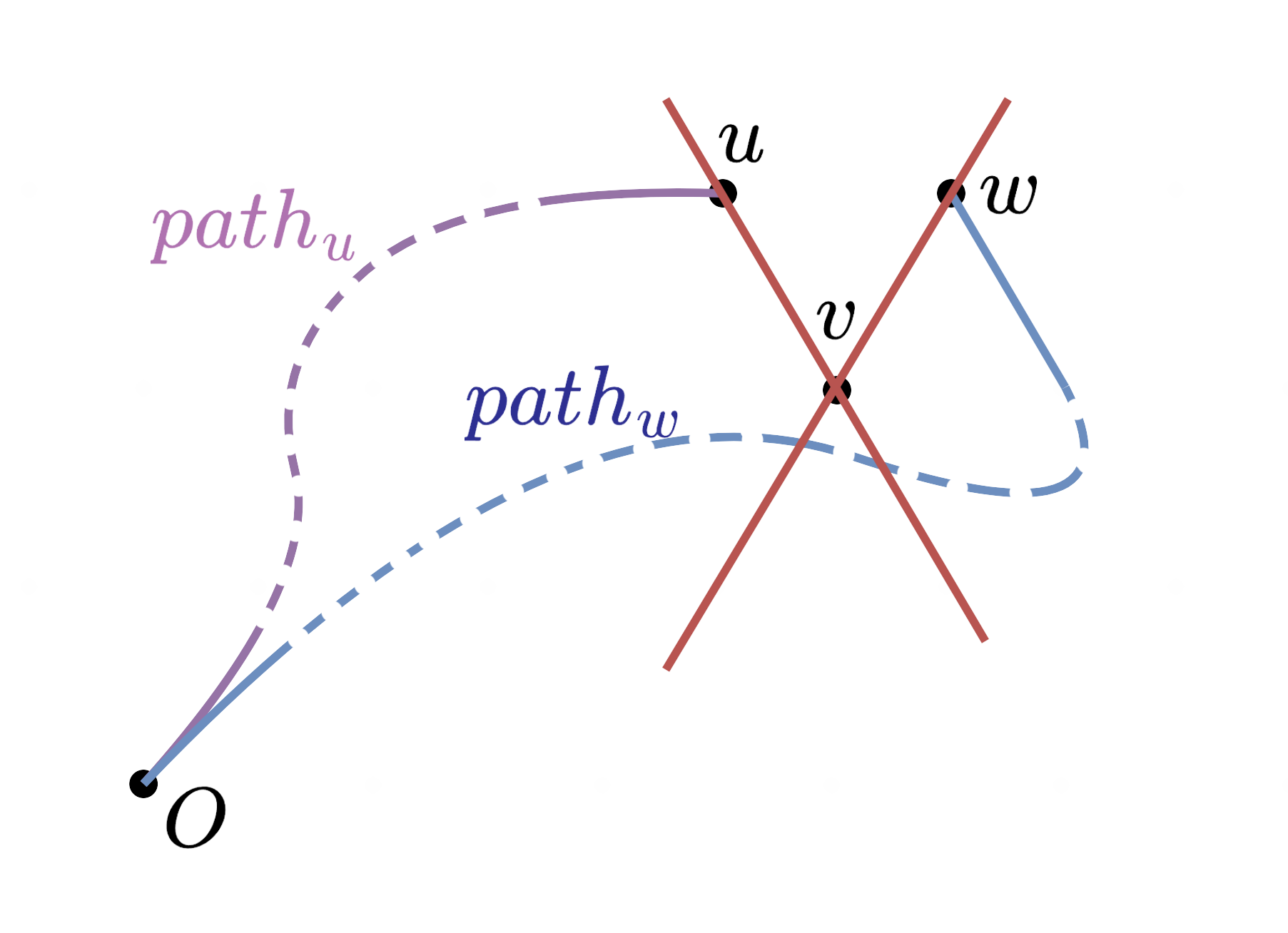}
            \caption{Illustration for proposition \ref{prop-clique}}
        \end{figure}

        If $v$ is surrounded by $path_u$ and $path_w$, the straight line path starts from $u$ in the direction of $v$, and the straight line path starts from $w$ in the direction of $v$ must intersect $path_u$ or $path_w$. It contradicts $path_u$ being the shortest path or $path_w$ being the shortest path, respectively.
    
        If $v$ is not surrounded by $path_u$ and $path_w$, we denote the shortest path from $v$ to $O$ as $path_v$. Then, either $path_v$ and $path_u$ surrounds $w$, or $path_v$ and $path_w$ surrounds $u$. By similar arguments, we reach a contradiction. 
    \end{proof}

    \begin{proposition}
        \label{prop-typeOfNeighborhood}

        Given a lattice point $x \in L$ with a positive hop count $k$, the neighborhood of $x$ can only be in one of the two types:
        \begin{itemize}
            \item [-] Type $A$: $x$ is adjacent to one lattice point $y$ with hop count $k-1$. The lattice point(s) in $P$ that are adjacent to both $x$ and $y$ has the hop count equal to $k$. The remaining neighbors in $P$, if any, have the hop count equal to $k+1$.
            \item [-] Type $B$: $x$ is adjacent to two adjacent lattice point $y,z$ with hop count $k-1$. The lattice point(s) in $P$ that is adjacent to both $x$ and $y$ or adjacent to both $x$ and $z$ has the hop count equal to $k$. The remaining neighbors in $P$, if any, have the hop count equal to $k+1$.
        \end{itemize}
    \end{proposition}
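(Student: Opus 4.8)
The plan is to reduce the whole statement to the four structural facts already established—Propositions \ref{prop-10+1}, \ref{prop2}, \ref{prop3}, and \ref{prop-clique}—by a direct case analysis on the six neighbors of $x$, exploiting the hexagonal geometry: two \emph{consecutive} neighbors of $x$ are adjacent and form a clique with $x$, whereas any two non-consecutive neighbors are non-adjacent (they lie at distance $\sqrt{3}d$ or $2d$). First I would fix the dichotomy. By Proposition \ref{prop3} the set of neighbors of $x$ with hop count $k-1$ is either a single vertex $y$ (Type $A$) or a pair of adjacent vertices $y,z$ (Type $B$), and Proposition \ref{prop2} rules out any third such neighbor. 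This is precisely the split in the statement, so it remains only to pin down the hop counts of the other neighbors.

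Next I would treat the \emph{common} neighbors. Let $w\in P$ be a neighbor of $x$ adjacent to both $x$ and $y$ (or, in Type $B$, to $x$ and $z$). Adjacency to $x$ gives $h(w)\in\{k-1,k,k+1\}$ by Proposition \ref{prop-10+1}, while adjacency to a vertex of hop count $k-1$ gives $h(w)\in\{k-2,k-1,k\}$; intersecting, $h(w)\in\{k-1,k\}$. The value $k-1$ is excluded because $w\neq y$ (and $w\neq z$) and, by the dichotomy just established, no further neighbor of $x$ can have hop count $k-1$. Hence $h(w)=k$, as claimed.

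For the \emph{remaining} neighbors I would argue they sit at level $k+1$. Each such $v\in P$ is adjacent to $x$, so $h(v)\in\{k-1,k,k+1\}$. The value $k-1$ is impossible: $v$ is non-adjacent to the predecessor(s) $y$ (and $z$), so $h(v)=k-1$ would give $x$ two non-adjacent neighbors of hop count $k-1$, contradicting Proposition \ref{prop2}. To exclude $h(v)=k$ I would split according to the local geometry. If $v$ is adjacent to one of the common neighbors $w$ (this covers \emph{both} remaining neighbors in Type $B$, and the two ``flanking'' neighbors in Type $A$), then $\{x,w,v\}$ is a clique; since $h(x)=h(w)=k$, Proposition \ref{prop-clique} forbids $h(v)=k$, leaving $h(v)=k+1$.

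The one case needing more care is the Type $A$ neighbor $n$ diametrically opposite $y$, which is \emph{not} adjacent to either common neighbor, so the clique argument does not apply directly — I expect this to be the main obstacle. Here I would first note that the flanking neighbor $a$ adjacent to $n$ has hop count $k+1$ (by the clique step above), and then apply Proposition \ref{prop2} to $a$ itself: the vertex $a$ has hop count $k+1$ and is adjacent both to a common neighbor $w$ (of hop count $k$) and to $n$, while $w$ and $n$ are non-adjacent in the hexagon around $x$; so the assumption $h(n)=k$ would give $a$ two non-adjacent neighbors of hop count $(k+1)-1=k$, contradicting Proposition \ref{prop2}. Thus $h(n)=k+1$ and the characterization is complete. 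The only subtlety to watch is that this ``bounce'' argument presumes the flanking neighbor $a$ lies in $P$, so that its hop count is defined; I would justify this from the placement and connectivity assumptions on $P$ (using that $n$ is genuinely interior, with a symmetric fallback through the other flanking neighbor) before invoking the contradiction.
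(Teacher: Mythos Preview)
Your proposal is correct and matches the paper's proof almost step for step: Proposition~\ref{prop3} for the Type~$A$/Type~$B$ dichotomy, the adjacency constraints (via Proposition~\ref{prop-10+1}) for the common neighbors, Proposition~\ref{prop-clique} for the flanking neighbors, and Proposition~\ref{prop2} applied to a flanking vertex (the paper's $c$) to pin down the diametrically opposite neighbor in Type~$A$. The subtlety you flag---that this last step needs the flanking neighbor to lie in $P$---is simply glossed over in the paper's own argument, so you are if anything being more careful than the original.
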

    \begin{figure}[H]
        \centering
        \includegraphics[scale=0.5]{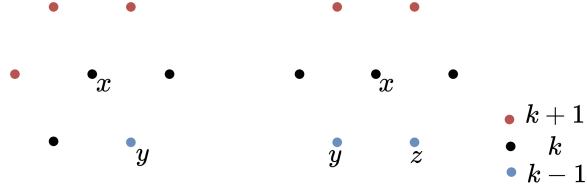}
        \caption{Left: type $A$ neighborhood of $x$; Right: type $B$ neighborhood of $x$}
    \end{figure}
    \begin{proof}
        By proposition \ref{prop3}, $x$ has one neighbor with hop count $k-1$ or has two adjacent neighbor robots with hop count $k-1$:
        \begin{itemize}
            \item case 1: $x$ has only one neighbor $y$ with hop count $k-1$. We shall inspect the six neighbors of $x$.
            Denote the two lattice points adjacent to both $x$ and $y$ as $a$ and $b$. If $a,b$ are inside $P$, then they could have the hop count equal to $k$ or $k+1$ by proposition \ref{prop-10+1}. Since they are adjacent to $y$ with $h(y)=k-1$, $a,b$ must have hop count $k$. Denote the lattice point adjacent to $a,x$ as $c$ and the lattice point adjacent to $b,x$ as $d$. Suppose $c,d$ locate in $P$, then by proposition \ref{prop-clique}, $h(c) \neq k$ and $h(d) \neq k$ so $h(c) = h(d) = k+1$. Denote the last lattice point as $e$ and assume $e \in P$. By applying proposition \ref{prop2} on lattice point $c$, we deduce that $h(e) \neq k$, so $h(e) = k+1$. We conclude that the neighborhood of $x$ in case 1 is precisely the type A neighborhood of $x$.

            \item case 2: $x$ has only two adjacent neighbors $y,z$ with hop count $k-1$. By similar arguments, we deduce the neighborhood of $x$ in case 2 is precisely the type B neighborhood of $x$.
        \end{itemize}
    \end{proof}

\subsection{Ribbon}
\label{sect-defRibbon}

    \begin{definition}
        Let $G$ be the subgraph of $\Gamma$ induced by the sets of lattice points with the same hop count. An \textbf{undirected ribbon} $R^*$ is defined as a component of $G$. The \textbf{gradient} of $R^*$ is the hop count of any lattice point belonging to the ribbon $R^*$.
    \end{definition}


    \begin{proposition}
        All undirected ribbons are path graphs.
    \end{proposition}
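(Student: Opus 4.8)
The plan is to show that each component $R^*$ of $G$ is connected (which it is by definition) and has maximum degree at most two; since a connected graph with maximum degree at most two is either a path or a cycle, the proposition reduces to excluding cycles. First I would read the degree bound directly off Proposition \ref{prop-typeOfNeighborhood}. Fix $x \in R^*$ with hop count $k$. If $x$ has a type $A$ neighborhood, the only neighbors of $x$ with hop count $k$ are the (at most two) common neighbors of $x$ and its unique parent $y$; if $x$ has a type $B$ neighborhood, the only neighbors of hop count $k$ are the two points lying across from $x$ relative to the parents $y,z$. In either case $x$ has at most two neighbors in $G$, so $\deg_G(x)\le 2$, and every component is a path or a cycle.

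The heart of the argument is therefore ruling out cycles, and I would approach this through the local geometry together with the topology of $P$. Suppose toward a contradiction that $R^*$ is a cycle, all vertices at hop count $k$. The structural fact I would extract from Proposition \ref{prop-typeOfNeighborhood} is that at a type $A$ vertex the two same-hop neighbors sit symmetrically about the ray from $x$ toward its parent, so the ribbon makes a $60^\circ$ bend there, while a type $B$ vertex is traversed in a straight line. Because a parent always has strictly smaller hop count, every bend turns the loop consistently toward the origin side; since a simple closed polygon has total turning $\pm 360^\circ$, this forces the origin $O$ to lie on the interior (concave) side, i.e. the cycle must encircle $O$.

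The contradiction I would then draw is topological. The cycle $R^*$ is a closed loop of lattice points of $P$ winding around $O$, yet by the proper placement (Definition \ref{def-properPlacement}) we have $O \notin P$, and indeed $O \notin P_0$. Since $P_0$ is homeomorphic to an open disk, it is simply connected and cannot contain a loop encircling a point of its complement, so passing from the lattice cycle to the corresponding closed polygonal curve in $P_0$ gives the contradiction. The logically possible case in which the loop does not enclose $O$ I would dispatch by the surrounding argument already used in Propositions \ref{prop2} and \ref{prop-clique}: an interior lattice point of the loop lies in $P$, and its shortest path to $O$ must escape the bounded region, crossing the hop-$k$ cycle in a manner incompatible with that path being shortest.

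The hard part will be the cycle-exclusion step, and specifically making the discrete-to-continuous passage rigorous: I must verify that the lattice cycle genuinely bounds a polygonal curve lying in $P_0$ and that every lattice point it encloses actually belongs to $P$. Both facts rely on $P$ having no holes (that is, on $P_0$ being a disk) together with the boundary-smoothness hypotheses, and I expect the careful statement of these topological facts, rather than the degree bound or the path-versus-cycle dichotomy, to be where the real work lies.
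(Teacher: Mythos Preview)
Your proposal is correct and closely parallels the paper's argument, though the cycle-exclusion step is organized differently. Both proofs begin identically: the degree bound from Proposition~\ref{prop-typeOfNeighborhood} forces each component to be a path or a cycle. To rule out cycles, the paper splits into two cases according to whether the lattice points enclosed by the putative cycle have hop count smaller or larger than $k$. When the enclosed points have larger hop count (so parents lie on the \emph{exterior}), the paper notes that the interior angle at each vertex is $\pi$ (type $B$) or $4\pi/3$ (type $A$), whence the interior-angle sum is at least $|V(R^*)|\,\pi$, contradicting the polygon formula $(|V(R^*)|-2)\pi$. Your turning-number argument is the exterior-angle dual of this same computation: since every bend turns toward the parent side, the parent side must be the interior, which forces the enclosed points to have \emph{smaller} hop count and dispatches the second case in one stroke. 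For the remaining case both you and the paper argue that the shortest path from a cycle vertex to $O$ descends into the interior and hence traps $O$ inside the cycle, contradicting $O\notin P$. The paper's route is marginally more elementary in that the angle-sum contradiction is purely combinatorial and sidesteps the discrete-to-continuous passage you (rightly) flag as the delicate point; in particular it never needs to invoke the simple connectedness of $P_0$ to eliminate the ``parents outside'' configuration.
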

    \begin{proof}
        Let $R^*$ be an undirected ribbon with gradient $g(R^*)=k$, then there exists a graph $G$ induced by all the lattice points with hop count $k$, and $R^*$ is a component of $G$. By proposition \ref{prop-typeOfNeighborhood}, every vertex of $R^*$ cannot be adjacent to more than two vertices of $R^*$, so $R^*$ is either a path graph or a cycle graph. Suppose $R^*$ is a cycle graph, then 
        \begin{itemize}
            \item [-] case 1: the lattice points surrounded by $R^*$ has hop count less than $k$, then the shortest path from a point $x \in V(R^*)$ to the origin ends in the interior of $R^*$. This contradicts the origin $O$ being a point outside $P$.
            \item [-] case 2: the lattice points surround by $R^*$ has hop count larger than $k$. Notice that the edges of $R^*$ form a polygon, and we denote the polygon as $Y$. $Y$ has $|V(R^*)|$ vertices so the sum of interior angles of $Y$ is $(|V(R^*)| * \pi - 2*\pi)$ (by the interior angle formula). However, the interior angle at each vertex is either $\frac{4\pi}{3}$ (for type $A$ neighborhood) or $\pi$ (for type $B$ neighborhood). By summing up all the interior angles at each vertex of $R^*$, we obtain a sum that is larger than $|V(R^*)| * \pi$. This contradicts the result derived from the interior angle formula.
        \end{itemize}
        Since $R^*$ is not a cycle graph, $R^*$ must be a path graph.
    \end{proof}

    For each undirected ribbon $R^*$, we impose direction on its edges to get ribbon $R$:
    \begin{definition}
        For a given undirected ribbon $R^*$, its corresponding directed ribbon $R$ is a directed graph such that for each vertex $x \in R$, the arc entering $x$ and the arc leaving $x$ is defined such that:
        \begin{figure}[H]
            \centering
            \includegraphics[scale=0.5]{figures/arxiv/proofCanonical-neighborhood-with-arc.png}
        \end{figure}
        \begin{itemize}
            \item [-] if $x$ has the type $A$ neighborhood, the two arcs are directed clockwise around $y$;
            \item [-] if $x$ has the type $B$ neighborhood, the arc entering $x$ is directed clockwise around $y$, and the arc leaving $x$ is directed clockwise around $z$.
        \end{itemize}
        The \textbf{gradient} of $R$ is the hop count of any lattice point belonging to the ribbon $R$. Denote as $\bm{g(R)}$. The \textbf{length} of $R$ is the number of vertices in $R$, i.e., $len(R) := |V(R)|$. 
    \end{definition} 

    \begin{proposition}
        A ribbon is a directed path graph.
    \end{proposition}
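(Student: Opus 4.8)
The plan is to show that the directed ribbon $R$ is a directed path graph, given that its underlying undirected ribbon $R^*$ has already been established as an (undirected) path graph. Since $R$ has the same vertex and edge set as $R^*$ with orientations added, the underlying graph is already a path; what remains is to verify that the orientation assigned by the definition turns this undirected path into a \emph{consistently directed} path, i.e. that every internal vertex has in-degree exactly one and out-degree exactly one, while the two endpoints have one incident arc each (one of in-degree $1$, out-degree $0$, the other of in-degree $0$, out-degree $1$). First I would fix an internal vertex $x$ of $R^*$, which by definition has exactly two neighbors $u,v$ lying on $R^*$ (the two lattice points of the same hop count $k=g(R)$ adjacent to $x$). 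The orientation rule in Definition~\ref{def-directedRibbon-doc} assigns to $x$ one entering arc and one leaving arc: in the type-$A$ case both arcs are taken clockwise around the unique lower-hop-count neighbor $y$, and in the type-$B$ case the entering arc is clockwise around $y$ and the leaving arc clockwise around $z$. In either case the local picture prescribes a single in-arc and a single out-arc at $x$.

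The key step is to confirm that these locally prescribed orientations are \emph{globally consistent} along adjacent vertices, so that the arc leaving one vertex is the arc entering the next. Concretely, if $x$ and $x'$ are adjacent on $R^*$, I would check that the edge $xx'$ is oriented the same way whether it is viewed as an arc incident to $x$ or as an arc incident to $x'$; this is exactly the content of the "clockwise around the lower-hop-count neighbor" rule, since both $x$ and $x'$ share a common lower-hop-count parent neighbor (by Proposition~\ref{prop-typeOfNeighborhood}, the points of hop count $k$ adjacent to $x$ sit clockwise around the parent) and the clockwise convention picks a unique orientation for their connecting edge. Once this consistency is in hand, each internal vertex indeed has in-degree $1$ and out-degree $1$.

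For the endpoints of the path $R^*$, each has only one neighbor on $R^*$, hence only one incident arc; the orientation rule forces one endpoint to be a source (out-degree $1$, in-degree $0$) and the other to be a sink (in-degree $1$, out-degree $0$). A directed graph whose underlying graph is a path and in which every internal vertex has in- and out-degree one with a unique source and a unique sink is precisely a directed path graph, which is the assertion. The main obstacle I anticipate is the geometric bookkeeping behind global consistency: one must argue that the clockwise orientation conventions at neighboring vertices never conflict, which relies on the planar embedding of the hexagonal lattice and on the fact (from Proposition~\ref{prop-typeOfNeighborhood}) that the hop-count-$k$ neighbors of $x$ lie on a prescribed side relative to the lower-hop-count parent. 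This is a case analysis over the type-$A$ and type-$B$ configurations at two adjacent vertices, and while conceptually routine, it is the part requiring genuine care rather than an immediate formal deduction.
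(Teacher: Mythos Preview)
Your proposal is correct and follows essentially the same approach as the paper: both note that the underlying undirected ribbon $R^*$ is already a path, then verify that internal vertices have in-degree and out-degree one while endpoints have total degree one. The paper's proof is a single sentence that simply asserts these degree conditions from the definition; your version is more careful in that it explicitly flags and proposes to check the global consistency of the locally-assigned orientations at adjacent vertices, a point the paper silently takes for granted.
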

    \begin{proof}
        By the definition of directed ribbon, $R$ has an underlying undirected path graph $R^*$. We observe that any end vertex of $R$ has a total degree of one, and any non-end vertex in $R$ has an in-degree of one and an out-degree of one.
    \end{proof}

    \begin{definition}
        \label{def-ribbonBehind}
        For a ribbon $R$ and two vertices $x,y \in V(R)$,
        \begin{itemize}
            \item [-] $x$ is \textbf{in front of} $y$ iff there is a directed path from $y$ to $x$ in $R$, denoted as $x \rightarrow y$;
            \item [-] $x$ is \textbf{behind} $y$ iff there is a directed path from $x$ to $y$ in $R$, denoted as $y \rightarrow x$;
            \item [-] $x$ is \textbf{the first vertex} of the ribbon $R$ if no vertex is in front of $x$, and the first vertex of ribbon $R$ is denoted as $L(R)$;
            \item [-] $x$ is \textbf{the last vertex} of the ribbon if no vertex is behind it.
        \end{itemize}
    \end{definition}

    Notice that for a pair of vertices $x,y$ of $R$, $x$ is either in front of $y$ or behind $y$; and $x$ could not be in front of $y$ and behind $y$ at the same time (otherwise $x,y$ involves in a loop which contradicts $R$ being a directed path graph).

\subsection{Ribbons Induced by $P$}
\label{sect-ribbonizationOfP}

    \begin{definition}
        The \textbf{ribbonization} of $P$ is the collection of all ribbons with vertices in $P$, denoted as $\hat{P} = \{R_0, R_1, \ldots\}$. 
    \end{definition}

    \begin{definition}
        \label{def-properRibbonization}
        A \textbf{proper ribbonization} $\hat{P}$ satisfies the following properties:
        \begin{enumerate}
            \item $P$ is properly placed in $\mathbb{O}$;
            \item every lattice point in $P$ has a positive hop count, that is,
                \[ (\forall x \in P) h(x)>0  \]
            \item $(\forall R \in \hat{P}$) $len(R) \geq 2$;
            \item only one ribbon with the gradient value equal to two, and the ribbon has a length of three, that is,
                \[ (\exists! R \in \hat{P}) g(R)=2 \land len(R)=3\]
            \item $P$ has a smooth boundary in the sense that
                \[ (\forall x,y \in P) (||x-y||=2d \Rightarrow ((\forall z \in L) ||x-z||=||y-z||=d \Rightarrow z \in P ))\] 
                and
                \[ (\forall x,y \in P) (||x-y||=\sqrt{3} \Rightarrow \neg((\forall z \in L) ||x-z||=||y-z||=d \Rightarrow z \notin P ))\]


        \end{enumerate}

        We denote the ribbon with a gradient of $2$ as $R_0$. This ribbon serves as the root of the tree of ribbons (which we shall see very soon).
    \end{definition}

    \begin{proposition}
        \label{prop-pointBelongsToUniqueRibbon}
        Every lattice point in $P$ belongs to exactly one ribbon in $\hat{P}$.
    \end{proposition}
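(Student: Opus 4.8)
The plan is to reduce the claim to the elementary graph-theoretic fact that the connected components of a graph partition its vertex set, and then to check that this partition is transported faithfully through the two layers of definitions: from the auxiliary graph $G$ to undirected ribbons, and from undirected ribbons to the directed ribbons that constitute $\hat{P}$. In outline, ``belongs to exactly one ribbon'' splits into existence (at least one) and uniqueness (at most one), and both will fall out of the component structure of $G$ once we verify that each point of $P$ is actually a vertex of $G$.

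First I would pin down the graph $G$ from the definition of an undirected ribbon: its vertices are the lattice points carrying a hop count, and its edges are exactly the edges $xy$ of $\Gamma$ with $h(x)=h(y)$, so that every component of $G$ lies within a single hop-count level. The first key step is to observe that every $x \in P$ is a vertex of $G$. This is precisely where property 2 of a proper ribbonization (definition \ref{def-properRibbonization}) enters: since $h(x) > 0$ for all $x \in P$, no point of $P$ carries the sentinel value $-1$, so each $x \in P$ has a genuine, unique hop count and therefore appears in $G$. The second step is the standard fact that ``being joined by a path in $G$'' is an equivalence relation on $V(G)$ whose classes are exactly the connected components; since equivalence classes partition the underlying set, each $x \in V(G)$ lies in \emph{one} component, giving existence, while disjointness of distinct components gives uniqueness. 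As an undirected ribbon $R^{*}$ is by definition a component of $G$, this already establishes that each $x \in P$ lies in exactly one undirected ribbon.

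Finally I would pass from undirected to directed ribbons. The map sending each undirected ribbon $R^{*}$ to its directed counterpart $R$ merely orients the arcs according to the type $A$ / type $B$ neighbourhood of each vertex (proposition \ref{prop-typeOfNeighborhood}); it changes edge directions but leaves the vertex set untouched, and it is a bijection onto $\hat{P}$. Hence the vertex partition of $P$ produced by the undirected ribbons coincides with the one produced by the elements of $\hat{P}$, and the conclusion follows.

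I do not anticipate a genuine obstacle: the statement is essentially bookkeeping over the definitions. The only points deserving care are that $G$ is taken over $P$ (so that each of its components has all vertices in $P$ and so qualifies as a member of $\hat{P}$), and that property 2 is genuinely required rather than cosmetic, since without positive hop counts a point of $P$ could be unreachable, fail to appear in $G$, and thus belong to no ribbon at all.
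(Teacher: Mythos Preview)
Your proposal is correct and follows essentially the same approach as the paper: both arguments rest on the fact that the connected components of $G$ partition its vertex set, with existence coming from each $x\in P$ having a well-defined hop count and uniqueness from disjointness of components. The paper's proof is terser (it phrases uniqueness as a short contradiction and leaves the undirected-to-directed passage implicit), whereas you spell out the component partition and the bijection $R^{*}\mapsto R$ explicitly, but the underlying idea is identical.
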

    \begin{proof}
        Since every lattice point $x \in P$ has a finite hop count, $x$ belongs to a ribbon. Now, we show that $x$ cannot belong to more than one ribbon.

        Suppose lattice point $x \in P$ belongs to more than one ribbon, and we denote two of the ribbons as $R$ and $Q$. Then $g(R) = h(x) = g(Q)$. The lattice points in $Q$ and $R$ have the same hop count, and $Q$ and $R$ share the same lattice point x. As a result, $Q$ and $R$ are the same components of the subgraph induced by lattice points with hop count $h(x)$. This contradicts $Q$ and $R$ are different ribbons.
    \end{proof}

    \begin{definition}
        For a lattice point $x \in P$, define $R(x)$ as the ribbon such that $x \in V(R(x))$.
    \end{definition}

    We define the relationships between two ribbons as follows:
    \begin{definition}
        For two ribbons $R_i$ and $R_j$ in $\hat{P}$,
        \begin{itemize}
            \item[-] $R_i$ is a \textbf{parent ribbon} of $R_j$ if 
                \begin{enumerate}
                    \item $g(R_i) = g(R_j) - 1$
                    \item $\forall x \in V(R_i)$ $\exists y \in V(R_j)$ $||x - y|| =d$.
                \end{enumerate}
                Denoted as $R_i = R_j^{[p]}$.
            \item[-] $R_i$ is a \textbf{child ribbon} of $R_j$ if $R_j$ is a parent ribbon of $R_i$;
            \item[-] $R_i$ is a \textbf{descendant ribbon} of $R_j$ if there is a chain $(R_i, R_{i+1}, \ldots, R_{j-1}, R_{j})$ such that $R_i$ is a child ribbon of $R_{i-1}$, ..., $R_{j-1}$ is a child ribbon of $R_j$. We also denote $R_j$ as an \textbf{ancestor ribbon} of $R_i$, denoted as $R_j \twoheadrightarrow R_i$.
        \end{itemize}
    \end{definition}

    \begin{proposition}
        \label{prop-uniqueParent}
        Except for $R_0$, every ribbon has a unique parent ribbon.
    \end{proposition}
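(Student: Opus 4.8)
The plan is to prove that every ribbon except the root $R_0$ has a unique parent ribbon by establishing two things separately: the \emph{existence} of at least one parent ribbon, and the \emph{uniqueness} of that parent. For existence, I would take an arbitrary ribbon $R_j \neq R_0$ with gradient $g(R_j) = k \geq 3$ (note $k \geq 3$ since $R_0$ is the unique ribbon of gradient $2$ and every lattice point has positive hop count, so the smallest gradient above the root is at least $3$). By Proposition \ref{prop1}, each vertex $x \in V(R_j)$ has a neighbor with hop count $k-1$; these neighbors live on ribbons of gradient $k-1$. The goal is to show that all these hop-count-$(k-1)$ neighbors lie on a \emph{single} ribbon $R_i$, and that this $R_i$ satisfies both defining conditions of a parent: $g(R_i) = g(R_j) - 1$ and every vertex of $R_i$ is adjacent to some vertex of $R_j$.

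First I would fix the gradient condition, which is immediate: any parent must have gradient exactly $k-1$ by definition. The substantive work is to verify that the set of hop-count-$(k-1)$ neighbors of the vertices of $R_j$ forms a connected subgraph of the hop-count-$(k-1)$ induced subgraph $G$, hence is contained in one component, i.e.\ one ribbon. Here I would walk along $R_j$ as a directed path graph $(x_1, \ldots, x_n)$ and use the Type $A$/Type $B$ neighborhood classification (Proposition \ref{prop-typeOfNeighborhood}) together with the arc-direction definition of the directed ribbon. For consecutive vertices $x_\ell, x_{\ell+1}$ on $R_j$, their respective hop-count-$(k-1)$ neighbors are forced by the Type $A$/$B$ geometry to either coincide or be adjacent to each other on the lattice, which chains the parent-candidate vertices into a connected path. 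This connectivity argument, propagated along the whole ribbon, is what shows all parent candidates sit on one undirected ribbon $R_i$.

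For uniqueness, suppose $R_j$ had two distinct parent ribbons $R_i$ and $R_{i'}$. Both have gradient $k-1$, and by the parent definition \emph{every} vertex of each is adjacent to some vertex of $R_j$. Since $R_i \neq R_{i'}$ are different components of $G$ (the hop-count-$(k-1)$ subgraph), they share no vertex and no vertex of one is adjacent to a vertex of the other. I would derive a contradiction from the geometry: having two disjoint, mutually non-adjacent ribbons each lying entirely along $R_j$ would force a lattice point of $R_j$ to have hop-count-$(k-1)$ neighbors on both, i.e.\ two non-adjacent neighbors of hop count $k-1$, contradicting Proposition \ref{prop2}. Alternatively the smooth-boundary and clique conditions (Proposition \ref{prop-clique}) rule out the degenerate configurations.

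The main obstacle I anticipate is the existence/connectivity step rather than uniqueness: carefully tracking, vertex by vertex along the directed path $R_j$, how the Type $A$ and Type $B$ transitions pin down the location of each vertex's $(k-1)$-neighbor and guarantee that consecutive such neighbors are adjacent (or equal) on the lattice. The bookkeeping is delicate because a Type $B$ vertex has two hop-count-$(k-1)$ neighbors $y$ and $z$, and I must confirm these are consistent with the neighbors contributed by the adjacent vertices of $R_j$ so that no spurious second component is introduced. Once the parent candidates are shown to form a single connected path of gradient $k-1$, verifying the two parent-conditions and invoking Proposition \ref{prop2} for uniqueness should be routine.
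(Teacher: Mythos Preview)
Your proposal is correct and follows essentially the same approach as the paper: for existence, walk along the directed path $R=(v_0,\ldots,v_n)$ and use the Type~$A$/Type~$B$ neighborhood classification to show that the hop-count-$(k-1)$ neighbors of consecutive vertices coincide or are adjacent, hence form a connected set contained in a single ribbon; for uniqueness, the paper argues directly via the Type~$A$/$B$ dichotomy (which is just a repackaging of Proposition~\ref{prop2}/\ref{prop3}) that any vertex of $R$ can see $(k-1)$-neighbors in at most one ribbon, which is exactly your line of reasoning.
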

    \begin{proof}
        Given a ribbon $R = (v_0,v_1,...,v_n)$\footnote{We use the sequence of vertices to represent a directed ribbon. $v_0$ is the first vertex of the ribbon. We also assume $R$ has a finite length because there are finite lattice points with the same hop count.}, and suppose $g(R) = k$ with $k \geq 1$. We shall inspect the neighborhood of each vertex of $R$. Starting from $v_0$:

        \begin{figure}[H]
            \centering
            \includegraphics[scale=0.5]{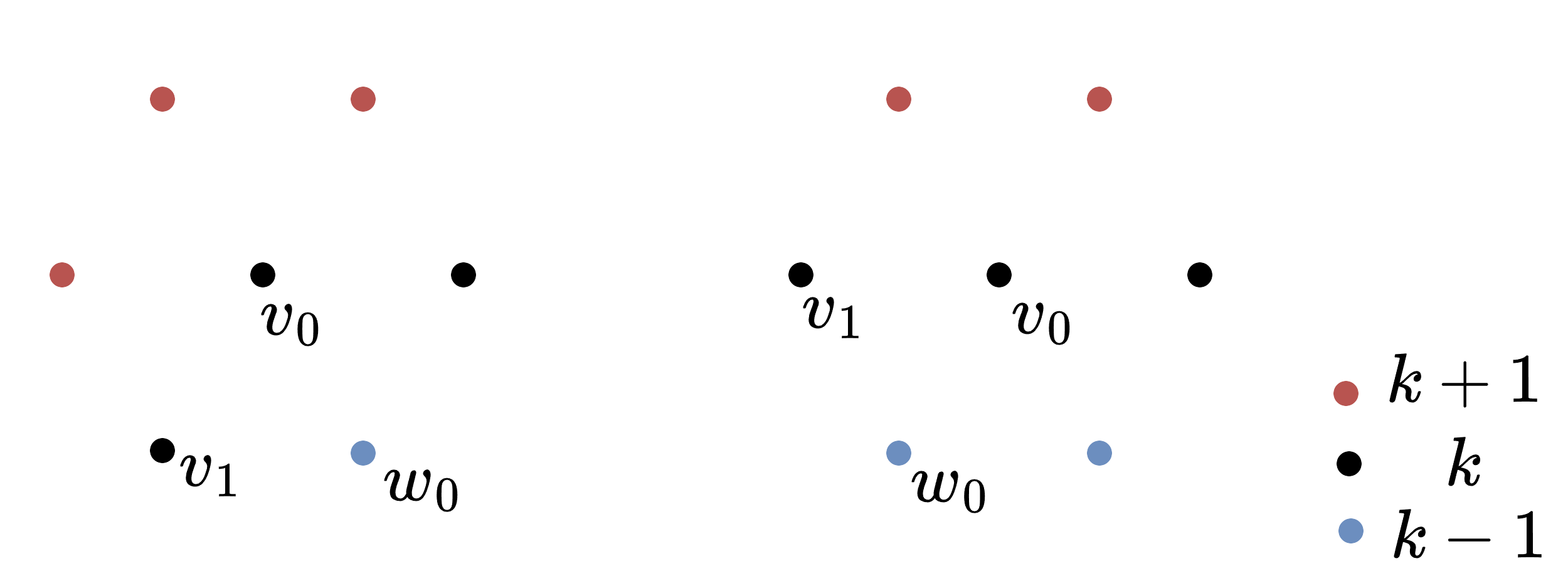}
        \end{figure}

        For both type $A$ and type $B$ neighborhood of $v_0$, we denote the lattice point adjacent to both $v_0, v_1$ as $w_0$. Then we inspect the neighborhood of $v_1$:

        \begin{figure}[H]
            \centering
            \includegraphics[scale=0.5]{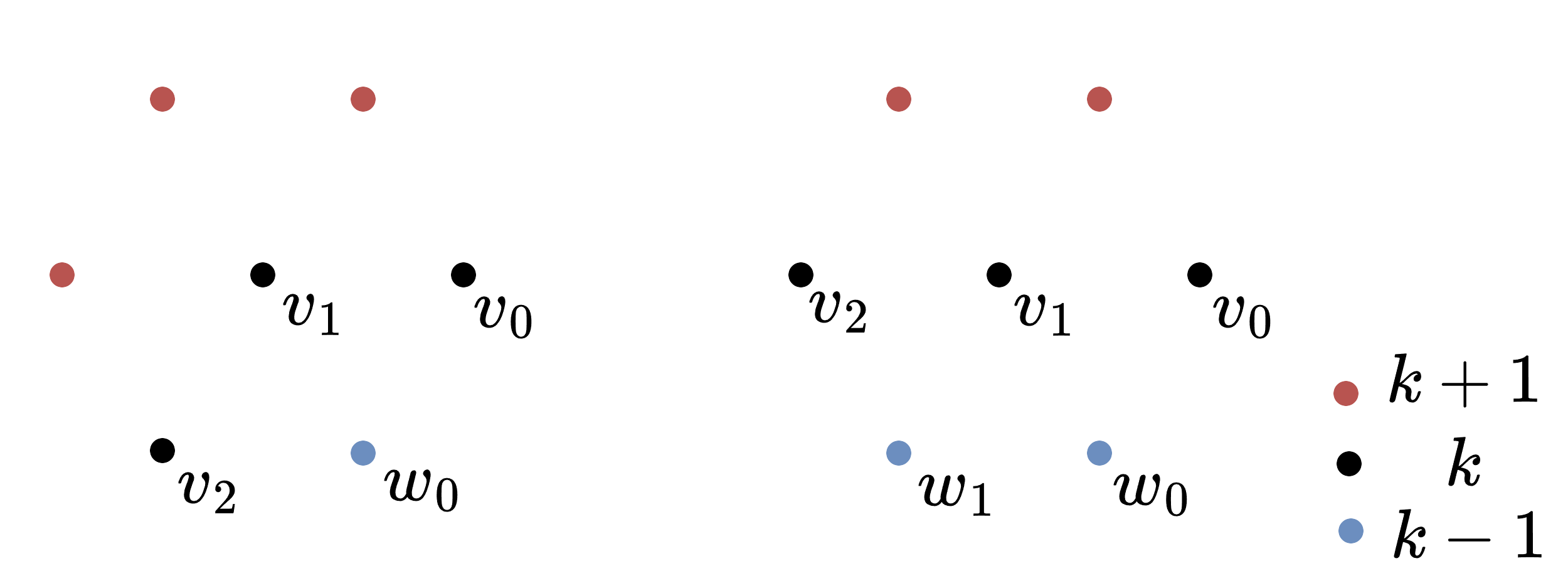}
        \end{figure}

        If the neighborhood of $v_1$ is of type $A$, we do nothing. If the neighborhood of $v_1$ is of type $B$, we denote the lattice point adjacent to both $v_1,v_2$ as $w_1$.

        We repeat this process until we are left with $v_n$ and obtain a sequence of lattice points $W = (w_0, w_1,...,w_m)$. Notice that all the lattice points in $W$ have hop count $k-1$ and every lattice point in $R$ is adjacent to a lattice point in $W$. As a result, there exists some ribbon $Q$ with subgraph $W$, and $Q$ is a parent ribbon of $R$. 

        Now we shall show that the parent ribbon is unique. Suppose not, then there exists a ribbon $R$ with two parent ribbons $R_a$ and $R_b$. By the definition of parent ribbon, $\forall x \in V(R), \exists y \in V(R_a)$ and $z \in V(R_b)$ such that $x$ is adjacent to both $y$ and $z$. Consider the neighborhood of $x$,
        \begin{itemize}
            \item [-] if the neighborhood is of type A, then $y=z$, contradicts to proposition \ref{prop-pointBelongsToUniqueRibbon};
            \item [-] if the neighborhood is of type B, then $y$ is adjacent to $z$. Since $h(y) = h(z) = h(x)-1$, $y,z$ belongs to the same ribbon. We reach a contradiction.
        \end{itemize} 
        We conclude with the existence and uniqueness of the parent ribbon.
    \end{proof}

    We notice that the ribbons form a directed tree with root $R_0$.
    \begin{definition}
        Given a proper ribbonization $\hat{P}$, define its corresponding \textbf{ribbonization tree $Tr(\hat{P})$} as a directed graph $(V(Tr(\hat{P})), A(Tr(\hat{P})))$ such that 
        \begin{enumerate}
            \item $V(Tr(\hat{P})) = \hat{P}$ and 
            \item $(\forall R_i, R_j \in V(Tr(\hat{P})))$, $R_iR_j \in A(Tr(\hat{P})) \iff R_i \text{ is the parent of } R_j$. 
        \end{enumerate}
    \end{definition}

    
    \begin{proposition}
        A ribbonization tree is a directed tree.
    \end{proposition}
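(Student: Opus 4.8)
The plan is to verify that $Tr(\hat{P})$ satisfies the defining properties of an arborescence (directed tree) rooted at $R_0$: the root $R_0$ has in-degree $0$, every other vertex has in-degree exactly $1$, and every vertex is reachable from $R_0$ along directed arcs. The two propositions already established do most of the work; what remains is to assemble them, using the gradient as a monotone height function.

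First I would read off the in-degree data directly from Proposition \ref{prop-uniqueParent}. Since every ribbon other than $R_0$ has a unique parent, and an arc of $Tr(\hat{P})$ points from a parent to its child, each vertex $R \neq R_0$ receives exactly one incoming arc (namely from $R^{[p]}$), while $R_0$ receives none. In particular, the total number of arcs equals $|\hat{P}| - 1$.

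Next I would exploit the gradient. By the definition of the parent relation, whenever $R_i R_j$ is an arc of $Tr(\hat{P})$ we have $g(R_j) = g(R_i) + 1$, so the gradient strictly increases along every arc, hence along every directed path. This immediately rules out directed cycles, since a cycle would have to return to its starting gradient, and shows $Tr(\hat{P})$ is acyclic. For reachability I would iterate the parent map $R \mapsto R^{[p]}$ starting from an arbitrary ribbon $R$: the gradients $g(R) > g(R^{[p]}) > \cdots$ form a strictly decreasing sequence of positive integers, so the iteration terminates, and it can only terminate at a ribbon with no parent, which by Proposition \ref{prop-uniqueParent} is $R_0$ alone. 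Reading this chain backwards yields a directed path $R_0 \twoheadrightarrow R$, so every vertex is reachable from $R_0$.

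Finally I would combine these facts: a directed graph in which one distinguished vertex has in-degree $0$, all others have in-degree $1$, and every vertex is reachable from that distinguished vertex is exactly an arborescence rooted there; equivalently, the $|\hat{P}|-1$ arcs together with reachability force the underlying undirected graph to be connected with $|\hat{P}|-1$ edges, hence a tree, with all arcs oriented away from $R_0$. I expect the only delicate point to be the bookkeeping that the parent iteration halts precisely at $R_0$ and nowhere else; this hinges on the fact that $R_0$ is the unique ribbon of gradient $2$ and that no ribbon of $\hat{P}$ has gradient $1$ (every lattice point of $P$ has positive hop count, while the only hop-count-$1$ points $p_1,\dots,p_6$ lie outside $P$ by proper placement), so $R_0$ is indeed the only parentless vertex.
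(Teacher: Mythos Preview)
Your argument is correct and follows the same route as the paper: both proofs rest entirely on Proposition~\ref{prop-uniqueParent} (every ribbon except $R_0$ has a unique parent), and the paper's one-line proof simply asserts that this yields a directed tree rooted at $R_0$. You have filled in the details the paper leaves implicit---using the gradient as a strictly increasing height along arcs to rule out cycles and to show the parent-iteration terminates at $R_0$---so your version is a fleshed-out form of the same idea rather than a different approach.
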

    \begin{proof}
        By proposition \ref{prop-uniqueParent}, every ribbon has a unique parent ribbon except for $R_0$. As a result, by assigning $R_0$ as the root, we obtain a directed tree.
    \end{proof}

\subsection{Ribbonization of $S$}
\label{sect-ribbonizationOfS}
    By letting $P$ be the user-defined shape $S$, and restraining $S$ to be in $H_1$, we could obtain the ribbonization of $S$.

    \begin{definition}
        A lattice point $x$ is a \textbf{boundary point} of $S$ if:
        \begin{enumerate}
            \item $x$ is a lattice point in $S$; and
            \item $x$ is adjacent to a lattice point that is not in $S$.
        \end{enumerate}
    \end{definition}

    \noindent Recall that every lattice point on the boundary of $S$ is not in $D$, that is
        \[ (\forall x \in S) (\exists y \in S^c \cap L \text{ and } ||x-y||=d) \Rightarrow x \notin D \]


        



    \begin{figure}[H]
        \centering
        \includegraphics[scale=0.335]{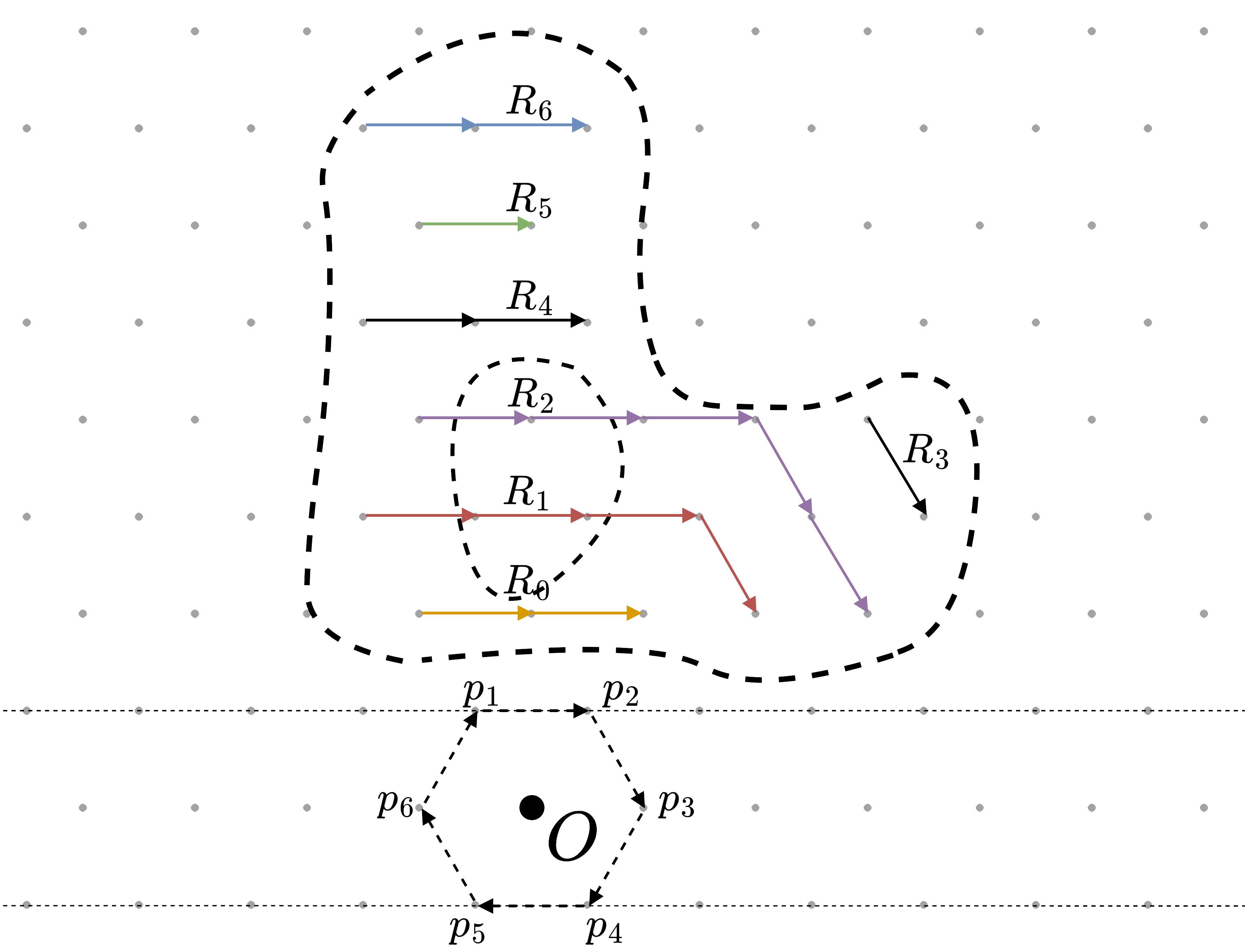}
        \hspace{2cm}
        \includegraphics[scale=0.33]{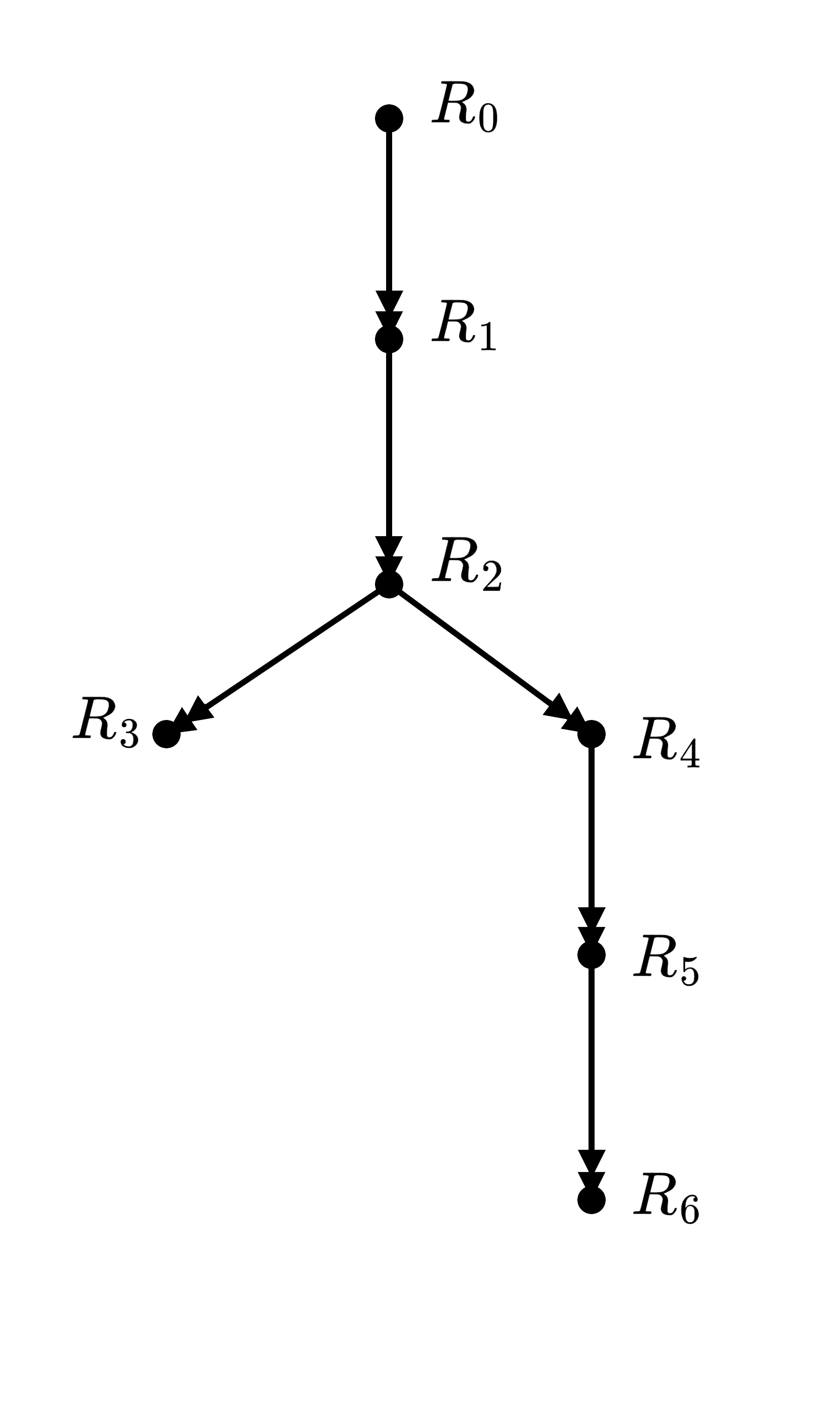}
        \caption{Left: a proper ribbonization $\hat{S_D}$; Right: the rooted tree induced by $\hat{S}$}
    \end{figure}

    \begin{definition}
        A \textbf{cavity point} is a lattice point in $D$. We use $\mathbf{c(R)}$ to denote the number of cavity points of a ribbon $R$.
    \end{definition}


\subsection{Ribbonization of Half-plane $H_2$}
\label{sect-ribbonizationOfH2}
    To distinguish the ribbons induced by the set of lattice points in $H_2$ from those induced by $S$, we use the notation $IR_i$ to denote the ribbon with index $i$ and call them idle ribbons:

    \begin{figure}[H]
        \centering
        \includegraphics[scale=0.3]{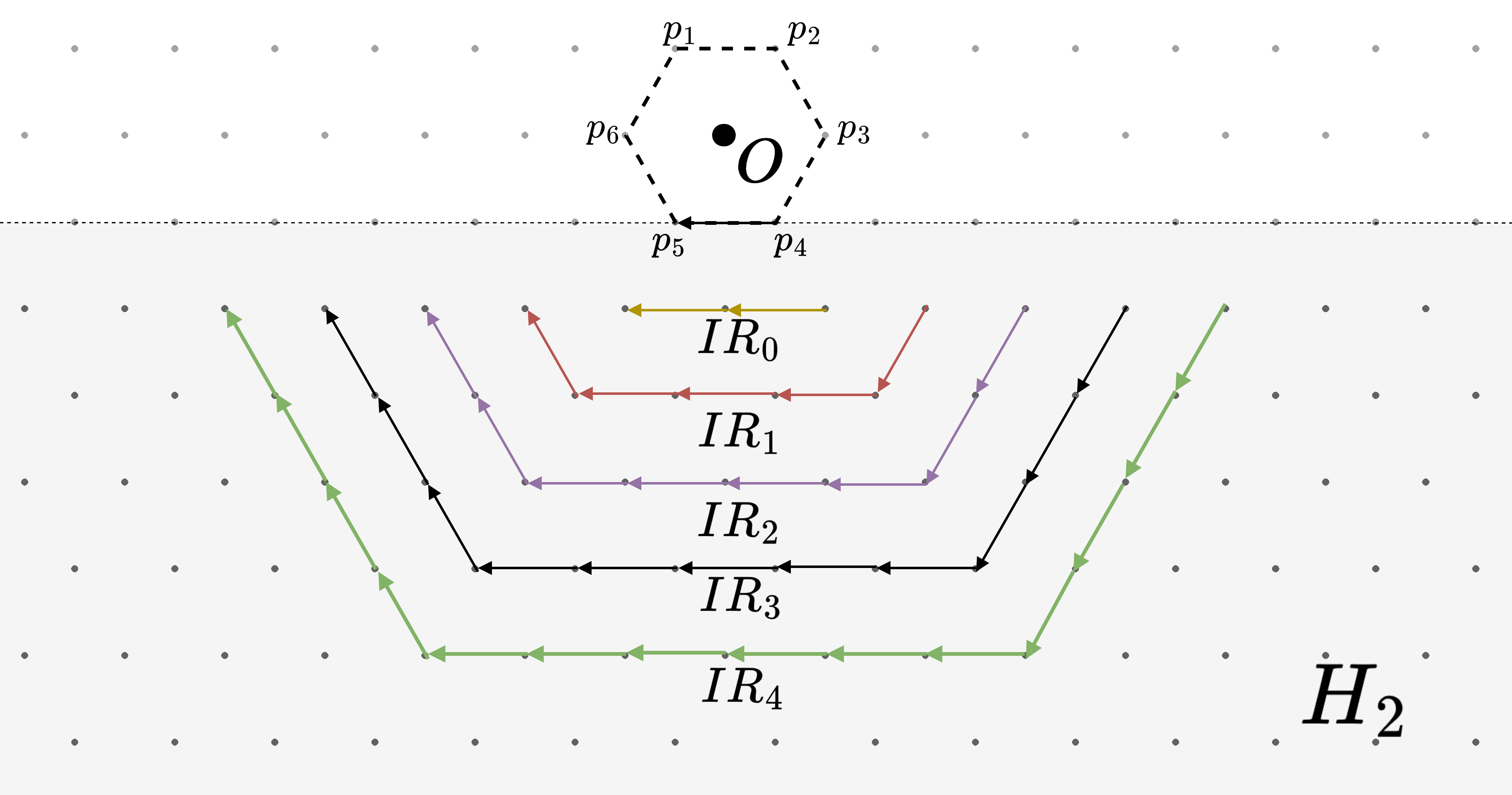}
        \hspace{2cm}
        \includegraphics[scale=0.3]{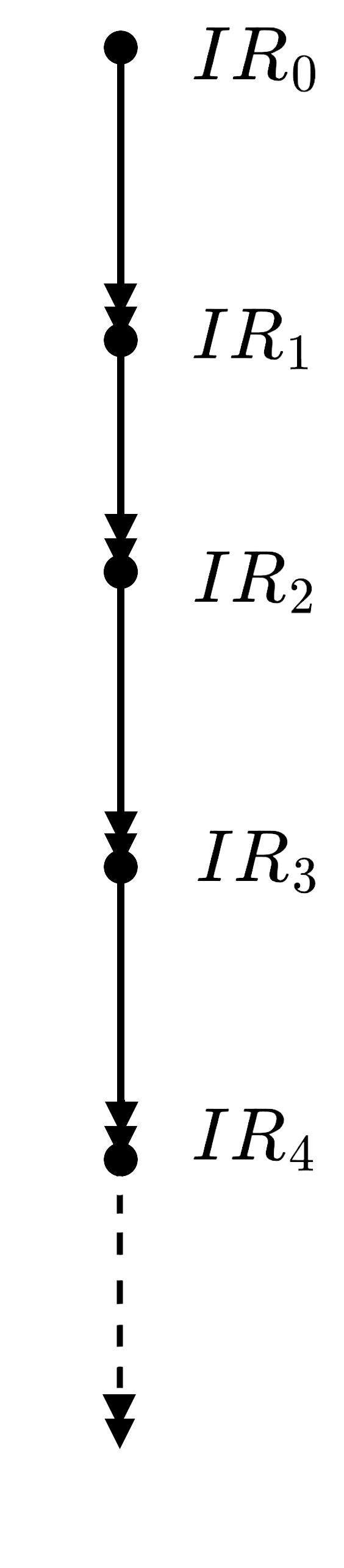}
        \caption{Left: idle ribbons induced by $H_2$; Right: the corresponding rooted tree of idle ribbons}
    \end{figure}

    Observe that any lattice point in $H_2$ belongs to some idle ribbon, and the idle ribbons follow the properties discussed previously.

\subsection{Ordering Induced by the Ribbon Structure}
\label{sect-order}

    First, we impose an ordering on the lattice points in the idle half-plane:

    \begin{definition}
        \label{def-idleRibTreeOrder}
        \leavevmode
        \makeatletter
        \@nobreaktrue
        \makeatother
        \begin{enumerate}
            \item The \textbf{idle ribbon order}, denoted as $<$, is a total order of the lattice points of an idle ribbon $IR$ such that
                \[(\forall x,y \in V(IR)) x < y \iff y \rightarrow x \]
            \item The \textbf{idle tree order}, denoted as $<$, is a partial order of the idle ribbons in $S$ such that
                \[(\forall IR_i,IR_j \in \hat{H_2}) IR_i < IR_j \iff IR_i \twoheadrightarrow IR_j \]
        \end{enumerate}
    \end{definition}


    Then, we impose orderings on the lattice points in $S$:

    \begin{definition}
        \label{def-ribTreeOrder}
        \leavevmode
        \makeatletter
        \@nobreaktrue
        \makeatother
        \begin{enumerate}
            \item The \textbf{ribbon order}, denoted as $<$, is a total order of the lattice points of a ribbon $R$ such that
                \[(\forall x,y \in V(R)) x < y \iff y \rightarrow x \]
            \item The \textbf{tree order}, denoted as $<$, is a partial order of the ribbons in $S$ such that
                \[(\forall R_i,R_j \in \hat{S}) R_i < R_j \iff R_i \twoheadrightarrow R_j \]
            The \textbf{complete tree order} is a strict total order that extends the tree order\footnote{A partial order extends to a total order by the Szpilrajn extension theorem.}. 
        \end{enumerate}
    \end{definition}

\subsection{Conventions and Notations}
    \label{sect-Ribbon&Robots}
        In this section, we establish some conventions and technical terms that link the MRS to the ribbon structure:
    
        \begin{definition}
            At sub-epoch $T=n^-$, a lattice point $x$ is \textbf{occupied} if there is a non-moving robot $u \in U$ such that $p_{n^-}(u) = x$. The same applies to $T = n^+$.
            A lattice point is \textbf{unoccupied} if it is not occupied.
        \end{definition}
        
        \begin{definition}
            The \textbf{neighborhood} of a non-moving robot is the six lattice points at a distance $d$ to the center of the robot and the corresponding robots that occupy the lattice points (if any). 
        \end{definition}
    
        \begin{definition}
            Given two non-moving robots $u,v \in U$ at sub-epoch $T=n^-$:
            \leavevmode
            \makeatletter
            \@nobreaktrue
            \makeatother
            \begin{itemize}
                \item robot $u$ \textbf{belongs to} ribbon $R$ if its center $p_{n^-}(u)$ is a vertex of $R$, denoted as $u \in R$. The \textbf{gradient value} of robot $u$ is the gradient value of the ribbon $R$;  
                \item robot $u$ is \textbf{in front of} robot $v$ if $u,v$ belongs to the same ribbon and lattice point $p_{n^-}(u) \leftarrow p_{n^-}(v)$.
            \end{itemize}
            The same definition applies to $T=n^+$.
        \end{definition}
    
        \begin{definition}
            At sub-epoch $T=n^-$:
            \begin{itemize}
                \item a ribbon $R$ is \textbf{empty} if 
                \[\forall \text{ non-moving robot } u \in U, u \notin R\]
                \item a ribbon $R$ is \textbf{filled} if
                \[\forall x \in V(R), \exists \text{ non-moving robot } u \in U \text{ such that } x = p_{n^-}(u)\]
                \item a ribbon is \textbf{half-filled} if it is neither empty nor filled.
            \end{itemize}
            The same definition applies to $T=n^+$.
        \end{definition}

\newpage

\section{Correctness of the Add-Subtract Algorithm}
\label{sect-methodAndProof}

In appendix \ref{sect-methodAndProof}, we formalize the add-subtract algorithm and provide a rigorous proof of correctness. The correctness of the problem statement includes two parts: first, we need to show that, for each moving robot, the path between the starting position and stopping position is unobstructed and the moving robot is localizable when needed; and second, we need to show the final coverage of the MRS satisfied the requirement. The second part is can be easily verified by looking at the definition of the assembly sequence and the reassembly sequence. As a result, we shall focus on proving the first part. Suppose we have obtained a proper ribbonization of $S$ and suppose there are $N$ lattice points in $S$. The proof is carried out in three steps. 

For the first step, we analyze the additive stage. Firstly, we define the activation sequence $s[k]$ and the assembly sequence $t[k]$ in section \ref{Section-MovingSequence}. After that, we show the mobility of each moving robot $u_n$ by investigating the boundary of the MRS in section \ref{sect-motionPlanning}. In particular, we prove that the boundary of the MRS maintains a certain smoothness that allows an unobstructed path along the perimeter of the MRS from start position $s[k]$ to target position $t[k]$ for every active robot $u_k$ with $k \leq N$. Then, we show the localizability of each moving robot $u_n$ in section \ref{sect-localization}. In particular, we show that the active robot is able to localize itself at the starting and stopping positions. 


For the second step, we investigate the properties of the MRS after the additive stage in section \ref{sect-propertiesOfEOA}. In particular, we define the "merging point" for every ribbon to denote the lattice point on the perimeter at which a robot moves out of $S$ first visits. 

For the third step, we investigate the subtractive stage. We start by formalizing the subtractive stage of the algorithm using the reactivation sequence $s_r[k]$ and reassembly sequence $t_r[k]$ (section \ref{sect-extendedSequence}). After that, we investigate the path between lattice point $s_r[k]$ and lattice point $t_r[k]$, and show that the path for robot $u_{N+k}$ is unobstructed by utilizing the results from the additive stage (section \ref{sect-pathPlanningForSubtractive}). The localizability of each moving robot is proved in section \ref{sect-localization2}. 


Finally, in section \ref{sect-finalShape}, we conclude that the sequence of tuples defined by the movement sequences satisfies the problem statement.

\subsection{Activation Sequence and Assembly Sequence}
\label{Section-MovingSequence}

During the additive stage, in each epoch, one of the idle robots becomes active and moves along the perimeter of the collection of the robots. The moving robot enters the user-defined shape and stops at one of the lattice points inside the shape. In such a way, the MRS shrinks in the idle half-plane $H_2$ and expends inside $S$ in the formation half-plane $H_1$. We use two sequences of lattice points to represent which idle robot is activated and where it shall stop:

\begin{itemize}
    \item [-] the activation sequence $s[k]$: the $n$th entry denotes the lattice point at which the robot $u_n$ starts to move;
    \item [-] the assembly sequence $t[k]$: the $n$th entry denotes the lattice point at which the robot $u_n$ stops.
\end{itemize}

We use the orderings induced by ribbons to formulate the activation sequence and the assembly sequence:
    
\begin{definition}
    The \textbf{activation sequence} $s$ is defined recursively such that each element $s[k]$ is the largest occupied lattice point in $H_2$ (w.r.t the idle ribbon order) on the largest idle ribbon (w.r.t the idle tree order) that has not been included in the early subsequence $s_{[1:k-1]}$.
\end{definition}
Notice that every idle robot in the initial configuration occupies a lattice point, which appears exactly once in the activation sequence. At epoch $T=n \leq N$, the idle robot at $s[n]$ becomes active and starts to move. 

\begin{definition}
    The \textbf{assembly sequence} $t$ is defined recursively such that each element $t[k]$ is the smallest lattice point (w.r.t the ribbon order) on the smallest ribbon (w.r.t the complete tree order) that has not been included in the early subsequence $t_{[1:k-1]}$.
\end{definition}

Notice that every lattice point inside $S$ appears exactly once in the assembly sequence. At epoch $T = n \leq N$, the active robot $u_n$ stops at lattice point $t[n]$. Loosely speaking, the idle robot at the tail of the outermost idle ribbon is activated first, and the lattice positions in $S$ are to be occupied ribbon by ribbon, from front to tail.

\subsection{Path for the Additive Stage}
\label{sect-motionPlanning}
    Recall that the active robot $u_n$ for the additive stage moves clockwise along the perimeter of the MRS while keeping a distance $d$ to the nearest non-moving robot. To prove the correctness of the path, we go through the following steps:
    \begin{enumerate}
        \item  firstly, we show that the starting position $s[n]$ and the goal position $t[n]$ is on the perimeter of the MRS (proposition \ref{prop-startEndBothInEFS}); and
        \item secondly, we show that the boundary maintains a certain smoothness (proposition \ref{prop-properBoundary}). As a result, the perimeter of the MRS (definition \ref{def-EFC}) is always one connected piece. In particular, we show that the perimeter of the MRS is homeomorphic to a circle (proposition \ref{prop-EFPisCircle}); and
        \item  lastly, we conclude that the perimeter of the MRS induces an unobstructed path from $s[n]$ to $t[n]$ (theorem \ref{thm-unobstructedPath}). 
    \end{enumerate}

    We start by defining the boundary of the MRS:

    \begin{definition}
        \label{def-boundary}
         A \textbf{boundary robot} is a non-moving robot occupying a vertex $x \in L$ such that $x$ is adjacent to at least one unoccupied lattice point. The \textbf{set of boundary robots}, denoted as $\bm{B_{n^-}}$ and $\bm{B_{n^+}}$, is the collection of all boundary robots at the sub-epoch $T=n^-$ and $T=n^+$, respectively.
    \end{definition}

    \begin{definition}
        \label{def-EFC}
        At sub-epoch $T=n^-$ with $n \leq N$, a lattice point $x \in L$ is an \textbf{edge-following point} iff:
        \begin{enumerate}
            \item $(\nexists u \in U)  ||p_{n^-}(u) - x|| = d$
            \item $(\exists v \in U) p_{n^-}(v) \in L \land ||x- p_{n^-}|| = d$
        \end{enumerate} 
        The \textbf{edge-following set}, denoted as $\bm{EFS_{n^-}}$, is the set containing all the edge-following points at sub-epoch $T=n^-$.

        In a similar way, we define the edge-following points and the corresponding edge-following set $\bm{EFS_{n^+}}$ for sub-epoch $T=n^+$.
    \end{definition}

    \begin{figure}[H]
        \centering
        \includegraphics[scale=0.3]{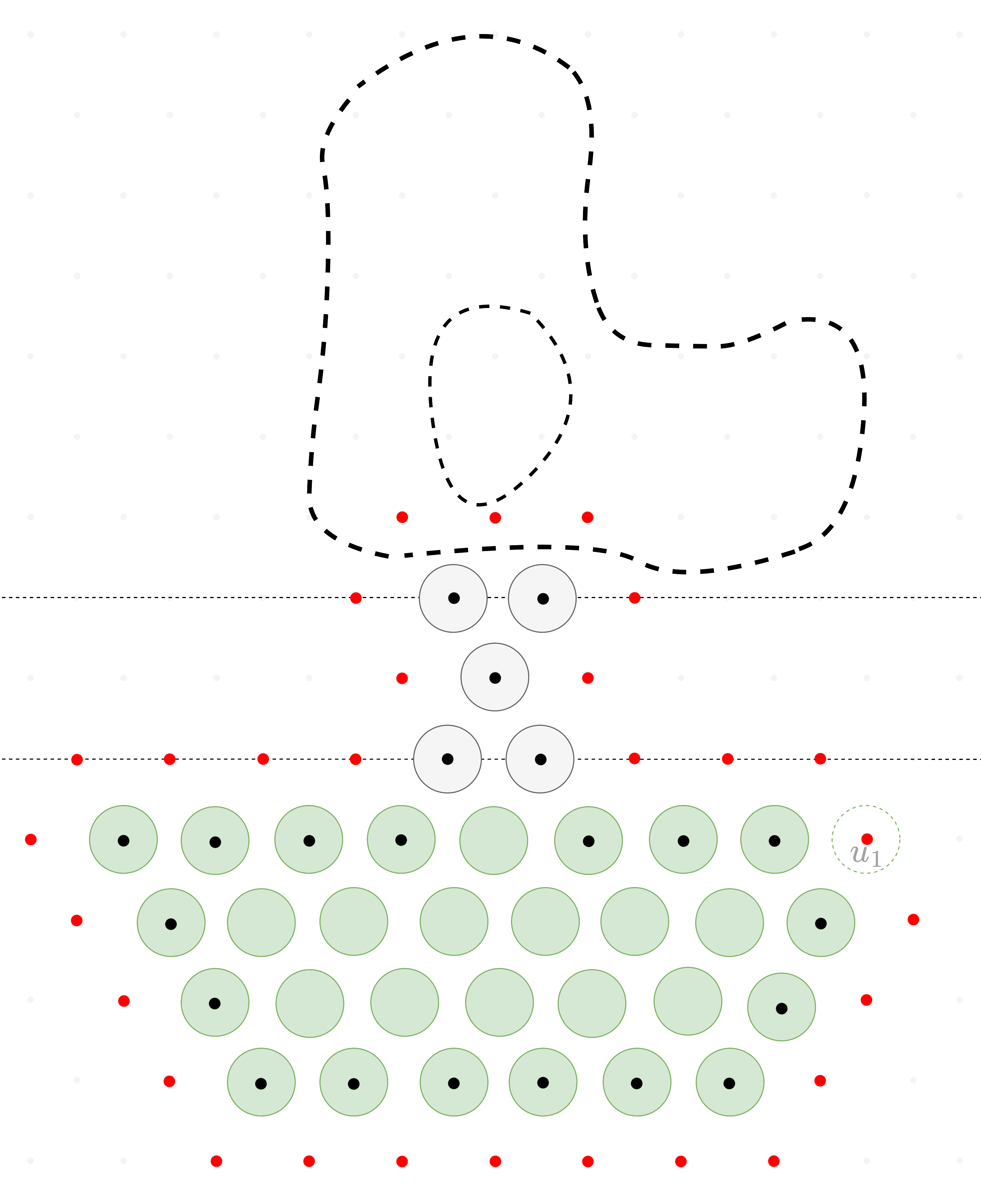}
        \includegraphics[scale=0.3]{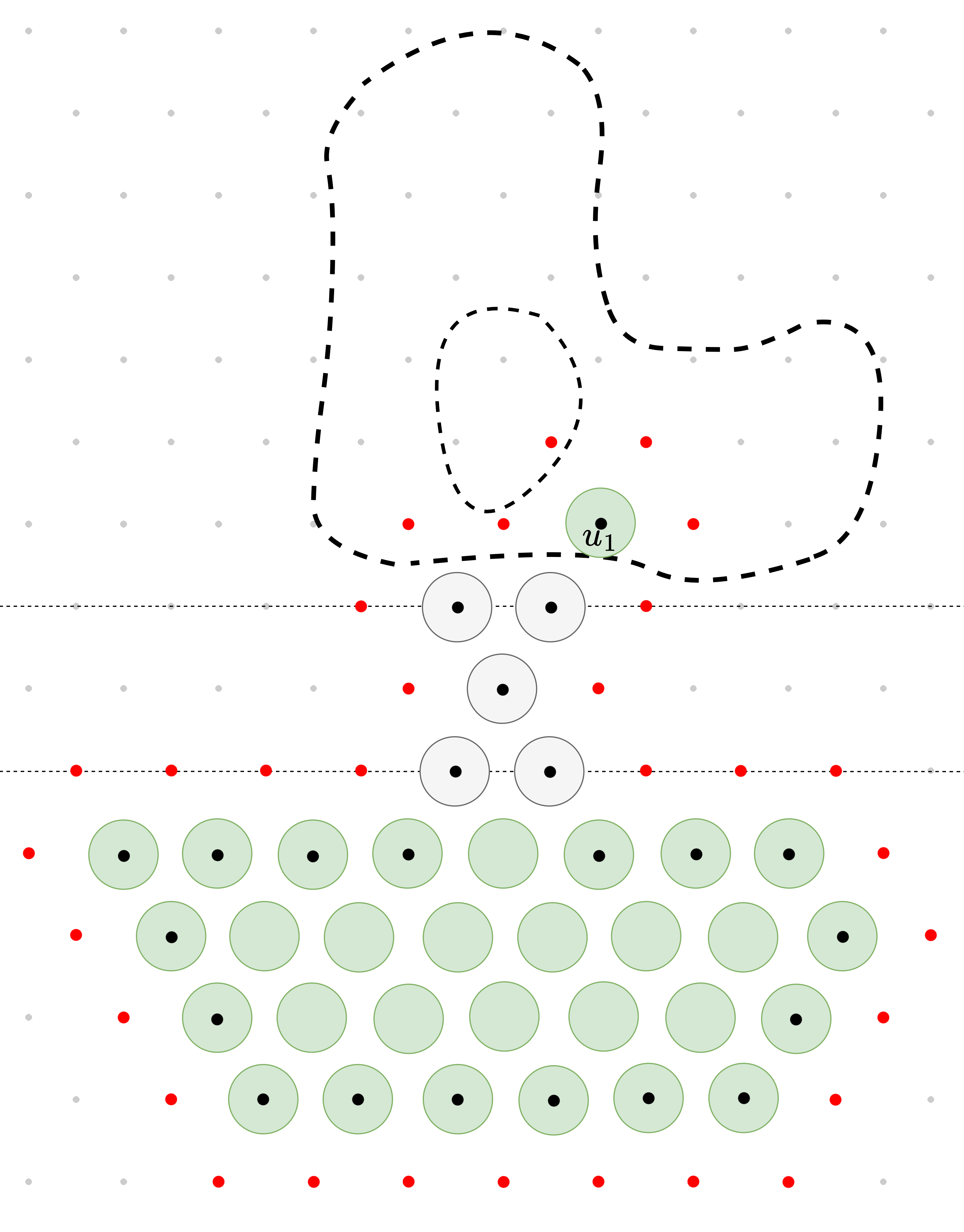}
        \caption{Left: the black dots indicate the centers of all boundary robots $B_{1^-}$, and the red dots indicate the edge-following set $EFS_{1^-}$. Notice that the lattice position previously occupied by robot $u_1$ is considered as part of the edge-following set. Right: the black dots indicate the centers of all boundary robots at $T=1^+$, and the red dots indicate the edge-following set $EFS_{1^+}$.}
    \end{figure}

    \begin{proposition}
        \label{prop-startEndBothInEFS}
        At sub-epoch $T=n^-$ with $n \leq N$, $s[n] \in EFS_{n^-}$ and $t[n] \in EFS_{n^-}$.
    \end{proposition}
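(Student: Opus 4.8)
The plan is to verify, for each of $s[n]$ and $t[n]$, the two defining conditions of $EFS_{n^-}$ in Definition~\ref{def-EFC}: that the point carries no non-moving robot at sub-epoch $T=n^-$ (condition~1), and that it is at distance $d$ from some occupied lattice point (condition~2). In both cases the unoccupancy falls out of the recursive definitions of the activation and assembly sequences, so the real work is to exhibit an occupied neighbour at the exact sub-epoch $n^-$. For this I would use two structural facts from Part~I: every vertex of a ribbon is adjacent to a vertex of its unique parent ribbon, and, since a ribbon is a directed path graph, the ribbon order is the linear order along that path, so that vertices consecutive in the order are adjacent.

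First I would treat $s[n]$. As $u_n$ is the active robot of epoch $n$, it is moving throughout $n^-$ and is therefore not a non-moving robot; since no other robot can occupy the lattice point it has just vacated, $s[n]$ is unoccupied, giving condition~1. For condition~2, $s[n]$ lies on some idle ribbon $IR$ whose chain of ancestors terminates at the seed ribbon $IR_0$; let $IR^{[p]}$ be its parent. Every vertex of $IR$, in particular $s[n]$, is adjacent to a vertex of $IR^{[p]}$. Since $IR^{[p]}$ is an ancestor of $IR$ it is strictly smaller in the idle tree order, and the activation sequence exhausts larger idle ribbons before it touches smaller ones; hence at $n^-$ no vertex of $IR^{[p]}$ has yet been activated, so $IR^{[p]}$ is still fully occupied and $s[n]$ has an occupied neighbour.

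Next I would treat $t[n]$. Condition~1 holds because the only non-moving robots inside $S$ at $n^-$ are $u_1,\ldots,u_{n-1}$, seated at the distinct points $t[1],\ldots,t[n-1]$, none of which equals $t[n]$ (the seeds lie outside $S$ and the remaining idle robots lie in $H_2$, which is disjoint from $S$). For condition~2 I would split on the rank of $t[n]$ in its ribbon $R := R(t[n])$. If $t[n]$ is the minimum of $V(R)$ in the ribbon order, it is the first point of $R$ to be filled, so its occupied neighbour must come from outside $R$: since $R$ is the smallest not-yet-filled ribbon in the complete tree order, its parent $R^{[p]}$ (or the seed ribbon $(p_2,p_1)$ when $R=R_0$) is already filled, and the parent-adjacency property supplies the desired occupied vertex. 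Otherwise $t[n]$ has an immediate predecessor $w$ in the ribbon order, which is adjacent to $t[n]$ on the path graph $R$ and, being strictly smaller, was placed at an earlier epoch and is thus occupied at $n^-$. Either way $t[n] \in EFS_{n^-}$.

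The main obstacle is the careful bookkeeping of occupancy at the single sub-epoch $n^-$: one must argue that the parent-ribbon vertex (for the extremal point of a ribbon) or the in-front vertex on the same ribbon (for a non-extremal point) really is occupied then, which reduces to matching the monotonicity of the activation and assembly sequences against the ancestor relation $\twoheadrightarrow$ in the two ribbon trees. A minor but necessary detail is the root behaviour -- that $R_0$ hangs off the seed ribbon $(p_2,p_1)$ and that the idle activations never reach the seed ribbon $IR_0$ -- guaranteeing an occupied neighbour even for the innermost points that get placed or activated.
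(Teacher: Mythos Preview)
Your proposal is correct and follows essentially the same approach as the paper: verify unoccupancy and then exhibit an occupied neighbour coming from the parent ribbon (or the seed robots at the root). The paper's own proof is a terse two-liner that always invokes the parent ribbon of $R(t[n])$ for the occupied neighbour of $t[n]$; your case split on whether $t[n]$ is the ribbon-minimum is harmless but unnecessary, since \emph{every} vertex of a ribbon is adjacent to a vertex of its parent, and the assembly sequence guarantees that the parent is already filled regardless of where $t[n]$ sits in $R$. One small terminological slip: the idle activations \emph{can} reach $IR_0$; what guarantees an occupied neighbour for $s[n]\in IR_0$ is that its parent is the seed pair $(p_5,p_4)$, which never moves.
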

    \begin{proof}
        By the activation sequence, $s[n]$ is adjacent to a robot belonging to the parent ribbon of $R(s[n])$. As a result, $s[n] \in EFS_n$ after $u_n$ transits into movement.
        Notice that by the assembly sequence, $t[n]$ is adjacent to a lattice point occupied by a stopped robot belonging to the parent ribbon of $R(t[n])$ (or a seed robot if $t[n] \in R_0$). Since $t[n]$ is unoccupied, $t[n] \in EFS_{n^-}$.
    \end{proof}

    \begin{definition}
        \label{def-properBoundaryRobots}
        At sub-epoch $T=n^-$ with $n \leq N$, the set of boundary robots $B_{n^-}$ is \textbf{proper} if
        \begin{enumerate}
            \item for any pair of robots in the set of boundary robots, if the distance between two robots is $\sqrt{3}d$, there is at most one unoccupied lattice point adjacent to the centers of both robots; and
            \item for any pair of robots in the set of boundary robots, if the distance between two robots is $2d$, then the lattice point between them is occupied.
        \end{enumerate}
        The same definition applies to $B_{n^+}$ at $T=n^+$.
    \end{definition}

    \begin{figure}[H]
        \centering
        \captionsetup{justification=centering}
        \includegraphics[scale=0.35]{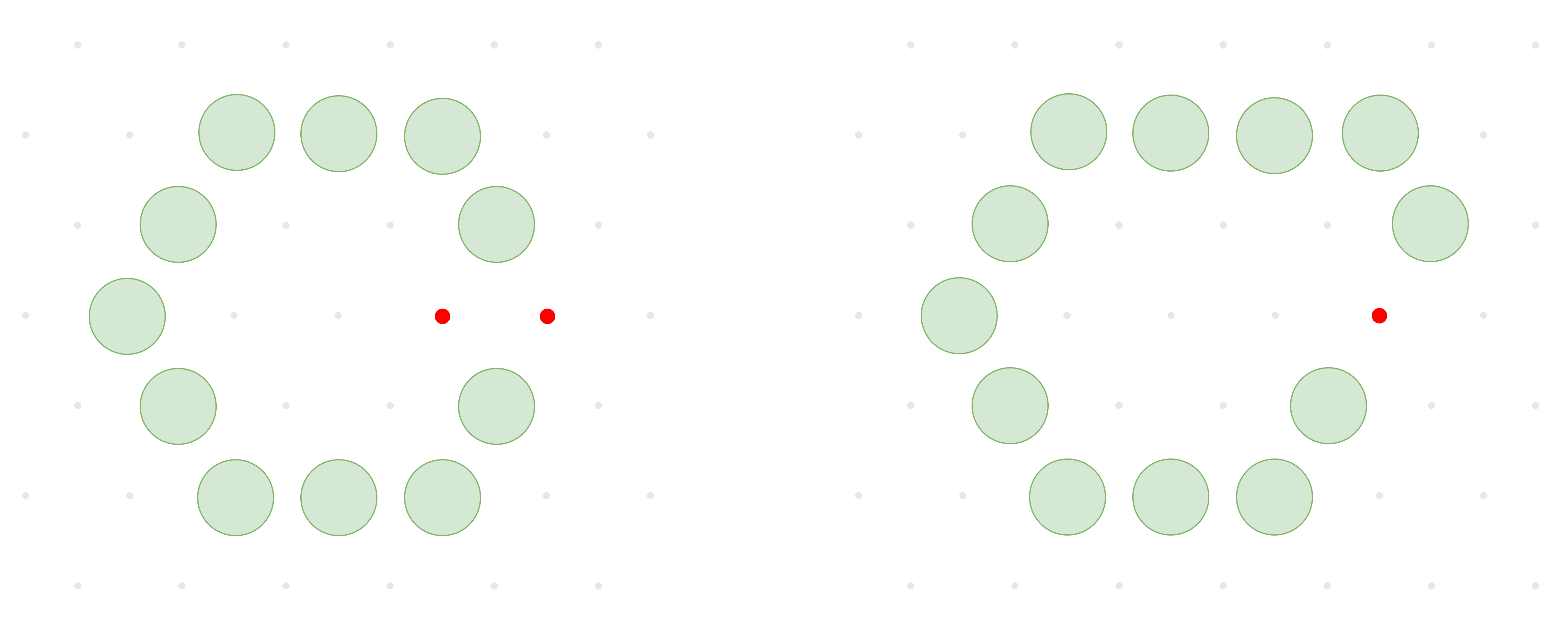}
        \caption{Left: the set of boundary robots is improper since it violates requirement 1; \\ Right: the set of boundary robots is improper since it violates requirement 2}
    \end{figure}


    \begin{proposition}
        \label{prop-properBoundary}
        With a proper ribbonization $\hat{S}$, by following the activation sequence $s$ and assembly sequence $r$, the set of boundary robots $B_{n^-}$ and $B_{n^+}$ is proper for all $n \leq N$.
    \end{proposition}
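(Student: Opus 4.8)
The plan is to prove the statement by induction on the epoch index $n$, tracking how the occupied region evolves as each robot is activated and then parked. The key observation is that the set of boundary robots changes by two elementary steps per epoch: passing from $B_{(n-1)^+}$ to $B_{n^-}$ vacates the lattice point $s[n]$ in $H_2$, and passing from $B_{n^-}$ to $B_{n^+}$ occupies the lattice point $t[n]$ in $S$. Since the buffering region between $l_1$ and $l_2$ guarantees that no idle robot and no robot in $S$ are ever within distance $2d$ of one another, the two requirements of Definition \ref{def-properBoundaryRobots} can be checked separately on the $H_2$-side and the $S$-side. I would first record this separation so that the removal step and the addition step become genuinely local and independent, and so that the $\sqrt{3}d$- and $2d$-pairs witnessing a potential violation can only have both endpoints in the same region.

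For the removal step I would argue that $s[n]$, being the largest lattice point (w.r.t.\ the idle ribbon order) on the largest idle ribbon (w.r.t.\ the idle tree order) not yet vacated, is the tail of the current outermost filled idle ribbon. Because the idle ribbon structure is itself proper and the activation sequence empties idle ribbons from the tail of the outermost ribbon inward, removing $s[n]$ only deletes an extremal corner of the idle packing. A short case check on the at most six neighbours of $s[n]$ then shows that neither a $\sqrt{3}d$-pair with two common empty neighbours nor a $2d$-pair with an empty midpoint can be created, so $B_{n^-}$ remains proper on the $H_2$-side while nothing changes on the $S$-side.

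The addition step is the crux. Here I would first establish the structural invariant that, at sub-epoch $T=n^+$, the occupied lattice points of $S$ are exactly the union of all ribbons strictly below $R(t[n])$ in the complete tree order together with the front-prefix of $R(t[n])$ up to and including $t[n]$. This follows from the recursive definition of the assembly sequence and the fact that the complete tree order extends the tree order, so that every ancestor ribbon (in particular every parent) is filled before any vertex of its descendants. Granting this invariant, I would verify both conditions of Definition \ref{def-properBoundaryRobots} for $t[n]$ against each robot at distance $\sqrt{3}d$ or $2d$. The decisive input is the smoothness of $S$ guaranteed by the proper ribbonization (Definition \ref{def-properRibbonization}): for a $2d$-pair $(t[n],w)$ the midpoint $z$ must lie in $S$, and for a $\sqrt{3}d$-pair $(t[n],w)$ at least one of the two common neighbours must lie in $S$. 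It then remains to show that such an in-$S$ point is already \emph{occupied} at $T=n^+$, which I would obtain by computing its hop count relative to $h(t[n])$, locating the ribbon it belongs to, and invoking the structural invariant to conclude it has been filled ahead of $t[n]$.

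I expect the main obstacle to be exactly this last bookkeeping: translating the purely geometric smoothness of the final shape $S$ into occupancy statements about intermediate configurations. Concretely, one must rule out the two forbidden local patterns around $t[n]$ by showing that whenever smoothness places a point $z$ inside $S$, the ribbon order and tree order force $z$ to precede $t[n]$ in the assembly sequence. This requires a careful enumeration of the relative positions of $z$, $t[n]$, and the parent ribbon within a type-$A$ or type-$B$ neighbourhood, and it is the only place where the geometry of the hexagonal lattice and the clockwise directional conventions of the ribbon definition must be used in full.
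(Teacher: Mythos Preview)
Your proposal is correct and follows essentially the same strategy as the paper: induction on epochs, splitting each epoch into the removal transition $Q(n-1)\to P(n)$ at $s[n]$ and the addition transition $P(n)\to Q(n)$ at $t[n]$, with the smoothness conditions of the proper ribbonization supplying the in-$S$ witness point and the assembly-sequence structure forcing that point to be already occupied. The paper carries out the addition step by contradiction, centering the type-$A$/type-$B$ case split on the \emph{unoccupied} neighbour $a$ rather than on $t[n]$, but this is the same local analysis you anticipate in your final paragraph.

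One small omission: your separation argument between the $H_2$-side and the $S$-side via the buffering strip silently ignores the five seed robots, which sit in that strip and can in principle be within $\sqrt{3}d$ or $2d$ of a newly placed robot near the root ribbon. The paper handles this as a separate short case (treating $(p_2,p_1)$ and $O$ as degenerate ribbons $R_{-1},R_{-2}$ so that the same neighbourhood argument goes through); you should add the analogous check.
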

    \begin{proof}
        We shall prove this by induction. Let $P(n)$ be "at sub-epoch $T=n^-$, the set of boundary robots $B_{n^-}$ is proper". Let $Q(n)$ be "at sub-epoch $T=n^+$, the set of boundary robots $B_{n^+}$ is proper". 
    
        \begin{itemize}
            \item [-] Base case:  Observe that the initial configuration satisfies the two conditions in definition \ref{def-properBoundaryRobots}. $Q(0)$ is true.
            

            \item [-] Inductive step: suppose $Q(n-1)$ is true, we want to show that $P(n)$ is true; and suppose $P(n)$ is true, we want to show that $Q(n)$ is true. In other words, we want to show that 
                \begin{itemize}
                    \item [(1)] $Q(n-1) \rightarrow P(n)$: after $u_{n}$ starts to move and leaves an unoccupied point $x$, the set of boundary robots $B_{(n)^-}$ is proper; and 
                    \item [(2)] $P(n) \rightarrow Q(n)$: after $u_{n}$ stops, the set of boundary robot $B_{(n)^+}$ is proper.

                \end{itemize}
            


            To prove (1), we notice that after robot $u_{n}$ starts to move, an additional unoccupied lattice point $x$ is created. Suppose $x$ belongs to $IR_i$ with $i \geq 1$. 
                
                \begin{figure}[H]
                    \centering
                    \includegraphics[scale=0.6]{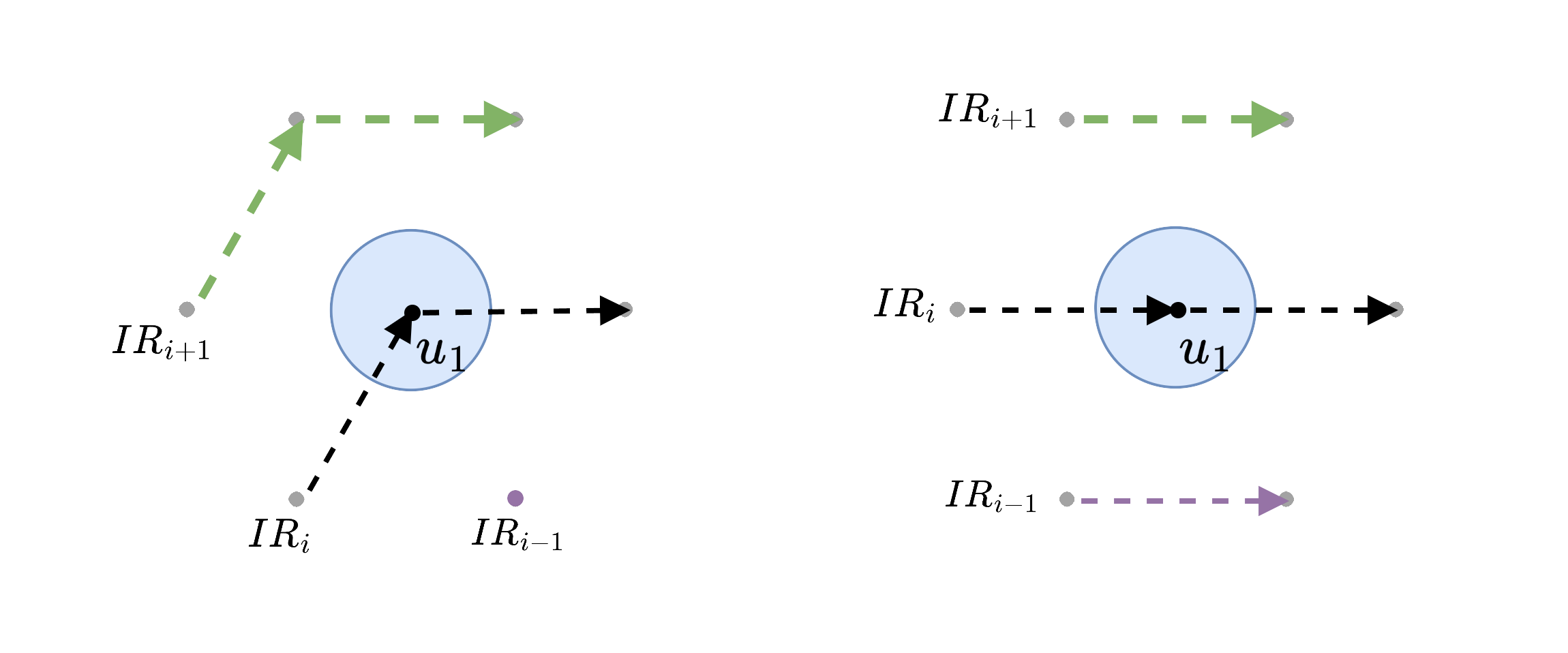}
                    \caption{Left: Type A neighborhood; Right: Type B neighborhood}
                \end{figure}

                \begin{enumerate}
                    \item We shall show that the definition \ref{def-properBoundaryRobots}.1 is satisfied by contradiction. Suppose the definition \ref{def-properBoundaryRobots}.1 is violated, then the neighborhood of $x$ contains two occupied lattice points $y,z$ with distance $\sqrt{3}d$ from each other. Notice that by the idle order, $x$ must be adjacent to an occupied lattice point $y$ from the idle ribbon with gradient value $g(IR_i)-1$. If the neighborhood of $x$ is of type A, then $z$ must belong to $IR_{i+1}$, which contradicts the idle order. If the neighborhood of $x$ is of type B and $z$ belongs to $IR_{i}$, there is an unoccupied lattice point on $IR_{i-1}$, which means an idle robot with a lower gradient is activated first. If the neighborhood of $x$ is of type B and $z$ belongs to $IR_{i+1}$, $u_1$ is activated when there is an idle robot with a higher gradient value. Both cases contradict the activation sequence.
                    \item Now, we shall show that the definition \ref{def-properBoundaryRobots}.2 is satisfied. Suppose not, then there are two occupied lattice points on the opposite side of point $x$. If the neighborhood of $x$ is of type A, there is an occupied lattice point belonging to $IR_{i+1}$. If the neighborhood of $u_1$ is of type B, then either there is an occupied lattice point belonging to $IR_{i+1}$ or there is an occupied lattice point on $IR_{i}$ and behind $x$. Both cases contradict the activation sequence. 
                \end{enumerate}

            For the case where $x$ belongs to $IR_0$, by defining idle ribbon $IR_{-1}$ as the directed line graph $(p_5,p_4)$, the above arguments still hold. As a result, $Q(n-1) \rightarrow P(n)$.

            To prove (2), we investigate the two requirements in definition \ref{def-properBoundaryRobots}:

            \begin{enumerate}
                
                \item First, we prove that the set of boundary robots satisfies definition \ref{def-properBoundaryRobots}.1 by contradiction. Suppose not, then there exists a pair of lattice points occupied by boundary robots $u,v$ such that there are two unoccupied lattice points adjacent to both $u$ and $v$. Since $P(n)$ holds, $u_{n}$ must be one of the robots $u,v$. WLOG, we assume $u_{n} = u$.
                    
                \begin{figure}[H]
                    \centering
                    \includegraphics[scale=0.4]{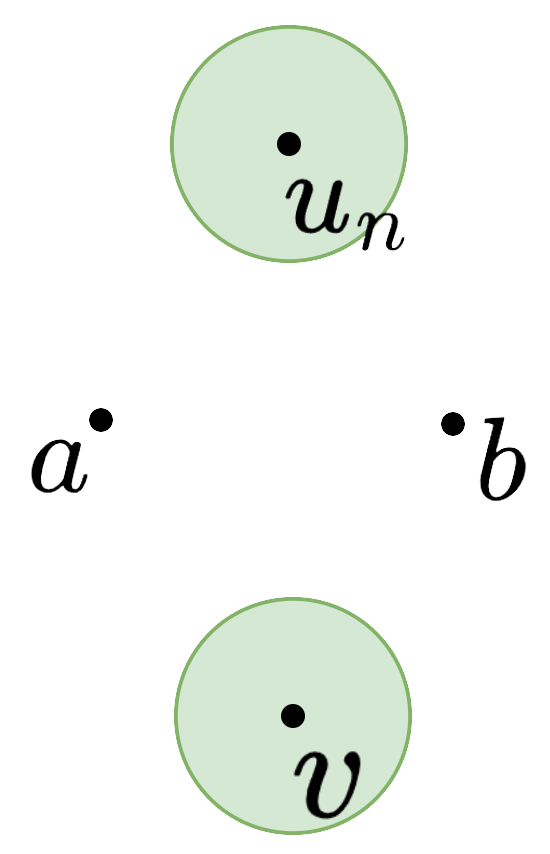}
                    \caption{The pair of robots and two unoccupied lattice points}
                \end{figure}
                
                We investigate different cases where $v$ could be at:

                \begin{itemize}
                    \item [-] Case 1: $v$ is a stopped robot in $H_1$. Since $S$ has a smooth boundary, $a,b$ cannot be both not in $S$. Then at least one of the lattice points of $a,b$ is in $S$. WLOG, we assume $a \in S$ and $a$ is an unoccupied lattice point with hop count $k$. We inspect the neighborhood of $a$.

                            First, observe that $u_n$ and $v$ cannot occupy any of the lattice points with hop count $k+1$, otherwise $R(a)$ is not filled but $u_n$ occupies a vertex of a child ribbon of $R(a)$ and the assembly sequence is violated. If the neighborhood of $a$ is of type $A$, denotes the lattice point that is adjacent to $a$ with hop count $k-1$ as $m$, then $m$ must be unoccupied. As a result, the parent ribbon of $R(a)$ is not filled but $R(a)$ is half-filled. This contradicts the assembly sequence.

                            \begin{figure}[H]
                                \centering
                                \includegraphics[scale=0.5]{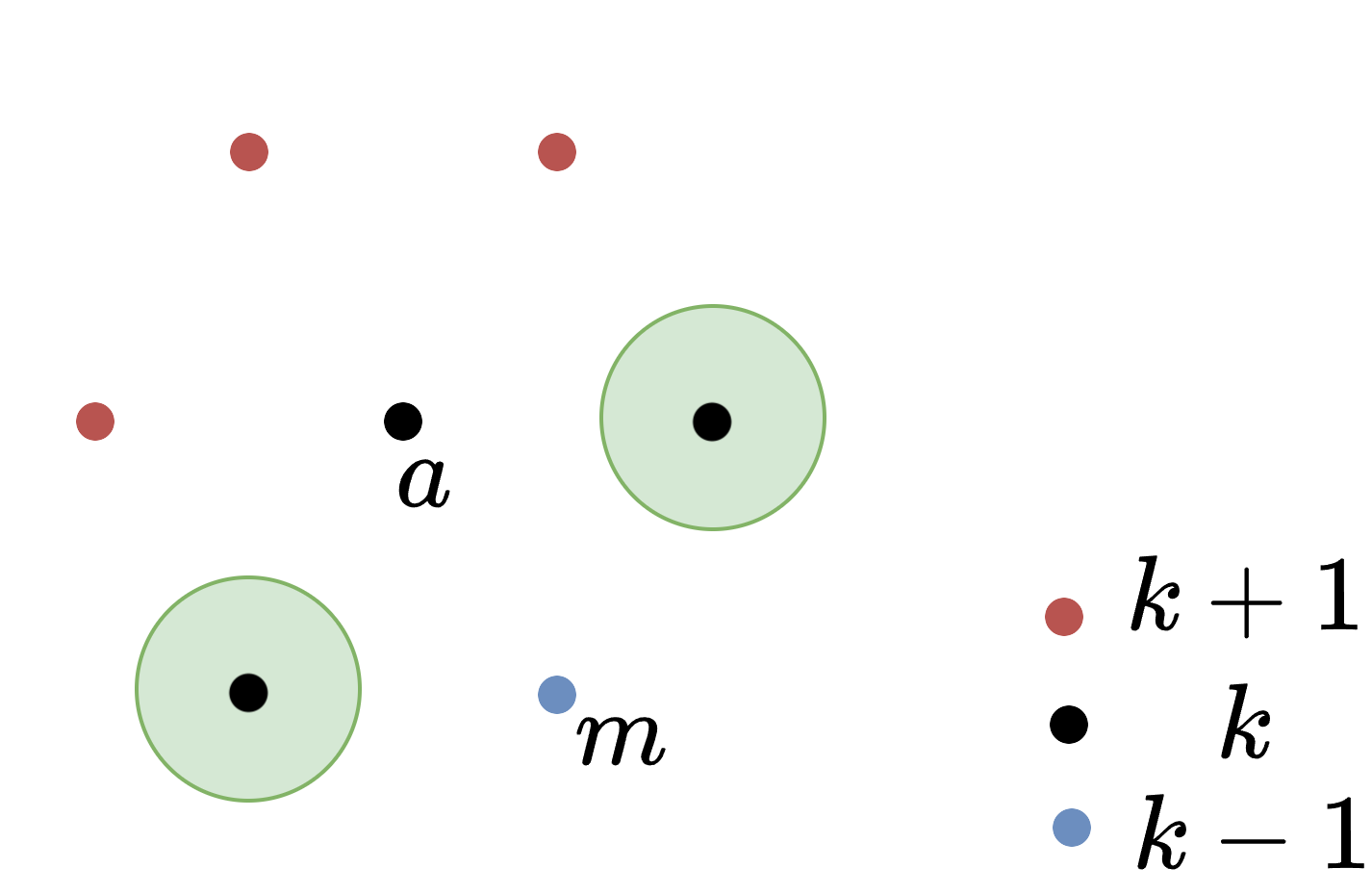}
                                \caption{Type $A$ neighborhood of lattice point $a$}
                            \end{figure}

                            If the neighborhood of $a$ is of type $B$, then $a$ is adjacent to two lattice points with hop count $k-1$, denoted as $m,n$. Then $u_n$ and $v$ must occupy $m$ or $n$ and another lattice point with hop count $k$.As a result, ribbon $R(a)$ and the parent ribbon of $R(a)$ are both half-filled. This contradicts the assembly sequence.

                            \begin{figure}[H]
                                \centering
                                \includegraphics[scale=0.5]{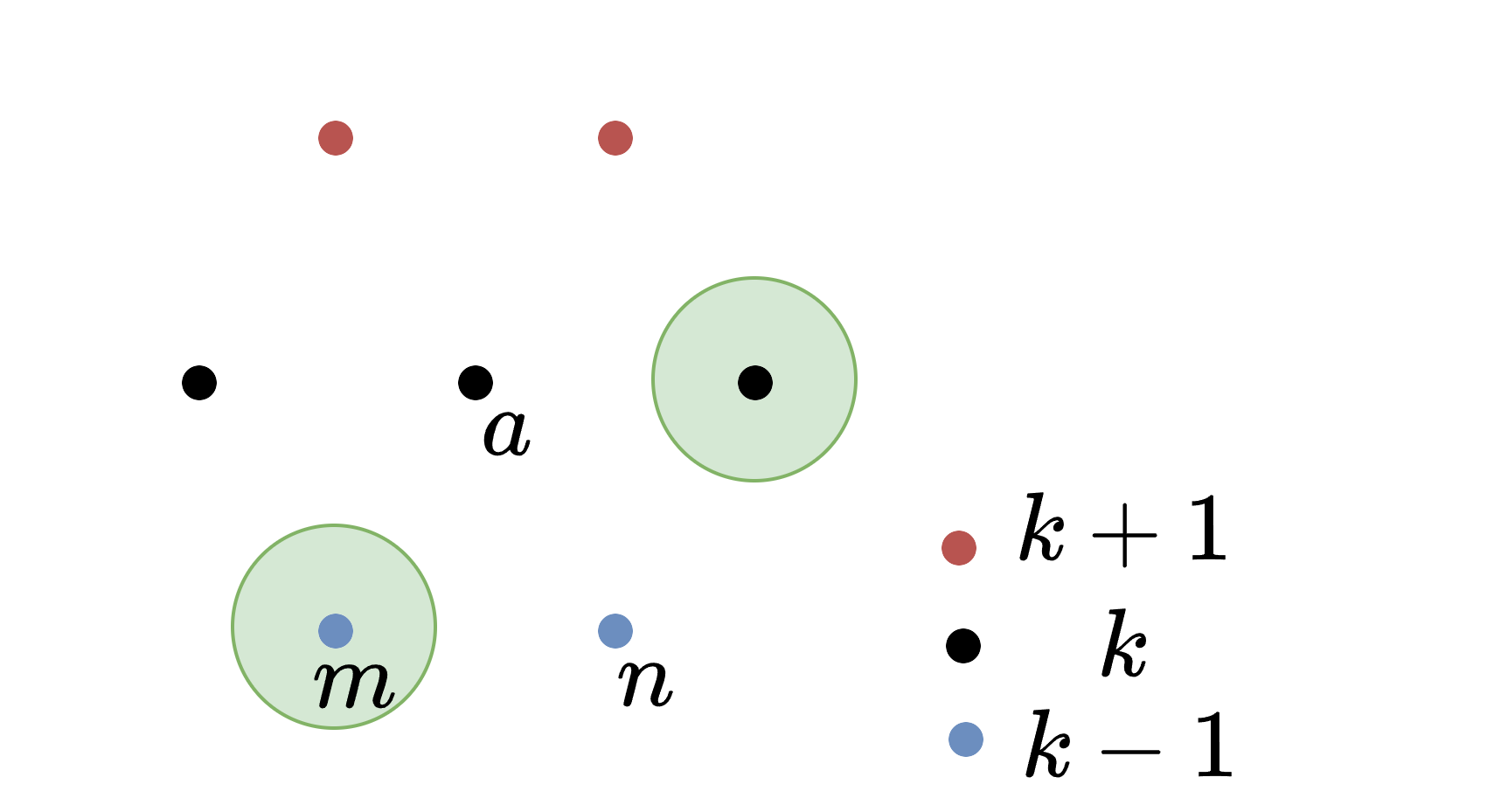}
                                \caption{Type $B$ neighborhood of lattice point $a$}
                            \end{figure}

                    \item [-] Case 2: $v$ is a seed robot. If $v$ is $v_0,v_1$ or $v_2$, by defining ribbon $R_{-1}$ as a directed line graph $(p2,p1)$ and $R_{-2}$ as $(O)$ (with a slight abuse of notation), the arguments in the case 1 still hold. If $v$ is $v_3$ or $v_4$, observe that $v$ is at least $\frac{3\sqrt{3}}{2}d$ from any lattice point in $H_1$, any robot occupying a lattice point in $H_1$ and robot $v$ cannot be adjacent to a same lattice point.
                    \item [-] Case 3: $v$ is an idle robot. Since any lattice point in $H_2$ is at least $2\sqrt{3}d$ from any lattice point in $H_1$, any robot occupying a lattice point in $H_1$ and an idle robot in $H_2$ cannot be adjacent to the same lattice point.
                \end{itemize}

                \item Then, we show that the set of boundary robots satisfies definition \ref{def-properBoundaryRobots}.2 by contradiction. Suppose not, then there exists a pair of lattice points occupying by robots $u,v$ such that there is an unoccupied lattice point between $u$ and $v$. Since $P(n)$ holds, $u_{n}$ must be one of the robots $u,v$. WLOG, we assume $u_{n} = u$.
                    
                \begin{figure}[H]
                    \centering
                    \includegraphics[scale=0.4]{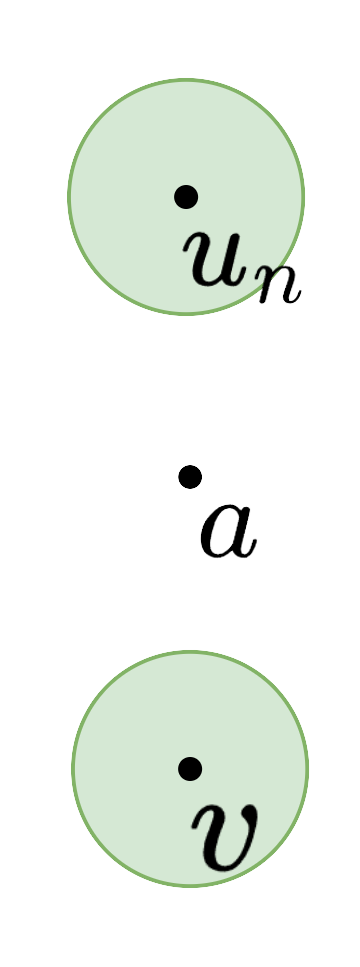}
                    \caption{The pair of robots and one unoccupied lattice point in between}
                \end{figure}
                
                
                Now we investigate different cases where $v$ could be at:

                \begin{itemize}
                    \item [-] Case 1: $v$ is a stopped robot in $H_1$. Since $S$ has a smooth boundary, lattice point $a$ is in $S$. 
                    Suppose $a$ occupies a lattice point with hop count $k$. We inspect the neighborhood of $a$.
                    First, observe that $u_n$ and $v$ cannot occupy any of the lattice points with hop count $k+1$, otherwise $R(a)$ is not filled but $u_n$ occupies a vertex of a child ribbon of $R(a)$ and the assembly sequence is violated. As a result, the neighborhood of $a$ must be of type $B$. Notice in this case $p_{n^+}(v)$, $p_{n^+}(u_n)$ and $a$ belongs to the same ribbon and $a$ is in between $p_{n^+}(v)$ and $p_{n^+}(u_n)$. As a result, $a$ is unoccupied when both $p_{n^+}(v)$ and $p_{n^+}(u_n)$ are occupied, which contradicts the assembly sequence.
                    \begin{figure}[H]
                        \centering
                        \includegraphics[scale=0.5]{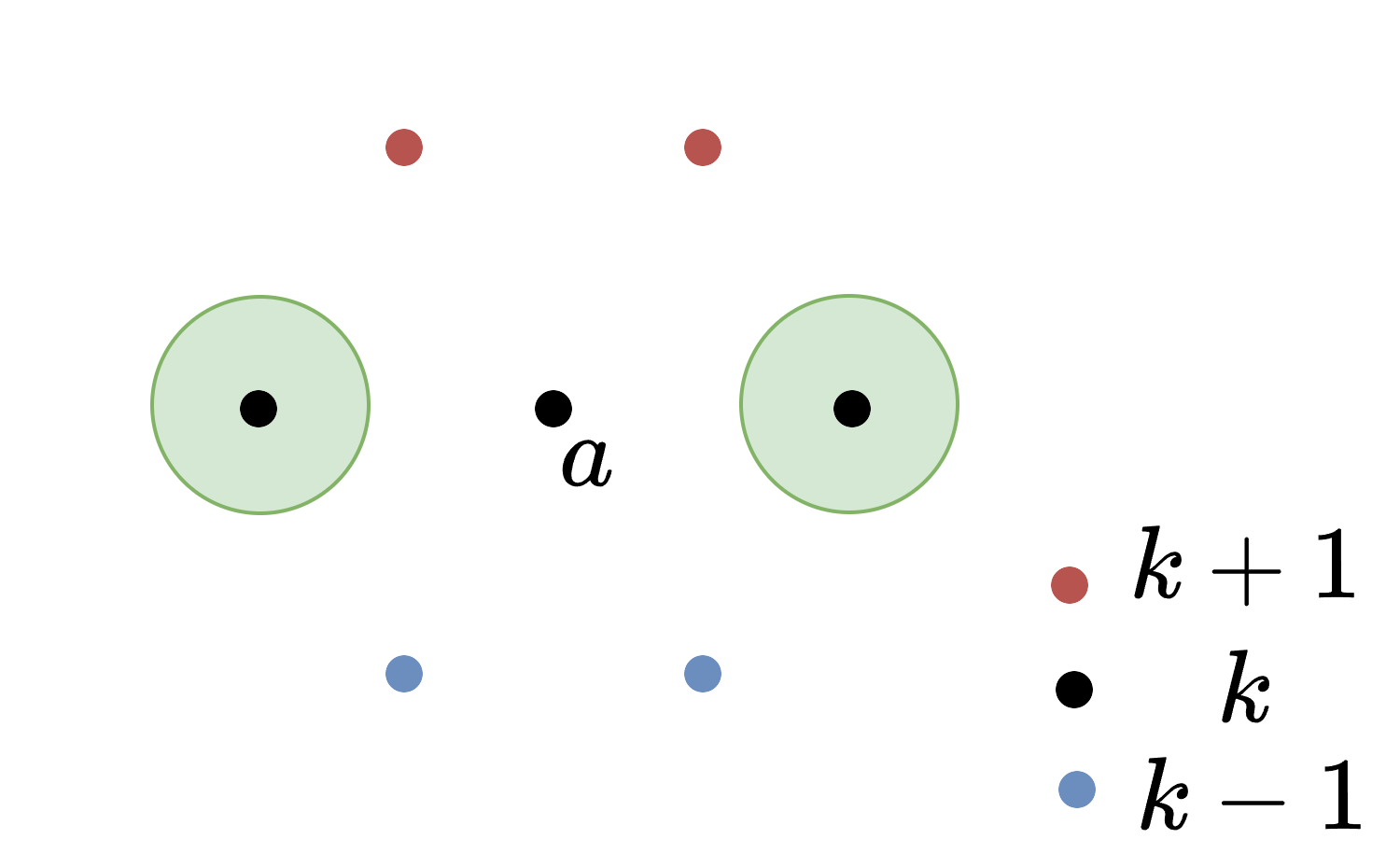}
                        \caption{Type $B$ neighborhood of lattice point $a$}
                    \end{figure}
                    
                    \item [-] Case 2 \& Case 3: $v$ is a seed robot or an idle robot. By similar arguments as in 1.Case 2 and 1.Case 3, we obtain contradictions. 
                \end{itemize}

            \end{enumerate}
            As a result, $P(n) \rightarrow Q(n)$.

        \end{itemize}
        Since $Q(0)$ is true, by induction, $P(n)$ and $Q(n)$ are true for all $1 \leq n \leq N$.
    \end{proof}

    \begin{definition}
        \label{def-EFP}
        At sub-epoch $T=n^-$ with $n \leq N$, define the \textbf{edge-following path} $EFP_{n^-}$ as the path consisting of all points that are at a distance $d$ from the nearest non-moving robot. The same applies to $EFP_{n^+}$ at $T=n^+$ .
    \end{definition}

    \begin{definition}
        For an edge-following path $EFP$:
        \begin{enumerate}
            \item A \textbf{path segment} of the $EFP$ is a contiguous segment of the $EFP$ containing more than one point.
            \item An \textbf{arc} of the $EFP$ is a path segment of the $EFP$ connecting two adjacent lattice point with a central angle of $\frac{\pi}{3}$.
        \end{enumerate}
    \end{definition}

    Since an edge-following path consists of arcs between adjacent edge-following points, we could define a robot's contribution to the path as follows:

    \begin{definition}    
        Given an edge-following path:
        \begin{enumerate}
            \item A point of the edge-following path is \textbf{contributed} by a non-moving robot $u$ if the center of $u$ is at a distance $d$ from the point. A point is said to be \textbf{induced} by robot $u$ if the point is not contributed by any other robot.
            \item An arc of the edge-following path is \textbf{contributed} by a non-moving robot $u$ iff all points on the arc are at a distance $d$ from the center of $u$. The arc is said to be \textbf{induced} by the robot if the arc is not contributed by any other robot.
        \end{enumerate}
    \end{definition}

    Observe that a lattice point could be induced by multiple robots. However, an arc can only be contributed by a single robot.

    \begin{proposition}
        \label{prop-EFPisCircle}
        During the additive stage, at each sub-epoch, the edge-following path is homeomorphic to a circle, and the edge-following path goes through all the points in the edge-following set.
    \end{proposition}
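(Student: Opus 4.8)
The plan is to separate the two assertions. The claim that the path meets every edge-following point is nearly immediate from the definitions: if $x\in EFS_{n^-}$ then, by Definition \ref{def-EFC}, $x$ is an unoccupied lattice point with an occupied neighbour, and since distinct lattice points are at distance at least $d$, the nearest non-moving robot to $x$ sits at distance exactly $d$; hence $x$ lies on $EFP_{n^-}$ by Definition \ref{def-EFP}. So the substance lies entirely in showing that $EFP_{n^-}$ (and identically $EFP_{n^+}$) is a single simple closed curve, and this is where I would concentrate.

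First I would pin down the local structure of the curve. Every point of the path lies on the radius-$d$ circle of some non-moving robot, and a short lattice computation shows that two such circles interact only at lattice points: robots at distance $d$ or $\sqrt{3}d$ meet precisely at their two common neighbours, robots at distance $2d$ are tangent at the single lattice point between them, and robots farther apart do not meet. Consequently, at any point of $EFP_{n^-}$ that is not a lattice point the path coincides with a single smooth robot-circle and is therefore locally an arc; moreover every maximal free arc is one of the $\tfrac{\pi}{3}$-arcs of the definition, joining two adjacent edge-following points. Thus $EFP_{n^-}$ is the geometric realisation of a graph whose vertices are the edge-following points and whose edges are these arcs, and it suffices to prove that this graph is a single cycle, i.e. that it is $2$-regular and connected.

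The $2$-regularity is the heart of the argument and is exactly where properness (Proposition \ref{prop-properBoundary}) enters. Fix an edge-following point $x$ and read its six neighbour directions cyclically. Condition \ref{def-properBoundaryRobots}.1 forbids an occupied--unoccupied--occupied triple among three consecutive neighbours of $x$: the two occupied cells would be boundary robots at distance $\sqrt{3}d$ sharing the two unoccupied common neighbours $x$ and the gap cell, which is excluded. Condition \ref{def-properBoundaryRobots}.2 forbids two antipodal occupied neighbours, which would be boundary robots at distance $2d$ with the unoccupied point $x$ between them. Together these force the occupied neighbours of $x$ to form a single contiguous run, and a direct check of the three admissible run lengths confirms that exactly two free $\tfrac{\pi}{3}$-arcs emanate from $x$ (one from each end of the run), so $x$ has degree two. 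Hence $EFP_{n^-}$ is a disjoint union of simple closed curves.

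Finally I would establish connectivity, so that there is a single circle rather than several. A disjoint union of boundary circles has one component per bounded complementary region, so it suffices to show the occupied region stays connected and carries no internal unoccupied pocket throughout the additive stage. I would argue this by induction in parallel with Proposition \ref{prop-properBoundary}: the seed-plus-idle configuration at $T=0^+$ bounds a single circle; each activation removes $s[n]$, the outermost point of the outermost idle ribbon, peeling the region from its exterior; and each assembly step adds $t[n]$, the next point prescribed by the assembly sequence (ribbons in tree order, each filled front to tail), against the already-filled frontier---crucially, the holes $D$ are themselves filled during this stage and only evacuated later. No such step can trap an empty region, so the occupied set remains connected and simply connected and its boundary stays a single circle; the argument for $EFP_{n^+}$ is identical. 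I expect the degree count of the third paragraph and the no-trapped-pocket invariant of the last to be the main obstacles: the former because the two properness conditions must be shown to exclude every branching, isolated-point and tangency pattern, and the latter because one must verify that filling $S$ ribbon by ribbon, together with the ordered depletion of $H_2$, never encloses an unoccupied hole.
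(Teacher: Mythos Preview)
Your proposal is correct and takes a genuinely different route from the paper. The paper runs a single induction on sub-epochs directly on the invariant ``$EFP$ is a circle through every edge-following point'': at each transition it shows, via the two properness conditions of Definition~\ref{def-properBoundaryRobots}, that the disk of radius $d$ about the point being removed ($s[n]$) or added ($t[n]$) meets the previous $EFP$ in exactly one path segment, so swapping that segment for the new one preserves the circle topology. Your approach instead factors the claim into a static local piece---properness forces the occupied neighbours of any edge-following point to be a contiguous run of length at most three, giving $2$-regularity, hence $EFP$ is a disjoint union of simple closed curves---and a separate inductive global piece (the occupied region stays simply connected, hence a single circle). Your decomposition makes the role of each properness condition more transparent: condition~1 kills the $\sqrt{3}d$ branching pattern and condition~2 kills the $2d$ tangential pinch, and the degree-$2$ check is a clean explanation of why exactly those two conditions suffice. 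The trade-off is economy: your ``no trapped pocket'' induction, when actually written out, will again need the observation that the occupied neighbours of $s[n]$ (respectively $t[n]$) are contiguous at the relevant sub-epoch---which is precisely the local fact the paper uses to conclude that only one segment lies in the disk---so your connectivity step ends up redoing the paper's case analysis inside it. In short, your route buys conceptual clarity about why properness is the right hypothesis, while the paper's single induction is the more compact packaging of the same content.
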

    \begin{proof}
        We shall prove the proposition by induction. Let $P(n)$ be "At sub-epoch $T=n^-$, $EFP_{n^-}$ is homeomorphic to a circle and the edge-following path goes through all the points of the edge-following set $EFS_{n^-}$". Let $Q(n)$ be "At sub-epoch $T=n^+$, $EFP_{n^+}$ is homeomorphic to a circle and the edge-following path goes through all the points of the edge-following set $EFS_{n^+}$." 

        \begin{itemize}
            \item [-] Base case: Observe that $EFP_{0^+}$ is homeomorphic to a circle and goes through all the edge-following points. $Q(0)$ is true.

            \item [-] Inductive step: suppose $Q(n-1)$ is true, we want to show that $P(n)$ is true; suppose $P(n)$ is true, we want to show that $Q(n)$ is true. In other words, we want to show that after $u_n$ starts to move, and after $u_{n}$ stops, the edge-following paths are homeomorphic to a circle and go through all the edge-following points. We shall prove this in two steps: in part I, we show that $Q(n-1) \rightarrow P(n)$; in part II, we show that $P(n) \rightarrow Q(n)$.
                
                \begin{itemize}
                    \item[-] Part I:
                   
                    We shall investigate the edge-following path after $u_{n}$ starts to move. Let $d(s[n])$ denote the disk centered at $s[n]$ with radius $d$. After $u_n$ starts to move, the path segment(s) contributed by $u_n$ is replaced by the path segment transverse $d(s[n])$. We also notice that $s[n]$ is on the edge-following path $EFP_{n^-}$ and the edge-following points induced by $u_n$ are not on $EFP_{n^-}$ (except the endpoints $a,b$). Since the rest of the lattice points in the edge-following set remain unchanged, and by the induction hypothesis, $EFP_{n^-}$ goes through all the edge-following points in $EFS_{n^-}$. 

                    \begin{figure}[H]
                        \centering
                        \includegraphics[scale=0.32]{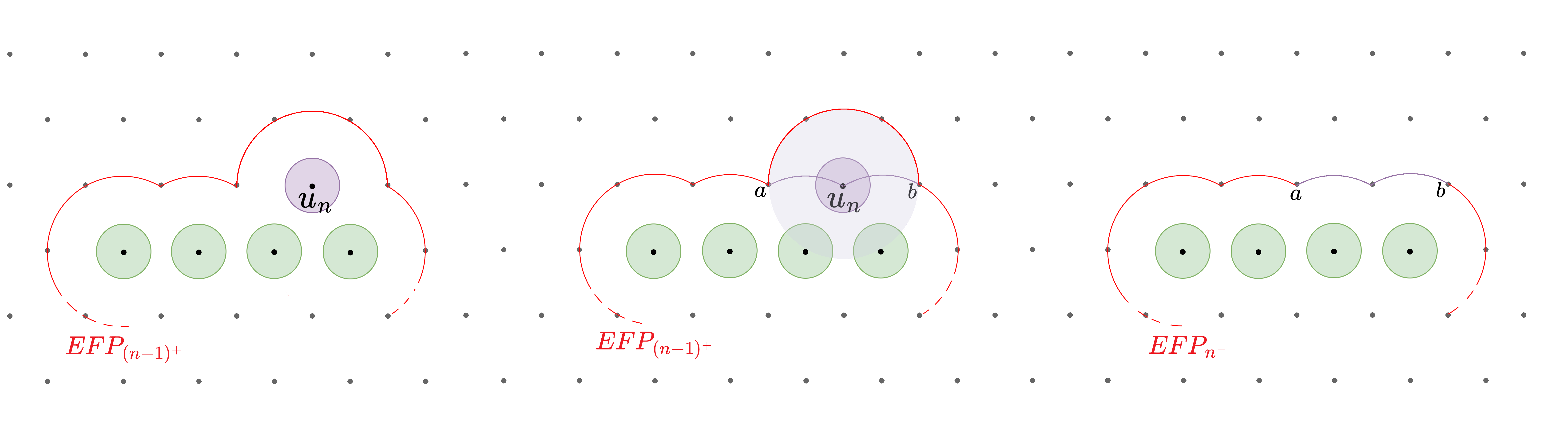}
                        \captionsetup{justification=centering}
                    \end{figure}

                    However, the above arguments fail if $d(s[n])$ covers more than one path segment from $EFP_{n^-}$. We shall show that this could not happen by contradiction. Suppose not, then the disk $d(s[n])$ contains at least two path segments from $EFP_{n^-}$. Notice that $s[n]$ is on $EFP_{n^-}$, there is a path segment of $EFP_{n^-}$ that traverses $d(s[n])$, denoted as path segment $ab$. 
                    
                    Now, we investigate the neighborhood of $s[n]$. First, notice that there is at least one occupied lattice point in the neighborhood of $s[n]$ by the activation sequence. Also, notice that there must be at least three unoccupied lattice points; otherwise, there exists a pair of occupied lattice points on the opposite side of $s[n]$, which contradicts the properness of $B_{n^-}$ (by proposition \ref{prop-properBoundary}). As a result, the neighborhood of $s[n]$ has only three cases: 
                    \begin{itemize}
                        \item [-] case 1: only one lattice point is occupied;
                        \item [-] case 2: only two lattice points are occupied, and the two points are adjacent (otherwise proposition \ref{prop-properBoundary} is violated);
                        \item [-] case 3: only three lattice points are occupied, and the three points are next to each other (otherwise proposition \ref{prop-properBoundary} is violated)
                    \end{itemize}

                    The path segments $ab$ induced by the robots in the neighborhood of $s[n]$ are shown as follows:
                    \begin{figure}[H]
                        \centering
                        \includegraphics[scale=0.4]{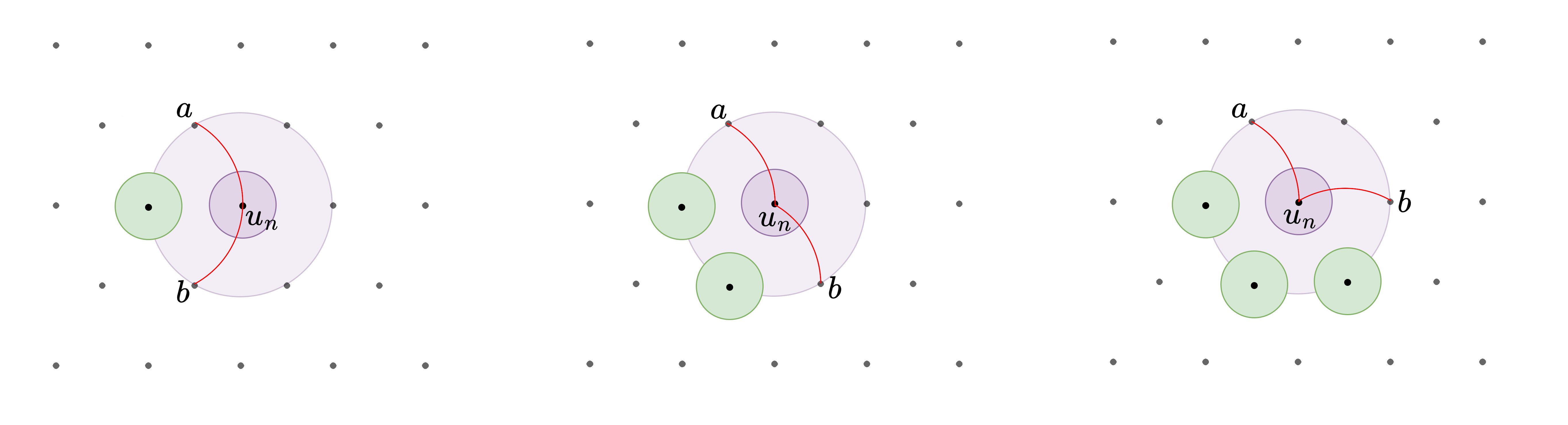}
                        \caption{Left: case 1; Middle: case 2; Right: case 3}
                    \end{figure}
                    
                    Observe that if there is another segment of $EFP_{n^-}$ in $d(s[n])$, the additional path segment cannot be induced by any robot in the neighborhood of $s[n]$. As a result, any additional path segment is induced by some robot $v$ in $B_{n^-}$ that is at a distance $\sqrt{3}d$ or $2d$ from robot $u_n$. There are two cases:

                    \begin{figure}[H]
                        \centering
                        \includegraphics[scale=0.4]{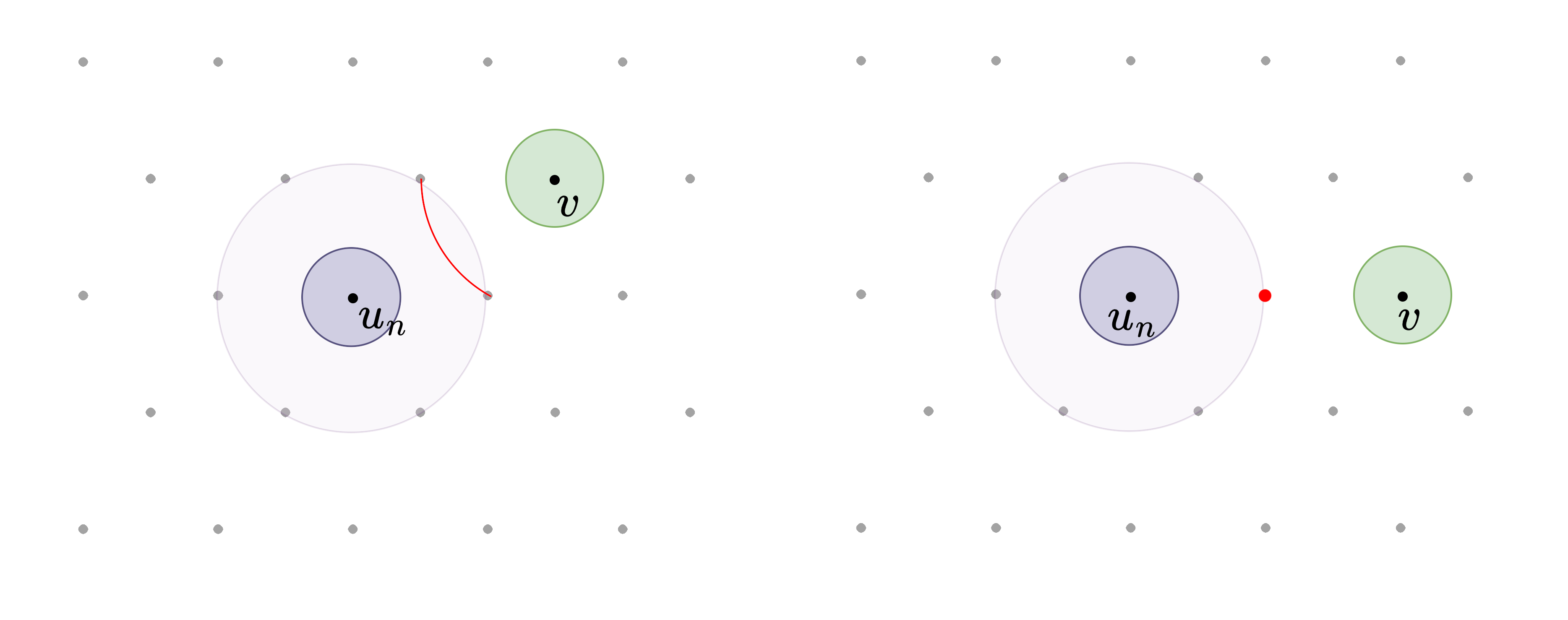}
                    \end{figure}

                    Notice that both cases contradict the properness of $B_{(n-1)^+}$. As a result, $d(s[n])$ contains only one path segment of $EFP_{n^-}$. After $u_n$ starts to move, the edge-following path remains a circle in the topological sense, and the edge-following path $EFP_{n^-}$ goes through all the edge-following points in $EFS_{n^-}$. $P(n)$ is true.

                    \item [-] Part II:
                
                        After $u_n$ stops, some path segment(s) of $EFP_{n^-}$ is covered by the disc centered at $t[n]$ with radius $d$. We denote the disc center at $t[n]$ as $d(t[n])$. If only one path segment of $EFP_{n^-}$ is covered by $d(u_n)$, the segment is replaced by the segment of the circumference of $d(u_n)$ which is located outside the shape surrounded by $EFP_{n_-}$. We also notice that the edge-following points induced by $u_n$ are located on $EFP_{n^+}$ and the lattice point occupied by $u_n$ is not on $EFP_{n^+}$. Since all the other lattice points in the edge-following set remain unchanged, and by the induction hypothesis, $EFP_{n^+}$ goes through all the edge-following points after $u_n$ stops.
                        

                        \begin{figure}[H]
                            \centering
                            \includegraphics[scale=0.35]{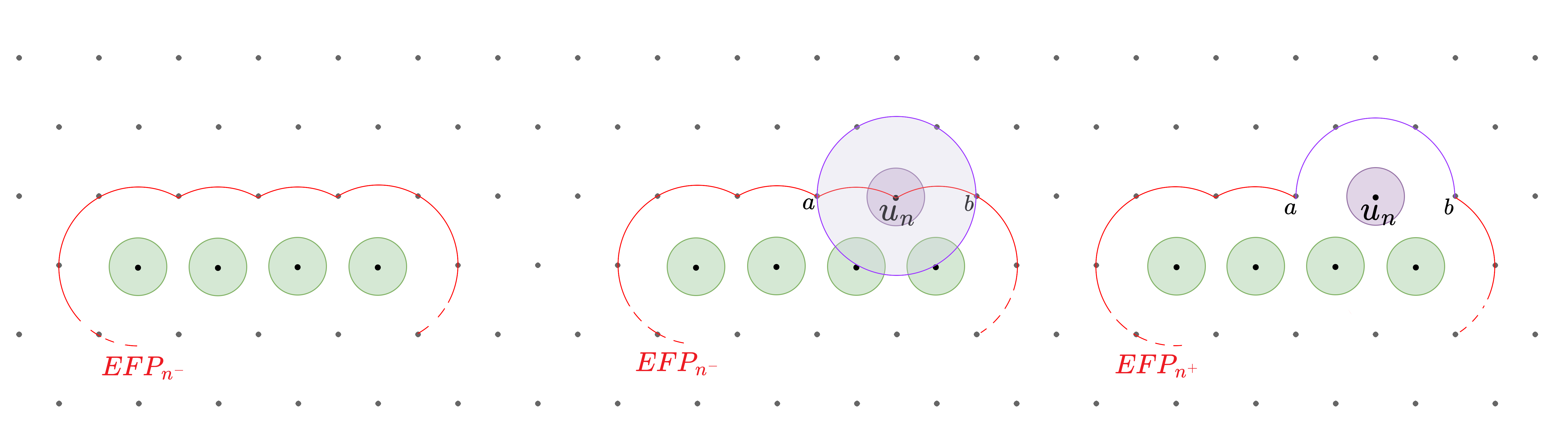}
                            \captionsetup{justification=centering}
                            \caption{The path segment $ab$ in red is replaced by the path segment $ab$ in purple.}
                        \end{figure}

                        However, the above arguments fail if $d(t[n])$ covers more than one path segment of $EFP_{n^-}$. By similar arguments as in part I, by using the properness of $B_{n^-}$ and $B_{n^+}$, there is no robot in $B_{n^-}$ that could induce any additional path segment in the disk $d(t[n])$. As a result, after $u_n$ stops, the edge-following path $EFP_{n^+}$ remains a circle in the topological sense, and the edge-following path goes through all the edge-following points in $EFS_{n^+}$. $Q(n)$ is true.

                \end{itemize}


        \end{itemize}
        Since $Q(0)$ is true, by induction, $P(n)$ and $Q(n)$ are true for all $1 \leq n \leq N$.
    \end{proof}

    \begin{definition}
        At epoch $T = n \leq N$, the \textbf{assembly path} for active robot $u_n$ is defined as the path segment of $EFP_{n^-}$ from $s[n]$ to $t[n]$, directed clockwise.
    \end{definition}

    \begin{lemma}
        \label{thm-unobstructedPath}
        At epoch $T=n \leq N$, the assembly path for robot $u_n$ is well-defined and unobstructed.
    \end{lemma}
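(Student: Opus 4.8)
The plan is to derive this lemma as a short corollary of the two preceding results, Proposition \ref{prop-startEndBothInEFS} and Proposition \ref{prop-EFPisCircle}, combined with the clearance afforded by the lattice spacing $d$. First I would settle well-definedness. By Proposition \ref{prop-startEndBothInEFS}, both $s[n]$ and $t[n]$ lie in the edge-following set $EFS_{n^-}$, and by Proposition \ref{prop-EFPisCircle} the edge-following path $EFP_{n^-}$ is homeomorphic to a circle and passes through every point of $EFS_{n^-}$; hence both endpoints lie on $EFP_{n^-}$. Since in the additive stage $s[n] \in H_2$ while $t[n] \in S \subseteq H_1$, and the idle and formation half-planes are disjoint, the two endpoints are distinct. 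They therefore split the topological circle $EFP_{n^-}$ into exactly two arcs, and imposing the clockwise orientation singles out one of them. This is precisely the assembly path, which is thus well-defined.

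For unobstructedness I would argue in two steps. First, the assembly path, being a sub-arc of $EFP_{n^-}$, is a connected continuous curve from $s[n]$ to $t[n]$ with no gaps or dead-ends, because $EFP_{n^-}$ is a connected, boundaryless $1$-manifold; consequently the edge-following behaviour of $u_n$ (orbiting clockwise while maintaining distance $d$ to the nearest non-moving robot) traces exactly this arc and carries $u_n$ from $s[n]$ to $t[n]$ without interruption. Second, I would verify collision-freeness: by the definition of $EFP_{n^-}$ every point of the path is at distance exactly $d$ from its nearest non-moving robot, hence at distance at least $d$ from every non-moving robot. Since $d = \frac{2r}{\sqrt{3}-1} > 2r$, the centre of $u_n$ (radius $r$) stays more than $2r$ from the centre of every robot (radius $r$) along the entire path, so no two robots ever overlap. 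Together these establish that the path is unobstructed.

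The genuine difficulty has already been absorbed into Proposition \ref{prop-EFPisCircle} (and, underlying it, the properness of the boundary set, Proposition \ref{prop-properBoundary}), which guarantee that the perimeter is a single simple loop with no pinch points capable of trapping the moving robot; granted those, the present lemma is essentially a bookkeeping step. The only point I expect to require a little care is the identification of the geometric locus $EFP_{n^-}$ with the trajectory actually produced by the edge-following controller (turn right when farther than $d$, left when closer than $d$): I would note that $EFP_{n^-}$ is the invariant level set $\{\text{distance to nearest robot} = d\}$ of this controller, so a robot initialised on the path at $s[n]$ remains on it and advances clockwise until it reaches $t[n]$.
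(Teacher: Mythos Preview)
Your proposal is correct and follows essentially the same approach as the paper: invoke Proposition~\ref{prop-startEndBothInEFS} to place $s[n]$ and $t[n]$ on $EFP_{n^-}$, invoke Proposition~\ref{prop-EFPisCircle} to ensure $EFP_{n^-}$ is a single closed curve through both, and then use the distance-$d$ property of the edge-following path for unobstructedness. Your version is more explicit than the paper's (you spell out the distinctness of the endpoints, the inequality $d>2r$, and the identification of $EFP_{n^-}$ with the controller's invariant level set), but the structure is the same.
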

    \begin{proof}
        By proposition \ref{prop-startEndBothInEFS}, $s[n]$ and $t[n]$ are in the edge-following set. By proposition \ref{prop-EFPisCircle}, the edge-following path is homeomorphic to a circle and links all the lattice points in the edge-following set. As a result, the assembly path is well-defined. The assembly path is also unobstructed since any point on the edge-following path is at $d$ to the nearest unoccupied lattice point.
    \end{proof}

\subsection{Localization for the Additive Stage}
\label{sect-localization}

     During the execution of the algorithm, each active robot $u_n$ in the additive stage moves from lattice point $s[n]$ to lattice point $t[n]$. To follow the edge-following path, $u_n$ maintains a distance $d$ to the nearest robot and rotates clockwise. Although the movement only requires local distance information, the moving robot $u_n$ needs to know its coordinate at $s[n]$ and $t[n]$. The process of obtaining the coordinates is called localization. In this section, we formally define localization and prove the localizability of the MRS at the initial configuration and during the additive stage.

    \begin{definition}
        A robot $u$ is \textbf{localizable} if
        \begin{enumerate}
            \item it is one of the seed robots; or
            \item it is localizable at a previous time step and it has not moved since then; or 
            \item it could determine its coordinates using trilateration. I.e., there exist three localizable robots such that:
                    \begin{itemize}
                        \item the three robots are less than or equal to $2d$ from the robot $u$, and
                        \item the three robots are noncollinear.
                    \end{itemize}
        \end{enumerate}
    \end{definition}

    \begin{proposition}
        \label{prop-IdleLocalizable}
        All idle robots are localizable.
    \end{proposition}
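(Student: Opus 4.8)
The plan is to prove localizability by induction following the \emph{idle tree order} of Definition~\ref{def-idleRibTreeOrder}, i.e.\ by induction on the gradient of the idle ribbon on which a robot sits. To anchor the induction at the seed robots (which are localizable by clause~(1) of the definition of localizability), I would introduce, as is done elsewhere in the paper for the formation side, the virtual seed ribbons $IR_{-1} := (p_5,p_4)$ and $IR_{-2} := (O)$, occupied by the seed robots $v_4,v_3$ and $v_0$ respectively. The induction hypothesis is: \emph{every robot lying on an idle ribbon of gradient strictly less than $g(IR_i)$ is localizable}. Because the idle robots are placed layer by layer (every idle ribbon is empty or filled, and a filled idle ribbon forces all closer ribbons to be filled), whenever a robot occupies $IR_i$ its parent ribbon $IR_{i-1}$ and grandparent ribbon $IR_{i-2}$ are both \emph{filled}, so by the hypothesis all of their vertices carry localizable robots.

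For the inductive step, I want to localize an arbitrary robot $u$ at a vertex $x \in V(IR_i)$ by exhibiting three noncollinear localizable robots within distance $2d$ of $x$, taking two on the parent ribbon and one on the grandparent ribbon. By Proposition~\ref{prop-typeOfNeighborhood}, $x$ is adjacent either to one parent-ribbon vertex $y$ (type $A$) or to two adjacent parent-ribbon vertices $y,z$ (type $B$); in the type~$A$ case let $y'$ be a neighbor of $y$ on the parent ribbon, which exists because every ribbon has length at least two (Definition~\ref{def-properRibbonization}). By Proposition~\ref{prop-uniqueParent} the vertex $y$ itself has a parent vertex $g$ on the grandparent ribbon. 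The parent vertices $\{y,y'\}$ or $\{y,z\}$ lie within $2d$ of $x$ (the neighbours of $x$ are at distance $d$, and $y'$ at distance $\lVert x-y'\rVert \le \lVert x-y\rVert + \lVert y-y'\rVert = 2d$), and similarly $\lVert x-g\rVert \le \lVert x-y\rVert + \lVert y-g\rVert = 2d$; hence all three candidates lie within $2d$ of $x$. All three are localizable by the induction hypothesis, so once noncollinearity is checked, clause~(3) of the definition of localizability makes $u$ localizable.

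The base case is $IR_0$ (gradient two): each of its three vertices lies within $2d$ of all three seed positions $O,p_4,p_5$, and the seed robots $v_0,v_3,v_4$ are noncollinear because $O\notin l_2$, so they localize every vertex of $IR_0$ directly. The main obstacle I expect is the remaining geometric bookkeeping in the step, namely verifying \emph{noncollinearity}: I must rule out the degenerate possibility that the grandparent vertex $g$ lies on the lattice line through the two chosen parent vertices. This reduces to the local type~$A$/type~$B$ picture around $x$ and $y$ furnished by Proposition~\ref{prop-typeOfNeighborhood} (equivalently, to the fact that consecutive idle ribbons occupy distinct parallel rows of the half-plane, so a grandparent vertex never lies on the parent row); a short case analysis on the at most two admissible configurations of $\{y,y',g\}$ or $\{y,z,g\}$ settles it. Everything else is a direct application of the induction hypothesis together with the filled-layer placement rule.
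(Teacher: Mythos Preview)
Your proposal is correct and follows essentially the same route as the paper: induction on the idle-ribbon gradient, with the inductive step localizing a robot $u$ via two neighbours on the parent ribbon and one on the grandparent ribbon, and noncollinearity read off from the type~$A$/type~$B$ neighbourhood picture. The only cosmetic differences are that the paper spells out two explicit base cases ($IR_0$ using the seed clique $\{v_0,v_3,v_4\}$ and $IR_1$) rather than introducing the virtual ribbons $IR_{-1},IR_{-2}$, and that the paper inspects the neighbourhood of the parent vertex $v$ (your $y$) rather than of $x$ to verify noncollinearity; the triple of witnesses produced is the same in both arguments.
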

    \begin{proof}
        We first prove that each idle robot is localizable at $T = 0$. Let $P(n)$ denote "all the idle robots belonging to $IR_n$ is localizable at $T=0$".
        \begin{itemize}
            \item [-] Base cases:
                        \begin{itemize}
                            \item $\forall u \in IR_{0}$,  since $u$ neighbors the clique of three seed robots $v_0, v_3, v_4$, $P(0)$ follows.
                            \item $\forall u \in IR_{1}$,  $u$ neighbors a $v$ robot in $IR_{0}$. $v$ neighbors a robot $w \in IR_{0}$ and a seed robot $k$. By inspecting the two types of neighborhood of robot $v$, $v,w,k$ are non-collinear. Since $v,w,k$ are localizable, $P(1)$ follows.
                        \end{itemize}
            
            \item [-] Inductive step: suppose $P(0), P(1), \ldots, P(n)$ are true, we shall show that $P(n+1)$ is true.
              
                        $\forall u \in IR_{n+1}$, $u$ neighbors an idle robot $v$ in $IR_{n}$. Since every idle ribbon contains at least two robots, there is a robot $w \in IR_{n}$ neighboring $v$. Notice that $v$ neighbors at least one robot in $IR_{n-1}$, and we denote the robot as $k$. By inspecting the type A and the type B neighborhood of robot $v$, $k,w,v$ cannot be collinear. Since $P(n)$ and $P(n-1)$ are true, $k,w,v$ are localizable. Finally, observe that the centers of $k,w,v$ are less than $2d$ from the center of robot $u$, $P(n+1)$ follows.
        \end{itemize}
        By induction, all idle robots are localizable at $T=0$. Since an idle robot remains stationary until it becomes active, the proposition follows.
    \end{proof}

    \begin{proposition}
        \label{prop-3robotsForLocalization}
        Given a proper ribbonization $\hat{S}$, an active robot $u_n$ is localizable if it neighbors a seed robot or a robot that belongs to a filled ribbon.
    \end{proposition}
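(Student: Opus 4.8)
The plan is to verify clause~3 of the definition of \emph{localizable}: I will produce, for the lattice point $x$ at which the active robot $u_n$ sits, three pairwise non-collinear localizable robots whose centres lie within $2d$ of $x$. I treat the two hypotheses separately, and in both cases the three witnesses are chosen at distance $d$ or $2d$ from $x$, so the range condition is automatic.

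First, suppose $u_n$ neighbours a seed robot. The five seed robots occupy $O,p_1,p_2,p_4,p_5$ and decompose into the two triangular cliques $\{v_0,v_1,v_2\}$ and $\{v_0,v_3,v_4\}$; each clique is a set of three mutually adjacent lattice points and is therefore non-collinear. Whichever seed robot $u_n$ is adjacent to lies in one of these cliques, and its two clique-mates sit at distance $d$ from it, hence within $2d$ of $x$. Since every seed robot is localizable by clause~1, this triple settles the case.

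Second, suppose $u_n$ neighbours a robot $w$ on a filled ribbon $R$. Here I would imitate the inductive step of Proposition~\ref{prop-IdleLocalizable}. Because a proper ribbonization forces $len(R)\geq 2$ and $R$ is filled, $w$ has an occupied neighbour $w'$ on $R$; because $w$ has a unique parent ribbon $R^{[p]}$ (Proposition~\ref{prop-uniqueParent}), $w$ is adjacent to a vertex of $R^{[p]}$ carrying a robot $p$ once $R^{[p]}$ is filled (for the root ribbon $R_0$ this vertex is instead one of the seed robots $v_1,v_2$). Both $w'$ and $p$ lie at distance $d$ from $w$, hence within $2d$ of $x$. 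Non-collinearity of $\{w,w',p\}$ follows from the type~A/type~B description of the neighbourhood of $w$ in Proposition~\ref{prop-typeOfNeighborhood}: the parent vertex $p$ is a lower-gradient neighbour of $w$ and therefore does not lie along the ribbon edge $ww'$, so the three centres cannot be aligned --- exactly the reasoning that made $\{v,w,k\}$ non-collinear in Proposition~\ref{prop-IdleLocalizable}.

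The delicate point, and the one I expect to require the most care, is ensuring the third witness $p$ is genuinely present and localizable. For presence I would invoke the assembly ordering: ribbons are filled in tree order, so a filled ribbon has a filled parent and $p$ is occupied, with the root case absorbed by the seed robots. For localizability I would note that each robot on a filled ribbon stopped at its target in an earlier epoch, where it was localizable, and has not moved since, so clause~2 keeps it localizable, while seed robots are localizable by clause~1. Having exhibited three non-collinear localizable robots within $2d$ of $x$ in every case, clause~3 yields that $u_n$ is localizable.
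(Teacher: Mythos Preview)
Your proposal is correct and follows essentially the same approach as the paper: in both cases one picks, around the neighbouring robot $v$ (your $w$), a second robot on the same ribbon and a third robot on the parent ribbon (or a seed), and uses the type~A/type~B neighbourhood description to rule out collinearity. The only cosmetic difference is that the paper packages the argument as an explicit induction on the epoch $T$, whereas you invoke clause~2 directly (``stopped in an earlier epoch where it was localizable, has not moved since''); this is the same inductive dependency, just not written out as a formal induction.
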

    \begin{proof}
        We shall do induction on the epoch $T$. Let $P(n)$ be "$u_n$ is localizable if it neighbors a seed robot or a robot that belongs to a filled ribbon."
        \begin{itemize}
            \item [-] Base case: Suppose the active robot $u_1$ neighbors robot $v$ and $v$ is a seed robot. Then $v$ neighbors two seed robots. The three robots form a clique (noncollinear) and the centers of the three robots are within $2d$ from the center of robot $u$. Since all the seed robots are localizable, $P(1)$ is true.
            \item [-] Inductive step: Suppose $P(n-1)$ is true, we will show that $P(n)$ is true. Suppose the active robot $u_n$ neighbors robot $v$.
            \begin{itemize}
                \item [-] If $v$ is a seed robot, then $v$ neighbors two seed robots. The three robots form a clique (noncollinear) and the centers of the three robots are within $2d$ from the center of robot $u$. $P(n)$ is true.
                
                \item [-] If $v$ is not a seed robot. Suppose $v$ belongs to a filled ribbon $R$. Then $v$ neighbors a robot $k$ with gradient value $g(k) = g(v)-1$. Since every filled ribbon contains at least two localized robots, there is a robot $w \in R$ neighboring $v$. By inspecting the type A and the type B neighborhood of robot $v$, we notice that $k,w,v$ cannot be collinear. Since the centers of $k,w,v$ are less than $2d$ from the center of robot $u$ and $k,w,v$ are localizable by induction hypothesis, $P(n)$ is true.
            \end{itemize}
        \end{itemize}

    \end{proof}

    \begin{lemma}
        \label{thm-localizable}
        At epoch $T=n$, robot $s_n$ is localizable at lattice point $s[n]$ and lattice point $t[n]$.
    \end{lemma}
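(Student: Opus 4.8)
The plan is to treat the two positions $s[n]$ and $t[n]$ separately, drawing on the two localizability results already in hand. For the starting position, I would observe that $s[n]$ lies in the idle half-plane $H_2$ and that, at the moment epoch $T=n$ begins, the active robot $u_n$ is still an idle robot resting at $s[n]$: it has not yet transitioned into motion. Proposition \ref{prop-IdleLocalizable} guarantees that every idle robot is localizable, and localizability persists as long as a robot does not move, so $u_n$ is localizable at $s[n]$. This direction is essentially immediate.

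The substantive work lies at the stopping position $t[n]$, where I would invoke Proposition \ref{prop-3robotsForLocalization}: it suffices to exhibit, at the instant $u_n$ arrives at $t[n]$, a neighboring robot that is either a seed robot or a member of a filled ribbon. I would first recall that, by the definition of a parent ribbon, every vertex of $R(t[n])$ is adjacent to a vertex of the unique parent ribbon $R_p := R(t[n])^{[p]}$; in particular $t[n]$ is adjacent to some vertex of $R_p$. It therefore remains only to show that $R_p$ is already filled when $u_n$ stops.

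To establish the ``filled parent'' claim, I would unwind the definition of the assembly sequence $t$, whose $k$th entry is the smallest lattice point (in ribbon order) on the smallest not-yet-exhausted ribbon with respect to the complete tree order. Since the complete tree order extends the tree order of Definition \ref{def-ribTreeOrder} and $R_p \twoheadrightarrow R(t[n])$, we have $R_p < R(t[n])$ in the complete tree order; hence every lattice point of $R_p$ is assigned an assembly index strictly before any lattice point of $R(t[n])$. Because $t$ is a bijection onto the lattice points of $S$ and the fill proceeds in this order, by the time $u_n$ is assigned $t[n]$ all vertices of $R_p$ have already been occupied, i.e.\ $R_p$ is filled. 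In the boundary case $R(t[n]) = R_0$, the parent is the ribbon $(p_2,p_1)$ of seed robots, so $t[n]$ neighbors a seed robot directly. In either case Proposition \ref{prop-3robotsForLocalization} yields that $u_n$ is localizable at $t[n]$, completing the argument.

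The main obstacle I anticipate is precisely this ``filled parent ribbon'' step: one must argue carefully that the recursive, point-by-point construction of $t$ genuinely exhausts the parent before ever touching the child, which rests on the order-extension property together with $t$ being a bijection onto $S$. Once that ordering fact is pinned down, the lemma reduces to quoting Propositions \ref{prop-IdleLocalizable} and \ref{prop-3robotsForLocalization}, with the $R_0$ case handled by the seed-robot convention for the root's parent.
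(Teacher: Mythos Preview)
Your proposal is correct and follows essentially the same approach as the paper: both use Proposition~\ref{prop-IdleLocalizable} for localizability at $s[n]$ and Proposition~\ref{prop-3robotsForLocalization} for localizability at $t[n]$, the latter by arguing that $t[n]$ is adjacent to a seed robot or to a robot in a filled (parent) ribbon. The paper's proof is considerably terser---it simply asserts that ``by the ribbon order, $t[n]$ is adjacent to a lattice point occupied by a seed robot or a vertex of a filled ribbon''---whereas you spell out explicitly why the parent ribbon must be filled via the complete tree order, which is a welcome clarification of a step the paper leaves implicit.
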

    \begin{proof}
        By proposition \ref{prop-IdleLocalizable}, robot $s_n$ is localizable at lattice point $s[n]$. By the ribbon order, $t[n]$ is adjacent to a lattice point occupied by a seed robot or a vertex of a filled ribbon. Robot $s_n$ is localizable at lattice point $t[n]$ by proposition \ref{prop-3robotsForLocalization}.
    \end{proof}


    \subsection{Properties of the MRS At the End of the Additive Stage}
    \label{sect-propertiesOfEOA}

    In this section, we investigate the properties of the MRS after robot $u_{N}$ stops. Since the MRS grows to its maximum size in the formation half-plane after $u_{N}$ stops, we say the MRS is in its "culminating" state. The "culminating" state plays an important role as it serves as the starting configuration of the subtractive stage. We shall investigate the "culminating" state, its corresponding boundary and edge-following path.

    \begin{definition}
        \leavevmode
        \makeatletter
        \@nobreaktrue
        \makeatother
        \begin{enumerate}
            \item The \textbf{culminating set of boundary robots}, $\bm{\widehat{B}}$, is the collection of all boundary robots after $u_{N}$ stops and before $u_{N+1}$ moves, i.e., $\widehat{B} = B_{N^+}$.
            \item The \textbf{culminating edge-following set}, $\bm{\widehat{EFS}}$, is the set that contains all the edge-following points after $u_{N}$ stops and before $u_{N+1}$ moves, i.e., $\widehat{EFS} = EFS_{N^+}$.
            \item The \textbf{culminating edge-following path}, $\bm{\widehat{EFP}}$, is the path consisting of all points that are at a distance $d$ from the nearest robot at the end of the additive stage, i.e., $\widehat{EFP} = EFP_{N^+}$.
        \end{enumerate}

    
    \end{definition}

    Notice that the culminating set of boundary robots $\widehat{B}$ is proper by proposition \ref{prop-properBoundary}. The culminating edge-following path is homeomorphic to a circle and goes through all the edge-following points by proposition \ref{prop-EFPisCircle}. 
    

    \begin{definition}
        \leavevmode
        \makeatletter
        \@nobreaktrue
        \makeatother
        \begin{enumerate}
            \item At sub-epoch $T=n^-$ with $0 < n \leq N$, the \textbf{affiliated path} $\bm{AP_{n^-}}$ is the path segment of the edge-following path $EFP_{n^-}$ starts from the lattice point $p_6$ and ends at the lattice point $p_3$, oriented clockwise. Similarly, the \textbf{affiliated path} $\bm{AP_{n^+}}$ is the corresponding path segment of the edge-following path $EFP_{n^+}$ at sub-epoch or $T = n^+$ with $0 \leq n \leq N$.
            \item Let the \textbf{culminating affiliated path} $\bm{\widehat{AP}}$ be the path segment of the culminating edge-following path $\widehat{EFP}$ starts from the lattice point $p_6$ and ends at the lattice point $p_3$, oriented clockwise. I.e., $\widehat{AP} = AP_{N^+}$.
        \end{enumerate}

        \begin{figure}[H]
            \centering
            \captionsetup{justification=centering}
            \includegraphics[scale=0.3]{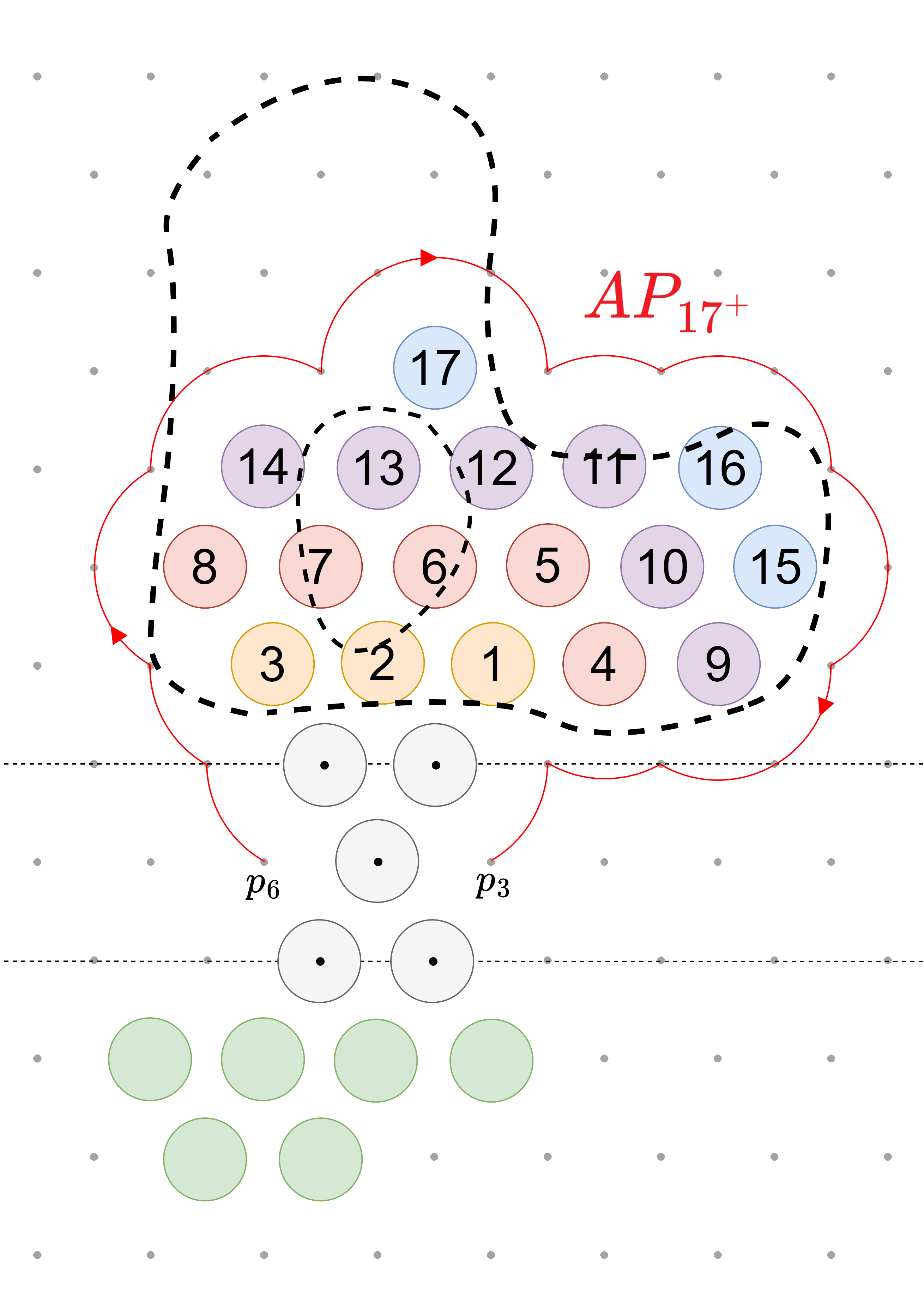}
            \includegraphics[scale=0.3]{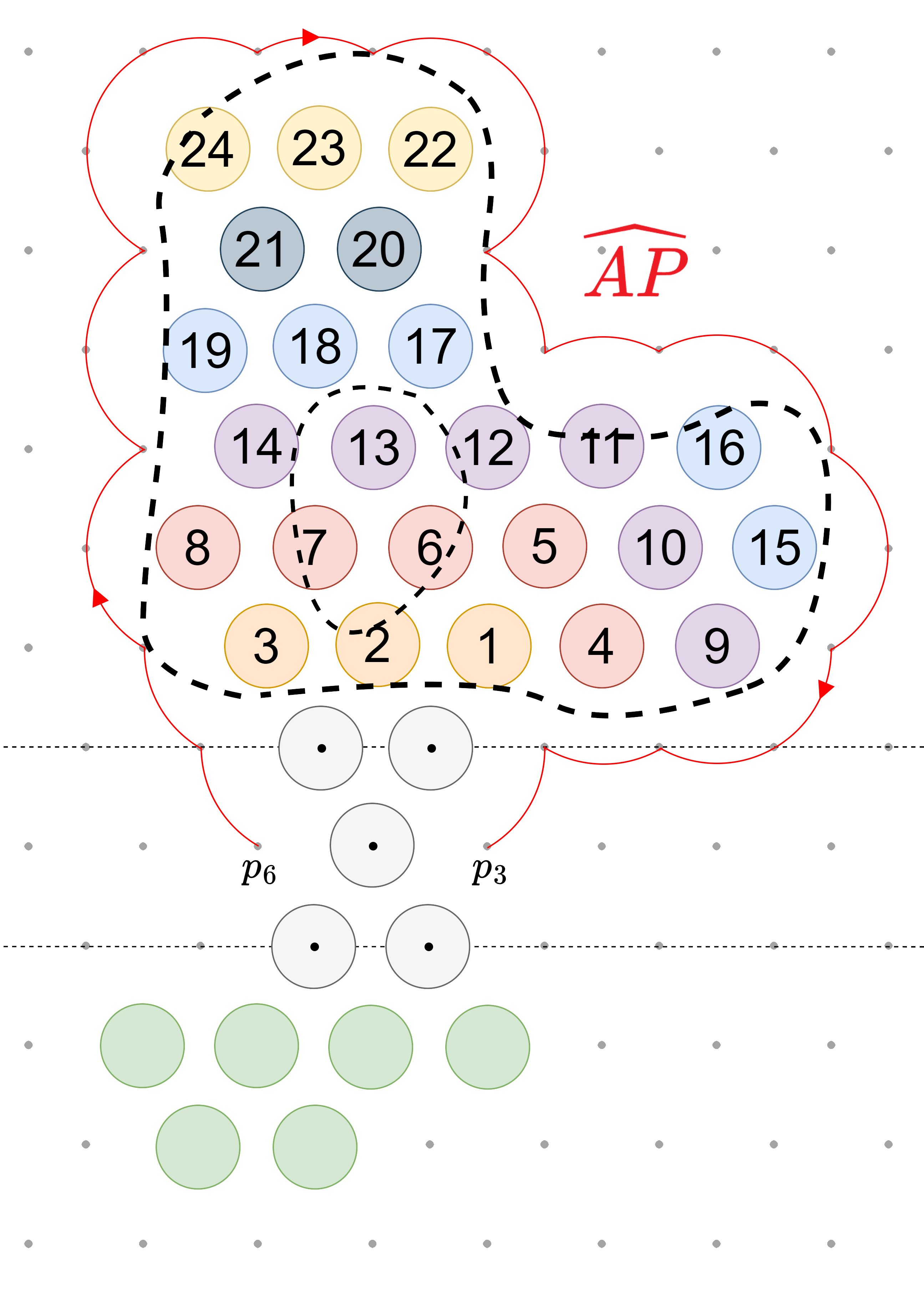}
            \caption{Left: the affiliated path $AP_{17^+}$; Right: the culminating affiliated path $\widehat{AP}$}
        \end{figure}
    \end{definition}



    To facilitate the path planning for the subtractive stage, we define the "merging point" for each ribbon. Loosely speaking, the "merging point" is a special point on the affiliated path that marks the entrance of a recycle robot to the affiliated path.

    \begin{definition}
        For each ribbon $R$, suppose the first vertex of $R$ is $x$, then a lattice point $m \notin S$ is a \textbf{merging point} of ribbon $R$ if
        \begin{enumerate}
            \item $||m-x||=d$; and
            \item $(\exists y \in V(P(R)))$ $||y-x||=d \land ||y-m||=d$.
        \end{enumerate}
        The merging point of $R$ is denoted as $m(R)$.


    \end{definition}
    
    \begin{proposition}
        Given a proper ribbonization of $S$, every ribbon has exactly one merging point.
    \end{proposition}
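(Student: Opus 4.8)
The plan is to reduce the whole statement to the local geometry at the leader vertex $x = L(R)$ and to read off the answer from the neighbourhood classification of Proposition~\ref{prop-typeOfNeighborhood}. Write $k = g(R)$ and let $P(R)$ be the parent ribbon of $R$, which is unique by Proposition~\ref{prop-uniqueParent}; for the root $R_0$ I take the seed pair $(p_2,p_1)$ as its parent so the argument is uniform. By definition a merging point is a lattice point $m\notin S$ with $\|m-x\|=d$ that is a common neighbour of $x$ and some vertex $y\in V(P(R))$. Thus the entire claim is a counting statement: among the six neighbours of $x$, how many are simultaneously adjacent to a parent vertex and outside $S$?

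First I would pin down the candidate set. Since $h(x)=k>0$, the neighbourhood of $x$ is of Type $A$ or Type $B$. In Type $A$ there is a single parent vertex $y$ adjacent to $x$, and the only two common neighbours of $x$ and $y$ are the two lattice points lying at $\pm 60^{\circ}$ to the ray from $y$ through $x$; call them $a_+,a_-$. In Type $B$ there are two adjacent parent vertices $y,z$, and the common neighbours of $x$ with $y$ or with $z$ are $\{y,z,p,q\}$; the two parent vertices $y,z$ themselves lie in $S$, leaving exactly two further candidates $p,q$. In either case Proposition~\ref{prop-typeOfNeighborhood} guarantees that these two distinguished neighbours have hop count $k$ whenever they lie in $S$, while every remaining neighbour of $x$ is either a parent vertex (hop $k-1$, hence in $S$) or a point of hop count $k+1$; none of the latter is adjacent to a vertex of $P(R)$. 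Hence every merging point must be one of this pair of distinguished neighbours.

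The heart of the argument is that exactly one of this pair lies outside $S$. Because $x$ is the leader, it has out-degree $0$, so it is an endpoint of the directed path graph $R$ and therefore has exactly one ribbon-neighbour, i.e.\ exactly one neighbour of hop count $k$ inside $S$; and since $\mathrm{len}(R)\ge 2$ by properness, that neighbour exists. If both distinguished neighbours lay in $S$ they would both have hop count $k$ and be adjacent to $x$, making $x$ an interior vertex of the path and contradicting that it is an endpoint; if neither lay in $S$, then $x$ would have no ribbon-neighbour and $R$ would have length $1$, again contradicting properness. Therefore precisely one of the pair is the unique ribbon-neighbour $v_1\in S$ and the other, call it $m$, satisfies $m\notin S$. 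This $m$ is adjacent to $x$ and to a vertex of $P(R)$, so it meets both clauses of the definition (existence); and since any merging point is one of the two distinguished neighbours and must be outside $S$, it equals $m$ (uniqueness).

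The main obstacle I expect is the bookkeeping in the second step: verifying carefully from the Type $A$/Type $B$ picture that the two distinguished neighbours are genuinely the only neighbours of $x$ that can be common neighbours with a parent vertex, and that the parent vertices ($y$, and $z$ in Type $B$) are the only hop-$(k-1)$ neighbours, so that no ``far'' neighbour of hop count $k+1$ slips in as a spurious merging point. A minor additional point to settle is the root ribbon $R_0$, whose parent is the seed pair $(p_2,p_1)$: there one must check that the leader of $R_0$ still has its two distinguished neighbours placed against a seed vertex, which follows from the proper-placement condition $O,p_1,\dots,p_6\notin S$ exactly as in the generic case.
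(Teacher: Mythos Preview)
Your proposal is correct and follows essentially the same approach as the paper: both arguments localize to the leader vertex $x=L(R)$, invoke the Type $A$/Type $B$ neighbourhood classification to isolate exactly two candidate common neighbours of $x$ and a parent vertex, and then use $\mathrm{len}(R)\ge 2$ together with $x$ being an endpoint of the path graph to conclude that precisely one candidate lies outside $S$. Your endpoint-degree phrasing is slightly more abstract than the paper's explicit clockwise/counter-clockwise labelling of the two candidates, and your explicit treatment of $R_0$ via the seed pair $(p_2,p_1)$ is a detail the paper leaves implicit, but the underlying idea is identical.
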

    \begin{proof}
        For arbitrary ribbon $R$, suppose the first vertex of $R$ is $x$ and $h(x) = k$. We investigate the two types of neighborhood of $x$:
    
        \begin{figure}[H]
            \centering
            \includegraphics[scale=0.5]{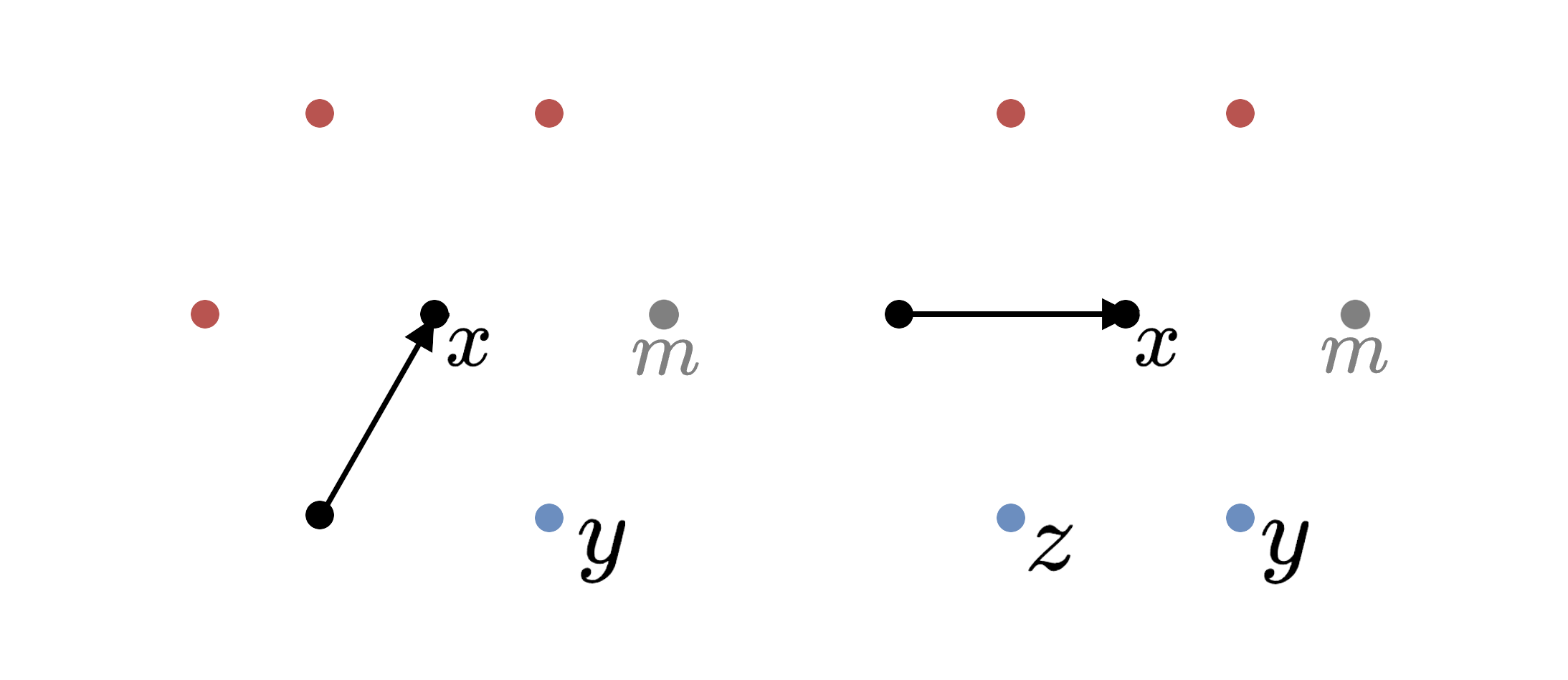}
            \caption{Left: type A neighborhood of $x$; Right: type B neighborhood of $x$}
        \end{figure}
    
        \begin{itemize}
            \item [-] If $x$ has the type $A$ neighborhood, denote the lattice point in the neighborhood with a hop count $k-1$ as $y$. Two lattice points in the neighborhood of $x$ are adjacent to both $x$ and $y$, and we denote the lattice point on the counter-clockwise side of $x$ with respect to $y$ as $a$, and the lattice point on the clockwise side of $x$ as $b$. Since $R$ has a minimum length of two, $a$ is a vertex of $R$ and $a$ is behind $x$. As a result, $b$ is not in $S$, and $b$ is a merging point of $R$. Now, we show that $b$ is the only merging point of $R$.  
            
            Suppose not, then there is a lattice point $w$ in the rest of the lattice points in the neighborhood of $x$ such that $h(w) = k-1$. This contradicts proposition \ref{prop-typeOfNeighborhood}. As a result, $b$ is the only merging point of $R$.

            \item [-] If $x$ has the type $B$ neighborhood, denote the two lattice points in the neighborhood with a hop count $k-1$ as $y$ and $z$. WLOG, we let $y$ on the counter-clockwise side of $z$ with respect to $x$. By similar arguments as in the first case, the lattice point adjacent to both $x$ and $y$ is the only merging point of $R$.
        \end{itemize}

    \end{proof}

    \begin{definition}
        \leavevmode
        \makeatletter
        \@nobreaktrue
        \makeatother
        \begin{enumerate}
            \item Given two points $x,y$ on the affiliated path, $x$ is \textbf{in front of} $y$ if the path segment of the edge-following path from $x$ to $y$ has the same direction as the affiliated path. 
            \item Given two points $x,y$ on the assembly path, $x$ is \textbf{strictly in front of} $y$ if $x$ is in front of $y$ and $x \neq y$.
            
        \end{enumerate}

    \end{definition}

    We say $y$ is \textbf{behind} $x$ when $x$ is in front of $y$, and $y$ is \textbf{strictly behind} $x$ when $x$ is strictly in front of $y$. 

    \begin{proposition}
        \label{prop-mergingPointInfrontLemma}
        For ribbon $R$ with merging point $m$, if $m$ is on the affiliated path, then $m$ is in front of all the points on the affiliated path that are contributed by robots from $R$.
    \end{proposition}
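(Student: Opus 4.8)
The plan is to reduce the global statement to a local computation at the leader vertex of $R$ and then propagate it along the ribbon. First I would record the culminating configuration: at the end of the additive stage every vertex of $S$ is occupied, so in particular every vertex of $R$ carries a non-moving robot and the leader vertex $x = L(R)$ is occupied. Since the merging point satisfies $||m-x|| = d$ and $m \notin S$, and since $m$ is assumed to lie on the affiliated path, $m$ is an edge-following point contributed by the robot at $x$. I would also reuse that the culminating edge-following path is a single topological circle passing through every edge-following point (proposition \ref{prop-EFPisCircle}), so that the clockwise order underlying ``in front of'' is well defined, and that the culminating set of boundary robots is proper (proposition \ref{prop-properBoundary}), which controls how the arcs of neighbouring robots meet.

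The heart of the argument is the local picture at $x$, reusing the neighbourhood dichotomy of proposition \ref{prop-typeOfNeighborhood} exactly as in the merging-point proposition. Writing $y$ for the parent-direction neighbour of $x$ (of hop count $g(R)-1$), the merging-point construction places $m$ on the clockwise side of $x$ with respect to $y$, while the next ribbon vertex $a$ (the vertex behind $x$) sits on the counter-clockwise side. Both $x$ and $y$ are occupied and $m$ is the unique unoccupied common neighbour on the exposed side, so $m$ is precisely the edge-following point shared by the arc contributed by the robot at $x$ and the arc contributed by the parent robot at $y$. Tracing the clockwise orientation of the affiliated path through this corner, I would show that the parent arc precedes $m$, and that $m$ in turn precedes the whole of $x$'s own arc; hence $m$ is strictly in front of every point the leader robot contributes. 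The same finite check must be repeated for a type $B$ leader neighbourhood, where $y$ is replaced by the two adjacent parent neighbours, but the conclusion is identical.

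It then remains to push this conclusion past the leader to the rest of $R$. Because $R$ is a directed path with $x$ at its front and consecutive vertices contribute arcs that meet at shared edge-following points, I would argue by induction backwards along the ribbon (from $x$ toward the tail) that every vertex's contribution lies in the clockwise interval beginning at $m$; properness of the boundary (proposition \ref{prop-properBoundary}) guarantees that consecutive arcs are genuinely adjacent on the circle and do not jump, so that the contributions of $R$, even when broken into several arcs by intervening child ribbons, all remain behind $m$. Finally I would rule out any reappearance of an $R$-contribution \emph{in front of} $m$: immediately counter-clockwise of $m$ the affiliated path follows the parent ribbon of gradient $g(R)-1$, and I would show that the clockwise segment from $p_6$ to $m$ cannot touch a robot of $R$ without forcing a pair of non-adjacent occupied boundary points at distance $\sqrt{3}d$ or $2d$, contradicting the properness of the culminating boundary.

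I expect the main obstacle to be exactly this last, global, step — excluding a wrap-around contribution of $R$ in front of $m$ — together with the bookkeeping needed to keep the ribbon's intrinsic ``clockwise around the parent'' orientation aligned with the global clockwise orientation of the affiliated path uniformly across the type $A$ and type $B$ cases. The local transition parent $\to m \to x$ is a routine finite verification, but turning it into a clean statement about all of $R$'s possibly disconnected perimeter contributions is where the smoothness and properness hypotheses will have to do the real work.
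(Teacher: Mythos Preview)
Your approach is genuinely different from the paper's. You freeze the culminating configuration and attempt a spatial induction along the vertices of $R$; the paper instead proves the statement for \emph{every} affiliated path $AP_{n^\pm}$ by a temporal induction on the epoch $n$. At $Q(0)$ no robot of $R$ has been placed and the claim is vacuous. At the inductive step one only has to understand the single newly placed robot $u_n$: if $t[n]$ is the leader vertex of $R$, the merging point $m$ is exactly the clockwise endpoint $b$ of the fresh arc, hence in front of it; if $t[n]$ is a later vertex of $R$, the fresh arc is attached at a point $b$ that was already contributed by $R$ and is therefore behind $m$ by the hypothesis; if $t[n]\notin V(R)$, nothing changes. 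No global reasoning about the culminating path is ever needed.

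Your plan has a real gap at the wrap-around step. You propose to exclude an $R$-contribution in front of $m$ by producing a violation of properness of the culminating boundary. But properness (definition \ref{def-properBoundaryRobots}) concerns pairs of boundary robots with \emph{unoccupied} lattice points between them, and at $T=N^+$ every lattice point of $S$ is occupied; inside $H_1$ the properness of $\widehat{B}$ is therefore nothing more than the smoothness of $\partial S$ and places no constraint on which vertices of $R$ happen to be boundary robots or where their arcs sit on $\widehat{AP}$. A ribbon may legitimately surface on the boundary of $S$ in several disconnected arcs, and your argument gives no way to distinguish the arc containing the leader from the others. The same issue undermines the spatial induction itself: your claim that ``consecutive arcs are genuinely adjacent on the circle'' is not what properness delivers. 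When a child ribbon covers part of $R$, two consecutive vertices of $R$ can contribute arcs separated on $\widehat{AP}$ by an arbitrarily long stretch contributed by descendants, and your induction along $R$ has no mechanism to carry the ``behind $m$'' ordering across such a gap. The paper's temporal induction sidesteps all of this because it records the invariant at the moment each robot of $R$ is placed, \emph{before} the obscuring child ribbons are laid on top.
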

    \begin{proof}
        We shall prove the proposition by induction on epoch $T$. Let $P(n)$ denotes "$\forall x \in AP_{n^-}$, if $x$ is contributed by a robot $u \in R$, then $m$ is in front of $x$". Let $Q(n)$ denotes "$\forall x \in AP_{n^+}$, if $x$ is contributed by a robot $u \in R$, then $m$ is in front of $x$".

        \begin{itemize}
            \item [-] Base case: $Q(0)$ is vacuously true since there is no point on $AP_{0^+}$ which is contributed by robots of $R$.

            \item [-] Inductive step: suppose $Q(n-1)$ is true, we want to show $P(n)$ is true; and suppose $P(n)$ is true, we want to show that $Q(n)$ is true. Since $s[n]$ is in the idle half-plane $H_2$, the non-moving robots in $H_1$ and the affiliated path remain unchanged after $u_n$ starts to move. As a result, $Q(n-1) \rightarrow P(n)$. We are left to show $P(n) \rightarrow Q(n)$.
            In other words, we want to show that all the points contributed by robots of $R$ are behind $m$ after $u_{n}$ stops. We shall discuss the following three cases:
            

            \begin{itemize}
                \item [-] Case 1: $t[n]$ is the first vertex of ribbon $R$.
                \begin{figure}[H]
                    \centering
                    \includegraphics[scale=0.4]{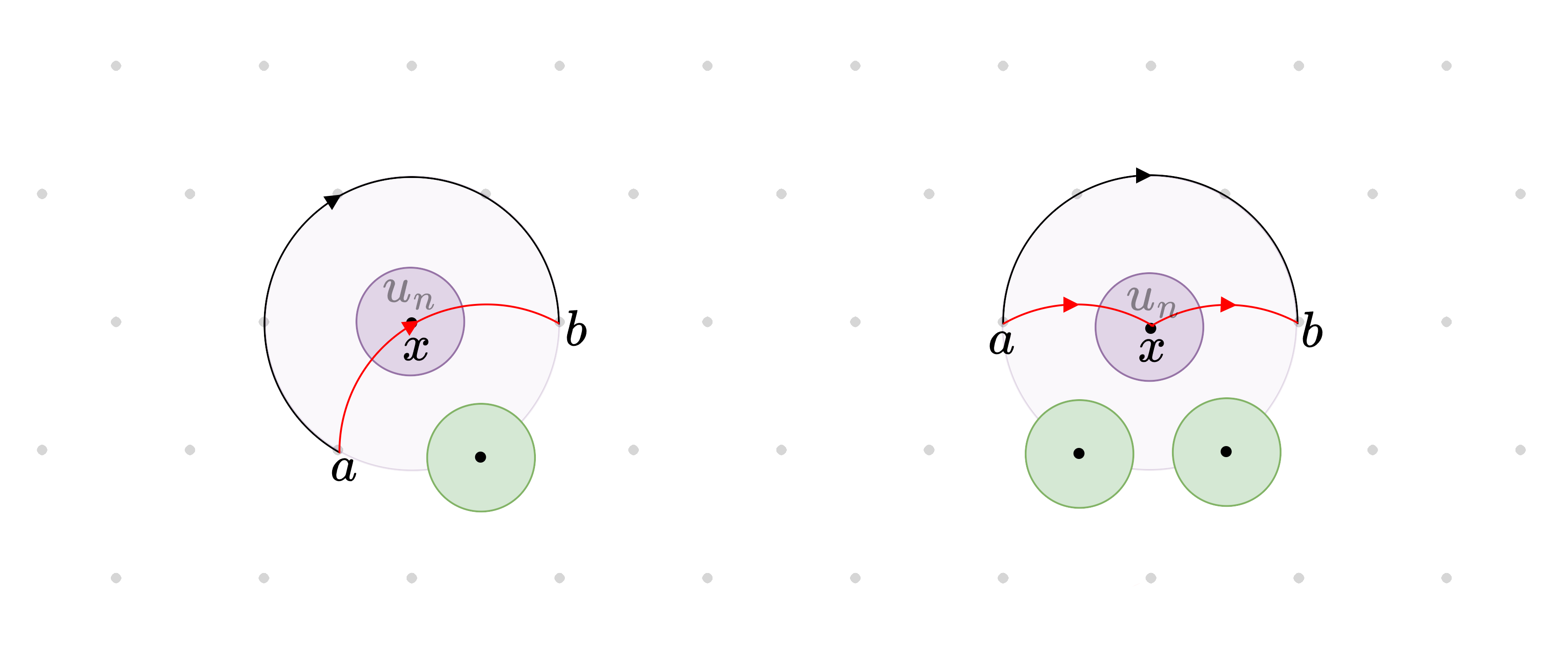}
                    \caption{Left: $x$ has type B neighborhood; Right: $x$ has type A neighborhood }
                \end{figure}
                Recall in the proof of proposition \ref{prop-EFPisCircle}, after $u_n$ stops, the path segment $ab$ that traverses the disk $d(r[n])$ is replaced by the path segment induced by $u_n$. In this case, point $b$ is the merging point of ribbon $R$. Points on the path segment $axb$ are contributed by a robot(s) from $R^{[p]}$. After $u_n$ stops, the path segment $axb$ (in red) is replaced by path segment $ab$ (in black) with the merging point $m=b$ in the front. As a result, the merging point of $R$ is in front of all points of $AP_{n^+}$ that are contributed by robot $u_n$. Since $AP_{n^+}$ and $AP_{n^-}$ only different by the path segment between $a$ and $b$, $P(n) \rightarrow Q(n)$. 
                
                \item [-] Case 2: $t[n]$ is a vertex of $R$ but $t[n]$ is not the first vertex of $R$.
                \begin{figure}[H]
                    \centering
                    \includegraphics[scale=0.4]{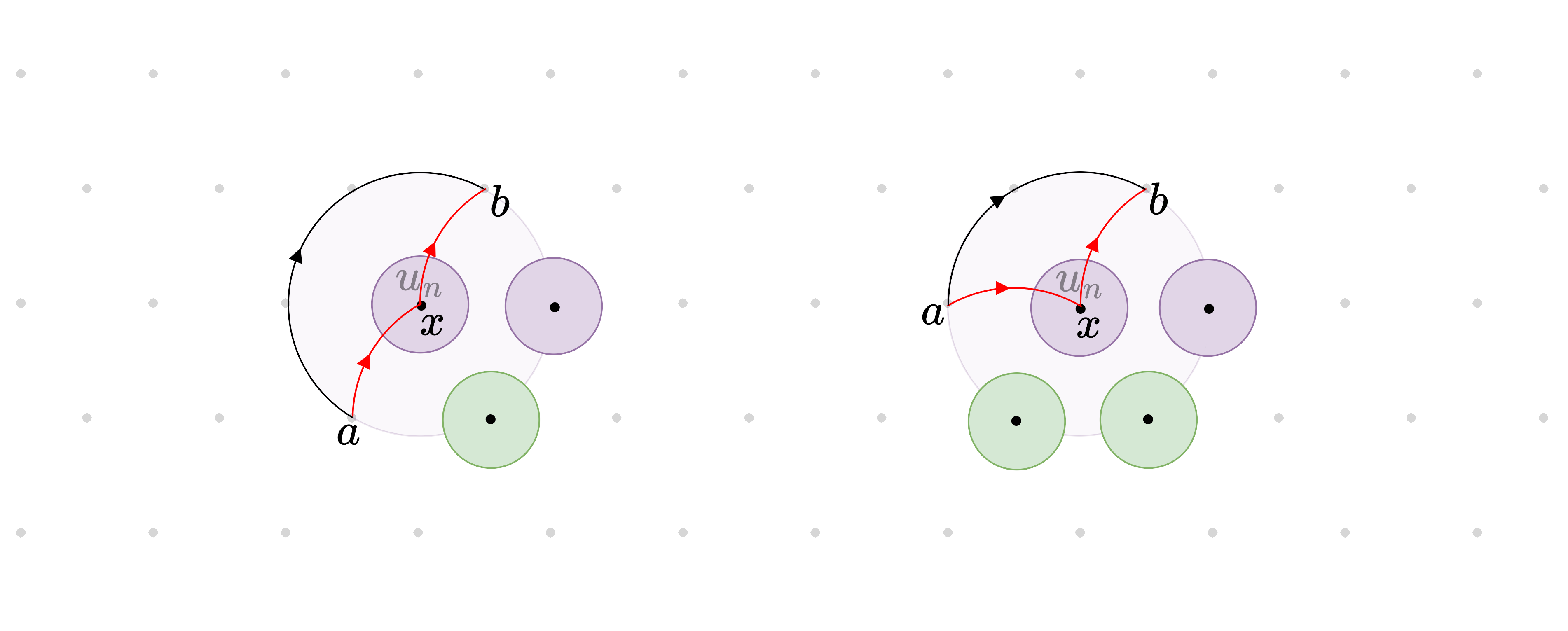}
                    \caption{Left: $x$ has type A neighborhood; Right: $x$ has type B neighborhood}
                \end{figure}
                Notice that points on the arc $ax$ are contributed by a robot that belongs to $R_p$ and points on the arc $xb$ are contributed by a robot that belongs to $R$. After $u_n$ stops, the path segment $axb$ is replaced by path segment $ab$ that is contributed by $u_n$. By the induction hypothesis, $b$ is behind the merging point of ribbon $R$. Since all points on $ab$ are behind $b$, all points on $ab$ are behind the merging point of $R$. As a result, the merging point of $R$ is in front of all points of $AP_{n^+}$ that are contributed by robot $u_n$. Since $AP_{n^-}$ and $AP_{n^+}$ only different by the path segment between $a$ and $b$, $Q(n)$ follows. 

                \item [-] Case 3: $t[n]$ is not a vertex of $R$. If $t[n]$ is a vertex of an ancestor ribbon of $R$, $R$ is unfilled and $Q(n)$ is vacuously true. If $t[n]$ is not a vertex of an ancestor ribbon of $R$, then after $u_n$ stops, the path segment of $AP_{n^-}$ that traverses the disk $d(r[n])$ is replaced by the path segment induced by $u_n \notin R$. As a result, $P(n) \rightarrow Q(n)$.

            \end{itemize}
            
        \end{itemize}
        Since $Q(0)$ is true, by induction, $P(n)$ and $Q(n)$ are true for all $1 \leq n \leq N$.
    \end{proof}

    \begin{proposition}
        \label{prop-mergingPointInFront}
        For ribbon $R$ with merging point $m$, $m$ is in front of all the points on the culminating affiliated path $\widehat{AP}$ that are contributed by robots from $R$.
    \end{proposition}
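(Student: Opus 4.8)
The plan is to reduce this statement entirely to the inductive work already carried out in Proposition \ref{prop-mergingPointInfrontLemma}. Since $\widehat{AP} = AP_{N^+}$ by definition, the $n=N$ instance of that proposition (the statement $Q(N)$ established inside its induction) yields exactly the desired ``in front of'' conclusion \emph{provided} its standing hypothesis is met, namely that the merging point $m$ actually lies on $\widehat{AP}$. Proposition \ref{prop-mergingPointInfrontLemma} carries the qualifier ``if $m$ is on the affiliated path'', whereas the present statement asserts the conclusion unconditionally for the culminating path. Hence the entire task is to discharge that hypothesis at culmination: once $m \in \widehat{AP}$ is verified, the ``in front'' property transfers verbatim.

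To show $m \in \widehat{AP}$, I would first record that at the end of the additive stage every lattice point of $S$ is occupied, since the assembly sequence $t$ enumerates all of $S$ (holes included) and each $u_n$ with $n \le N$ stops inside $S$. In particular the leader vertex $x = L(R)$ of $R$ is occupied at culmination. By the definition of merging point, $m$ is adjacent to $x$ and to a vertex $y$ of the parent ribbon $R^{[p]}$, while $m \notin S$. Because $D \subseteq S$ and all of $S$ is filled, ``$m \notin S$'' forces $m$ to sit strictly outside the entire formation, on the outer boundary of the filled shape; moreover $m$ is unoccupied, being neither a point of $S$ nor a point of the idle region $H_2$ (separated from $H_1$ by the buffer), with the root-adjacent subcases handled by the conventions $R_{-1}=(p_2,p_1)$ and $R_{-2}=(O)$ already used in Proposition \ref{prop-mergingPointInfrontLemma}. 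Being an unoccupied lattice point adjacent to the occupied $x$, the point $m$ belongs to $\widehat{EFS}$, and by Proposition \ref{prop-EFPisCircle} the culminating edge-following path visits every point of $\widehat{EFS}$, so $m \in \widehat{EFP}$.

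It then remains to confirm that $m$ falls on the formation arc of $\widehat{EFP}$ rather than on the idle arc. Since $m$ is adjacent to $x \in S \subseteq H_1$ and lies just outside the filled shape, it is contributed by boundary robots residing in $H_1$, hence it occurs on the clockwise segment of $\widehat{EFP}$ running from $p_6$ to $p_3$, which is precisely $\widehat{AP}$. With $m \in \widehat{AP}$ in hand, Proposition \ref{prop-mergingPointInfrontLemma} applies directly and delivers that $m$ is in front of every point of $\widehat{AP}$ contributed by robots of $R$.

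I expect the main obstacle to be exactly this membership claim $m \in \widehat{AP}$: one must be sure that at culmination $m$ is genuinely unoccupied and lies on the outer, formation-side boundary rather than being absorbed into the interior or coinciding with a seed position. The delicate subcases are the ribbons near the root, where $m$ can be adjacent to a seed robot; these are dispatched by the $R_{-1},R_{-2}$ conventions, after which the generic argument above closes the proof and no fresh induction is required.
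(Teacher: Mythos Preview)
Your proposal is correct and follows essentially the same route as the paper: verify that $m$ lies on $\widehat{AP}$ at culmination, then invoke Proposition~\ref{prop-mergingPointInfrontLemma} at $n=N$. You are in fact more careful than the paper on one point---the paper jumps directly from $m \in \widehat{EFP}$ to $m \in \widehat{AP}$ without arguing that $m$ sits on the formation-side arc rather than the idle-side arc, whereas you explicitly address this. (One minor inaccuracy: the $R_{-1},R_{-2}$ conventions you cite appear in the proof of Proposition~\ref{prop-properBoundary}, not in Proposition~\ref{prop-mergingPointInfrontLemma}.)
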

    \begin{proof}
        First, note that $m$ is on the culminating edge-following path $\widehat{EFP}$ since $m$ is adjacent to a non-moving robot in $S$ at sub-epoch $T = N^+$. As a result, $m$ is on $\widehat{AP}$. By proposition \ref{prop-mergingPointInfrontLemma}, the proposition follows.

    \end{proof}

    \begin{corollary}
        \label{cor-mergingPoint}
        For ribbon $R_i$ with merging point $m$, suppose ribbon $R_j$ is a descendant ribbon of $R_i$ and $R_j$ has merging point $n$, then $m$ is in front of $n$ (w.r.t $\hat{AP}$).
    \end{corollary}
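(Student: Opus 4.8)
The plan is to prove the statement by induction on the length of the chain of ribbons connecting $R_i$ to $R_j$, invoking Proposition \ref{prop-mergingPointInFront} at each single parent--child step and then stitching the steps together using the transitivity of the ``in front of'' relation on $\widehat{AP}$.

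First I would isolate the key local observation: for any ribbon $R$ with parent $R^{[p]}$ and merging point $m(R)$, the point $m(R)$ lies on $\widehat{AP}$ and is \emph{contributed} by a robot belonging to $R^{[p]}$. This is immediate from the definition of merging point. If $x=L(R)$ is the first vertex of $R$, then condition (2) of that definition supplies a vertex $y \in V(R^{[p]})$ with $||y-x||=d$ and $||y-m(R)||=d$. Hence the non-moving robot occupying $y$ (a robot of $R^{[p]}$ in the culminating configuration) is at distance $d$ from $m(R)$, so $m(R)$ is contributed by a robot of $R^{[p]}$; and since $m(R)$ is adjacent to the robot at $x\in S$, the opening argument of Proposition \ref{prop-mergingPointInFront} shows $m(R)\in\widehat{EFP}$, and therefore $m(R)\in\widehat{AP}$.

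Second I would handle the single parent--child step and then the general chain. When $R_j$ is a direct child of $R_i$, i.e. $R_i=R_j^{[p]}$, the observation above shows that $n=m(R_j)$ is a point of $\widehat{AP}$ contributed by a robot of $R_i$; applying Proposition \ref{prop-mergingPointInFront} to $R_i$ then gives that $m=m(R_i)$ is in front of $n$. For the general case, let $R_i=R_{(0)}, R_{(1)}, \dots, R_{(k)}=R_j$ be the chain witnessing $R_i \twoheadrightarrow R_j$, with $R_{(t)}=R_{(t+1)}^{[p]}$. The single-step argument applied to each consecutive pair yields that $m(R_{(t)})$ is in front of $m(R_{(t+1)})$ for every $t$. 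Because $\widehat{AP}$ is a simple arc of $\widehat{EFP}$ from $p_6$ to $p_3$ oriented clockwise, ``in front of'' is a total (hence transitive) order on its points, so chaining these $k$ comparisons gives $m=m(R_{(0)})$ in front of $m(R_{(k)})=n$.

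The part demanding the most care is the single-step claim, specifically confirming that the child's merging point is genuinely a point of $\widehat{AP}$ that is contributed by a robot of the parent ribbon, so that Proposition \ref{prop-mergingPointInFront} applies verbatim; this rests entirely on unwinding the definition of merging point and reusing the membership argument already established in Proposition \ref{prop-mergingPointInFront}. The transitivity step is then routine, since the clockwise orientation makes $\widehat{AP}$ a totally ordered arc and no further geometric analysis is required.
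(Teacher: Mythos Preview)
Your proof is correct and rests on the same key observation as the paper's: the merging point of a ribbon is contributed by a robot of its \emph{parent} ribbon, so Proposition~\ref{prop-mergingPointInFront} applies to give the one-step comparison. The paper's proof is a one-liner that simply asserts ``by the definition of merging point, $n$ is contributed by a robot of $R_i$'' and invokes the proposition; read literally this only covers the case where $R_j$ is a direct child of $R_i$, since the definition of $m(R_j)$ only guarantees adjacency to a vertex of $R_j^{[p]}$, not of an arbitrary ancestor. Your explicit induction along the chain $R_i = R_{(0)} \twoheadrightarrow \cdots \twoheadrightarrow R_{(k)} = R_j$, together with the transitivity of ``in front of'' on the oriented arc $\widehat{AP}$, is exactly what is needed to handle the general descendant case and makes rigorous what the paper leaves implicit.
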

    \begin{proof}
        First, notice that by the definition of merging point, $n$ is contributed by a robot of $R_i$. By proposition \ref{prop-mergingPointInFront}, $m$ is in front of all the points on the affiliated path that are contributed by robots from $R_i$. The corollary follows.
    \end{proof}

\subsection{Reactivation Sequence and Reassembly Sequence}
\label{sect-extendedSequence}

During the subtractive stage, extra robots of each layer move outside $S$ while others re-arrange themselves along the layer to fill up the lattice points in $S \setminus D$. The robots moving out of $S$ move along the perimeter of the MRS and join the idle robots in the idle half-plane $H_2$. We call such robots "recycled robots". The remaining robots of each layer move within the layer and redistribute to occupy lattice points in $S \setminus D$, and we call such robots "rearranged robots". 

As the counterpart of the "activation sequence" and the "assembly sequence", we use the "reactivation sequence" and the "reassembly sequence" to denote which robot in $S$ should become active and where it shall stop during the subtractive stage. At epoch $T = N+k$:

\begin{itemize}
    \item [-] the reactivation sequence $s_r[k]$: the $k$th entry denotes the lattice point at which the robot $u_{(N+k)}$ start to move;
    \item [-] the reassembly sequence $t_r[k]$: the $k$th entry denotes the lattice point at which the robot $u_{(N+k)}$ should stop.
\end{itemize}    

We shall use the orderings induced by the ribbon structure to formulate the reactivation sequence and the reassembly sequence.


\begin{definition}
    The \textbf{reactivation sequence} $s_r[k]$ is defined recursively such that each element $s_r[k]$ is the smallest lattice point in $S$ (w.r.t the ribbon order) on the largest ribbon (w.r.t the complete tree order) which has not been included in ${s_r}_{[1:k-1]}$. 

    
\end{definition}

Notice that every robot in $H_1$ occupies a lattice point in $S$, and the lattice point appears in the reactivation sequence. At epoch $T= n = N+k$, the robot occupying lattice point $s_r[k]$ is the active robot, denoted as robot $u_n$. Also, notice that every lattice point in $S$ appears exactly once in the reactivation sequence. As a result, every stopped robot becomes active exactly once during the subtractive stage. 

To formulate the reassembly sequence, we investigate the recycled robots and the rearranged robots separately:

\begin{definition}
    For robot $u$ that occupies lattice point $x \in S$ at sub-epoch $T = N^+$,
    \begin{itemize}
        \item [-] $u$ is a \textbf{recycled robot} if $ID(x) < c(R(x))$; or
        \item [-] $u$ is a \textbf{rearranged robot} if $ID(x) \geq c(R(x))$.
    \end{itemize} 
\end{definition}


\begin{definition}
    \leavevmode
    \makeatletter
    \@nobreaktrue
    \makeatother
    \begin{enumerate}
            \item The \textbf{rearrangement sequence} is a sequence of lattice points in $S \setminus D$, defined recursively such that the $k$th element is the smallest lattice point in $S \setminus D$ (w.r.t the ribbon order) on the largest ribbon (w.r.t the complete tree order) which has not yet been included in the rearrangement sequence.

            \item The \textbf{recycle sequence} is a sequence containing lattice points in $H_2$, defined recursively such that the $k$th element is the smallest lattice point in $s_{[1:N]}$ (w.r.t the idle ribbon order) on the smallest idle ribbon (w.r.t the idle tree order) which has not yet been included in the recycle sequence.
    \end{enumerate}
\end{definition}

Then we define the reassembly sequence using the rearrangement sequence and recycle sequence:
\begin{definition}
    The re-assembly sequence $t_r[k]$ for $ 1\leq k \leq N$ is generated recursively such that each element $t_r[k]$ satisfies:
    \begin{itemize}
        \item[-] if the lattice position $s_r[k]$ is among the first $c(R(s_r[k]))$ lattice points of the ribbon $R(s_r[k])$, $t_r[k]$ is the first lattice point in the recycle sequence that has not been include in ${t_r}_{[1:k-1]}$;
        \item[-] if $s_r[k]$ is not among the first $c(R(s_r[k]))$ lattice points of the ribbon $R(s_r[k])$, $t_r[k]$ is the first lattice point in the rearrangement sequence that has not been include in ${t_r}_{[1:k-1]}$.
    \end{itemize}
\end{definition}


        

Loosely speaking, the robot at the front of the outermost ribbon becomes active first. If the robot is a recycled robot, it moves out of the shape $S$ and stops in $H_2$ layer by layer; otherwise, the robot is a rearranged robot and it moves to the front of the ribbon and stops at the smallest unoccupied vertex. This property is captured by the following proposition:

\begin{proposition}
    \label{prop-rearrangedOccupyAllPointsOfRibbon}
    At sub-epoch $T = n^-$ with $n=N+k$, suppose $s_r[k] \in V(R)$,
    \begin{itemize}
        \item [-] if the active robot $u_n$ is a recycle robot, then $t_r[k] \in H_2$;
        \item [-] if the active robot $u_n$ is a rearranged robot, then $t_r[k]\in V(R)$ and $t_r[k] \in S \setminus D$, and all vertices of ribbon $R$ between $t_r[k]$ (inclusive) and $s_r[k]$ (exclusive) are unoccupied, that is
        \[ (\forall x \in \{ x\in V(R) | s_r[k] < x \leq t_r[k]\} ) x \notin {t_r}_{[1:k-1]}\]
    
    \end{itemize}
\end{proposition}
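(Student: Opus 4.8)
The plan is to handle the two bullets separately, treating the recycled case as essentially definitional and reserving the real work for the rearranged case. For a recycled robot the conclusion $t_r[k]\in H_2$ follows directly from the definition of the reassembly sequence: in the recycled branch $t_r[k]$ is taken to be the first lattice point of the recycle sequence not yet used, and every entry of the recycle sequence is a point of $s_{[1:N]}$, which lies in $H_2$. The only thing to verify is that the recycle sequence is never exhausted before all recycled robots have been placed; this follows from a global count, since the number of recycled robots is $\sum_{R\in\hat S}c(R)=|D|\le N$, whereas the recycle sequence ranges over the $N$ vacated idle positions $s_{[1:N]}$. Hence the recycled branch is well defined and deposits its robot in $H_2$.

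For the rearranged branch I would first prove ribbon-locality, i.e. $t_r[k]\in V(R)\cap(S\setminus D)$, by a synchronization argument between the reactivation sequence and the rearrangement sequence. Both sequences sweep the ribbons in the same order, namely the largest ribbon first with respect to the complete tree order, and within each ribbon in ribbon order. When the reactivation sequence traverses a ribbon $R$ it emits its first $c(R)$ vertices as recycled robots and its remaining $len(R)-c(R)$ vertices as rearranged robots, while the rearrangement sequence emits exactly the $len(R)-c(R)$ vertices of $V(R)\cap(S\setminus D)$. Since the front vertex of every ribbon is a boundary vertex and hence not in $D$, we always have $c(R)<len(R)$, so these counts are well posed. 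As the per-ribbon counts of rearranged robots and of rearrangement targets agree, an induction over the tree order keeps the two pointers aligned, so that the $j$-th rearranged robot of $R$ is assigned the $j$-th smallest vertex of $V(R)\cap(S\setminus D)$. In particular the target never leaves $R$, which gives $t_r[k]\in V(R)\cap(S\setminus D)$.

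The core of the argument is the unobstructedness statement that every vertex $x$ with $s_r[k]<x\le t_r[k]$ obeys $x\notin{t_r}_{[1:k-1]}$. The key estimate is a rank inequality: if $s_r[k]$ is the $j$-th rearranged vertex of $R$ then it has index $c(R)+j$ in the ribbon order, whereas its target, the $j$-th vertex of $V(R)\cap(S\setminus D)$, has index at most $c(R)+j$ because at most $c(R)$ cavity vertices can precede it. Thus $t_r[k]$ lies weakly in front of $s_r[k]$ and the traversed segment is well defined. I would then inspect the occupancy of that segment at the instant $u_n$ is activated. Every vertex of the segment is activated before epoch $N+k$, and an activated vertex is now empty: if it was recycled its robot left to $H_2$, and if it was rearranged then, by the alignment above, its robot was sent to a vertex of $V(R)\cap(S\setminus D)$ strictly in front of $t_r[k]$ and hence out of the segment. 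Moreover no earlier reassembly target can lie in the segment, because targets on other ribbons are off $R$, recycled targets are in $H_2$, and the earlier rearranged targets on $R$ are precisely the vertices strictly in front of $t_r[k]$. Hence the segment is free of robots and disjoint from ${t_r}_{[1:k-1]}$, which is exactly the claim.

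I expect the clear-path induction to be the main obstacle, since it must simultaneously maintain three coupled invariants: the alignment of the reactivation and rearrangement pointers across the tree order, the rank inequality placing each target weakly in front of its source, and the fact that every earlier rearranged robot of $R$ has vacated toward the front, beyond the current target. The geometric input is light here, essentially only that the front vertex of each ribbon lies in $S\setminus D$; the substance is the bookkeeping induction run simultaneously over the tree order on ribbons and the ribbon order within the active ribbon.
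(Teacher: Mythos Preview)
Your proposal is correct and follows essentially the same approach as the paper: both run an induction over the ribbons in the complete tree order, using the fact that on each ribbon the number of rearranged robots equals $len(R)-c(R)=|V(R)\cap(S\setminus D)|$, so the reactivation and rearrangement pointers stay aligned and never drift across ribbon boundaries. Your treatment is in fact more explicit than the paper's in two respects: you spell out the global count showing the recycle sequence cannot be exhausted, and you articulate the rank inequality (the $j$-th rearranged source has ribbon index $c(R)+j$ while its target has index at most $c(R)+j$) that underlies the ``unoccupied segment'' claim, whereas the paper compresses this into the phrase ``with the same ordering.''
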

\begin{proof}
    Since the complete tree order is a total order of ribbons, the robots in $S$ become active in a ribbon-by-ribbon manner. Let $E$ be the sequence of ribbons induced by the complete tree order such that $E[1]$ is the largest ribbon in the complete tree order and $E[n+1]$ is the next largest ribbon which is smaller than $E[n]$. In other words, the ribbons in sequence $E$ are arranged from the outermost ribbon to the ribbon closest to the seed robots according to the complete tree order.
    
    To prove the proposition, we shall perform induction on the sequence of ribbon $E$. Let $P(i)$ be "for all active robot $u_n$ with starting position $s_r[k] \in V(E[i])$,
    \begin{enumerate}
        \item if $ID(s_r[k]) < c(E[i])$, then $t_r[k] \in H_2$; 
        \item if $ID(s_r[k]) \geq c(E[i])$, then $t_r[k] \in V(R)$ and $t_r[k] \in S \setminus D$. In additionally, all lattice points on ribbon $R$ between $s_r[k]$ (exclusive) and $t_r[k]$ (inclusive) are not in ${t_r}_{[1:k-1]}$."
    \end{enumerate}

    \begin{itemize}
    
        \item [-] Base case: for robots belonging to $E[1]$, the first $c(E[1])$ robots are the recycle robots and by the recycle sequence, they stop and occupy the first $c(E[1])$ lattice points of the recycle sequence. P(1) follows. The remaining robots of $E[1]$ are rearranged robots and they stop and occupy the first $len(E[1]) - c(E[1])$ lattice points of the rearrangement sequence. Since the first $len(E[1]) - c(E[1])$ lattice points of the rearrangement sequence are exactly the vertex of $E[1]$ in $S \setminus D$ with the same ordering, P(1) follows.
        
        \item [-] Inductive step: suppose $P(1),P(2),...,P(n)$ is true, we want to show that $P(n+1)$ is true. First notice that $s_r[\sum_{m=1}^{n}len(E[m])+1]$ to $s_r[\sum_{m=1}^{n+1}len(E[m])]$ are vertices of ribbon $E(n+1)$. For $m = 1,2,...,n$, there are $len(E[m]) - c(E[m])$ rearranged robots stops and occupies $len(E[m]) - c(E[m])$ lattice points in $S\setminus D$ of ribbon $E[m]$. 
            \begin{itemize}
                \item [-] if $c(E[n+1]) = 0$, there are no recycle robots and the rearranged robots at $s_r[\sum_{m=1}^{n}len(E[m])+1]$ to $s_r[\sum_{m=1}^{n}len(E[m])+len(E[n+1])]$ stops at the $(\sum_{m=1}^{n}(len(E[m])- c(E[m]))+1)$th to the $(\sum_{m=1}^{n}(len(E[m]) - c(E[m]))+len(E[n+1]))$th lattice points of the rearrangement sequence, and the lattice points are exactly the vertices of ribbon $E[n+1]$ with the same ribbon order. As a result, the rearranged robots shall start and stop at the same lattice point and $P(n+1)$ follows. 
                
                \item [-] if $c(E[n+1]) \neq 0$,the recycle robots at $s_r[\sum_{m=1}^{n}len(E[m])+1]$ to $s_r[\sum_{m=1}^{n}len(E[m])+c(E[n+1])]$ stops at the $(\sum_{m=1}^{n}c(E[m])+1)$th to $(\sum_{m=1}^{n+1}c(E[m]))$th lattice points of the recycle sequence. $P(n+1).1$ follows. The rearranged robots at $s_r[\sum_{m=1}^{n}len(E[m])+c(E[n+1])+1]$ to $s_r[\sum_{m=1}^{n+1}len(E[m])]$ stops at the $(\sum_{m=1}^{n}(len(E[m]) - c(E[m]))+1)$th to the $(\sum_{m=1}^{n+1}(len(E[m]) - c(E[m])))$th lattice points of the rearrangement sequence, and the lattice points are exactly the vertices in $S \setminus D$ of ribbon $E[n+1]$ with the same ordering. $P(n+1)$ follows.
            \end{itemize}
        
    \end{itemize}

\end{proof}

\begin{proposition}
    \label{prop-boundaryPointsAreOccupied}
    During the subtractive stage, by following the re-activation sequence and the re-assembly sequence, all boundary points of $S$ should be occupied before any leading robot of some ribbon starts to move, that is 
    \[ (\forall N< T \leq 2N) (\exists R \in \hat{S}) (p_{{(T-1)}^+}(u_{T-N}) = F(R)) \Rightarrow (\forall \text{ boundary point } x, x \notin {s_r}_{[1:T-N-1]} \lor x \in {t_r}_{[1:T-N-1]}) \] 
\end{proposition}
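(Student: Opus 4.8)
The plan is to fix a subtractive epoch $T$ with $N < T \le 2N$ and assume the antecedent: at sub-epoch $(T-1)^+$, just before epoch $T$ begins, the robot about to become active rests on the leader (first) vertex $F(R)$ of some ribbon $R$. Since the robot activated at epoch $T$ starts from $s_r[T-N]$, the hypothesis is exactly $s_r[T-N]=F(R)$. The first thing I would record is a purely combinatorial fact about the reactivation sequence: it exhausts the ribbons one at a time in decreasing complete-tree order and begins each ribbon at its leader vertex, so the equality $s_r[T-N]=F(R)$ forces the already-activated prefix to be precisely the union of the vertices of all ribbons strictly larger than $R$. Writing $\mathcal{O}_R=\{Q\in\hat S : Q > R \text{ in the complete tree order}\}$, this reads
\[ \{s_r[1],\ldots,s_r[T-N-1]\}=\bigcup_{Q\in\mathcal{O}_R} V(Q). \]
In particular every ribbon in $\mathcal{O}_R$ has been completely processed (all its robots activated and stopped) by sub-epoch $(T-1)^+$, while no vertex of $R$ or of any ribbon below $R$ has yet appeared in the reactivation sequence.

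Next I would invoke Proposition \ref{prop-rearrangedOccupyAllPointsOfRibbon} as a black box. For any fully processed ribbon $Q\in\mathcal{O}_R$, its recycled robots have departed to $H_2$ and its rearranged robots occupy exactly the vertices of $Q$ lying in $S\setminus D$; equivalently, the vertices of $Q$ occupied at $(T-1)^+$ are exactly $V(Q)\setminus D$, and each such vertex is the stopping position $t_r[\cdot]$ of a rearranged robot that has already halted, hence lies in $\{t_r[1],\ldots,t_r[T-N-1]\}$. It is worth stating explicitly the correspondence on which the whole statement rests: a vertex $x\in S$ is occupied at $(T-1)^+$ if and only if either its original occupant has not yet left ($x\notin\{s_r[1],\ldots,s_r[T-N-1]\}$) or some robot has since stopped on it ($x\in\{t_r[1],\ldots,t_r[T-N-1]\}$), which is exactly the displayed disjunction.

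The argument then closes by a two-way case analysis on the ribbon $Q:=R(x)$ carrying an arbitrary boundary point $x$. If $Q\in\mathcal{O}_R$, the boundary smoothness/hole condition of a proper ribbonization (every boundary point of $S$ is not in $D$) gives $x\in V(Q)\setminus D$, so by the previous step $x$ is occupied and $x\in\{t_r[1],\ldots,t_r[T-N-1]\}$, making the right disjunct true. If instead $Q\notin\mathcal{O}_R$, i.e. $Q\le R$, then no vertex of $Q$ occurs in the activated prefix, so $x\notin\{s_r[1],\ldots,s_r[T-N-1]\}$ and the left disjunct is true; intuitively the original robot at $x$ has not moved, and in the borderline situation $Q=R$ with $x=F(R)$ that robot is precisely the one poised to move at epoch $T$, which is still resting at $x$ at the snapshot $(T-1)^+$.

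I expect the genuine mathematical content to live already in Proposition \ref{prop-rearrangedOccupyAllPointsOfRibbon} and in the boundary-not-in-$D$ property, so the present proof is mainly careful bookkeeping. The two places that need attention are the identification of the activated prefix with $\bigcup_{Q\in\mathcal{O}_R}V(Q)$, which uses that the reactivation sequence enters each ribbon at its leader vertex, and the handling of the ribbon $R$ itself, whose leader robot is about to move but has not yet moved at $(T-1)^+$ and must therefore be counted as occupied through the left disjunct. Neither step is deep, but both must be stated precisely to make the displayed equivalence between ``occupied at $(T-1)^+$'' and the stated disjunction airtight.
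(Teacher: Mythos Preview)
Your proposal is correct and follows essentially the same line as the paper's proof: both rely on the fact that the reactivation sequence exhausts ribbons one at a time in decreasing complete-tree order, invoke Proposition~\ref{prop-rearrangedOccupyAllPointsOfRibbon} together with a counting argument to conclude that every processed ribbon $Q$ ends with exactly its vertices in $S\setminus D$ occupied, and then use the ``boundary points are not in $D$'' clause of proper ribbonization to finish. The only cosmetic difference is that the paper packages this as an induction on an index sequence $I[k]$ picking out the leader-vertex moments, whereas you directly identify the activated prefix with $\bigcup_{Q>R}V(Q)$ and do a two-case split on $R(x)$; the content is identical.
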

\begin{proof}

     Let sequence $I[k]$ be an index sequence of selects the first vertex of each ribbon in the reactivation sequence $s_r$. Notice that $I[k]$ follows the complete tree order in the sense that each element appears earlier in the sequence $I[k]$ is smaller with respect to the complete tree order. Let $Q$ denote the set containing all boundary points of $S$.
     
     We shall prove the proposition by induction. Let $P(n)$ denotes
     \[ (\forall x \in Q) (x \notin {s_r}_{[1:I[n]-1]}) \lor (x \in {t_r}_{[1:I[n]-1]}) \]

     \begin{itemize}
        \item [-] Base case: $P(1)$ is vacuously true since all lattice points in $S$ are occupied.
        \item [-] Inductive step: Suppose $P(n)$ is true, we want to show that $P(n+1)$ is true. In other words, we want to show that all lattice points in $Q$ are occupied before the robot at lattice point $s_r[I[n]]$ starts to move.
        
        By the induction hypothesis, all points in $Q$ are occupied before the robot at lattice point $s_r[I[n]]$ starts to move. In the reactivation sequence, all lattice points after lattice point $s_r[I[n]]$ and before lattice point $s_r[I[n+1]]$ belong to ribbon $R(s_r[I[n]])$. The rearranged robots belonging to ribbon $R(s_r[I[n]])$ stop at lattice points of the same ribbon by proposition \ref{prop-rearrangedOccupyAllPointsOfRibbon}. Since there are exactly the same number of rearranged robots and the number of lattice points in $S \setminus D$ of ribbon $R(s_r[I[n]])$, all lattice points in $S \setminus D$ of the ribbon $R(s_r[I[n]])$ are occupied before the robot at $s_r[I[n+1]]$ starts to move. Since $Q$ does not contain any lattice point in $D$ by the proper ribbonization of $S$, $P(n+1)$ follows.

     \end{itemize}

\end{proof}

\subsection{Path for the Subtractive Stage}
\label{sect-pathPlanningForSubtractive}

At epoch $T= n = N+k$ of the subtractive stage, the robot at lattice point $s_r[k]$ starts to move. By following the reactivation sequence and the reassembly sequence, the active robot $u_{u}$ starts its movement from lattice point $s_r[k]$ and is expected to stop at lattice point $t_r[k]$. In this section, we investigate the path for each active robot $u_n$ of the subtractive stage. There are two types of path: for the first $c(R(s_r[k]))$ robots of ribbon $R(s_r[k])$ (i.e., the recycled robots), they move out the shape $S$ and reassemble in $H_2$; for the remaining robots of ribbon $R$ (i.e., the rearranged robots), they redistribute on the ribbon $R(s_r[k])$ and occupy all lattice points of the ribbon in $S \setminus D$. In section \ref{subsect-pathRecycle}, we investigate the paths for the recycled robots and prove the unobstructiveness of the paths. In section \ref{subsect-pathRearrange}, we investigate the paths for the rearranged robots and prove the unobstructiveness.


\subsubsection{Motion of Recycled Robots}
\label{subsect-pathRecycle}

For a recycle robot $u_n$ with $n = N + k$, suppose $s_r[k]$ is a vertex of ribbon $R$ and $R$ has a parent ribbon $R^{[p]}$. The active robot first rotates around the parent ribbon $R^{[p]}$ while keeping a distance $d$ from the nearest robot on $R^{[p]}$ until it reaches the first vertex of $R$. From the first vertex $x$, the robot continues its rotation and reaches the merging point $m(R)$ of the ribbon $R$. Finally, $u_n$ moves clockwise along the perimeter of the MRS and stops in $H_2$. 

    

\begin{figure}[H]
    \centering
    \includegraphics[scale=0.4]{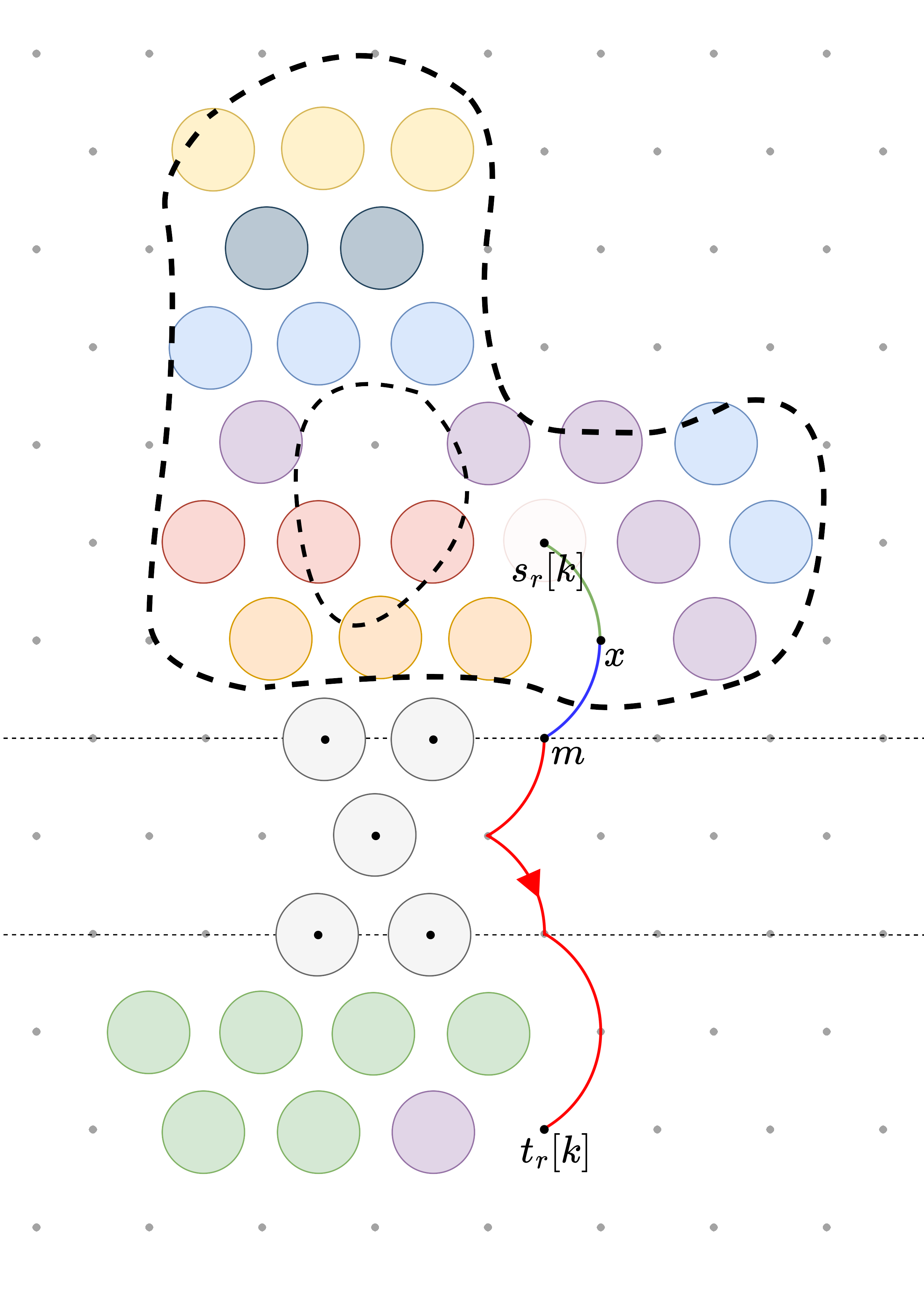}
    \caption{Illustration of the recycle path of recycle robot $u_n$: the interior path is denoted with green; the bridging path is denoted with blue; the exterior path is denoted with red}
\end{figure}

\begin{definition}
    At epoch $T=n$, for a recycled robot $u_n$ with $n = N + k$, suppose $u_n$ belongs to ribbon $R$ before activation, define the \textbf{recycle path} of $u_n$ that consists of three path segments as follows:
    \begin{enumerate}
        \item \textbf{the interior path}: starts from its starting lattice point $s_r[k]$, and rotates clockwise around the nearest robot of the parent ribbon $R^{[p]}$ with distance $d$ until it reaches the first vertex $x$ of ribbon $R$;
        \item \textbf{the bridging path}: starts from the first vertex of ribbon $R$ and stops at the merging point $m(R)$ of ribbon $R$ by rotating clockwise around the nearest robot of the parent ribbon $R^{[p]}$ with distance $d$;
        \item \textbf{the exterior path}: starts from the merging point $m(R)$ and stops at $t_r[k]$ by rotating clockwise around the perimeter of the MRS while keeping distance $d$ to the nearest boundary robot.
    \end{enumerate}

\end{definition}

Next, we shall show that the recycle path is unobstructed by investigating the three path segments of the recycle path individually. 


\begin{proposition}
    \label{prop-interiorPath}
    For any recycled robot $u_n$ with $n = N + k$, its interior path is unobstructed.
\end{proposition}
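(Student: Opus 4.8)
The plan is to reduce unobstructedness to a statement about which lattice points are occupied at the instant $u_n$ becomes active, and then to read off the conclusion from the local lattice geometry, reusing the clearance $d=\tfrac{2r}{\sqrt{3}-1}>2r$ that already powered the additive stage. First I would pin down the configuration at sub-epoch $T=(N+k)^-$. Since the reactivation sequence exhausts each ribbon from its front (smallest) vertex to its back before moving to the next ribbon, which is smaller in the complete tree order, and since the parent ribbon $R^{[p]}$ is an ancestor of $R$ and therefore strictly smaller than $R$ in that order, $R^{[p]}$ has not yet been processed and is still entirely occupied (every vertex of $S$, including the cavity points, is occupied at the end of the additive stage). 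Simultaneously, every vertex of $R$ strictly in front of $s_r[k]$ is a recycled robot that has already been activated and has left $S$, by Proposition \ref{prop-rearrangedOccupyAllPointsOfRibbon}, so all of those positions are now vacant. These two facts form the backbone of the argument.

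Next I would identify the interior path concretely. By the ribbon-following behaviour, $u_n$ only ever selects as its rotation centre a robot of gradient $g(R)-1$, i.e. a robot of $R^{[p]}$; because $R^{[p]}$ is fully occupied, such a centre exists throughout the traversal and the moving robot's own gradient stays equal to $g(R)$. The lattice points lying at distance $d$ from $R^{[p]}$ on its clockwise side, between $s_r[k]$ and the leader vertex $x=L(R)$, are exactly the chain of $R$-vertices in front of $s_r[k]$; the clockwise orbit visits them in order, and they are all vacant by the first step. Hence the interior path is well defined, moves the robot away from the still-occupied vertices behind it, and terminates at $x$ (degenerating to a point when $s_r[k]=L(R)$).

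The remaining and genuinely geometric step is to certify that no robot ever comes within the collision distance $2r$ of this orbit. The rotation centres sit on $R^{[p]}$ at distance exactly $d>2r$ and so cannot obstruct; consecutive centres are adjacent on the path graph $R^{[p]}$, and non-adjacent vertices of a ribbon are never lattice-adjacent and hence lie at distance at least $\sqrt{3}d$, so the orbit never threads a gap narrower than $d$. The only other robots near the path are those on child ribbons of $R$ (gradient $g(R)+1$), whose $S\setminus D$ vertices have already been refilled; these branch off the vacated front vertices of $R$ on the \emph{outer} side of the $R$-layer. I would verify, case by case on the type-$A$ and type-$B$ neighbourhoods of each vacated vertex $v\in V(R)$ using Proposition \ref{prop-typeOfNeighborhood}, that every point the robot sweeps while passing $v$ stays at distance at least $d$ from any occupied child vertex, and that the rotation centre transfers cleanly from one parent robot to the next at the bends of $R^{[p]}$. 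This local bookkeeping is the main obstacle, but it is exactly the clearance calculation already performed for the edge-following path in Proposition \ref{prop-EFPisCircle}, specialised to a single ribbon. Assembling the three steps yields that the interior path from $s_r[k]$ to $L(R)$ is unobstructed.
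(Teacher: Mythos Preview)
Your approach matches the paper's: first argue that the vertices of $R$ in front of $s_r[k]$ are vacant (the paper also cites Proposition~\ref{prop-rearrangedOccupyAllPointsOfRibbon} for this), then check clearance on each arc between consecutive vacant vertices. The paper, however, dispatches the geometric step with a single computation rather than a type-$A$/type-$B$ case analysis: the arc from $x$ to $y$ is centred at a parent-ribbon robot $v$, and the nearest lattice point to that arc other than $x,y,v$ is the child-side vertex $w$ adjacent to both $x$ and $y$, which lies at distance $\sqrt{3}d$ from $v$; hence its closest approach to the arc is $\sqrt{3}d - d = d(\sqrt{3}-1) = 2r$, exactly the collision threshold. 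No further casework or appeal to Proposition~\ref{prop-EFPisCircle} is needed.

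One concrete slip to fix: your target bound ``at distance at least $d$ from any occupied child vertex'' is too strong and would fail at exactly the vertex $w$ above, where the clearance drops to $2r<d$. This is not a gap in the strategy---$2r$ is all you need---but if you tried to verify the inequality as stated, the case analysis would not close. The choice $d=\tfrac{2r}{\sqrt{3}-1}$ is precisely what makes the clearance equal $2r$ rather than something larger.
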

\begin{proof}
    Notice that all lattice points in $R$ before $s_r[k]$ are unoccupied since all robots that occupied such lattice points have moved out of $S$ and stopped in $H_2$ by proposition \ref{prop-rearrangedOccupyAllPointsOfRibbon}. It is left to show that the curve is unobstructed between each adjacent lattice point on the path. Suppose two adjacent lattice points $x,y$ are vertices of ribbon $R$, the curve between $x$ and $y$ with distance $d$ from robot $v$ (belonging to the parent ribbon of $R$) is shown below.

    \begin{figure}[H]
        \centering
        \includegraphics[scale=0.6]{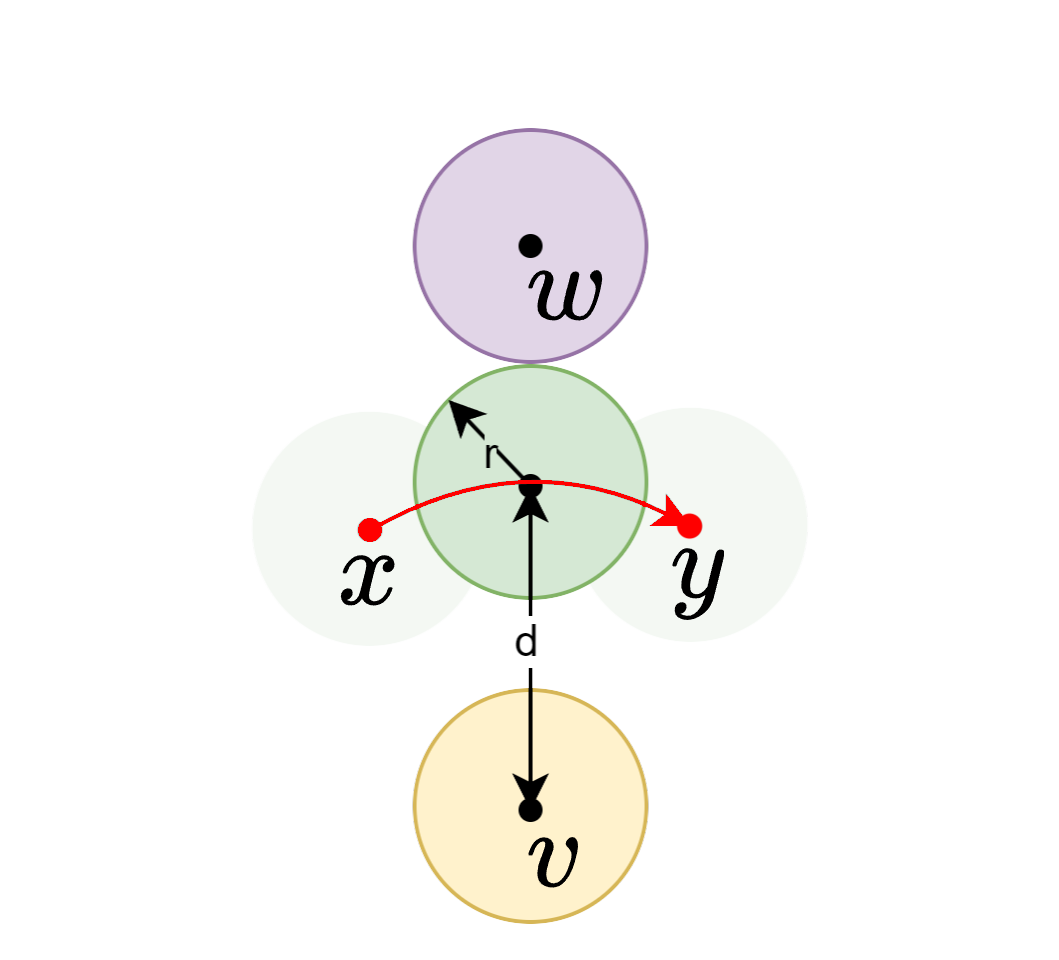}
        \caption{Path (in red) between lattice point $x$ and $y$}
    \end{figure}

    Notice that any robot at a distance larger than or equal to $2r$ from all points of the curve $xy$ could not obstruct the path. Since $d = \frac{2r}{\sqrt{3}-1}$, the nearest robot to the curve is $2*d*\frac{\sqrt{3}}{2} - d = 2r$ from the curve. The proposition follows.
\end{proof}

\begin{proposition}
    \label{prop-bridgingPath}
    For any recycle robot $u_n$ with $n = N + k$, its bridging path is unobstructed.
\end{proposition}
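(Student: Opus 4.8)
The plan is to reduce the bridging path to a single circular arc of central angle $\pi/3$ and then re-use the geometric clearance estimate already established for the interior path. Recall that the merging point $m = m(R)$ satisfies $\|m - x\| = d$ and admits a vertex $y$ of the parent ribbon $R^{[p]}$ with $\|y - x\| = \|y - m\| = d$, where $x = L(R)$ is the first vertex of $R$; moreover, as shown in the proof that every ribbon has a unique merging point, $m$ lies on the clockwise side of $x$ with respect to $y$. Hence $x$ and $m$ are two consecutive neighbors of $y$, and the clockwise ribbon-following motion around $R^{[p]}$ carries the active robot from $x$ to $m$ along the single arc of radius $d$ centered at $y$. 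It therefore suffices to verify three things: that the pivot $y$ is occupied throughout the epoch, that this arc is free of obstructing robots, and that the destination $m$ is unoccupied when the robot arrives.

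First I would argue that $y$ is occupied. Since the reactivation sequence $s_r$ exhausts the ribbons in decreasing complete-tree order, and $R^{[p]} < R$ because $R^{[p]} \twoheadrightarrow R$, every robot of $R^{[p]}$ is reactivated strictly after every robot of $R$. During the subtractive stage a rearranged robot stays on its own ribbon (Proposition \ref{prop-rearrangedOccupyAllPointsOfRibbon}) and a recycled robot leaves to $H_2$, so no robot ever enters $R^{[p]}$ from elsewhere. Consequently, at epoch $T = N+k$ the parent ribbon is still exactly as it was in the culminating state, namely fully occupied; in particular $y$ is occupied, so the pivot exists.

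Next, the geometry of the arc is identical to that of a single step of the interior path in Proposition \ref{prop-interiorPath}: $x$ and $m$ are adjacent lattice points both at distance $d$ from the pivot $y$. The only lattice point other than $y$ lying within $2r$ of this arc is the opposite point $w = x + m - y$, whose minimum distance to the arc equals $d(\sqrt{3}-1) = 2r$ by the same computation as before. Hence no robot, occupied or not, can intrude into the tube of radius $r$ swept by the moving robot, and the arc is unobstructed.

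The remaining and most delicate point is that $m$ is free when the robot reaches it. Since $m \notin S$, no non-moving robot residing in $S$ occupies it, and $m$ is far from both the seed robots and the idle half-plane. The only robots that ever visit $m$ are recycled robots exiting through it; but the single-active-robot epoch model guarantees that each such robot has completed its entire recycle path and come to rest in $H_2$ before the current robot is activated, so $m$ is vacated. I expect this coordination argument---tying the emptiness of $m$ to the processing order and to Proposition \ref{prop-mergingPointInFront} and Corollary \ref{cor-mergingPoint}, which place $m(R)$ ahead of the merging points of all descendant ribbons on the affiliated path---to be the main obstacle, since it requires ruling out interference from recycled robots of descendant ribbons that exited in earlier epochs. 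Once these three facts are in place, the bridging path is well-defined and unobstructed, completing the proof.
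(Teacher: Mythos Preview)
Your approach is essentially the paper's: treat the bridging path as a single arc of angle $\pi/3$ around a vertex $y$ of $R^{[p]}$, note that $m$ is unoccupied, and reuse the clearance estimate from Proposition~\ref{prop-interiorPath}. Your final worry is unwarranted, though: the paper disposes of the emptiness of $m$ in one clause, since $m\notin S$ by definition and at sub-epoch $n^-$ every non-moving robot lies in $S$, in $H_2$, or at a seed position, so neither Proposition~\ref{prop-mergingPointInFront} nor Corollary~\ref{cor-mergingPoint} is needed here (those are used only for the exterior path).
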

\begin{proof}

    Suppose $u_n$ belongs to ribbon $R$ before moving, and suppose the first vertex of ribbon $R$ is $x$ and the merging point of ribbon $R$ is $m(R)$. The arc from $x$ to $m(R)$ can be viewed as an extension of the interior path. Since $m$ is unoccupied, the bridging path is unobstructed by the same arguments as in the proof of proposition \ref{prop-interiorPath}.

\end{proof}

\bigskip
To prove the correctness of the exterior path, notice that the recycled robots rotate around the perimeter of the MRS and stop in the idle half-plane layer by layer. This procedure could be viewed as the recycle robots repeating the additive stage and assembling in the half-plane $H_2$. To reuse the results in section \ref{sect-motionPlanning}, we redefine the edge-following set and corresponding edge-following path for the subtractive stage. By the similar arguments as in section \ref{sect-motionPlanning}, the redefined edge-following path is homeomorphic to a circle, and the exterior path is exactly a path segment of the redefined edge-following path. We conclude that the exterior path is unobstructed.

To reuse the results from the additive stage, we define virtual robots as placeholders so that there is no unoccupied lattice point within $S$:

\begin{definition}
    A lattice point $x \in S$ is occupied by a \textbf{virtual robot} iff $x$ is not occupied by any non-moving robot. We say a lattice point is \textbf{virtually-occupied} if it is occupied by a virtual robot.
\end{definition}

\begin{definition}
    \leavevmode
    \makeatletter
    \@nobreaktrue
    \makeatother
    \begin{enumerate}
        \item a \textbf{virtual boundary robot} is a virtual robot occupies a boundary point of $S$; and
        \item the \textbf{extended set of boundary robot} $\overline{B_{n^-}}$ and $\overline{B_{n^+}}$ is the collection of all boundary robots and virtual boundary robots at sub-epoch $T = n^-$ and $T = n^+$, respectively.
    \end{enumerate}
\end{definition}

\begin{definition}
    At sub-epoch $T = n^-$ with $n > N+k$, a lattice point $x \in L$ is an \textbf{extended edge-following point} iff:
    \begin{enumerate}
        \item $x$ is not occupied or virtually-occupied; and
        \item $\exists y \in L$ such that $y$ is occupied or pseudo-occupied and $y$ is adjacent to $x$.
    \end{enumerate}
    The \textbf{extended edge-following set}, denoted as $\overline{EFS_{n^-}}$, is the set that contains all the extended edge-following points. The same definition applies to the extended edge-following points at $T = n^+$ and the corresponding extended edge-following set $\overline{EFS_{n^+}}$.
\end{definition}

Notice that the extended edge-following points are the same as edge-following points if we consider all lattice points in $S$ occupied.


\begin{proposition}
    \label{prop-startEndBothInEFS2}
    At epoch $T = n = N + k$ and suppose $s_r[k]$ belongs to ribbon $R$, then $m(R) \in \overline{EFS_{n^-}}$ and $t_r[k] \in \overline{EFS_{n^-}}$. 
\end{proposition}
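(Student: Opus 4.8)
The plan is to check the two defining conditions of an extended edge-following point separately for the merging point $m(R)$ and for the stopping position $t_r[k]$, working in the recycled-robot setting of this subsection, so that $t_r[k] \in H_2$ by Proposition \ref{prop-rearrangedOccupyAllPointsOfRibbon}. The key structural observation is that a point $x$ lies in $\overline{EFS_{n^-}}$ iff (i) $x$ is neither occupied nor virtually-occupied, and (ii) $x$ is adjacent to some occupied or virtually-occupied lattice point. Because every lattice point of $S$ counts as occupied or virtually-occupied at every sub-epoch, condition (ii) will be easy to satisfy through a neighbor lying in $S$; the real content is condition (i), which is sensitive to exactly where each of the two points sits relative to $S$ and $H_2$.

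For $m(R)$, I would first note that $m(R) \notin S$ by the definition of the merging point, so it is not virtually-occupied. Next I would argue that no robot ever comes to rest at $m(R)$: every stopping position is either a lattice point of $S$ (additive halts and rearranged halts) or a recycle target, and the recycle targets form a subset of $s_{[1:N]} \subseteq H_2$; since $m(R)$ belongs to neither set, and since only $u_n$ is in motion at sub-epoch $n^-$, the point carries no non-moving robot, giving condition (i). Condition (ii) is then immediate, because $m(R)$ is adjacent to the first vertex $x \in V(R) \subseteq S$ (and to the vertex $y \in V(P(R))$ supplied by the merging-point definition), and $x$ is occupied or virtually-occupied as a point of $S$. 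Hence $m(R) \in \overline{EFS_{n^-}}$.

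For $t_r[k]$, since $u_n$ is recycled, $t_r[k] \in H_2$ is an idle site from $s_{[1:N]}$ selected by the recycle sequence as the smallest such point, on the smallest idle ribbon, not yet appearing in ${t_r}_{[1:k-1]}$. Being unused it is unoccupied, and being in $H_2$ it is not virtually-occupied, which gives condition (i). For condition (ii) I would show that the refilling of $H_2$ proceeds inside-out: the never-vacated idle sites form a connected core, and each recycle target is adjacent either to the immediately preceding (smaller) site on its own idle ribbon or, when it is the first vertex of its idle ribbon, to a vertex of the parent idle ribbon—each of which is already occupied by the time $t_r[k]$ is reached. This is the exact analogue, transported to the idle-ribbon tree, of the adjacency argument used for $t[n]$ in Proposition \ref{prop-startEndBothInEFS}, and it leans on the properness of the ribbonization of $H_2$ (each idle ribbon has length at least two and is adjacent to its parent).

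I expect condition (ii) for $t_r[k]$ to be the main obstacle, since it is the only place a global invariant about the growing occupied region in $H_2$ is needed. I would discharge it by a short induction on $k$ that maintains the occupied idle sites as the union of the never-vacated core with ${t_r}_{[1:k-1]}$ and shows this set is always an initial segment of the idle orders, so that the recycle sequence mirrors a proper additive build-up in $H_2$ and every newly filled site has an occupied neighbor. The remaining point—that $m(R)$ is genuinely never occupied, even by a recycled robot of some already-processed descendant ribbon—is secondary and is settled by the single observation that recycled robots stop only in $H_2$ whereas $m(R)$ is adjacent to $S$ and lies outside $H_2$.
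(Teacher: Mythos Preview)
Your proposal is correct and follows the same skeleton as the paper's proof: for $m(R)$ use $m(R)\notin S$ together with adjacency to the first vertex of $R$; for $t_r[k]$ use $t_r[k]\in H_2$ together with adjacency to a vertex of the parent idle ribbon. The paper's argument is in fact terser than yours---it does not explicitly verify that $m(R)$ is unoccupied by a non-moving robot, nor does it justify why the neighboring parent-idle-ribbon vertex of $t_r[k]$ is occupied---so the inductive step you flag as the ``main obstacle'' is a point the paper simply asserts. Your extra care (stopping positions lie only in $S$ or $H_2$, and the recycle sequence refills $H_2$ as an initial segment of the idle orders) is the right way to make those steps rigorous, but it does not constitute a different approach.
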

\begin{proof}
    Since the merging point $m(R) \notin S$ and $m$ is adjacent to the first vertex of $R$, $m(R)$ belongs to the extended edge-following set $\overline{EFS_{n^-}}$. By the reassembly sequence, $t_r[k] \in H_2$, and $t_r[k]$ is adjacent to a lattice point on the parent ribbon of $R(t_r[k])$. As a result, $t_r[n]$ is in $\overline{EFS_{n^-}}$. 
\end{proof}

\begin{proposition}
    \label{prop-properBoundaryRobots2}
    Given a proper ribbonization $\hat{S}$ with the previously defined initial configuration, activation sequence, assembly sequence, reactivation sequence and reassembly sequence, the extended set of boundary robots is proper for $N< n \leq 2N$.
\end{proposition}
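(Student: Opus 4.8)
The plan is to mirror the inductive argument of proposition \ref{prop-properBoundary}, but the decisive simplification here is that the virtual robots make the extended configuration decompose into two non-interacting pieces. First I would observe that, directly from the definition of virtual robot, every lattice point of $S$ is occupied (really or virtually) at every sub-epoch of the subtractive stage; hence the extended occupied region is exactly $(\text{all of } S) \cup (\text{seed robots}) \cup (\text{the robots currently in } H_2)$. Because every lattice point of $H_2$ is at distance at least $2\sqrt{3}d$ from every lattice point of $H_1$ (the estimate used in Case 3 of proposition \ref{prop-properBoundary}), no boundary robot lying in $S$ can be within $\sqrt{3}d$ or $2d$ of a boundary robot in $H_2$. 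Since both clauses of definition \ref{def-properBoundaryRobots} only constrain pairs at distance $\sqrt{3}d$ or $2d$, properness of $\overline{B_{n^-}}$ and $\overline{B_{n^+}}$ therefore splits into two independent claims: the $S$-part of the extended boundary is proper, and the $H_2$-part is proper.

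For the $S$-part, I would note that ``all of $S$ occupied'' is geometrically identical to the culminating configuration $\widehat{B} = B_{N^+}$, in which the additive stage had already filled every point of $S$ (holes included) with real robots. Since $B_{N^+}$ is proper by proposition \ref{prop-properBoundary}, and replacing the real hole-robots by virtual ones changes neither which points are boundary points nor the pairwise distances between occupied points, the $S$-part of the extended boundary is proper, and crucially it stays fixed throughout the subtractive stage.

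For the $H_2$-part, the key observation is that the recycled robots refill $H_2$ in the exact reverse of the order in which the additive stage emptied it. I would prove the short combinatorial lemma that the recycle sequence, restricted to the vacated positions $s_{[1:N]}$, equals the reversal of the activation sequence: both the idle tree order and the idle ribbon order are total, the activation sequence enumerates $s_{[1:N]}$ by decreasing ribbon then decreasing point, while the recycle sequence enumerates the same set by increasing ribbon then increasing point, so the two lists are reverses of one another. It follows that after $j$ recycled robots have arrived, the set of occupied $H_2$ lattice points coincides with that at the additive sub-epoch $(N-j)^+$; since the $H_2$-part of the extended boundary depends only on this occupancy together with the fixed seed robots, it is geometrically identical to the $H_2$-part of $B_{(N-j)^+}$, which is proper by proposition \ref{prop-properBoundary}.

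Finally I would package these facts into an induction on $n$ exactly parallel to proposition \ref{prop-properBoundary}. A rearranged robot moves within $S$: a virtual robot instantly occupies its vacated source, and its target in $S \setminus D$ merely swaps a virtual robot for a real one, so the extended configuration is geometrically unchanged and properness is inherited. A recycled robot removes nothing from the extended $S$-region and adds one robot to $H_2$, so at $n^-$ and $n^+$ the only thing to verify is the $H_2$-part, handled by the correspondence above. The main obstacle I anticipate is the bookkeeping: tracking the moving robot's exclusion from the boundary sets across the two sub-epochs and pinning down the off-by-one in the occupancy correspondence (whether $j$ placed robots match $(N-j)^+$ or $(N-j+1)^-$). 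This step leans on proposition \ref{prop-rearrangedOccupyAllPointsOfRibbon} and proposition \ref{prop-boundaryPointsAreOccupied} to certify that at every instant the holes and the not-yet-refilled interior positions are precisely the points carrying virtual robots, so that the extended region is genuinely the full $S$ at each sub-epoch.
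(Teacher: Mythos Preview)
Your approach is correct and genuinely different from the paper's. The paper runs a short induction: the base case $P(N+1)$ holds because the extended boundary coincides with the culminating $\widehat{B}$ (a virtual robot replaces $u_{N+1}$ in $S$); for $Q(n)\rightarrow P(n+1)$ the starting position $s_r[n+1-N]$ lies in $S$, so again a virtual robot takes over and the extended boundary is unchanged; and for $P(n)\rightarrow Q(n)$ the paper simply invokes ``the same arguments as in the proof of proposition \ref{prop-properBoundary}'', relying implicitly on the symmetry between the ribbon structure in $H_1$ and the idle-ribbon structure in $H_2$ to handle a recycled robot stopping. Your route replaces that last appeal by an explicit time-reversal bijection: since the idle tree and idle ribbon orders are total, the recycle sequence is the reverse of the activation sequence, so each subtractive $H_2$-configuration coincides with some additive $B_{(N-j)^+}$ already certified proper by proposition \ref{prop-properBoundary}. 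This buys you an argument that never reopens the case analysis, at the price of the extra combinatorial lemma and the off-by-one bookkeeping you flag; the paper's version is terser but leaves more to the reader. One small point: your closing appeal to propositions \ref{prop-rearrangedOccupyAllPointsOfRibbon} and \ref{prop-boundaryPointsAreOccupied} is unnecessary, since by definition a virtual robot occupies every lattice point of $S$ not occupied by a non-moving robot, so the extended region in $H_1$ is all of $S$ tautologically at every sub-epoch.
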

\begin{proof}
    We shall prove this by induction. Let $P(n)$ be "the extended set of boundary robots $\overline{B_n^-}$ is proper". Let $Q(n)$ be "the extended set of boundary robots $\overline{B_n^+}$ is proper".

    First, notice that the culminating set of boundary robots $\widehat{B}$ is proper. Since the extended set of boundary robots agrees with the culminating set of boundary robots in $H_1$, the base case $P(N+1)$ is true.
    
    Now suppose $P(n)$ is true, we want to show that after $u_{n}$ stops, the extended set of boundary robots is proper. By the same arguments in the proof of proposition \ref{prop-properBoundary}, $Q(n)$ follows. 
    
    Suppose $Q(n)$ is true, notice that after $u_{n+1}$ becomes active, the extended set of boundary robots remains unchanged since $s_r[n+1-N] \in H_1$. As a result, $P(n+1)$ is true.
    
    Since $P(N+1)$ is true, by induction, $P(n)$ and $Q(n)$ are true for all $N<n \leq 2*N$.
\end{proof}

\begin{definition}
    \label{def-EFP2}
    At sub-epoch $T = n^-$ with $n = N+k$, define the \textbf{extended edge-following path} $\overline{EFP_{n^-}}$ as the path consisting of all points that are at a distance $d$ from the nearest occupied or virtually-occupied lattice point. Same applies for $\overline{EFP_{n^+}}$ at $T = n^+$.
\end{definition}

By the same arguments as in the proof of proposition \ref{prop-EFPisCircle}, we obtain the following proposition:

\begin{proposition}
    \label{prop-EFPisCircle2}
    The extended edge-following path is always homeomorphic to a circle, and the extended edge-following path goes through all the points of the extended edge-following set.
\end{proposition}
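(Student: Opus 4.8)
The plan is to mirror the inductive argument used for Proposition~\ref{prop-EFPisCircle}, exploiting the virtual-robot abstraction so that, from the point of view of the extended configuration, the subtractive stage looks like an additive process taking place in $H_2$. Concretely, I would induct over the sub-epochs $N < n \le 2N$, letting $P(n)$ assert that $\overline{EFP_{n^-}}$ is homeomorphic to a circle and passes through every point of $\overline{EFS_{n^-}}$, and $Q(n)$ the analogous statement at $T=n^+$. The two ingredients that make the additive proof transfer are already in hand: the properness of the extended set of boundary robots for all $N<n\le 2N$ (Proposition~\ref{prop-properBoundaryRobots2}) plays the role of Proposition~\ref{prop-properBoundary}, and Proposition~\ref{prop-startEndBothInEFS2} guarantees that the relevant endpoints lie on the path.

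For the base case I would take the culminating sub-epoch $T=N^+$: since every lattice point of $S$ is occupied by a genuine robot, no virtual robots are needed and $\overline{EFP_{N^+}}$ coincides with $\widehat{EFP}=EFP_{N^+}$, which is a circle through all edge-following points by Proposition~\ref{prop-EFPisCircle}. The inductive step splits, as before, into the transition caused by activation ($Q(n-1)\Rightarrow P(n)$) and the transition caused by the active robot stopping ($P(n)\Rightarrow Q(n)$). The key simplification relative to the additive proof is that the activation half is essentially trivial here: when $u_n$ leaves its starting vertex $s_r[k]\in S$, that vertex is immediately taken over by a virtual robot, so the extended occupancy — and hence $\overline{EFP}$ — is unchanged; thus $\overline{EFP_{n^-}}=\overline{EFP_{(n-1)^+}}$ and $P(n)$ follows at once. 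In particular the interior and bridging portions of the recycle path never disturb the extended path.

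The substantive work is in the stopping transition $P(n)\Rightarrow Q(n)$, and it bifurcates according to the type of the active robot. If $u_n$ is a rearranged robot then $t_r[k]\in S\setminus D$, the robot stays inside $S$, the extended occupancy is again unchanged, and $Q(n)$ is immediate. If $u_n$ is a recycled robot then $t_r[k]\in H_2$ and a genuine robot is newly added there, so I would reproduce the disk argument from the proof of Proposition~\ref{prop-EFPisCircle}: the disk of radius $d$ about $t_r[k]$ replaces a single path segment of $\overline{EFP_{n^-}}$ by the complementary arc lying outside the region bounded by the path, keeping it a topological circle while routing it through the newly induced extended edge-following points.

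The main obstacle — exactly as in the additive case — is ruling out that this disk covers more than one path segment of $\overline{EFP_{n^-}}$. I would dispose of this by the same contradiction: a second segment would have to be induced by a boundary robot at distance $\sqrt{3}d$ or $2d$ from $t_r[k]$, which is precisely what the properness of $\overline{B_{n^-}}$ and $\overline{B_{n^+}}$ (Proposition~\ref{prop-properBoundaryRobots2}) forbids. Since the recycle sequence fills the idle half-plane innermost-ribbon first and front-to-tail, this is the mirror image of the additive filling of $S$, so no new geometric configurations arise and the neighborhood case analysis transfers without modification. Hence $Q(n)$ holds, and since $Q(N)$ is true the induction closes, establishing the proposition for all $N<n\le 2N$.
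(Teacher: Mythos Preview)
Your proposal is correct and follows exactly the approach the paper has in mind: the paper's own ``proof'' is a single sentence deferring to the argument of Proposition~\ref{prop-EFPisCircle}, and you have faithfully unpacked what that deferral entails, including the key observation (implicit in the paper via the virtual-robot device and made explicit in the proof of Proposition~\ref{prop-properBoundaryRobots2}) that activation and rearrangement leave the extended configuration untouched, so only the recycled-robot stopping transition requires the disk/properness argument.
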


\begin{definition}
    \label{def-exteriorEdgeFollowingPath}
    At epoch $T= n = N + k$, suppose $s_r[k]$ belongs to ribbon $R$ before activation and the merging point of $R$ is $m$, define the \textbf{exterior edge-following path} for $u_n$ as the path segment of the extended edge-following path $\overline{EFP_{n^-}}$ that starts from $m$  and ends at $t_r[k]$, directed clockwise.
\end{definition}

Notice that the above definition is well-defined since the merging point $m$ and the stopping point $t_r[n]$ are on the edge-following path $EFP_{n^-}$ (by proposition \ref{prop-startEndBothInEFS2}) and the extended edge-following path is homeomorphic to a circle (by proposition \ref{prop-EFPisCircle2}).

\begin{proposition}
    \label{prop-exteriorEFPNoVirtualRobot}
    At epoch $T=n = N + k$, suppose $s_r[k]$ belongs to ribbon $R$. Then, except for the merging point of $R$, any point on the exterior edge-following path of $u_n$ is not contributed by a virtual robot.
\end{proposition}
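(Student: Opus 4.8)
The plan is to classify, at sub-epoch $T=n^-$, exactly which lattice points are occupied by virtual robots, and then to show that none of them --- save the one sitting at the head of $R$ --- can lie at distance $d$ from a point of the exterior edge-following path that is strictly in front of $m(R)$. The key observation I would lean on is that once $u_n$ begins to move the whole of $S$ is occupied-or-virtually-occupied, so the solid region determining $\overline{EFP_{n^-}}$ inside $H_1$ is precisely $S$, identical to the solid region of the culminating state; only the $H_2$ portion has grown, and there every robot (idle or already-recycled) is real. Hence I only need to control the contributions coming from $S$.

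First I would dispose of the cavity points. By the proper ribbonization condition that every boundary point of $S$ is not in $D$ (Definition \ref{def-properRibbonization-doc}), every cavity point $p\in D$ is a non-boundary point, so all six of its neighbours lie in $S$ and are therefore solid. Such a $p$ contributes no point to $\overline{EFP_{n^-}}$: the circle of radius $d$ about $p$ passes through its six neighbours, which sit $60^\circ$ apart, so every point of that circle is within distance $d$ of some neighbour and hence cannot satisfy the defining distance-$d$ condition of the edge-following path (Definition \ref{def-EFP2}). Consequently the only virtual robots that can possibly contribute to the exterior path are those occupying vacated \emph{non-cavity} points of $S$.

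Next I would localize those vacated non-cavity points. By Proposition \ref{prop-rearrangedOccupyAllPointsOfRibbon} the ribbons processed before $R$ (the ones larger in the complete tree order) have all their $S\setminus D$ vertices re-occupied by rearranged robots, while the ribbons not yet processed remain fully occupied; combined with Proposition \ref{prop-boundaryPointsAreOccupied} this shows that at $T=n^-$ the only vacated non-cavity points lie on $R$ itself, namely the vertices from $L(R)$ up to and including $s_r[k]$, all unoccupied by Proposition \ref{prop-rearrangedOccupyAllPointsOfRibbon}. Each such point is a position of $R$, so I may invoke Proposition \ref{prop-mergingPointInFront}: as the solid region agrees with that of the culminating state, the merging-point ordering transfers and $m(R)$ is in front of every point of the affiliated, hence of the exterior, path that is contributed by a position of $R$. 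Because the exterior path is directed clockwise starting at $m(R)$ (Definition \ref{def-exteriorEdgeFollowingPath}), every point on it other than $m(R)$ is strictly in front of $m(R)$ and therefore cannot be contributed by a position of $R$, in particular by none of the vacated front vertices; the remaining contributors are then all real boundary robots. The point $m(R)$ is precisely the allowed exception, being at distance $d$ from the now-vacated head $L(R)$.

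The step I expect to be the main obstacle is transferring the ``in front of $m(R)$'' ordering from Proposition \ref{prop-mergingPointInFront}, which is phrased for the culminating affiliated path $\widehat{AP}$, to the extended path $\overline{EFP_{n^-}}$ of the current sub-epoch. I would make this rigorous by observing that the two share the same solid region in $H_1$ (all of $S$), so they coincide as geometric curves in the affiliated region and carry the same notion of being contributed by a position of $R$; the only extra care is needed at the seam where the exterior path leaves the $S$-region for $H_2$, but there every contributor is a real idle or recycled robot, so the statement holds trivially. A secondary, purely geometric check is the lemma that a fully surrounded solid point contributes nothing to the edge-following path, which follows from the $60^\circ$ spacing of the hexagonal neighbourhood.
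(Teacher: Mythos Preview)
Your proposal is correct and follows essentially the same route as the paper's own proof: both arguments (i) observe that interior (non-boundary) points of $S$ --- in particular all cavity points --- cannot contribute to the extended edge-following path, (ii) use Propositions \ref{prop-rearrangedOccupyAllPointsOfRibbon} and \ref{prop-boundaryPointsAreOccupied} to localize the only possible virtual boundary robots to the current ribbon $R$, and (iii) invoke Proposition \ref{prop-mergingPointInFront} together with an orientation claim to conclude that these $R$-positions cannot contribute to the exterior path past $m(R)$. Your write-up is in fact a bit more explicit than the paper's on two points: you spell out the geometric reason cavity points contribute nothing (the $60^\circ$ spacing of the hexagonal neighbourhood), and you make the orientation step explicit where the paper simply writes ``as a result''. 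Your final paragraph also correctly identifies the one genuine subtlety --- transferring the ordering of Proposition \ref{prop-mergingPointInFront} from $\widehat{AP}$ to $\overline{EFP_{n^-}}$ --- and resolves it exactly as the paper does implicitly, by noting that the solid region in $H_1$ is all of $S$ in both cases, so the two paths coincide there; this is precisely the content of the paper's sentence ``Notice that $P$ is a path segment of the culminating affiliated path $\widehat{AP}$''.
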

\begin{proof}
    Recall $p_3$ is the lattice point on the right of the origin $O$. 
    Observe that the exterior edge-following path goes through lattice point $p_3$. Since virtual robots (if they exist) are in half-plane $H_1$, we only need to show that the path segment of the exterior edge-following path from $m(R)$ to $p_3$ satisfies the proposition. We denote the path segment from $m(R)$ to $p_3$ as $P$. Notice that $P$ is a path segment of the culminating affiliated path $\widehat{AP}$.

    By proposition \ref{prop-boundaryPointsAreOccupied}, no point on $P$ is contributed by any virtual robot at the time when the first robot of ribbon $R$ starts to move. Let $T = l$ denote the epoch when the first robot of ribbon $R$ starts to move. By the reactivation sequence, all active robots at epoch $l < T < N+k$ are from ribbon $R$. As a result, a point on $P$ is either contributed by a boundary robot or a virtual robot occupying some vertex of $R$. By proposition \ref{prop-mergingPointInFront}, $m(R)$ is in front of all points of $\widehat{AP}$ that are contributed by robots/virtual robots of $R$. As a result, no point on $P$ (except for the merging point itself) is contributed by a virtual robot. The proposition follows.

\end{proof}

\begin{proposition}
    \label{prop-exteriorPath}
    For the recycle robot $u_n$ with $n = N + k$, its exterior path is well-defined and unobstructed.
\end{proposition}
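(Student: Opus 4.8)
The plan is to prove Proposition \ref{prop-exteriorPath} by reducing the exterior path of a recycle robot to the already-understood machinery of the additive stage, applied now to the \emph{extended} edge-following path. First I would invoke Proposition \ref{prop-startEndBothInEFS2} to place both endpoints of the exterior path, namely the merging point $m(R)$ and the stopping point $t_r[k]$, in the extended edge-following set $\overline{EFS_{n^-}}$. By Proposition \ref{prop-EFPisCircle2}, the extended edge-following path $\overline{EFP_{n^-}}$ is homeomorphic to a circle and passes through every point of $\overline{EFS_{n^-}}$. Consequently the path segment of $\overline{EFP_{n^-}}$ running clockwise from $m(R)$ to $t_r[k]$, which is exactly the exterior edge-following path of Definition \ref{def-exteriorEdgeFollowingPath}, is well-defined as a contiguous arc on a topological circle. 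This establishes the ``well-defined'' half of the statement.

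For unobstructedness, I would argue that every point of the extended edge-following path is, by Definition \ref{def-EFP2}, at distance exactly $d$ from the nearest occupied-or-virtually-occupied lattice point, and since $d > 2r$ the moving disk of radius $r$ traversing this arc cannot collide with any stationary robot. The subtlety is that ``virtually-occupied'' lattice points are fictitious placeholders with no physical robot present, so an arc contributed by a virtual robot need not actually be clear of obstruction in the real configuration. This is precisely where the key lemma enters: Proposition \ref{prop-exteriorEFPNoVirtualRobot} guarantees that, apart from the merging point $m(R)$ itself, no point on the exterior edge-following path is contributed by a virtual robot. Hence along the entire exterior path the nearest occupied-or-virtually-occupied point is in fact a genuine, physically present boundary robot, so the distance-$d$ clearance is real and the edge-following robot $u_n$ traverses an unobstructed arc.

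I would then tie the two halves together: the exterior path is exactly the exterior edge-following path (which tracks $\overline{EFP_{n^-}}$), it is a well-defined clockwise arc from $m(R)$ to $t_r[k]$ on a topological circle, and by the virtual-robot-exclusion result it is physically unobstructed, completing the proof.

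The main obstacle I anticipate is cleanly handling the single exceptional point $m(R)$ flagged by Proposition \ref{prop-exteriorEFPNoVirtualRobot}. Since the merging point is the one spot where a virtual robot may contribute, I must argue that the robot's arrival at $m(R)$ is still collision-free, for instance by noting that $m(R)$ is the \emph{entry} point where the robot transitions from the bridging path (already shown unobstructed in Proposition \ref{prop-bridgingPath}) onto the exterior arc, so $m(R)$ is a genuine edge-following point adjacent to the real first vertex of $R$ rather than an interior obstruction. With that boundary case dispatched, the reduction to the additive-stage circle argument is routine, and the proposition follows.
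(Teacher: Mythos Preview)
Your proposal is correct and follows essentially the same route as the paper's proof: invoke Proposition~\ref{prop-startEndBothInEFS2} to place $m(R)$ and $t_r[k]$ on $\overline{EFP_{n^-}}$, use Proposition~\ref{prop-EFPisCircle2} for well-definedness of the clockwise arc, and use Proposition~\ref{prop-exteriorEFPNoVirtualRobot} to conclude the exterior edge-following path coincides with the physical exterior path and is therefore unobstructed. You are in fact more careful than the paper about the merging-point exception; the paper simply notes the coincidence of the two paths and moves on, whereas your discussion of how $m(R)$ is handled via the bridging path is a welcome clarification.
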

\begin{proof}
    First, notice that the merging point $m$ and the stopping point $t_r[k]$ are on the extended edge-following path $\overline{EFP_{n^-}}$ (by proposition \ref{prop-startEndBothInEFS2}). By proposition \ref{prop-exteriorEFPNoVirtualRobot}, the exterior path of $u_n$ (except for the merging point) is not contributed by any virtual robot. As a result, the exterior path for $u_n$ is exactly the exterior edge-following path which is the path segment of the extended edge-following path $\overline{EFP_{n^-}}$ that starts from $m$ oriented clockwise and ends at $t_r[k]$. By proposition \ref{prop-EFPisCircle2}, the exterior path starting from $m$ and ending at $t_r[k]$ is well-defined and unobstructed.
\end{proof}

\subsubsection{Motion of Rearranged Robots}
\label{subsect-pathRearrange}

At sub-epoch $T= n^-$ with $n = N+k$, the rearranged robot $u_n$ moves from $s_r[k]$ and stops at $t_r[k]$. 

\begin{figure}[H]
    \centering
    \captionsetup{justification=centering}
    \includegraphics[scale=0.4]{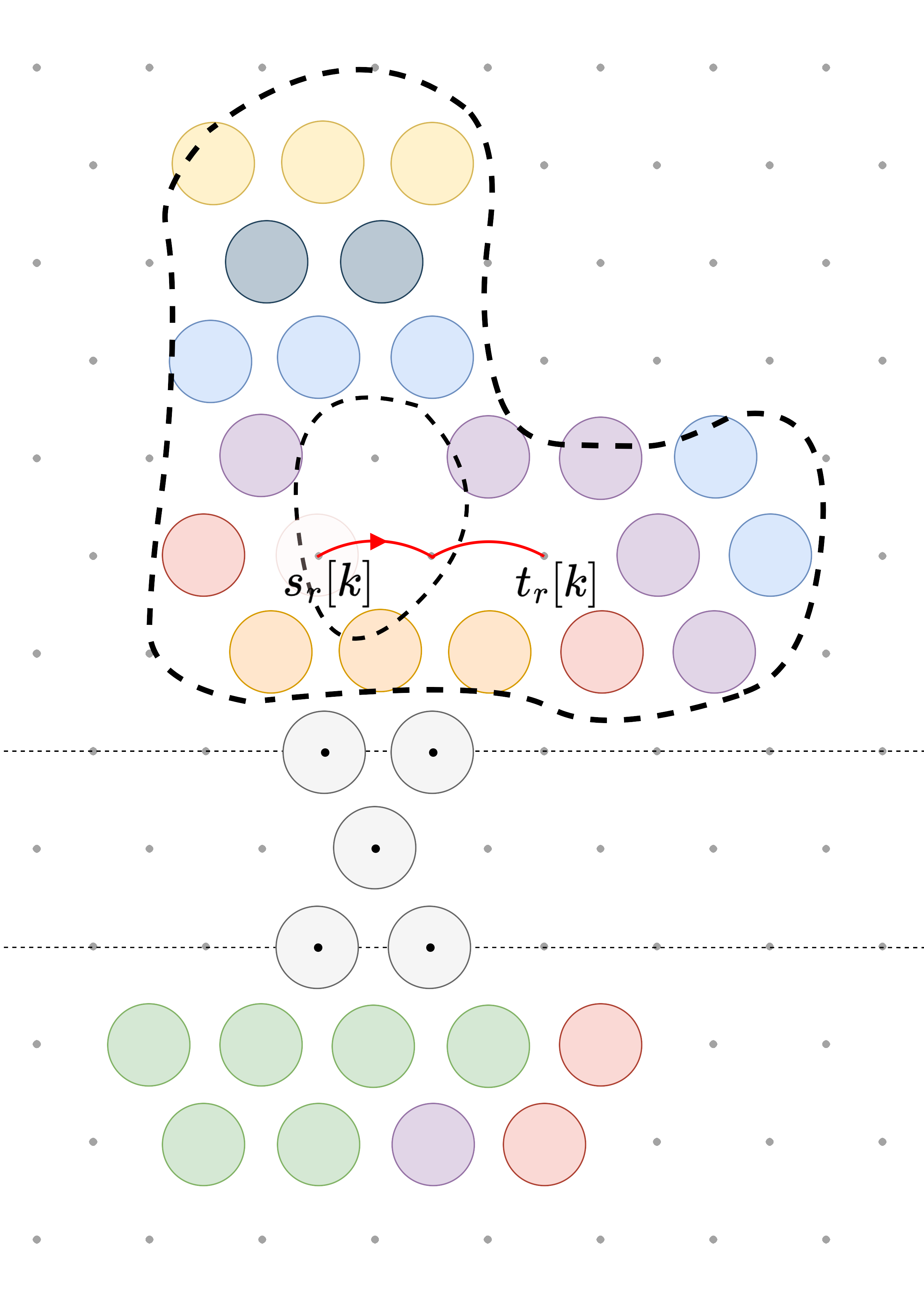}
    \caption{Illustration of the start point $s_r[k]$ and target point $t_r[k]$ for rearranged robot $u_{EOA+k}$}
\end{figure}

\begin{definition}
    For any rearranged robot $u_n$ with $n = N + k$, suppose $u_n$ belongs to ribbon $R$ before activation, define the \textbf{rearrangement path} of $u_n$ such as:
    \begin{enumerate}
        \item the path starts from lattice point $s_r[k]$ and ends at lattice point $t_r[k]$ of ribbon $R$; and
        \item consecutively visiting vertices of $R$ between $s_r[k]$ and $t_r[k]$ while maintains distance $d$ from the nearest robot of the parent ribbon of $R$;
    \end{enumerate}
\end{definition}

\begin{proposition}
    \label{prop-rearrange}
    For any rearranged robot $u_n$, its rearrangement path is unobstructed. 
\end{proposition}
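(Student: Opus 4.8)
The plan is to observe that the rearrangement path is, mechanically, the very same kind of curve already analyzed for the recycled robots in Proposition \ref{prop-interiorPath}: it is a concatenation of circular arcs of radius $d$, each obtained by orbiting clockwise about a single robot of the parent ribbon $R^{[p]}$ between two adjacent vertices of $R$. The only difference from the interior path is that the robot halts at $t_r[k]$ instead of continuing to the first vertex of $R$. So I would reduce the claim to two independent ingredients: (i) every vertex of $R$ that the path traverses, from $s_r[k]$ up to and including $t_r[k]$, is unoccupied at sub-epoch $T=n^-$; and (ii) each individual arc between consecutive such vertices is clear of all robots.

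For ingredient (i), I would invoke Proposition \ref{prop-rearrangedOccupyAllPointsOfRibbon}, which for a rearranged robot guarantees $t_r[k] \in V(R) \cap (S \setminus D)$ together with the statement that every vertex $x$ with $s_r[k] < x \leq t_r[k]$ satisfies $x \notin {t_r}_{[1:k-1]}$, i.e. is unoccupied when $u_n$ starts to move. This is exactly the set of intermediate and terminal vertices the path must pass through. I also need the orbit reference robots to exist: because $R^{[p]} \twoheadrightarrow R$ forces $R^{[p]} < R$ in the complete tree order, and the reactivation sequence empties ribbons from largest to smallest, the parent ribbon $R^{[p]}$ has not yet been touched and remains fully occupied (including any of its vertices in $D$) throughout epoch $T=n$. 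Hence for each pair of adjacent vertices $x,y \in V(R)$ there is an occupied common neighbor $v \in V(R^{[p]})$ about which the robot can rotate, exactly as in the interior-path argument.

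For ingredient (ii), I would reuse the geometric computation of Proposition \ref{prop-interiorPath} essentially verbatim: the arc of radius $d$ about $v$ joining $x$ to $y$ keeps every lattice point other than $v$ at distance at least $2d\cdot\frac{\sqrt{3}}{2} - d = 2r$ from the curve, and since $d = \frac{2r}{\sqrt{3}-1}$ this is precisely the clearance needed to avoid any robot of physical radius $r$. Concatenating the finitely many arcs, each clear and joining unoccupied endpoints, yields an unobstructed path from $s_r[k]$ to $t_r[k]$, which establishes the proposition.

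The hard part will not be the arc geometry, which is identical to the already-proved interior case, but rather pinning down that the path really does visit only unoccupied vertices of $R$ in the correct forward direction while the parent ribbon stays intact; this hinges entirely on the ordering bookkeeping of Proposition \ref{prop-rearrangedOccupyAllPointsOfRibbon} and on the order in which the reactivation sequence consumes ribbons. Once those two facts are cited, the remainder is a direct transcription of the recycled-robot interior-path proof.
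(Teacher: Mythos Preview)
Your proposal is correct and follows essentially the same approach as the paper: invoke Proposition~\ref{prop-rearrangedOccupyAllPointsOfRibbon} to guarantee that all vertices of $R$ between $s_r[k]$ and $t_r[k]$ are unoccupied, then reuse the arc-clearance computation from Proposition~\ref{prop-interiorPath}. Your explicit check that the parent ribbon $R^{[p]}$ remains fully occupied is a nice point the paper leaves implicit, but otherwise the argument is identical.
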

\begin{proof}
    Let $k= T - N$, then by proposition \ref{prop-rearrangedOccupyAllPointsOfRibbon}, $s_r[k]$ and $t_r[k]$ are vertices of the same ribbon $R(s_r[k])$, and all lattice points on ribbon $R(s_r[k])$ between $s_r[k]$ (exclusive) and $t_r[k]$ (inclusive) are unoccupied. By the same arguments in the proof of proposition \ref{prop-interiorPath}, the result follows. 

\end{proof}

\begin{lemma}
    \label{prop-subtractiveStageCorrectness}
    For any active robot in the subtractive stage, its path is unobstructed.
\end{lemma}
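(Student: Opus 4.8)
The plan is to reduce this lemma to the case analysis already developed for the subtractive stage, invoking the segment-by-segment unobstructedness results that precede it. First I would recall the trichotomy for an active robot $u_n$ with $n = N+k$: by the reactivation and reassembly sequences, either $s_r[k] = t_r[k]$ (the robot is stationary and the claim is vacuous), or $s_r[k] \neq t_r[k]$ and, by Proposition \ref{prop-rearrangedOccupyAllPointsOfRibbon}, $u_n$ is exactly one of a \emph{recycled robot} (with $t_r[k] \in H_2$) or a \emph{rearranged robot} (with $t_r[k] \in V(R) \cap (S \setminus D)$, where $R = R(s_r[k])$). Since this dichotomy is exhaustive, it suffices to verify unobstructedness separately for the recycle path and the rearrangement path.

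For a recycled robot, the path is the recycle path, which by definition is the concatenation of the interior path, the bridging path, and the exterior path. I would invoke Proposition \ref{prop-interiorPath}, Proposition \ref{prop-bridgingPath}, and Proposition \ref{prop-exteriorPath} to conclude that each of the three segments is individually well-defined and unobstructed. The remaining point is that these three segments splice into a single continuous curve from $s_r[k]$ to $t_r[k]$: the interior path runs from $s_r[k]$ to the first vertex $x$ of $R$, the bridging path runs from $x$ to the merging point $m(R)$, and the exterior path runs from $m(R)$ to $t_r[k]$. Because consecutive segments share their endpoints ($x$ and $m(R)$), their union is a connected path with no obstruction introduced at the junctions, so the entire recycle path is unobstructed.

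For a rearranged robot, the path is the rearrangement path, and Proposition \ref{prop-rearrange} directly gives that it is unobstructed. Combining the two cases with the trivial stationary case, the path of every active robot in the subtractive stage is unobstructed, which is the claim. The genuinely hard work has already been carried out in the cited propositions — in particular the exterior path, which relies on the virtual-robot construction and on the merging-point-in-front property (Proposition \ref{prop-mergingPointInFront}) to show that no virtual robot obstructs the exterior segment. At the level of this lemma, the only substantive check is the junction bookkeeping for the recycle path, which is immediate from the way the three path segments are defined to meet at $x$ and $m(R)$.
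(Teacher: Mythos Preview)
Your proposal is correct and follows essentially the same approach as the paper: split into the recycled and rearranged cases and cite Propositions \ref{prop-interiorPath}, \ref{prop-bridgingPath}, \ref{prop-exteriorPath}, and \ref{prop-rearrange} respectively. The paper's proof is even terser than yours, omitting the stationary case and the junction bookkeeping you spell out.
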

\begin{proof}
    If the active robot is a recycled robot, the corresponding recycle path is unobstructed since all three path segments are unobstructed by proposition \ref{prop-interiorPath}, proposition \ref{prop-bridgingPath}, and proposition \ref{prop-exteriorPath}. If the active robot is a rearranged robot, the corresponding rearranged path is unobstructed by proposition \ref{prop-rearrange}.
\end{proof}

\subsection{Localization for the Subtractive Stage}
\label{sect-localization2}

During the subtractive stage of the algorithm, each active robot $u_n$ with $n= N+k$ shall move from lattice point $s_r[k]$ to lattice point $t_r[k]$. We use $R$ to denote the ribbon that $s_r[k]$ belongs to, and recall that we use $R^{[p]}$ to denote the parent ribbon of $R$. In this section, we prove the localizability of the robots of the subtractive stage. Notice that all non-moving robots are localizable at the end of the additive stage by proposition \ref{prop-IdleLocalizable} and lemma \ref{thm-localizable}. We are left to show the localizability of every moving robot $u_n$ on its path from $s_r[k]$ to $t_r[k]$.

\subsubsection{Localization for Rearranged Robots}
If $u_n$ is a rearranged robot, then it starts from $s_r[k]$ and rotates clockwise around the nearest robot of ribbon $R^{[p]}$ with a distance $d$ until it stops at $t_r[k]$. Notice that $u_n$ only needs to maintain a certain distance from the nearest robot of $R^{[p]}$. As a result, $u_n$ only needs to know its coordinates when it is at $s_r[n]$ and $t_r[n]$.
 
\begin{lemma}
\label{prop-rearrangeLocalizable}
    At epoch $T = n = N + k$, if $u_n$ is a rearranged robot and $s_r[k]$ is a vertex of ribbon $R$, then $u_n$ is localizable at $s_r[k]$ and $t_r[k]$.
\end{lemma}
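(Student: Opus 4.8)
The plan is to reduce the statement to the localization mechanism already established for the additive stage in proposition \ref{prop-3robotsForLocalization}, whose geometric heart only requires a \emph{filled} ribbon all of whose robots are localizable. The decisive structural fact I would isolate first is that the parent ribbon $R^{[p]}$ of $R$ remains completely filled throughout the entire processing of $R$ in the subtractive stage. Indeed, $R^{[p]}$ is an ancestor of $R$, so $R^{[p]} < R$ in the complete tree order; since the reactivation sequence $s_r$ visits ribbons from largest to smallest in this order, every vertex of $R$ is activated strictly before any vertex of $R^{[p]}$. Consequently, from the culminating configuration (in which all lattice points of $S$ are occupied) up to and including epoch $n$, no robot of $R^{[p]}$ has moved. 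Hence $R^{[p]}$ is filled at epoch $n$, and by proposition \ref{prop-IdleLocalizable} together with lemma \ref{thm-localizable} all robots on $R^{[p]}$ are localizable, since they satisfy the ``localizable at a previous time step and unmoved since'' clause of the localizability definition. The same applies to the parent of $R^{[p]}$, which is processed even later in the subtractive stage (its role being taken by seed robots when $R^{[p]}=R_0$).

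Next I would record the two adjacency facts that feed the trilateration. Because $s_r[k]\in V(R)$ and $g(R)=g(R^{[p]})+1$, proposition \ref{prop1} gives $s_r[k]$ a neighbor of hop count $g(R)-1$, and by the uniqueness of the parent ribbon (proposition \ref{prop-uniqueParent}) this neighbor lies on $R^{[p]}$; thus $s_r[k]$ is adjacent to an occupied, localizable vertex of the filled ribbon $R^{[p]}$. For the stopping position, proposition \ref{prop-rearrangedOccupyAllPointsOfRibbon} guarantees $t_r[k]\in V(R)\cap(S\setminus D)$, so $t_r[k]$ is again a vertex of $R$ and, by the same argument, adjacent to a vertex of the filled ribbon $R^{[p]}$.

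With these facts I would replay the geometric construction from the proof of proposition \ref{prop-3robotsForLocalization}: choose the adjacent robot $v\in R^{[p]}$, a second robot $w\in R^{[p]}$ neighboring $v$ (which exists since every ribbon has length at least two under a proper ribbonization), and a robot $j$ on the parent of $R^{[p]}$ with gradient $g(v)-1$; inspecting the type-$A$ and type-$B$ neighborhoods of $v$ shows $v,w,j$ are noncollinear and lie within $2d$ of $u_n$, so trilateration localizes $u_n$. When $R=R_0$ there is no parent ribbon, but $R_0$ is adjacent to the seed robots, so the seed-robot branch of proposition \ref{prop-3robotsForLocalization} applies verbatim. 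Carrying this out at both $s_r[k]$ and $t_r[k]$ yields the claim.

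The main obstacle I anticipate is not geometric---the construction is inherited unchanged from the additive stage---but the bookkeeping needed to invoke proposition \ref{prop-3robotsForLocalization} outside its original setting. That proposition was proved by induction on the additive epoch, whereas here the three reference robots are localizable for a different reason, namely that they have been stationary since the end of the additive stage. I would therefore emphasize that its conclusion depends only on the three reference robots being localizable, noncollinear, and within $2d$, each of which I have verified directly, so the additive induction is not re-run.
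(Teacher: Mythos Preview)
Your proof is correct and follows essentially the same approach as the paper: both argue that $s_r[k]$ and $t_r[k]$ are vertices of $R$ adjacent to the parent ribbon $R^{[p]}$ (or to seed robots when $R=R_0$), that $R^{[p]}$ is still filled at epoch $n$ because the reactivation sequence processes ribbons from largest to smallest in the complete tree order, and then invoke the trilateration construction of proposition~\ref{prop-3robotsForLocalization}. You are in fact more careful than the paper on two points---explicitly citing proposition~\ref{prop-rearrangedOccupyAllPointsOfRibbon} to guarantee $t_r[k]\in V(R)$, and noting that proposition~\ref{prop-3robotsForLocalization} was stated for the additive stage so that one must extract its geometric content rather than cite it wholesale---both of which the paper glosses over.
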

\begin{proof}
    Notice that $s_r[k]$ is adjacent to a seed robot or a robot of $R^{[p]}$. By the definition of the reactivation sequence, the parent ribbon $R^{[p]}$ is filled at $T=n$. As a result, $u_n$ is localizable at $s_r[k]$ by \ref{prop-3robotsForLocalization}. The same applies to lattice point $t_r[k]$.
\end{proof}

\subsubsection{Localization for Recycled robots}
If the active robot $u_n$ is a recycled robot, it starts from $s_r[k]$ and rotates clockwise around the nearest robot of ribbon $R^{[p]}$ at a distance $d$ until it reaches the first vertex of $R$. Then, robot $u_n$ continuously rotates until it reaches the merging point $m(R)$. Finally, it rotates clockwise around the nearest boundary robot until it stops at $t_r[k]$. In other words, $u_n$ first follows the interior path until it reaches the first vertex $x$, then it follows the bridging path until it reaches the merging point $m(R^{[p]})$. Finally, it follows the exterior path until reaching the stopping point $t_r[k]$. Notice that $u_n$ only needs to maintain a certain distance from the nearest robot\footnote{In the case of interior path and bridging path, $u_n$ maintains a certain distance from the nearest robot of a particular ribbon.} within each path segment. As a result, $u_n$ only needs to know its coordinates when it is at the endpoints of the three path segments, i.e., at $s_r[k]$, $x$, $m(R)$, and $t_r[k]$. 


\begin{lemma}
\label{prop-recycledRobotLocalizable}
    At epoch $T = n = N + k$ and $u_n$ being a recycled robot, suppose $s_r[k]$ belongs to ribbon $R$ and the first vertex of $R$ is $x$ and the merging point of $R$ is $m(R)$, then $u_n$ is localizable at $s_r[k]$, $x$, $m(R)$ and $t_r[k]$.
\end{lemma}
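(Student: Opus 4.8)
The plan is to verify localizability separately at each of the four waypoints $s_r[k]$, $x$, $m(R)$, and $t_r[k]$, reducing each to proposition \ref{prop-3robotsForLocalization} (localizability through adjacency to a seed robot or a filled ribbon) together with proposition \ref{prop-IdleLocalizable}. The single structural fact that drives the first three cases is that the parent ribbon $R^{[p]}$ stays filled and its robots stay non-moving throughout the processing of $R$: since the reactivation sequence consumes ribbons in decreasing complete-tree order (outermost first) and $R^{[p]}$ is an ancestor of $R$, hence strictly smaller in the tree order, the robots of $R^{[p]}$ are only activated after all of $R$ has been handled. Because all non-moving robots are localizable at the end of the additive stage (by proposition \ref{prop-IdleLocalizable} and lemma \ref{thm-localizable}) and have not moved since, every robot of $R^{[p]}$ is localizable at epoch $T = N+k$.

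I would first dispatch $s_r[k]$ and $x$. The point $s_r[k]$ is a vertex of $R$, so it is adjacent to a seed robot or to a vertex of the filled ribbon $R^{[p]}$; proposition \ref{prop-3robotsForLocalization} then gives localizability at $s_r[k]$, exactly as in the rearranged case of lemma \ref{prop-rearrangeLocalizable}. The first vertex $x$ of $R$ is likewise a vertex of $R$ adjacent to $R^{[p]}$; although $x$ may already have been vacated by earlier recycled robots of $R$ (which are processed in increasing ribbon order, i.e. front-first, by proposition \ref{prop-rearrangedOccupyAllPointsOfRibbon}), the trilateration only uses robots of the still-filled ribbon $R^{[p]}$, so proposition \ref{prop-3robotsForLocalization} again applies. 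For the merging point $m(R)$, I would invoke its defining property: $m(R)\notin S$ is adjacent to a vertex $y\in V(R^{[p]})$. Thus at $m(R)$ the robot $u_n$ neighbors a robot on the filled ribbon $R^{[p]}$, and proposition \ref{prop-3robotsForLocalization} yields localizability once more.

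The remaining and most delicate case is $t_r[k]\in H_2$, where the ribbon $R^{[p]}$ no longer helps and one must argue inside the idle half-plane. Here the idea is that the recycled robots refill the $N$ vacated idle positions $s_{[1:N]}$ essentially in reverse of how the additive stage emptied them: the recycle part of the reassembly sequence selects, at each step, the smallest vacated point on the smallest idle ribbon (in idle tree order), so $H_2$ is rebuilt innermost-ribbon-first and front-to-back. I would formalise this with an induction mirroring proposition \ref{prop-IdleLocalizable}: when $u_n$ arrives at $t_r[k]$, the parent idle ribbon of $R(t_r[k])$ is already fully occupied, either by original idle robots that were never activated or by recycled robots that have already stopped (and are localizable by the $m(R)$ argument above together with clause (2) of the definition of localizable). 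Since the root idle ribbon is anchored to the seed clique $\{v_0,v_3,v_4\}$, the idle analogue of proposition \ref{prop-3robotsForLocalization} then furnishes three non-collinear localizable robots within $2d$ of $t_r[k]$, giving localizability there.

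I expect the refilling induction for $t_r[k]$ to be the main obstacle: unlike the first three waypoints, it cannot be read off from a single adjacency and instead requires tracking that the recycle sequence preserves the idle tree's support structure step by step, including the bookkeeping that the $N$ positions in $s_{[1:N]}$ may terminate in the middle of an idle ribbon. Everything else is a direct appeal to proposition \ref{prop-3robotsForLocalization} via the invariant that $R^{[p]}$ remains filled, so once the $H_2$-refilling invariant is in place the conclusion follows by combining the four cases.
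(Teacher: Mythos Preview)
Your proposal is correct and follows essentially the same approach as the paper: reduce each of the four waypoints to adjacency with a seed robot or with a robot on a filled (parent) ribbon, and then invoke proposition~\ref{prop-3robotsForLocalization} or its idle-ribbon analogue. The paper's own proof is terser than yours---for $t_r[k]$ it simply observes that ``by the definition of recycle sequence, $t_r[k]$ is adjacent to a seed robot or a robot of a filled idle ribbon'' and then appeals to the same reasoning as in proposition~\ref{prop-3robotsForLocalization}---whereas you spell out the refilling induction that justifies this adjacency; your extra care there is warranted but not a different route.
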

\begin{proof}
    Notice that $s_r[k]$ is adjacent to a seed robot or a robot belonging to $R^{[p]}$. By the definition of the reactivation sequence, the parent ribbon $R^{[p]}$ is filled. As a result, $u_n$ is localizable at $s_r[k]$ by \ref{prop-3robotsForLocalization}. Similarly, $u_n$ is localizable at $x$ and $m(R)$.
    
    By the definition of recycle sequence, $t_r[k]$ is adjacent to a seed robot or a robot of a filled idle ribbon. By the same reasoning as in the proof of proposition \ref{prop-3robotsForLocalization}, robot $u_n$ is localizable at $t_r[k]$.
\end{proof}


\subsection{Conclusion}
\label{sect-finalShape}

The activation sequence $s$, the reactivation sequence $s_r$, the assembly sequence $t$ and the re-assembly sequence $t_r$ uniquely defined the movement sequence of tuples $(s_1,t_1,1), \ldots, (s_N,t_N,N), ({s_r}_1,{t_r}_1, N+1), \ldots, ({s_r}_N,{t_r}_N, 2N)$. In this section, we use the results from previous sections to show that the movement sequences, together with the add-subtractive algorithm, satisfy the problem statement:

    \begin{theorem}
        \label{thm-final}
        Given sufficient robots (with an initial set of robots $B$ whose position coordinates are known in advance) and user-defined shapes $S \setminus D$. The add-subtract algorithm with the movement sequence of tuples $(a_1,b_1,1), \ldots, (a_{2N}, b_{2N}, 2N) := (s_1,t_1,1), \ldots, (s_N,t_N,N), ({s_r}_1,{t_r}_1, N+1), \ldots, ({s_r}_N,{t_r}_N, 2N)$ satisfies:
        \begin{enumerate}
            \item the final collection of robots forms the desired shape in the workspace $H_1$ in the sense that $\{\{b_1,\ldots, b_{2N}\} \circleddash \{a_1,\ldots, a_{2N}\}\} \sqcap H_1  = S \setminus D$.
            \item For each prefix subsequence $(a_1, b_1, 1), \ldots, (a_{k}, b_{k},k)$, $1 \leq k \leq 2N$, let $A_{k} = \{a_1, \ldots, a_{k-1}\}$, $B_{k} = \{b_1, \ldots, b_{k-1}\}$, and $\overline{B_{k}} = B_{k}$ since the add-subtract algorithm enforce at most one moving robot at any time. Then the starting position of the $k$th robot $a_k \in I \oplus \overline{B_{k}} \circleddash A_{k}$ and the robot can only relay on robots centered at $I \oplus \overline{B_{k}} \circleddash A_{k} \circleddash \{a_k\}$ to locate its position before stopping at $b_k$. 
        \end{enumerate}
    \end{theorem}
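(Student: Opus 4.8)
The plan is to assemble the two claims of the theorem from the geometric and combinatorial facts already established in the preceding sections, so the proof is essentially a bookkeeping argument that translates the ribbon-level statements into the multiset occupancy formalism. First I would record the two structural observations that drive everything: the assembly sequence $t$ and the reactivation sequence $s_r$ are each a bijection onto the lattice points of $S$ (every vertex of $S$ appears exactly once), while the idle activation sequence $s$ is a bijection onto the occupied lattice points of $H_2$. These are immediate from the recursive definitions.

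For claim 1, I would argue in two stages. By Lemma \ref{thm-unobstructedPath} and Lemma \ref{thm-localizable}, every additive robot $u_k$ (with $k \le N$) actually reaches $t[k]$; since $t$ enumerates all of $S$, after epoch $N$ every lattice point of $S$ is occupied. For the subtractive stage, Lemma \ref{prop-subtractiveStageCorrectness} together with Lemmas \ref{prop-rearrangeLocalizable} and \ref{prop-recycledRobotLocalizable} guarantees that each robot $u_{N+k}$ reaches $t_r[k]$ as planned, and Proposition \ref{prop-rearrangedOccupyAllPointsOfRibbon} gives the net effect ribbon by ribbon: exactly $c(R)$ robots of each ribbon $R$ are recycled into $H_2$, and the remaining $len(R)-c(R)$ robots occupy precisely the vertices of $R$ lying in $S \setminus D$. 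Taking the union over all ribbons, the set of lattice points of $H_1$ occupied at epoch $2N$ is exactly $\bigcup_{R \in \hat{S}} (V(R)\cap (S\setminus D)) = S\setminus D$. The final step is purely notational: each tuple contributes a departure at $a_k$ and an arrival at $b_k$, so $\{b_1,\dots,b_{2N}\}\circleddash\{a_1,\dots,a_{2N}\}$ records the net change in occupancy, and restricting via $\sqcap H_1$ recovers $S\setminus D$.

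For claim 2, the equality $\overline{B_k}=B_k$ is exactly the single-mover property of the algorithm (at most one robot moves per epoch, and each move completes before the next begins), as noted in the statement. The substantive point is that $a_k$ is genuinely occupied at the moment robot $k$ is activated, i.e.\ $a_k \in I \oplus \overline{B_k}\circleddash A_k$. For $k\le N$ this holds because $a_k=s[k]$ is an idle robot that has not yet moved, since the activation sequence visits each idle lattice point once and no robot returns to $H_2$ before the subtractive stage. For $k>N$ I would use that all of $S$ is occupied at the end of the additive stage and that the reactivation order processes ribbons from the outermost inward and each ribbon front-to-back; combined with Proposition \ref{prop-rearrangedOccupyAllPointsOfRibbon}, no vertex is vacated before it is itself activated, because rearranged robots only move strictly in front of any not-yet-activated vertex, so $s_r[k-N]$ is still occupied. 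The localization requirement then follows directly from Lemma \ref{thm-localizable} in the additive stage and Lemmas \ref{prop-rearrangeLocalizable} and \ref{prop-recycledRobotLocalizable} in the subtractive stage: the three non-collinear reference robots they supply are stationary, hence lie in the current occupancy set $I\oplus\overline{B_k}\circleddash A_k\circleddash\{a_k\}$.

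The main obstacle I anticipate is not any single deep step but the careful bookkeeping in claim 2 --- verifying that the abstract multiset occupancy $I\oplus\overline{B_k}\circleddash A_k$ faithfully tracks the physical configuration at every epoch, and in particular that the reactivation sequence never selects an already-vacated vertex. This requires matching the front-to-back, outermost-first ordering of $s_r$ against the forward motion of rearranged robots established in Proposition \ref{prop-rearrangedOccupyAllPointsOfRibbon}, which is precisely where the ribbon order and tree order conventions must be reconciled with care.
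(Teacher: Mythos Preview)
Your proposal is correct and follows essentially the same route as the paper: both arguments reduce claim~1 to identifying the multisets $\{a_1,\ldots,a_N\}$, $\{b_1,\ldots,b_N\}$, $\{a_{N+1},\ldots,a_{2N}\}$, $\{b_{N+1},\ldots,b_{2N}\}\sqcap H_1$ with subsets of $I$ and $S$, and both handle claim~2 by invoking Lemmas~\ref{thm-unobstructedPath}, \ref{thm-localizable}, \ref{prop-subtractiveStageCorrectness}, \ref{prop-rearrangeLocalizable}, \ref{prop-recycledRobotLocalizable} for the path and localization guarantees. The only difference is one of presentation: the paper computes the multiset identity for claim~1 directly from the definitions of $t$, $s_r$, $t_r$ in a couple of lines, whereas you reach the same identity via the ribbon-by-ribbon accounting of Proposition~\ref{prop-rearrangedOccupyAllPointsOfRibbon}; and for claim~2 you spell out the front-to-back occupancy argument that the paper compresses into the single containment $a_k \in S \circleddash \{{t_r}_1,\ldots\}$.
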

    \begin{proof}
        
        First, notice the following:
        \begin{enumerate}
            \item for the additive stage, $\{a_1,\ldots, a_N\} = \{s_1,\ldots,s_N\} \in I$ and $\{b_1, \ldots, b_N\}= \{t_1,\ldots,t_N\} = S$;
            \item for the subtractive stage, $\{a_{N+1}, \ldots, a_{2N}\}=\{{s_r}_1,\ldots,{s_r}_N\} = S$ and $\{b_{N+1}, \ldots, b_{2N}\} \sqcap H_1 = \{{t_r}_1,\ldots,{t_r}_N\} \sqcap H_1 = S\setminus D$.
        \end{enumerate}
        As a result, $\{\{b_1,\ldots, b_{2N}\} \circleddash \{a_1,\ldots, a_{2N}\}\} \sqcap H_1  = \{S \oplus \{b_{N+1}, \ldots, b_{2N}\} \circleddash I \circleddash S\} \sqcap H_1 = S \setminus D$. This shows theorem \ref{thm-final}.1.
        For $1 \leq k \leq N$, $a_k \in I \circleddash \{a_1,\ldots, a_{k-1}\} \subseteq I \oplus \overline{B_{k}} \circleddash A_{k}$. For $N+1 \leq k \leq 2N$, $a_k \in S \circleddash \{{t_r}_1,\ldots,{t_r}_{k-N}\} \subseteq  I \oplus \overline{B_{k}} \circleddash A_{k}$. Furthermore, by lemma \ref{thm-localizable}, lemma \ref{prop-rearrangeLocalizable} and lemma \ref{prop-recycledRobotLocalizable}, the active robots are localizable whenever necessary. By lemma \ref{thm-unobstructedPath} and lemma \ref{prop-subtractiveStageCorrectness}, the path for each moving robot is unobstructed. Theorem \ref{thm-final}.2 follows.

    \end{proof}

\end{appendices}


\bibliographystyle{ieeetr}
\bibliography{references}
\end{document}